\DeclareSymbolFont{rsfs}{U}{rsfs}{m}{n}
\DeclareSymbolFontAlphabet{\mathscrsfs}{rsfs}
\newtheorem{theorem}{Theorem}
\newtheorem{lemma}{Lemma}
\newtheorem{corollary}{Corollary}
\newtheorem{definition}{Definition}
\newtheorem{remark}{Remark}
\newcommand{\voteone}{\textsc{Vote1X}\xspace}
\newcommand{\votetwo}{\textsc{Vote2X}\xspace}
\newcommand{\votethree}{\textsc{Vote3X}\xspace}
\newcommand{\ouralgo}{\textsc{StreamBP}\xspace}
\newcommand{\boundedalgo}{\textsc{StreamBP$^*$}\xspace}
\newcommand{\offlinealgo}{\textsc{BP}\xspace}
\newcommand{\cA}{\mathcal{A}}
\newcommand{\bG}{\bold{G}}
\newcommand{\Bprop}{\mathsf{Bprop}}
\def\NN{\mathbb{N}}
\def\RR{\mathbb{R}}
\def\bG{\mathbf{G}}
\def\bm{\mathbf{R}}
\def\bb{\boldsymbol{b}}
\def\bc{\boldsymbol{c}}
\def\bd{\boldsymbol{d}}
\def\be{\boldsymbol{e}}
\def\bm{\boldsymbol{m}}
\def\bm{\boldsymbol{m}}
\def\bw{\boldsymbol{w}}
\def\bG{\boldsymbol{G}}
\newcommand\ta{\tilde a}
\newcommand\tb{\tilde b}
\renewcommand{\P}{\mathbb{P}}
\newcommand{\E}{\mathbb{E}}
\newcommand{\Var}{\operatorname{Var}}
\newcommand{\argmax}{\operatorname{argmax}}
\newcommand{\Unif}{\operatorname{Unif}}
\newcommand{\RN}[1]{%
  \textup{\uppercase\expandafter{\romannumeral#1}}%
}
\newcommand\iidsim{\stackrel{\mathclap{iid}}{\sim}}
\newcommand{\RNum}[1]{\uppercase\expandafter{\romannumeral #1\relax}}
\newenvironment{breakablealgorithm}
  {
   \begin{center}
     \refstepcounter{algorithm}
     \hrule height.8pt depth0pt \kern2pt
     \renewcommand{\caption}[2][\relax]{
       {\raggedright\textbf{\ALG@name~\thealgorithm} ##2\par}%
       \ifx\relax##1\relax 
         \addcontentsline{loa}{algorithm}{\protect\numberline{\thealgorithm}##2}%
       \else 
         \addcontentsline{loa}{algorithm}{\protect\numberline{\thealgorithm}##1}%
       \fi
       \kern2pt\hrule\kern2pt
     }
  }{
     \kern2pt\hrule\relax
   \end{center}
  }
\newcommand*{\rom}[1]{\expandafter\@slowromancap\romannumeral #1@}
\def\SSBM{{\rm SSBM}}
\def\ttau{\tilde{\tau}}
\def\htau{\hat{\tau}}
\def\cA{{\mathcal{A}}}
\def\fS{{\mathfrak S}}
\def\Ball{{\sf B}}
\def\ind{{\mathds{1}}}
\def\bfone{{\mathbf 1}}
\def\cG{{\mathcal G}}
\def\cE{{\mathcal E}}
\def\cV{{\mathcal V}}
\def\ocG{\bar{\mathcal G}}
\def\ocE{\bar{\mathcal E}}
\def\ocV{\bar{\mathcal V}}
\def\reals{\mathbb{R}}
\def\ttau{\tilde{\tau}}
\def\SBM{{\rm SBM}}
\def\STSSBM{{\rm StSSBM}}
\def\STSBM{{\rm StSBM}}
\def\Unif{{\sf Unif}}
\def\<{\langle}
\def\>{\rangle}
\def\Lip{\mbox{\tiny\rm Lip}}
\def\BP{\textsc{BP}\xspace}
\def\est_ac{Q_n}             
\def\opt_est_ac{Q_n^{\ast}}  
\def\rand_alg_quant{\xi}     
\def\localV{\mathsf{V}^{\mathsf{act}}}    
\def\localE{\mathsf{E}^{\mathsf{act}}}    
\title{Streaming Belief Propagation for Community Detection}
\author{ 
	Yuchen Wu\thanks{Department of Statistics, Stanford University}
	\and
	MohammadHossein Bateni\footnotemark[2]\thanks{Google Research}
	\and
	André Linhares\footnotemark[2]
	\and
	Filipe Miguel Gonçalves de Almeida\footnotemark[2]
	\and
	Andrea Montanari\footnotemark[1]
	\thanks{Department of Electrical Engineering, Stanford University}
	\and
	Ashkan Norouzi-Fard\footnotemark[2]
	\and
	Jakab Tardos\thanks{Ecole Polytechnique Fédérale de Lausanne}
}
\date{\today}
\begin{document}

\maketitle

\begin{abstract}
The community detection problem requires to cluster the nodes of a network
into a small number of well-connected `communities'. 
There has been substantial recent progress in characterizing the
fundamental statistical limits of community detection under simple stochastic block models.
However, in real-world applications, the network structure is typically dynamic, with nodes that join over time.
In this setting, we would like a detection algorithm to perform only a limited number of updates at each node arrival.
While standard voting approaches satisfy this constraint, it is unclear whether they 
exploit the network information optimally.
We introduce a simple model for networks growing over time which we refer to as 
\emph{streaming stochastic block model} (\STSBM). 
Within this model, we prove that voting algorithms have fundamental limitations. We also develop a streaming belief-propagation
(\ouralgo) approach, for which we prove optimality in certain regimes.
We validate our theoretical findings on synthetic
and real data.
\end{abstract}

\section{Introduction}
Given a single realization of a network $G=(V,E)$, the community detection problem requires to find a partition of its vertices into
a small number of clusters or `communities' \cite{girvan2002community, madeira2004biclustering,jiang2004cluster,fortunato2010community,papadopoulos2012community,javed2018community}.

Numerous methods have been developed for community detection on static networks \cite{girvan2002community, clauset2004finding, guimera2005functional,radicchi2004defining,lancichinetti2009community,von2008consistency,wang2018extreme}.  
However, the network structure evolves over time in most applications. 
For instance, in social networks new users can join the network; in online commerce new products can be listed; 
see
\cite{WGKM18,GLZ08, KZBHLLNSSMS14} for other examples.
In such dynamic settings, it is desirable to have algorithms that perform only a limited number of operations each time a node joins or
leaves, possibly revising the labels 
of nodes in a neighborhood of the new node. (These notions will be formalized below.) 
Several groups have developed algorithms of this type in the recent past \cite{holme2012temporal,cordeiro2016dynamic,zeng2019consensus,martinet2020robust}.
The present paper aims at characterizing the fundamental statistical limits of community detection in the dynamic setting,
and proposing algorithms that achieve those limits.

As usual, establishing fundamental statistical limits requires introducing a statistical model for the network $G=(V,E)$. 
In the case of static networks, precise characterizations have only been achieved recently, and almost uniquely for
a simple network model, namely the stochastic block model (\SBM) \cite{holland1983stochastic, airoldi2008mixed,karrer2011stochastic, rohe2011spectral,decelle2011asymptotic,massoulie2014community,abbe2015exact,abbe2015community,mossel2018proof}.
In this paper we build on these recent advances and study a dynamic generalization of the \SBM, which we refer to as
{\em streaming  \SBM} (\STSBM). 

The \SBM\, can be defined as follows. Each vertex $v$ is  given a label $\tau(v)$ drawn independently from a fixed distribution over
the set $[k]=\{1,2,\dots,k\}$. Edges are conditionally independent given vertex labels. Two vertices $u$, $v$ are connected by an edge with
probability $W_{\tau(u),\tau(v)}$. The analyst is given a realization of the graph and required to estimate the labels $\tau$.
We will assume in addition that the analyst has access to some  noisy version of the vertex labels, denoted by $\ttau = (\ttau(v))_{v\in V}$.
This is a mathematically convenient  generalization: the special case in which no noisy
observations $\ttau$ are available can be captured by letting $\ttau$ be independent of $\tau$.
Further, such a generalization is useful to model cases in which covariate information
is available at the nodes \cite{mossel2016local}.

Informally, the {\em streaming  \SBM} (\STSBM) is a version of SBM in which nodes are revealed one at a time in random order (see below for a formal definition).
In order to model the notion that only a limited number of updates is performed
each time a new node joins the network, we introduce a class of `local streaming algorithms.' These encompass several
algorithms in earlier literature, e.g, \cite{ZGL03,cordeiro2016dynamic}.
Our definition is inspired and motivated  by the more classical definition of local algorithms for static graphs.  Local algorithms output estimates
for one vertex based only on a small  neighborhood around it, and thus scale well to large graphs; see \cite{suomela2013survey} for a survey.
A substantial literature studies the behavior of local algorithms for sparse graphs \cite{gharan2012approximating,hatami2014limits,gamarnik2014limits,montanari2015finding,mossel2016local,fan2017well}.
The recent paper \cite{manoel2017streaming} studies streaming in conjunction with message passing algorithms for Bayesian inference on dense graphs. 

Our results focus on the sparse regime in which the graph's average degree is bounded.  This is the most challenging regime for the \SBM, and
it is also relevant for real-world applications where networks are usually sparse. We present the following contributions:

\vspace{0.2cm}

\noindent{\emph{Fundamental limitations of local streaming algorithms.}} We prove that, in the absence of side information, in streaming symmetric \SBM ~(introduced in  Section \ref{sec:model}), 
local streaming algorithms (introduced in Section \ref{sec:local-streaming-alg}) do not
achieve any non-trivial reconstruction: their accuracy is asymptotically the same as random guessing; see Corollary \ref{cor:streaming-without-sideinfo}. This holds despite the fact that there exist polynomial time  non-local algorithms that achieve significantly better accuracy.
From a practical viewpoint, this indicates that methods with a small `locality radius' (the range over which the algorithm updates its
estimates) are ineffective at aggregating information: they perform poorly unless strong local side information is available.

\vspace{0.2cm}

\noindent{\emph{Local streaming algorithms with summary statistics.}} Given this negative result, it is natural to ask what are the limits of
general streaming algorithms in the \STSBM \hspace{0.05cm} that maintain a bounded amount of global memory, on top of the local information.
While we do not solve this question, we study a subset  of these algorithms that we call  `local streaming algorithms with summary statistics.'
These algorithms maintain bounded-size vectors at vertices and edges, as well as the averages of these vectors that are stored as
global information.  Under suitable regularity conditions on the update functions, we prove that (for the symmetric model and in
absence of side information) these algorithms do not perform
better than random guessing; see Theorem \ref{thm:local-with-summary-stat-is-trivial}.

\vspace{0.2cm}

\noindent{\emph{Optimality of streaming belief propagation.}} 
On the positive side, in Section \ref{sec:sbp} we define a streaming version of belief propagation (\BP),
a local streaming algorithm that we call \ouralgo, parameterized by a locality radius $R$. We prove that, for any non-vanishing amount of
side information, \ouralgo achieves the same reconstruction accuracy as offline BP; see Theorem \ref{thm-streaming-local-bp-is-optimal}. 
The latter is in turn conjectured to be the optimal
offline polynomial-time algorithm \cite{decelle2011asymptotic,abbe2017community,hopkins2017efficient}. 
 Under this conjecture, there is no loss of performance in restricting to local streaming algorithms as long as
$(1)$~local side information is available; $(2)$~the locality radius is sufficiently large;
and $(3)$~information is aggregated optimally via \ouralgo.

\vspace{0.2cm}

Let us emphasize that we do not claim (nor do we expect) \ouralgo to outperform offline BP.
We use offline BP as an `oracle' benchmark (as it has the full graph information available, and is not constrained to act in a streaming fashion).

\vspace{0.2cm}

\noindent{\emph{Implementation and numerical experiments.}} In Section \ref{sec:empirical} and Appendix \ref{sec:synthetic}--\ref{sec:realworld}, we validate our results both on synthetic data, generated
according to the \STSBM, and on real datasets. Our empirical results are consistent with the theory; in particular,
\ouralgo substantially outperforms simple voting methods. However, we observe that it can behave poorly with large locality radius $R$.
In order to overcome this problem, we introduce a `bounded distance' version of \ouralgo, called \boundedalgo,
which appears to be more robust. (\boundedalgo can be shown to enjoy the same theoretical guarantees as \ouralgo.)

\section{Streaming stochastic block model}\label{sec:model}

In this section we present a formal definition of the proposed model. The streaming stochastic block model  
	is a probability distribution 
	$\STSBM(n,k,p,W,\alpha)$
	over triples
$(\tau,\ttau,\bG)$
where $\tau\in [k]^n$ is a  vector of labels (here $[k]\triangleq\{1,\dots, k\}$),
$\ttau\in [k]^n$ are noisy observations of the labels $\tau$, and $\bG =(G(0),G(1), \dots,G(n))$
is a sequence of undirected graphs.
Here $G(t) = (V(t),E(t))$ is a graph over $|V(t)|=t$ vertices and, for each
$0\le t\le n - 1$, $V(t)\subseteq V(t+1)$ and $E(t)\subseteq E(t+1)$.
We will assume, without loss of generality, that $V(n) = [n]$, and interpret $\tau(v)$
as the label associated to vertex $v\in [n]$. For each $0\le t\le n-1$, $G(t)$ is the subgraph induced in $G(t+1)$ by $V(t)$; equivalently, all edges in $E(t+1) \setminus E(t)$ are incident to the unique vertex in $V(t+1) \setminus V(t)$. 

The distribution  $\STSBM(n,k,p,W,\alpha)$ is parameterized by a scalar $\alpha\in [0,\frac{k-1}{k}]$, a probability vector
$p = (p_1, \ldots, p_k)\in \Delta_k\triangleq\{x\in [0,1]^k, \<x,1\>=1\}$, and a symmetric matrix $W\in [0,1]^{k\times k}$.
We draw the coordinates of $\tau$ independently with distribution  $p$, and set $\ttau(v) = \tau(v)$
with probability $1-\alpha$, and $\ttau(v) \sim \Unif([k]\setminus\{\tau(v)\})$ otherwise, independently across vertices:
%
\begin{align*}
  &\P\big(\tau(v) = s\big) = p_s,\qquad  \P\left( \ttau(v) = s_1 \mid \tau(v) = s_0 \right) =
  \begin{cases}
		1 - \alpha& \text{if } s_1 = s_0,\\
		\alpha / (k-1)& \text{if } s_1 \neq s_0.
	\end{cases}
\end{align*}
We then construct $G(n)$ by generating conditionally independent edges, given $(\tau,\ttau)$, with
\begin{align}
  \P\big((u,v)\in E(n)\mid \tau,\ttau\big) = W_{\tau(u),\tau(v)}. \label{eq:EdgeDistr}
\end{align}
Note that the labels $\ttau$ provide noisy `side information' about the true labels $\tau$. This information $\ttau$ is conditionally independent of
the graph $G(n)$ given $\tau$. 
Finally we generate the graph sequence $\bG$ by choosing a uniformly random permutation of
the vertices $(v(1),v(2),\dots,v(n))$ and setting $V(t) =\{v(1),\dots,v(t)\}$ and $G(t)$ to the graph induced by $V(t)$. If $v = v(t)$, then we define $t$ as the arrival order of vertex $v$. Note that, for each $t$, $G(t)$ is distributed according to a standard SBM  with $t$ vertices: $G(t)\sim \SBM(t,k,p,W)$.

An equivalent description is that (conditional on $\tau,\ttau$) $\STSBM(n,k,p,W,\alpha)$ defines a Markov chain over graphs.
The new graph $G(t+1)$ is generated from $G(t)$ by drawing the vertex $v(t+1)$ uniformly at random from $[n]\setminus V(t)$,
and then the edges $(u,v(t+1))$, $u\in V(t)$,  independently with probabilities given by Equation \eqref{eq:EdgeDistr}.

We are interested in the behavior of large graphs with bounded average degree. In order to focus on this regime,
we will consider $n\to\infty$ and $W=W^{(n)}\to 0$ with $W^{(n)} = W_0/n$ for a matrix $W_0\in\reals_{\ge0}^{k\times k}$ independent of $n$. 

A case of special interest is  the streaming symmetric \SBM, $\STSSBM(n, k, a, b, \alpha)$,
which corresponds to taking $p=(1/k,\dots,1/k)$ and $W_0$ having diagonal elements $a$ and non-diagonal elements $b$. Finally, the case $\alpha=(k-1)/k$
corresponds to pure noise $\ttau$: in this case we can drop $\ttau$ from the observations and we will drop
$\alpha$ from the distribution parameters.

\subsection{Definitions and notations}\label{sec:defs}
For two nodes $v, v' \in V(t)$, we denote by $d_t(v, v')$ their graph distance in $G(t)$, i.e., the length of the shortest path in $G(t)$ connecting $v$ and $v'$, with $d_t(v, v') = \infty$ if no such path exists. We also write $d(v, v') = d_n(v, v')$ for the graph distance in $G(n)$. For $v \in V(n)$ and $R \in \mathbb{N}^+$, let $\Ball_R^t(v) = (V_R^t(v), E_R^t(v))$ denote the ball of radius $R$ in $G(t)$ centered at $v$, i.e., the subgraph induced in $G(t)$ by nodes $V_R^t(v) \triangleq \{ v' \in V(t) : d_t(v, v') \leq R\}$ and edges $(v_1, v_2) \in E(t)$ with $v_1, v_2 \in V_R^t(v)$. Furthermore, let $D_R^t(v) \triangleq \{v' \in V(t): d(v, v') = R\}$. For $A \subseteq V$, let $\tau(A)$ be the vector containing all true labels of vertices in $A$, and $\tilde{\tau}(A)$ be the vector containing all noisy labels of vertices in $A$. Throughout the paper, unless otherwise stated, we assume $(v_1,v_2)$ is an undirected edge.  


We consider an algorithm $\cA$ that takes as input the graph $G(n)$ and side information $\ttau$, and for each $v\in V(n)$ outputs $\cA(v; G(n), \ttau)\in [k]$ as an estimate for $\tau(v)$. Note that we always assume the arrival orders of vertices are observed (i.e., by observing $v \in [n]$ we also observe the unique $t \in [n]$ such that $v = v(t)$), thus $G(n)$ contains the arrival order of its vertices. Define the estimation accuracy of algorithm $\cA$ as
\begin{align}
	\est_ac(\cA) \triangleq \E\hskip-1mm\left[ \max_{\pi\in \fS_k}\frac{1}{n} \hskip-1mm \sum_{v \in V(n)} \hskip-2mm \ind\left(\cA(v; G(n), \ttau) = \pi\circ\tau(v)\right) \right]\hskip-1mm.\label{eq:nodeAccuracy}
\end{align}
Here $\fS_k$ is the group of permutations over $[k]$. Denote the optimal estimation accuracy by
\begin{align}
	\opt_est_ac \triangleq \sup\limits_{\cA}\E\hskip-1mm\left[ \max_{\pi\in \fS_k}\frac{1}{n} \hskip-1mm\sum_{v \in V(n)} \hskip-2mm \ind\left(\cA(v; G(n), \ttau) = \pi\circ\tau(v)\right) \right]\hskip-1mm.\label{eq:OptimalAccuracy}
\end{align}
Here the supremum is taken over all algorithms, not necessarily local or online. In the above expressions, the expectation is with respect to $G(n),\tau,\ttau$, and the randomness of the algorithm (if $\cA$ is randomized).
%

\section{Local streaming algorithms}
\label{sec:local-streaming-alg}

In this section we introduce the local streaming algorithm, which is a generalization of local algorithm in the dynamic network setting. An \textit{$R$-local streaming  algorithm} is an algorithm that at each vertex keeps some information available
to that vertex. As a new vertex $v(t)$ joins, information within the $R$-neighborhood $\Ball_R^t(v(t))$ is  pulled. An estimate for $\tau(v)$ is constructed based on information available to $v$.
In order to accommodate randomized algorithms we assume that  random variables
$(\rand_alg_quant_v)_{v\in V(n)} \iidsim \Unif([0,1])$, independent of the graph, are part of the local information
available to the algorithm.

As an example, we can consider a simple voting algorithm. 
At each step $t$, this algorithm keeps in memory
the current estimates $\htau(v)\in[k]$ for all $v\in V(t)$. As a new vertex $v(t)$ joins, its estimated
label is determined according to
\begin{align}
  & \htau(v(t))= \argmax_{s\in[k]}\pi_{t}(s)\,  \qquad \pi_{t}(s) = \delta\, \ind(s=\ttau(v(t)))+
\hskip-4mm\sum_{(v(t),u)\in E(t)} \hskip-4mm \ind(s=\htau(u))\,. \label{eq:Voting}
\end{align}  
 %
In words, the estimated label at $v(t)$ is the winner of a voting procedure, where the neighbors of $v(t)$ contribute one vote
each, while the side information at $v(t)$ contributes $\delta$ votes. 

For $v \in V(n)$ and $t \in [n]$, we denote the subgraph accessible to $v$ at time $t$ by $\mathcal{G}_v^t = (\mathcal{V}_v^t, \mathcal{E}_v^t)$, with initialization $\mathcal{G}_v^0 = (\{v\}, \emptyset)$. At time $t$, we conduct the following updates:%
\begin{align*}
  \mathcal{V}_v^t \triangleq
  \begin{cases}
       \bigcup\limits_{v' \in V_R^t(v(t))} \mathcal{V}_{v'}^{t - 1}   &\mbox{ for }  v \in {V}_R^t(v(t)), \\
       \mathcal{V}_v^{t-1}  &\mbox{ for }   v \notin {V}_R^t(v(t)). 
    \end{cases}
\end{align*}
We let $\cG_v^t$  be the subgraph induced in $G(t)$ by $\cV_v^t$,  and
denote by $\ocG_v^t=(\ocV_v^t,\ocE_v^t)$ the  corresponding labeled
graph with vertex labels $\ttau$ and randomness $\xi$. Namely $\ocV_v^t \triangleq \{(v', \ttau(v'), \xi_{v'}): v' \in \cV_v^t\}$, $\ocE_v^t \triangleq \cE_v^t$.
Let us emphasize that the `neighborhoods' $\cG^t_v$ are not symmetric, in the sense that we can
have $v_1\in \cG^t_{v_2}$ but  $v_2\not\in \cG^t_{v_1}$.
\begin{definition}[$R$-local streaming algorithm]\label{def:streaming-R-local-algorithm}
An algorithm $\cA$ is an \emph{$R$-local streaming algorithm} if, at each time $t$ and for each vertex $v \in V(t)$, it outputs an estimate of $\tau(v)$  denoted by $\mathcal{A}^t(v; G(t), \ttau) \in [k]$, which is a function uniquely of $\bar{\cG}_v^t$.
\end{definition}
Note that this class includes as special cases voting algorithms (which correspond to $R=1$) but also a broad class of other approaches. We will compare $R$-local streaming algorithms with $R$-local algorithms (non-streaming).
In order to define the latter, given a neighborhood $\Ball_R^t(v)$, we define the corresponding labeled graph as
$\bar{\Ball}_R^t(v) \triangleq (\bar{V}_R^t(v), E_R^t(v))$, with $\bar{V}_R^t(v) \triangleq \{(v', \tilde{\tau}(v'), \rand_alg_quant_{v'}): v' \in V_R^t(v(t))\}$.

\begin{definition}[$R$-local algorithm]
An algorithm $\cA$ is an \emph{$R$-local algorithm} if, at each time $t$ and for each vertex $v \in V(t)$, it outputs an estimate of $\tau(v)$  denoted by $\mathcal{A}^t(v; G(t), \ttau) \in [k]$, which is a function uniquely of $\bar{\Ball}_R^t(v)$.
      \end{definition}
      For simplicity, define the final output of an algorithm $\cA$ by $\cA(v; G(n), \ttau) \triangleq \cA^n(v; G(n), \ttau)$. The next theorem states that, under \STSSBM, any local streaming algorithm with fixed radius behaves asymptotically as a local algorithm. Here we focus on \STSSBM---extension to asymmetric cases is straightforward. 
\begin{theorem}\label{thm:streaming-local-implies-local}
  Let $\bG$ be distributed according to \STSSBM$(n, k, a, b, \alpha)$, and $v_0 \sim \Unif( [n] )$ be a vertex chosen independently of $\bG$.
  Then, for any $\epsilon > 0$, there exist $n_{\epsilon}, r_{\epsilon} \in \mathbb{N}^+$, such that for every $n \geq n_{\epsilon}$ with probability at least $1 - \epsilon$, the following properties hold:
	\begin{enumerate}
		\item $\cG_{v_0}^n$ is a subgraph of $\Ball_{r_{\epsilon}}^n(v_0)$;
		\item $v_0$ does not belong to $\cG_v^n$ for any $v \in V(n) \backslash V_{r_{\epsilon}}^n(v_0)$.
	\end{enumerate}
\end{theorem}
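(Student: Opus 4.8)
The plan is to control the growth of the ``accessible subgraph'' $\cG_{v_0}^n$ by comparing it to the standard neighborhood growth in a sparse SBM, which is locally treelike with bounded degrees. The key structural observation is the following: a vertex $v'$ can enter $\cG_{v_0}^t$ only if at some time $s \le t$ the newly arrived vertex $v(s)$ has both $v_0$ and $v'$ inside its radius-$R$ ball $\Ball_R^s(v(s))$; more precisely, $\cV_{v_0}^t$ can only grow at a time $s$ when $v(s)$ arrives with $v_0 \in V_R^s(v(s))$, and at that moment it absorbs $\bigcup_{v' \in V_R^s(v(s))} \cV_{v'}^{s-1}$. So the first step is to recursively bound which vertices can ever appear in $\cV_{v_0}^n$: inductively, $\cV_{v_0}^n$ is contained in the set of vertices reachable from $v_0$ by a bounded-length path in $G(n)$, where the length bound comes from chaining together at most a bounded number of radius-$R$ ``jumps.''

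First I would make this precise by a union bound / first-moment argument over potential ``witnessing paths.'' For Part 1, I want to show $\cV_{v_0}^n \subseteq V_{r_\epsilon}^n(v_0)$ with high probability. The natural approach is: (i) show that with probability $\ge 1-\epsilon/3$ the ball $\Ball_{r}^n(v_0)$ has at most $C(r)$ vertices for a deterministic $C(r)$ (standard for $\SBM(n,k,p,W_0/n)$ via domination by a Galton–Watson / Poisson branching process with bounded mean offspring); (ii) argue that once the relevant ball is small, only finitely many arrival events can ever touch $v_0$'s component, so $\cV_{v_0}^n$ stops growing after some finite radius is reached. The cleanest way to get a fixed $r_\epsilon$ independent of $n$ is to note that each ``absorption event'' at $v_0$'s cluster extends its graph-distance radius by at most $2R$ (the new vertex $v(s)$ is within $R$ of $v_0$, and it pulls in things within $R$ of itself), and to bound the number of such events. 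Because each absorption event requires a fresh arriving vertex lying within distance $R$ of the current (bounded-size) cluster, and the total number of vertices ever within distance, say, $100 R$ of $v_0$ in $G(n)$ is $O_P(1)$ (again by branching-process domination), the number of absorption events is $O_P(1)$, hence the final radius $r_\epsilon$ is $O_P(1)$ and can be chosen deterministically by absorbing the failure probability into $\epsilon$.

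For Part 2, I would argue by the same token but ``from the other side.'' If $v_0 \in \cG_v^n$ for some $v \notin V_{r_\epsilon}^n(v_0)$, then tracing back through the update rule, there is a chain of absorption events connecting $v$ to $v_0$ entirely within the (bounded-size, with high probability) neighborhood of $v_0$ up to some fixed radius; but a vertex $v$ at graph distance $> r_\epsilon$ from $v_0$ cannot be connected to $v_0$ by such a chain once $r_\epsilon$ is chosen larger than the maximum reach established in Part 1. Concretely, I would show that the event in Part 2 implies $v_0 \in \cV_v^n$, which by symmetry of the ``absorption relation'' on the time axis (both $v$ and $v_0$ must have been simultaneously within distance $R$ of some arriving vertex, and then the clusters merge) forces $v \in \cV_{v_0}^n$; so Part 2 reduces to Part 1 with a possibly larger but still $O_P(1)$ radius. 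So I would prove a single quantitative statement — ``with probability $\ge 1-\epsilon$, every vertex whose accessible subgraph ever contained $v_0$, and every vertex ever in $v_0$'s accessible subgraph, lies within distance $r_\epsilon$ of $v_0$'' — and read off both parts.

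The main obstacle is the bookkeeping in step one: one has to rule out ``long-range'' absorption chains where a single arriving vertex has a ball that, although of bounded size, stretches far in graph distance, and where many such events compound. The resolution is the bounded-degree/bounded-ball property of the sparse SBM: with probability $1-\epsilon$, the ball of radius $m$ around $v_0$ in $G(n)$ has size bounded by a constant $C(m)$ for each fixed $m$, uniformly in $n$, so all the relevant balls $V_R^s(v(s))$ that ever intersect $v_0$'s cluster are themselves of bounded size and located within bounded graph distance of $v_0$. Making the induction on ``number of absorption events'' terminate at a value that does not depend on $n$ — by showing the pool of candidate arriving vertices is a.s. finite and of size $O_P(1)$ — is the crux, and everything else (branching process domination, union bounds) is routine.
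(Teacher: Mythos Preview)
Your proposal has two genuine gaps.

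\textbf{First, the ``each absorption extends the radius by at most $2R$'' step is false.} An absorption event for $v_0$ occurs when some $v(s)$ arrives with $v_0\in V_R^s(v(s))$, and at that moment $\cV_{v_0}^s$ becomes $\bigcup_{v'\in V_R^s(v(s))}\cV_{v'}^{s-1}$. You absorb not just the vertices of $V_R^s(v(s))$ but the \emph{entire accessible subgraphs} $\cV_{v'}^{s-1}$ of those vertices, and these may already have large radius. Concretely, take $R=1$ and a path $v_0{-}u_1{-}\cdots{-}u_k$ with arrival order $u_k,u_{k-1},\dots,u_1,v_0$: one checks that $\cV_{u_1}^{k}=\{u_1,\dots,u_k\}$, so when $v_0$ arrives, a \emph{single} absorption event sends the radius of $\cV_{v_0}$ from $0$ to $k$. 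Thus bounding the number of absorption events at $v_0$ (which is indeed at most $|V_R^n(v_0)|=O_P(1)$) does not bound the radius of $\cV_{v_0}^n$. The argument you sketch becomes circular: the ``pool of candidate arriving vertices'' is only bounded if you already know the cluster stays in a bounded ball, which is the conclusion.

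What is actually needed is to track \emph{chains} of such absorptions: for a vertex $v_b$ at distance $r$ from $v_0$ to end up in $\cV_{v_0}^n$, there must be a sequence $p_1,\dots,p_l$ of arriving vertices whose $R$-balls overlap successively and whose arrival times are \emph{increasing toward $v_0$}; necessarily $l\ge \lfloor r/(2R)\rfloor$. The paper calls this an ``information flow.'' The crucial probabilistic input you are missing is that a fixed chain of length $l$ occurs in the required time order with probability $1/l!$ (since the arrival permutation is uniform). Combined with a first-moment count of chains on the local tree (giving roughly $\binom{r}{l}C^l d^r$ candidates), the factor $1/l!$ is what makes the sum over $l\ge r/(2R)$ vanish for large $r$. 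Your branching-process and union-bound ingredients are the right peripheral tools, but without the $1/l!$ from the random ordering the bound does not close.

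\textbf{Second, the symmetry reduction for Part 2 fails.} The relation ``$v_1\in\cG_{v_2}^t$'' is \emph{not} symmetric: the paper explicitly warns that one can have $v_1\in\cG_{v_2}^t$ but $v_2\notin\cG_{v_1}^t$ (the path example above is already an instance: $v_0\in\cV_{u_k}^{k+1}$ while $u_k\notin\cV_{v_0}$ until the very last step, and analogous asymmetric examples persist at time $n$). So you cannot deduce Part~2 from Part~1 by symmetry. The paper instead reruns the information-flow count with the roles of origin and end reversed (flows starting at $v_0$ and ending at a distant $v$), which goes through identically.
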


Under the symmetric SBM, local algorithms without side information cannot achieve non-trivial estimation accuracy as defined in \eqref{eq:nodeAccuracy}
\cite{kanade2016global}. Therefore we have the following  corollary of the first part of Theorem \ref{thm:streaming-local-implies-local}.
\begin{corollary}\label{cor:streaming-without-sideinfo}
	Under $\STSSBM(n, k, a, b)$ with no side information,  no $R$-local streaming algorithm $\cA$ can achieve non-trivial estimation accuracy. Namely,  
	\begin{align*}
	\lim_{n\to\infty}\est_ac(\cA) =\frac{1}{k}\, .
	\end{align*}
  \end{corollary}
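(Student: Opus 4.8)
The plan is to reduce the streaming statement to the known static result. The hypothesis of the corollary is $\STSSBM(n,k,a,b)$ with no side information, i.e. $\alpha = (k-1)/k$, so that $\ttau$ carries no information about $\tau$ and can be dropped; equivalently the labeled ball $\bar\Ball_R^t(v)$ carries only graph structure plus the independent randomness $\xi$. Fix an $R$-local streaming algorithm $\cA$. First I would apply Theorem \ref{thm:streaming-local-implies-local}: given $\epsilon>0$, obtain $n_\epsilon, r_\epsilon$ such that for $n\ge n_\epsilon$, with probability at least $1-\epsilon$ over $\bG$ and $v_0\sim\Unif([n])$, the data $\bar\cG_{v_0}^n$ that $\cA$ uses to label $v_0$ is (a labeled version of) a subgraph of $\Ball_{r_\epsilon}^n(v_0)$. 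On this event, the output $\cA(v_0;G(n),\ttau)$ is a (possibly randomized) function of $\bar\Ball_{r_\epsilon}^n(v_0)$ alone — i.e. it agrees with the output of some $r_\epsilon$-local algorithm $\cA'$. The only subtlety is that the map ``$\bar\cG_{v_0}^n \mapsto \cA$'s output'' is defined on the random subgraph, not on the full ball; I would handle this by letting $\cA'$ simulate the streaming history inside $\Ball_{r_\epsilon}^n(v_0)$ — which it can, since the arrival order is observed and, on the good event, all of $\cV_{v_0}^n$ lies within the ball so the recursion defining $\cG_{v_0}^t$ can be unrolled using only information contained in $\bar\Ball_{r_\epsilon}^n(v_0)$.

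Next I would invoke the static lower bound: under the symmetric SBM with no side information, no $r$-local (non-streaming) algorithm achieves non-trivial accuracy \cite{kanade2016global}. Concretely, for the $r_\epsilon$-local algorithm $\cA'$ we have $\lim_{n\to\infty}\E\big[\ind(\cA'(v_0;G(n))=\pi\circ\tau(v_0))\big] \le 1/k + o(1)$ after the optimal permutation alignment — or, more carefully, one uses that the posterior of $\tau(v_0)$ given $\Ball_r^n(v_0)$ converges to uniform, so any estimator based on the ball is asymptotically no better than a random guess. Combining: for $n\ge n_\epsilon$,
\begin{align*}
\E\big[\ind(\cA(v_0;G(n),\ttau)=\tau(v_0))\big]
\le \P(\text{bad event}) + \E\big[\ind(\cA'(v_0;G(n))=\tau(v_0))\big]
\le \epsilon + \tfrac1k + o_n(1).
\end{align*}
Since $Q_n(\cA) = \E\big[\max_{\pi}\frac1n\sum_v \ind(\cA(v)=\pi\circ\tau(v))\big]$ and by exchangeability of the vertex labels under the symmetric model this per-vertex bound transfers to the averaged, permutation-optimized quantity (the $\max_\pi$ requires a short argument: one shows that with the uniform community sizes, no fixed permutation other than the identity can help on more than a $1/k$ fraction either, using concentration of the empirical confusion matrix), we get $\limsup_{n\to\infty} Q_n(\cA) \le 1/k + \epsilon$. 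Letting $\epsilon\downarrow 0$ gives $\limsup_n Q_n(\cA)\le 1/k$. The matching lower bound $Q_n(\cA)\ge 1/k$ is trivial since a constant estimator already matches one community of relative size $\to 1/k$ after optimizing $\pi$; hence the limit equals $1/k$.

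The main obstacle, and the step requiring the most care, is the handoff between the streaming and static worlds on the complement of the good event, together with the $\max_{\pi\in\fS_k}$ in the definition of $Q_n$: a naive per-vertex bound controls $\frac1n\sum_v\ind(\cA(v)=\tau(v))$ but not $\frac1n\sum_v\ind(\cA(v)=\pi\circ\tau(v))$ for a data-dependent $\pi$. I would resolve this by a union bound over the $k!$ permutations combined with the static impossibility result applied to each relabeled target, using that $k$ is fixed; the bad-event contribution is absorbed since accuracy is bounded by $1$ and the bad event has probability $\le\epsilon$ uniformly in $n\ge n_\epsilon$. Everything else is bookkeeping: measurability of $\cA'$, the fact that dropping $\ttau$ is legitimate when $\alpha=(k-1)/k$, and the standard fact that the symmetric-SBM local neighborhood posterior is asymptotically uniform.
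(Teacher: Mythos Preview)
Your proposal is correct and follows essentially the same route as the paper: apply Theorem~\ref{thm:streaming-local-implies-local} to reduce the $R$-local streaming algorithm to an $r_\epsilon$-local (non-streaming) one on the good event, absorb the bad event into the $\epsilon$ slack, and then invoke the static impossibility result of \cite{kanade2016global}. The paper's writeup is marginally more streamlined in that it keeps the $\max_{\pi\in\fS_k}$ inside the expectation throughout the decomposition---bounding directly by $\epsilon+\sup_{\cA'}\E\big[\max_\pi \tfrac{1}{n}\sum_i \ind(\cA'(i)=\pi\circ\tau(i))\big]$ over $r_\epsilon$-local $\cA'$---and then cites a concentration result (Corollary~\ref{cor:concentrate}) alongside \cite{kanade2016global} to finish; your per-vertex bound followed by concentration of the empirical confusion matrix and a union bound over $\fS_k$ is exactly the same mechanism unpacked.
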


      \begin{remark}
       Corollary \ref{cor:streaming-without-sideinfo}  does not hold if side information is available.
        As we will see below, an arbitrarily
       small amount of side information (any $\alpha<(k-1)/k$) can be boosted to ideal accuracy \eqref{eq:OptimalAccuracy} using $R$-local streaming
       algorithms with sufficiently large $R$. 
       On the other hand, for a fixed small $R$, a small amount of side information
       only has limited impact on accuracy.
       Our numerical simulations illustrate this for voting algorithms,
       which are $R$-local for $R=1$: they do not provide substantial boost over the use of only side information (i.e., the estimated label $\htau(v) = \ttau(v)$
       at all vertices).
    \end{remark}

   \section{Local streaming algorithms with summary statistics}\label{sec:summary-stat}

    The class of local algorithms is somewhat restrictive. In practice we can imagine keeping a small memory containing global
    information and updating it each time a new vertex joins. We will not consider general streaming algorithms under a
    memory constraint; we instead consider a subclass that we name `local streaming algorithms with summary statistics'.

    Formally, the state of the algorithm at time $t$ is encoded in two vectors $\bw^t = (w^t_i)_{i\in V(t)}\in(\reals^m)^{V(t)}$,
    $\be^t= (e^t_{ij})_{(i, j)\in E(t)}\in (\reals^m)^{E(t)}$,    indexed respectively by the vertices and edges of $G(t)$.
    Here $m$ is a fixed integer independent of $n$.
    These are initialized to independent random variables $w^{t_*(i)-1}_i\iidsim P_w$,   $e^{t_*(i)\vee t_*(j)-1}_{ij}\iidsim P_e$,
    where $t_*(i)$ is the time at which vertex $i$ joins the graph ($v(t_*(i)) =i$), $t_*(i)\vee t_*(j) \triangleq \max\{t_*(i), t_*(j)\}$, and $P_w, P_e$ are probability distributions over $\RR^m$. 
    

    At each $ t \in [n]$, a new vertex $v(t)$ joins the graph, and a `range of action' $(\localV_t, \localE_t)$ is decided,
    with $\localV_t \subseteq V(t)$ a vertex set and $\localE_t \subseteq E(t)$ an edge set.
    We assume $(\localV_t, \localE_t)$ to depend uniquely on the $R$-neighborhood of $v(t)$,
    $\Ball_R^t(v(t))$, and to be such that:
    $(i)$~the range of action is a subset of the neighborhood: $\localV_t\subseteq V_R^t(v(t))$,  $\localE_t\subseteq E_R^t(v(t))$; and
    $(ii)$~the range of action has bounded size: $|\localV_t|+|\localE_t|\le C_{act} = C_{act}(R)$ which does not scale with $n$. 
    Notice that the second condition
    is only required because the maximum degree in $G(n)$ is $\log n$, and it is to avoid pathological behavior due to high-degree vertices; we believe it should be possible to avoid it at the cost of extra technical work.

    At each  time $t$, the algorithm updates the quantities $w^t_i$, $e^t_{ij}$ in the range of  action:
    \begin{align*}
      w_i^t &= F_w^t(\bw^{t - 1}(\localV_t), \be^{t - 1}(\localE_t), \bar{w}^{t - 1}, \bar{e}^{t - 1} | i)\, , & \forall i\in \localV_t,\\
      e_{ij}^t &= F_e^t(\bw^{t - 1}(\localV_t), \be^{t - 1}(\localE_t), \bar{w}^{t - 1}, \bar{e}^{t - 1} | i,j)\, , & \forall (i,j)\in \localE_t.
    \end{align*}
    Here  $\bw^{t - 1}(\localV_t)$, $\be^{t - 1}(\localE_t)$ are the restrictions of $\bw^{t-1}, \be^{t-1}$ to the range of action sets,
    and $\bar{w}^{t-1}$, $\bar{e}^{t-1}$ are summary statistics, updated according to:
\begin{align*}
      \bar{w}^t =\frac{1}{|V(t)|}\sum_{v \in V(t)} w_{v}^t\,,\;\;\;\;
      \bar{e}^t = \frac{1}{|E(t)|} \sum\limits_{(i, j) \in E(t)}e_{ij}^t\,.
\end{align*}
      Finally, vertex labels are estimated using a function $\htau:\reals^m\to [k]$. Namely, label at vertex $v$ is
      estimated at time $t$ as $\htau(w_v^t)$.

      We next establish that, under Lipschitz continuity assumptions on the update functions, local streaming algorithms with summary statistics cannot achieve non-trivial reconstruction in the
      symmetric model $\STSSBM(n, k, a, b)$. Notice that this claim cannot hold for a general algorithm in this class.
      Indeed, each node $i$ could encode the structure of (a bounded-size subgraph)  $\Ball_R^t(i)$ in the decimal expansion
      of $w_i^t$, in such a way that distinct vertices use non-overlapping sets of digits. Then the summary statistics $\bar{w}^t$
      would contain the structure of the whole graph. We avoid this by requiring the update functions to be bounded Lipschitz,
      and adding a small amount $\varepsilon$ of noise to $w_i^n$, before taking a decision. Informally, we are assuming that adding small perturbation will not have large effect on the final output. 
\begin{theorem}\label{thm:local-with-summary-stat-is-trivial}
  Assume that there exist numerical constant $L_F$, independent of $n$, such that for all $t \in [n]$, all $i \in[t]$ and all $1 \leq j < l \leq t$, we have
  $\|F_w^t(\, \cdot \, | i)\|_{\infty}, \|F_e^t(\,\cdot\, | j,l)\|_{\infty} \le L_F$ and
  $\|F_w^t(\, \cdot \, | i)\|_{\Lip}, \|F_e^t(\,\cdot\, | j,l)\|_{\Lip} \le L_F$. (Here $\|f\|_{\Lip}$ denotes the Lipschitz modulus of function $f$.) Let $\{w_i^t: t\le n, i\in V(t)\}$ be the vertex variables generated by the local streaming algorithm
  with summary statistics defined by functions $F_w, F_e$. Let $\htau:\reals^m\to [k]$ and $(U_{ij})_{i\le n,j\le m}\iidsim\Unif([-1,1])$ independent of the other randomness. Let $U_i = (U_{ij})_{j \le m} \in \RR^m$.
  Then under $\STSSBM(n, k, a, b)$, for any $\varepsilon>0$,
  \begin{align*}
    \limsup\limits_{n \rightarrow \infty}\E & \left[\max\limits_{\pi\in \fS_k} \frac{1}{n} \sum\limits_{i = 1}^n \ind{(\htau(w_i^n+
    \varepsilon U_i) = \pi \circ \tau(i))}\right] = \frac{1}{k}\, .
  \end{align*}
\end{theorem}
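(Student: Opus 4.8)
The plan is to reduce the algorithm to an honest $R$-local streaming algorithm by ``freezing'' the summary statistics at deterministic values, and then invoke Corollary~\ref{cor:streaming-without-sideinfo}. I would proceed in four steps: (1) show that all but a vanishing fraction of vertices have a ``computation footprint'' of bounded size; (2) show that the summary statistics $\bar w^t,\bar e^t$ concentrate around a deterministic self-consistent trajectory $(\phi^t,\psi^t)$; (3) couple the true run with the ``oracle'' run that uses $(\phi^t,\psi^t)$ in place of $\bar w^t,\bar e^t$, bounding the coupling error via the Lipschitz hypotheses; (4) apply Corollary~\ref{cor:streaming-without-sideinfo} to the oracle algorithm and transfer the conclusion back through the $\varepsilon U$ perturbation.

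First, since $\|F_w^t\|_\infty,\|F_e^t\|_\infty\le L_F$, truncating the arbitrary initializations $P_w,P_e$ (which changes the output with probability $o(1)$) lets us assume every $w_i^t,e_{ij}^t$ lies in $[-L_F,L_F]^m$. The value $w_v^n$ is a deterministic function of the updates performed inside $\bar{\cG}_v^n$ together with the summary-statistic trajectory; define the footprint of $v$ to be the number of update operations involved, which on the event of Theorem~\ref{thm:streaming-local-implies-local}(1) is at most $\sum_{u\in V_{r_\epsilon}^n(v)}|V_R^n(u)|$. Because $\STSSBM(n,k,a,b)$ with $W_0$ of order $1/n$ is locally tree-like with bounded average degree, a first-moment computation shows that for every $\epsilon>0$ there is $D=D(\epsilon,R)$ with $\P(\text{footprint of a uniform random }v\text{ exceeds }D)<\epsilon$ for $n$ large; the at most $\epsilon n$ exceptional vertices can be discarded as they change any accuracy by at most $\epsilon$.

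The heart of the argument, and the step I expect to be the main obstacle, is Step 2. Define $(\phi^t,\psi^t)$ by the mean-field recursion obtained from the update rules by substituting $\phi^{t-1},\psi^{t-1}$ for $\bar w^{t-1},\bar e^{t-1}$ and averaging over the limiting (locally weakly convergent) law of the arriving vertex's neighborhood, a Poisson/Galton--Watson tree. I would then show, along a fine grid of scales $t=\lfloor\beta n\rfloor$ and by induction on $\beta$, that $\sup_{s\le\lfloor\beta n\rfloor}(\|\bar w^s-\phi^s\|+\|\bar e^s-\psi^s\|)=o_P(1)$. The inductive step combines (i) the footprint bound, which writes each $w_v^s$ (for typical $v$) as a composition of at most $D$ maps of Lipschitz modulus $\le L_F$ applied to the local graph and to finitely many entries of the summary-statistic trajectory, so that on the induction hypothesis $w_v^s$ is within $o_P(1)$ of the corresponding local function evaluated with $\phi,\psi$ frozen; with (ii) a variance bound for $\bar w^s=\frac1s\sum_v w_v^s$, using that conditioned on the (approximately frozen) trajectory the footprints of $w_u^s$ and $w_v^s$ are vertex-disjoint — hence the summands asymptotically decorrelate — for all but an $o(s^2)$ fraction of pairs, the pathological high-degree cases being excluded by the $O(\log n)$ bound on the maximum degree. \textbf{The genuine difficulty is the feedback loop}: the summary statistic enters the very updates that define it, so the induction must be arranged so that the $o_P(1)$ errors are not amplified geometrically across scales; this is precisely why the bounded-footprint reduction of Step~1 is needed (it keeps the per-scale Lipschitz amplification a fixed constant $(L_FC_{act})^D$ rather than $n^{\omega(1)}$), and the coarse-graining must be handled carefully.

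Finally, on the event $\delta_n:=\sup_{s\le n}(\|\bar w^s-\phi^s\|+\|\bar e^s-\psi^s\|)\le\eta$ from Step~2 and for $v$ with footprint $\le D$, unrolling the computation and applying the Lipschitz bound link by link gives $\|w_v^n-\tilde w_v^n\|\le (L_FC_{act})^D\,\delta_n=o_P(1)$, where $\tilde w_v^n$ is the oracle value. The oracle algorithm — its updates use only $\bar{\cG}$-information plus the hard-coded constants $\phi^t,\psi^t$, and its output $\htau(\tilde w_v^n+\varepsilon U_v)$ folds the independent noise $U_v$ into the per-vertex randomness $\xi_v$ — is a genuine $R$-local streaming algorithm with no side information, so by Corollary~\ref{cor:streaming-without-sideinfo} its accuracy tends to $1/k$. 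To transfer this: since $\htau$ is an arbitrary measurable map and $U_v\sim\Unif([-1,1]^m)$ is independent of every $w_i^t$, for $\|w-w'\|\le\eta$ the laws of $\htau(w+\varepsilon U)$ and $\htau(w'+\varepsilon U)$ differ in total variation by at most $\sqrt m\,\eta/(2\varepsilon)$, because the cubes $w+\varepsilon[-1,1]^m$ and $w'+\varepsilon[-1,1]^m$ overlap in all but this fraction. Combining this with Steps~1 and~3 yields $\E\big[\frac1n\sum_v\ind(\htau(w_v^n+\varepsilon U_v)\neq\htau(\tilde w_v^n+\varepsilon U_v))\big]=o(1)$, so the true accuracy and the oracle accuracy share the limit $1/k$, which is the claim.
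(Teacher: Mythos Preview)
Your high-level strategy---freeze the summary statistics, show the frozen algorithm is close to the original, then show the frozen algorithm is trivial---matches the paper's. But the paper makes a different choice at the key branching point, and your choice leaves a real gap in Step~2.

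The paper does \emph{not} freeze at a deterministic mean-field trajectory. Instead it introduces an auxiliary process $b_i^t$ that agrees with $w_i^t$ up to time $\lceil\delta n\rceil$ and thereafter replaces $\bar b^{t-1}$ in the update by $\E[\bar b^{t-1}\mid\mathcal F_a]$, where $\mathcal F_a$ is the $\sigma$-algebra generated by the first $\lceil\delta n\rceil$ steps together with the community sizes. This buys two things your deterministic $\phi^t$ does not. First, the concentration statement $\sup_t|\bar b^t-\E[\bar b^t\mid\mathcal F_a]|=o_P(1)$ has \emph{no feedback}: the auxiliary process is defined using conditional expectations, not its own running average, so the covariance bound for $b_{v(x)}^t,b_{v(y)}^t$ reduces to a pure local-neighborhood-disjointness argument. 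Second, the error $|w_i^t-b_i^t|$ starts at zero at $t=\lceil\delta n\rceil$ and is driven only by the (already small) quantity $|\bar b^t-\E[\bar b^t\mid\mathcal F_a]|$; the total amplification of this driver is then controlled by an explicit matrix recursion (their $A_t$ matrices and the bound $H_{\lceil\delta n\rceil+1}=O_P(n)$), which uses the intersection structure $|\localV_{t_k}\cap\localV_{t_{k+1}}|$ and not just a per-vertex footprint bound.

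Your Step~2 tries to prove $\sup_s|\bar w^s-\phi^s|=o_P(1)$ by induction on scales $\beta$, but the recursion does not close: advancing from $\beta$ to $\beta+\Delta\beta$ introduces, for vertices updated in that window, an error of order $(L_FC_{act})^D\cdot(\text{previous error}+O(\Delta\beta))$, and iterating over $O(1/\Delta\beta)$ scales this is $O(1)$, not $o(1)$. The bounded footprint controls the sensitivity of a \emph{single} $w_v$ to the summary-statistic trajectory, but it does not control the global feedback of summary-statistic errors into later summary-statistic errors---that is exactly what the paper's matrix analysis is for, and your proposal has no analogue of it. Conversely, your Step~4 reduction to Corollary~\ref{cor:streaming-without-sideinfo} is genuinely cleaner than the paper's direct argument (the paper cannot use that corollary because its frozen values are $\mathcal F_a$-measurable and hence contain label information), but this advantage is only available once Step~2 is settled. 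If you want to salvage the deterministic-$\phi$ route, you would need to define $\phi$ self-consistently as the fixed point $\phi^t=\E[\bar{\tilde w}^t]$ and then reproduce the paper's amplification bound with driver $|\bar{\tilde w}^t-\phi^t|$; the footprint argument alone is not enough.
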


   \section{Streaming belief propagation}\label{sec:sbp}
   
In this section we focus on the symmetric model \STSSBM. Notice that this model makes community detection more difficult compared to the asymmetric model, as in the latter case average degrees for vertices are different across communities, and one can obtain non-trivial estimation accuracy by simply using the degree of each vertex. For the  symmetric model \STSSBM$(n,k,a,b,\alpha)$,  we proved that local streaming algorithms cannot provide any non-trivial
reconstruction of the true labels if no side information is provided, i.e., if $\alpha=(k-1)/k$. We also comment that, for a
fixed (small) $R$, accuracy achieved by any algorithm is continuous in $\alpha$, and hence a small amount of side information will have a small effect.

In contrast, for any non-vanishing side information $\alpha<(k-1)/k$,
we conjecture that information-theoretically optimal reconstruction is possible using a local streaming algorithm, under two conditions:
    $(i)$~the locality radius $R$ is large enough; and $(ii)$~the following Kesten-Stigum (KS) condition is met:
    \begin{align}
      \lambda \triangleq \frac{(a-b)^2}{a+(k-1)b}>1\, .\label{eq:KS}
    \end{align}
    We will refer to $\lambda$ as to the `signal-to-noise ratio' (SNR). 

    We provide evidence towards this conjecture by proposing a streaming belief propagation algorithm (\ouralgo)
    and showing that it achieves asymptotically the same accuracy as the standard offline BP algorithm. 
    The latter is believed to achieve
    information-theoretically optimal reconstruction above the KS threshold \cite{decelle2011asymptotic,mossel2014belief}.
    We will describe \ouralgo in the setting of the
    symmetric model \STSSBM$(n,k,a,b,\alpha)$, but its generalization to the asymmetric case is immediate.

    \begin{figure}[h]
    \centering
      \includegraphics[width=0.23\linewidth]{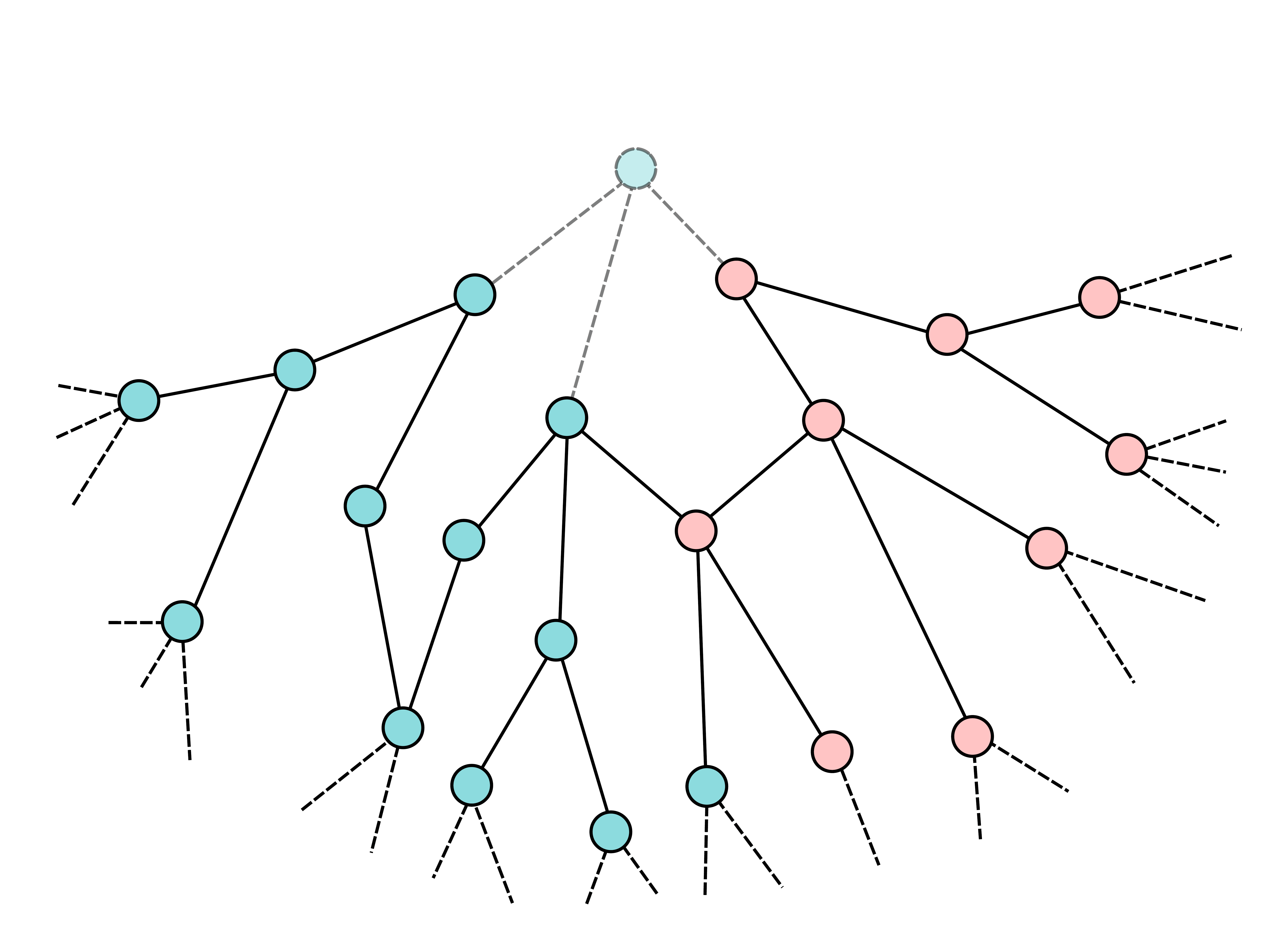}
      \includegraphics[width=0.23\linewidth]{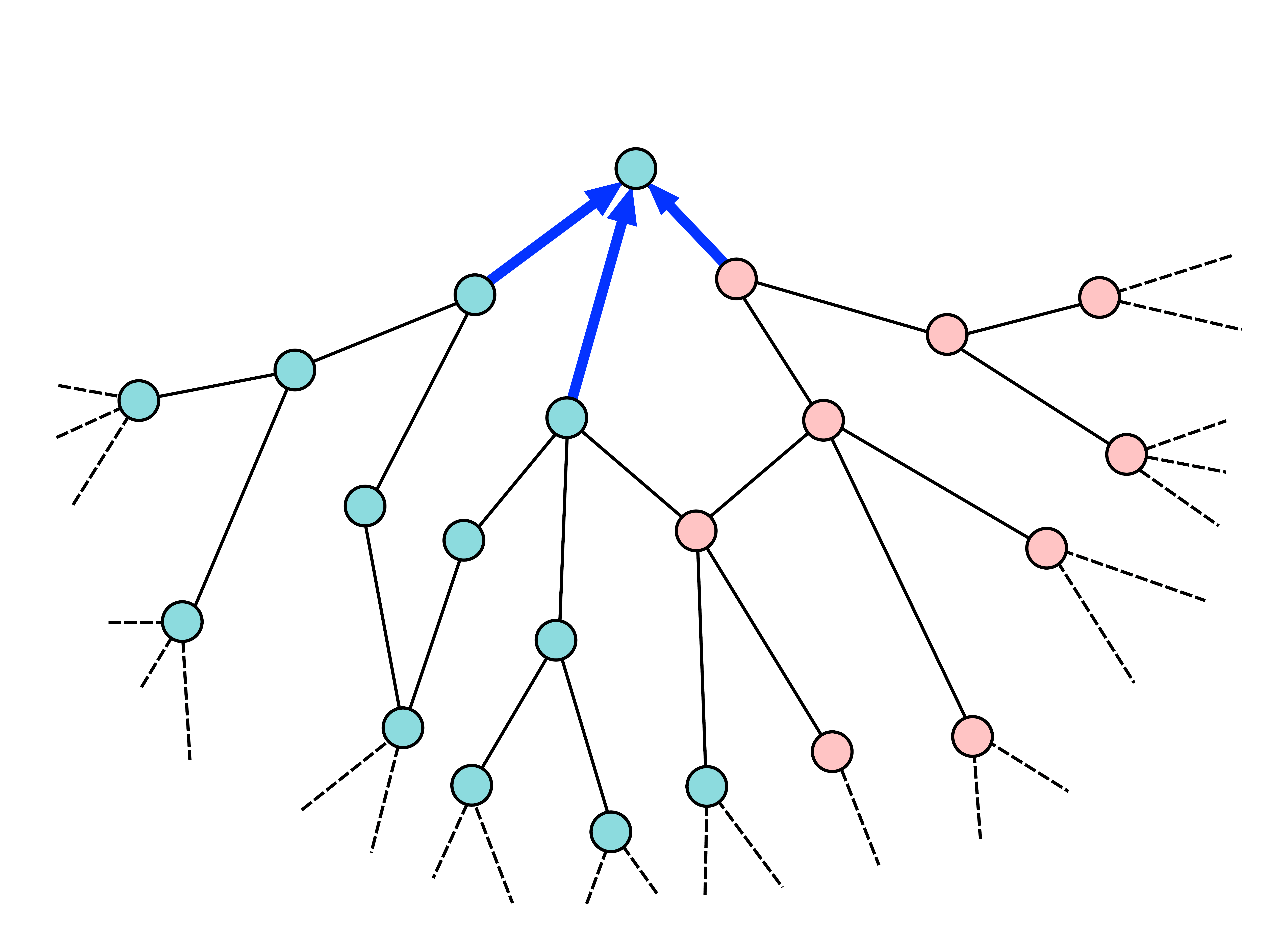}
      \includegraphics[width=0.23\linewidth]{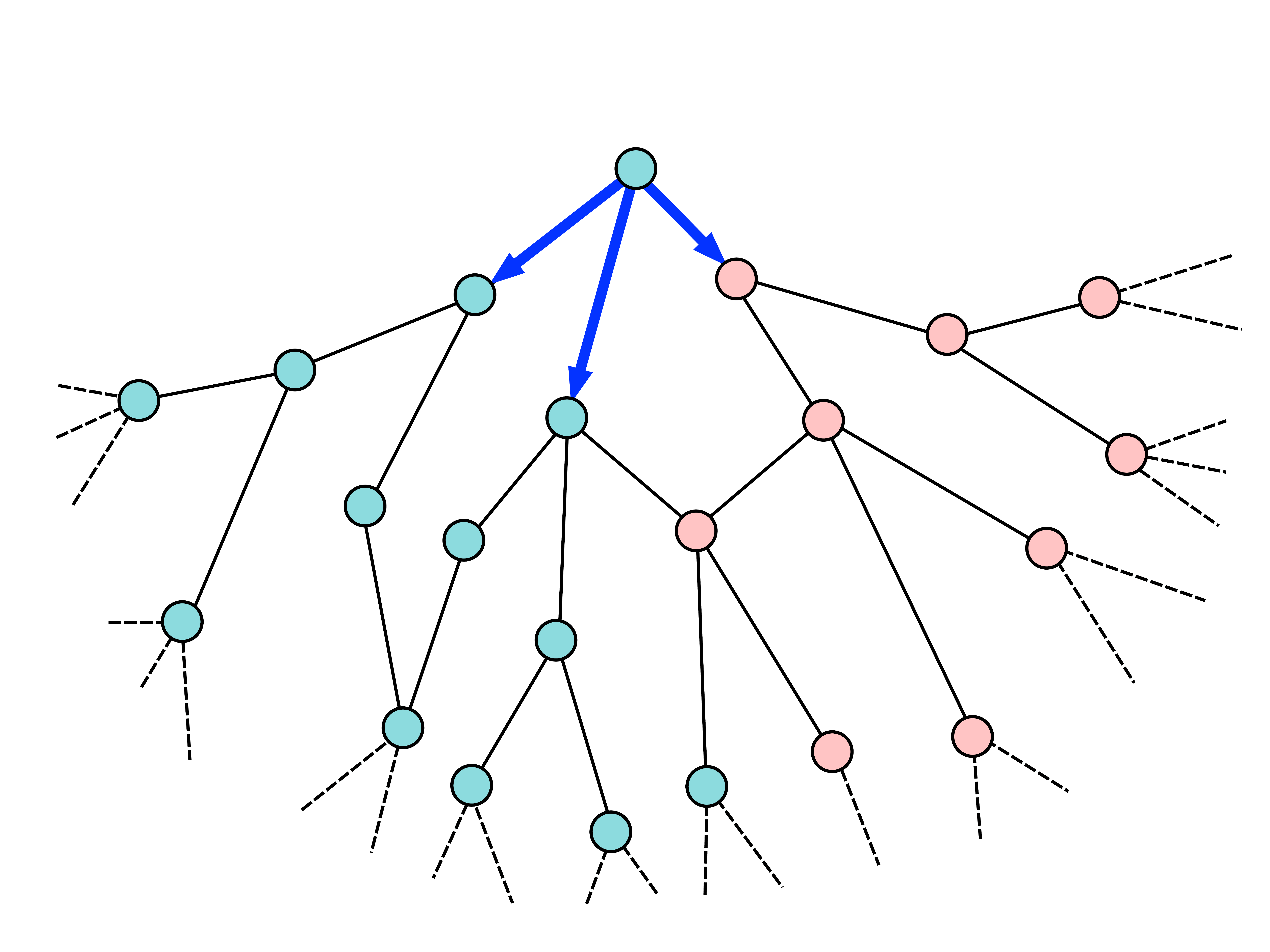}
      \includegraphics[width=0.23\linewidth]{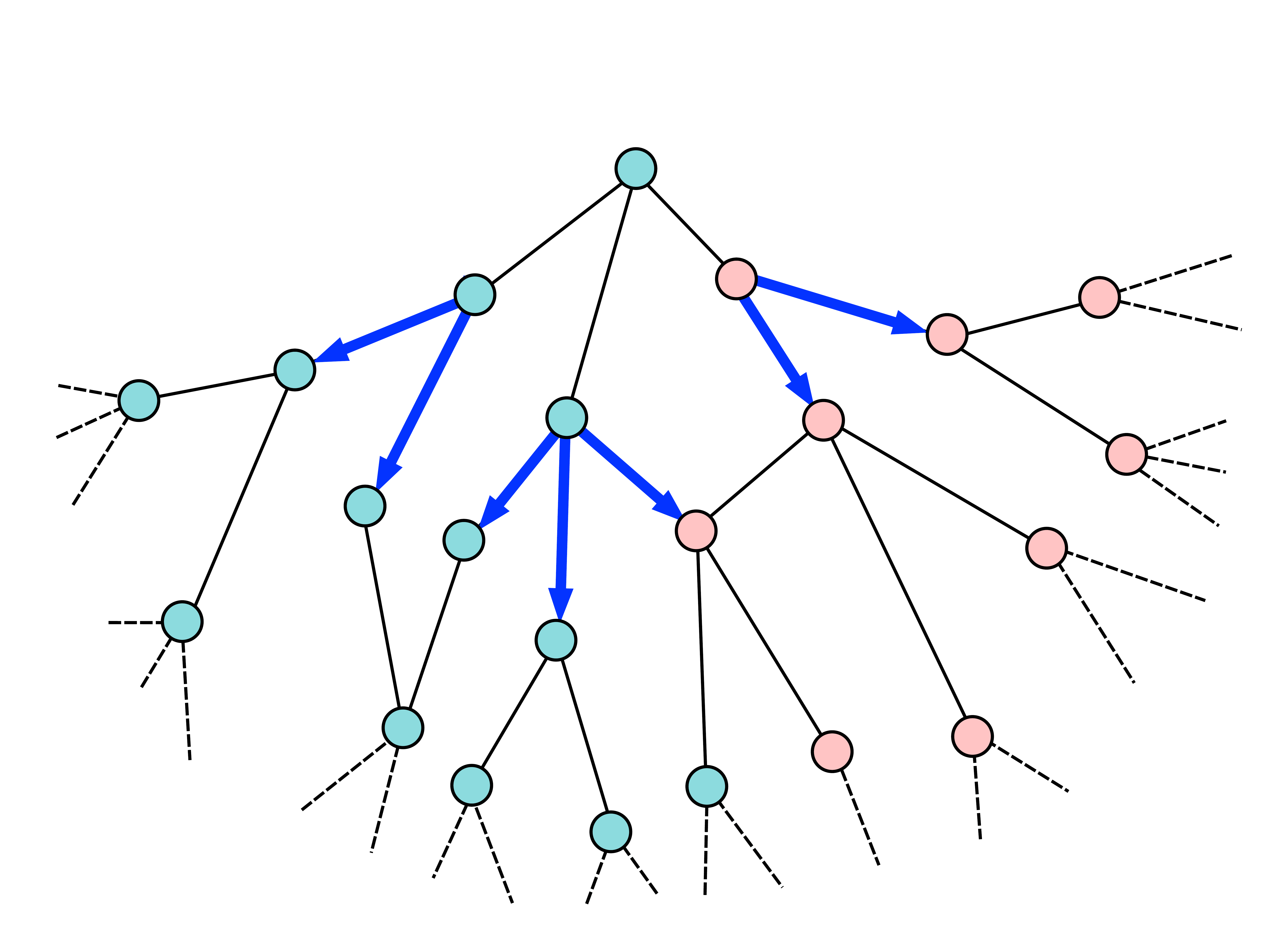}
    \caption{Update schedule of \ouralgo. Upon the arrival of a new vertex (shown in the leftmost figure), \ouralgo performs the belief propagation updates corresponding to the blue edges in the three other figures, in the order from left to right. }
    \label{fig:BP-Update}
\end{figure}
    The algorithm has a state which is given by a vector of messages indexed by directed edges in $G(t)$, 
    $\bm^t=\{\bm^t_{u\to v}, \bm^t_{v\to u}: (u,v)\in E(t)\}$. 
    Note that $G(t)$ is an undirected graph, and each edge $(u,v)$ corresponds to two messages indexed by $u \rightarrow v$ and $v \rightarrow u$. Each message is a probability distribution over $[k]$:
\begin{align*}
	\bm_{u \rightarrow v} = (m_{u \rightarrow v}(1), m_{u \rightarrow v}(2), \ldots, m_{u \rightarrow v}(k)) \in \Delta_k\, .
\end{align*}
The BP update is a map $\BP: (\Delta_k)^*\times[k]\to\Delta_k$, where  $(\Delta_k)^*$ denotes the  finite sequences of elements of $\Delta_k$:
\begin{align}
  \BP(\{\bm_i\}_{i\le \ell};\ttau)(s)  :=\frac{\BP_0(\ttau)(s)}{Z} \prod_{i=1}^{\ell}\big(b+(a-b)m_i(s)\big)\, .\label{eq:BPdef}
\end{align}
Here $\BP_0(\ttau)(s)\triangleq\big(\alpha+(k-1-k\alpha)
\ind_{\ttau=s}\big)/(k-1)$ and
the constant $Z=Z(\{\bm_i:\, i\le \ell\};\ttau)$ is defined implicitly by the normalization condition
$\sum_{s\in[k]}\BP(\{\bm_i:\, i\le \ell\};\ttau)(s) =1$. When a message $v\to u$ is updated, we compute its new value by applying the function \eqref{eq:BPdef} to the incoming messages into vertex $v$, with the exception of $u\to v$ (non-backtracking property):
\begin{align}
  \bm_{v\to u}\gets \BP(\{\bm_{w\to v}:\, w\in \partial v\setminus\{u\}\};\ttau(v))\, .\label{eq:BPupdate}
\end{align}
Here $\partial v$ denotes the set of neighbors of vertex $v$ in the current graph. When a new vertex $v(t)$ joins at time $t$,
we use the above rule to: $(1)$~update all the messages incoming into $v(t)$, i.e., $w\to v(t)$, for $w$ a neighbor of $v(t)$ in $G(t)$,
and $(2)$~update all messages at distance $1 \leq \ell \leq R$ from $v(t)$ in $G(t)$, along paths outgoing from $v(t)$, in order of increasing distance $\ell$.
The pseudocode for \ouralgo is given in Algorithm \ref{alg:streaming-bp-with-side-information}, and an illustration in Figure \ref{fig:BP-Update}.

    \begin{breakablealgorithm}\label{alg:streaming-bp-with-side-information}
	\caption{Streaming $R$-local belief propagation}
	\begin{algorithmic}[1]
	\State Initialization: $V(0) = E(0) = G(0) = \emptyset$.
		\For{$t = 1,2,\cdots, n$}
			\State \textit{// Update the graph:}
			\State $V(t) \gets V(t - 1) \cup \{v(t)\}$
			\State $E(t) \gets E(t - 1) \cup \{(v(t), v): v \in V(t - 1), (v(t), t) \in E(n)\}$
			\State $G(t) \gets (V(t), E(t))$
			\State \textit{// Update the incoming messages:}
			\For{$w \in {D}_1^t(v(t))$}
                        \State $\bm_{w \rightarrow v(t)} \gets \BP(\{\bm_{u\to w}:\, u\in \partial w\setminus\{v(t)\}\};\ttau(w))$
			\EndFor
			\State \textit{// Update the outgoing messages:}
			\For{$r = 1,2,\cdots, R$}
				\For{$v \in {D}_r^t(v(t))$}
					\State Let $v' \in {D}_1^t(v)$ on a shortest path connecting $v$ and $v(t)$.
					\State  $\bm_{v' \rightarrow v} \gets \BP(\{\bm_{u\to v'}:\, u\in \partial v'\setminus\{v\}\};\ttau(v'))$  
				\EndFor
			\EndFor
		\EndFor
			\For{$u \in V$}
		    \State $\bm_{u} \gets \BP(\{\bm_{v\to u}\}_{v\in \partial u};\ttau(u))$
			\State Output $\htau(u):=\arg\max_{s\in[k]} m_u(s)$ as an estimate for $\tau(u)$. 
		\EndFor
	\end{algorithmic}
\end{breakablealgorithm}
For the sake of simplicity, we analyze this algorithm in the two-group symmetric model \STSSBM$(n, 2, a, b, \alpha)$.
We believe that the extension of this analysis to other cases  is straightforward, but we leave it out of this presentation.

Our main results are that \ouralgo achieves asymptotically at least the same accuracy
as offline BP, as originally proposed in \cite{decelle2011asymptotic} and analyzed, e.g., in \cite{mossel2016local} (we refer to the latter for a formal definition of the algorithm). 
Offline BP performs the update from equation \eqref{eq:BPupdate}
in parallel on all the edges of $G(n)$ for $R-1$
iterations, and then computes vertex estimates using
$\bm_{u} \gets \BP(\{\bm_{v\to u}\}_{v\in \partial u};\ttau(u))$, $\htau(u):=\arg\max_{s\in[k]} m_u(s)$. 
Note that,  for each $R$, this defines an $R$-local algorithm, and hence we will refer to $R$ as its radius.
\begin{theorem}\label{thm-streaming-local-bp-better-than-local-bp}
  For $v \in V(n)$, let $\cA_R(v; G(n), \ttau) \in [k]$ be the estimate of $\tau(v)$ given by Algorithm \ref{alg:streaming-bp-with-side-information}
  (\ouralgo), and $\cA_R^{\mbox{\tiny\rm off} }(v; G(n), \ttau) \in [k] $ be the estimate given by offline BP with radius $R$ (equivalently, BP with parallel updates, stopped after $R$ iterations).
  Under the model \STSSBM$(n, 2, a, b, \alpha)$, \ouralgo
performs at least as well as  offline BP:
	\begin{align*}
	\liminf\limits_{n\rightarrow \infty}\left(Q_{n}(\cA_R) - Q_{n}(\cA_R^{\mbox{\tiny\rm off}})\right) \geq 0.
	\end{align*}
      \end{theorem}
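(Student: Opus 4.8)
The plan is to reduce the comparison to a statement on trees, where both algorithms collapse to a Bayes computation, and then invoke monotonicity of the Bayes-optimal misclassification probability under enlargement of the observed data.

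\textbf{Localization.} Fix $\epsilon>0$ and take $n_\epsilon, r_\epsilon$ as in Theorem \ref{thm:streaming-local-implies-local}, enlarging $r_\epsilon$ so that $r_\epsilon\ge R$. Let $v_0\sim\Unif([n])$ and let $E$ be the event that (a) $\cG_{v_0}^n$ is a subgraph of $\Ball_{r_\epsilon}^n(v_0)$ (Theorem \ref{thm:streaming-local-implies-local}) and (b) $\Ball_{r_\epsilon}^n(v_0)$ is a tree. By Theorem \ref{thm:streaming-local-implies-local} together with the standard fact that the radius-$r$ neighborhood of a uniform vertex of $\STSSBM(n,2,a,b,\alpha)$ is a tree with probability $1-o_n(1)$ for each fixed $r$, we get $\P(E)\ge 1-2\epsilon$ for all $n$ large. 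On $E$, both $\cA_R(v_0;\cdot)$ and $\cA_R^{\mbox{\tiny\rm off}}(v_0;\cdot)$ are functions of the labelled tree $\bar\Ball_{r_\epsilon}^n(v_0)$ (including the vertex arrival times): the former because \ouralgo is $R$-local streaming and $\bar\cG_{v_0}^n\subseteq\bar\Ball_{r_\epsilon}^n(v_0)$, the latter because it is $R$-local. Contributions from $v_0\notin E$ will be absorbed into an $O(\epsilon)$ error at the very end.

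\textbf{What each algorithm computes on $E$.} On a tree a non-backtracking walk is a simple path, so unrolling the recursion \eqref{eq:BPdef}--\eqref{eq:BPupdate} from the terminal marginal $\bm_{v_0}^n$ produces a well-defined rooted \emph{computation tree} $\mathcal T_{v_0}$ that is a subtree of $\Ball_{r_\epsilon}^n(v_0)$; crucially, the \ouralgo update schedule depends only on the graph and arrival order, so the \emph{shape} of $\mathcal T_{v_0}$ is a deterministic function of $(\,\text{structure of }\Ball_{r_\epsilon}^n(v_0),\ \text{arrival times}\,)$, not of $\ttau$. Running \eqref{eq:BPdef} bottom-up on $\mathcal T_{v_0}$ reproduces $\bm_{v_0}^n$, and, exactly as in the analysis of offline BP in \cite{mossel2016local} (the effect of absent edges inside a ball is $o_n(1)$ and is absorbed below), $\htau(v_0)=\argmax_s m_{v_0}^n(s)$ is, up to $o_n(1)$, the MAP estimate of $\tau(v_0)$ given $\ttau$ restricted to $\mathcal T_{v_0}$. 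Likewise, offline BP with radius $R$ is, up to $o_n(1)$, the MAP estimate of $\tau(v_0)$ given $\ttau$ restricted to $\Ball_R^n(v_0)$.

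\textbf{Key lemma: $\mathcal T_{v_0}\supseteq\Ball_R^n(v_0)$ on $E$.} This is the heart of the argument, and where I expect the real difficulty. Intuitively, every arrival re-propagates BP $R$ steps outward, and the marginal at $v_0$ is read off only at time $n$, after every vertex of $\Ball_R^n(v_0)$ has arrived; one checks that the arrival of the last vertex $z$ of $\Ball_R^n(v_0)$, at distance $d_z\le R$, triggers through the increasing-$r$ outgoing loop of Algorithm \ref{alg:streaming-bp-with-side-information} a cascade that refreshes the whole chain of messages from $z$ up to $v_0$, so that the incoming message at $v_0$ from $z$'s side reflects the full $\Ball_R$-subtree on that side. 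Making this rigorous requires an induction over the vertices of $\Ball_R^n(v_0)$ in order of arrival, maintaining the invariant that after the $i$-th such arrival every message pointing towards $v_0$ reflects the correct subtree spanned by the already-arrived portion; the bookkeeping of \emph{when} each message was last updated --- and of the fact that these last-update times are monotone towards $v_0$ --- is the delicate step. I would also note that $\mathcal T_{v_0}$ may strictly exceed $\Ball_R^n(v_0)$ (streaming can carry in information from distance $>R$), which only helps.

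\textbf{Assembling the bound.} On $E$, conditionally on $(\,\text{structure},\ \text{arrival times}\,)$ --- which alone determine $\mathcal T_{v_0}$ --- \ouralgo outputs (up to $o_n(1)$) the MAP estimate of $\tau(v_0)$ given $\ttau|_{\mathcal T_{v_0}}$ while offline BP outputs that given $\ttau|_{\Ball_R^n(v_0)}$, and the latter $\sigma$-field is contained in the former. Since $\E[\max_s\P(\tau(v_0)=s\mid\mathscr F)]$ is non-decreasing in $\mathscr F$ by conditional Jensen (the map $p\mapsto\max_s p_s$ is convex), we obtain $\P(\cA_R(v_0)=\tau(v_0)\mid\text{struct})\ge\P(\cA_R^{\mbox{\tiny\rm off}}(v_0)=\tau(v_0)\mid\text{struct})-o_n(1)$ on $E$. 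Integrating and using $\P(E)\ge 1-2\epsilon$ yields $\overline Q_n(\cA_R)\ge\overline Q_n(\cA_R^{\mbox{\tiny\rm off}})-2\epsilon-o_n(1)$, where $\overline Q_n(\cA):=\frac1n\sum_{v\in V(n)}\P(\cA(v)=\tau(v))$. To pass from $\overline Q_n$ to $Q_n$ (which maximizes over a global relabelling $\pi\in\fS_2$): one always has $Q_n(\cA)\ge\overline Q_n(\cA)$, while for the BP estimators the fraction of correctly labelled vertices concentrates (bounded differences: altering one vertex's data changes the output at only polylogarithmically many vertices) around a value $\ge 1-\alpha>\tfrac12$ whenever $\alpha<\tfrac12$, so the maximizing $\pi$ is the identity with high probability and $Q_n(\cA_R^{\mbox{\tiny\rm off}})=\overline Q_n(\cA_R^{\mbox{\tiny\rm off}})+o_n(1)$; when $\alpha=\tfrac12$ both $Q_n$'s tend to $\tfrac12$ and the statement is trivial (Corollary \ref{cor:streaming-without-sideinfo}). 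Combining, $Q_n(\cA_R)\ge Q_n(\cA_R^{\mbox{\tiny\rm off}})-2\epsilon-o_n(1)$, so $\liminf_n\bigl(Q_n(\cA_R)-Q_n(\cA_R^{\mbox{\tiny\rm off}})\bigr)\ge-2\epsilon$, and letting $\epsilon\downarrow0$ completes the proof.
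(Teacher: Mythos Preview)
Your proposal is correct and follows the paper's approach essentially step for step: localize to a tree neighborhood, show that \ouralgo at $v_0$ computes the posterior on a random subtree $\mathcal T_{v_0}\supseteq\Ball_R^n(v_0)$, and conclude by conditional Jensen (the paper uses the equivalent convexity of $x\mapsto|x-\tfrac12|$ for $k=2$, and handles the $\pi$-maximization by lower-bounding $Q_n(\cA_R)$ with $\pi=\mathrm{id}$ and citing \cite{mossel2016local} for the offline side rather than arguing concentration). For the key lemma you flag as delicate, the paper's argument (Lemma~\ref{lemma-form-of-BP}) runs by induction on the depth $R$ rather than over arrival times: for each neighbor $u'$ of $v_0$ one observes that the message $M_{u'\to v_0}$ is itself the output of $N$-local streaming BP (any $N\ge R$) on the depth-$(R{-}1)$ subtree rooted at $u'$, so the inductive hypothesis applies directly once that subtree and $v_0$ have all arrived---this sidesteps the message-timestamp bookkeeping you anticipate.
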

      It is conjectured that, in the presence of side information, i.e., for $\alpha<(k-1)/k$, offline BP is optimal among
      all polynomial-time algorithms \cite{decelle2011asymptotic} (provided $R$ can be taken arbitrarily large).
      Whenever this is the case, the above theorem implies that \ouralgo is optimal as well.
      In the case of the symmetric model, it is also believed that under the KS condition (Equation \ref{eq:KS}),  and
      for $\alpha<(k-1)/k$, offline BP does indeed achieve the information-theoretically optimal accuracy.
      This claim has been proven in certain cases by \cite{mossel2016local}: we can use the results of \cite{mossel2016local}
      in conjunction with Theorem \ref{thm-streaming-local-bp-better-than-local-bp} to obtain conditions under
      which \ouralgo is information-theoretically optimal.

      \begin{corollary}\label{thm-streaming-local-bp-is-optimal}
        Suppose one of the following conditions holds (for a sufficiently large absolute constant $C$) in the two-group symmetric model \STSSBM$(n,k=2,a,b,\alpha)$:
        %
        
\begin{enumerate}
    \item $|a - b| < 2$ and $\alpha\in(0, 1 / 2)$.
    \item $(a - b)^2 > C(a + b)$ and $\alpha \in (0,1 / 2)$.
    \item $\alpha \in (0,1/C)$.
\end{enumerate}

Then Algorithm \ref{alg:streaming-bp-with-side-information} achieves optimal estimation accuracy:
\begin{align*}
    \limsup\limits_{R \rightarrow \infty} \limsup\limits_{n\rightarrow \infty}\left(Q_{n}^{\ast} - Q_{n}(\mathcal{A}_R)\right) = 0.
\end{align*}
\end{corollary}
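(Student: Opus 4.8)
The plan is to sandwich the accuracy of \ouralgo between that of offline BP and the Bayes-optimal accuracy $Q_n^{\ast}$, and then to invoke the known optimality of offline BP in the three listed regimes. The lower layer of the sandwich is Theorem~\ref{thm-streaming-local-bp-better-than-local-bp}: for every fixed radius $R$, $\liminf_{n\to\infty}\big(Q_n(\cA_R)-Q_n(\cA_R^{\mbox{\tiny\rm off}})\big)\ge 0$. For the other side, note first that, since $\cA_R$ is a particular algorithm while $Q_n^{\ast}$ is the supremum of $Q_n(\cdot)$ over all algorithms, $Q_n^{\ast}-Q_n(\cA_R)\ge 0$ for every $n$, so the double limit in the statement is automatically nonnegative. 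For the matching upper bound, write for each fixed $R$ and each $n$
\begin{align*}
 Q_n^{\ast}-Q_n(\cA_R)=\big(Q_n^{\ast}-Q_n(\cA_R^{\mbox{\tiny\rm off}})\big)+\big(Q_n(\cA_R^{\mbox{\tiny\rm off}})-Q_n(\cA_R)\big),
\end{align*}
take $\limsup_{n\to\infty}$, and use subadditivity of $\limsup$ together with $\limsup_n\big(Q_n(\cA_R^{\mbox{\tiny\rm off}})-Q_n(\cA_R)\big)=-\liminf_n\big(Q_n(\cA_R)-Q_n(\cA_R^{\mbox{\tiny\rm off}})\big)\le 0$ to obtain $\limsup_n\big(Q_n^{\ast}-Q_n(\cA_R)\big)\le\limsup_n\big(Q_n^{\ast}-Q_n(\cA_R^{\mbox{\tiny\rm off}})\big)$. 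Taking $\limsup_{R\to\infty}$ of both sides reduces the corollary to showing
\begin{align*}
 \limsup_{R\to\infty}\limsup_{n\to\infty}\big(Q_n^{\ast}-Q_n(\cA_R^{\mbox{\tiny\rm off}})\big)=0 .
\end{align*}

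This remaining statement is a fact about the static two-group SBM with noisy side information, since offline BP and the Bayes-optimal estimator depend on $\bG$ only through the final graph $G(n)$, which under $\STSSBM(n,2,a,b,\alpha)$ is distributed exactly as $\SBM(n,2,a,b)$ carrying independent $\alpha$-noisy labels $\ttau$; no streaming structure enters. In this static setting the asymptotic optimality of radius-$R$ BP as $R\to\infty$ is established in \cite{mossel2016local}, under hypotheses which, specialized to $k=2$, become precisely the three conditions of the statement — the small-bias regime $|a-b|<2$, the high-SNR regime $(a-b)^2>C(a+b)$, and the strong-side-information regime $\alpha<1/C$. The underlying picture is standard: the $R$-neighborhood of a uniformly random vertex converges locally (as $n\to\infty$) to a Galton--Watson broadcast tree with $\Pois$ offspring governed by $a,b$ and $\alpha$-noisy leaf observations; depth-$R$ BP on that tree computes the posterior of the root label given its labelled depth-$R$ subtree, and in each of the three regimes the resulting root error converges, as $R\to\infty$, to the Bayes error given the whole tree, which coincides with the $n\to\infty$ limit of $Q_n^{\ast}$. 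Feeding this back into the reduction above gives the corollary.

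The deduction itself is elementary once Theorem~\ref{thm-streaming-local-bp-better-than-local-bp} is available; the substantive work — and the only delicate point I anticipate — is the second step: checking that the parameter windows (1)--(3) are exactly those covered by \cite{mossel2016local}, and carrying out the bookkeeping that converts their tree-level optimality statements into statements about $Q_n^{\ast}$ and $Q_n(\cA_R^{\mbox{\tiny\rm off}})$ on the finite graph via local weak convergence. Part of that bookkeeping is noting that the permutation symmetrization $\max_{\pi\in\fS_k}$ in \eqref{eq:nodeAccuracy}--\eqref{eq:OptimalAccuracy} only resolves the global label-flip ambiguity in the symmetric two-group model and does not change the limiting quantities. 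With this correspondence in place, combining it with the $\limsup$ manipulation and the trivial bound $Q_n^{\ast}\ge Q_n(\cA_R)$ completes the argument.
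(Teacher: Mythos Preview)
Your proposal is correct and follows essentially the same route as the paper: decompose $Q_n^\ast - Q_n(\cA_R)$ through $Q_n(\cA_R^{\mbox{\tiny\rm off}})$, use Theorem~\ref{thm-streaming-local-bp-better-than-local-bp} to bound the second piece, and invoke \cite{mossel2016local} (their Theorem~2.3) for the optimality of offline BP under the three stated regimes. The paper's proof is terser and simply cites the needed result from \cite{mossel2016local} directly, without the additional commentary on local weak convergence or the permutation symmetrization, but the logical skeleton is identical.
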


\section{Empirical evaluation}
\label{sec:empirical}

In this section, we compare the empirical performance of two versions of our streaming belief propagation algorithm (Algorithm \ref{alg:streaming-bp-with-side-information}) with some baselines. We consider the following streaming algorithms:

\begin{itemize}
    \item  \ouralgo: proposed algorithm in this work with radius $R$, outlined in Algorithm \ref{alg:streaming-bp-with-side-information}.
    \item  \boundedalgo: proposed algorithm in this work, which is a `bounded-distance' version of \ouralgo. It is described in more detail in Section \ref{sec:streambp_star}, and its pseudocode is given in Algorithm \ref{alg:distance-r-sbp}.
    \item \voteone, \votetwo, \votethree: simple plurality voting algorithms
    that give a weight $\delta$ to the side information, as defined in Equation \eqref{eq:Voting} (the numbering corresponds to $\delta\in\{1,2,3\}$). Despite looking somewhat na\"\i ve, voting algorithms are common in industrial applications.
    
\end{itemize}
We also compare the streaming algorithms above with \offlinealgo, which is the {\em offline} belief propagation algorithm, parameterized by its radius (number of parallel iterations) $R$. Note that in contrast to the streaming algorithms, to which we reveal one vertex at a time (along with its side information and the edges connecting it to previously revealed vertices), \offlinealgo has access to {\em the entire graph} and the side information {\em of all the vertices} from the beginning.

Our experiments are based both on synthetic datasets (Section~\ref{synthetic_datasets_main_body}) and on real-world datasets (Section~\ref{real_world_datasets_main_body}). Because the model considered in this paper assumes \emph{undirected} graphs, in a preprocessing step we convert the input graphs of the real-world datasets into undirected graphs by simply ignoring edge directions. Table~\ref{table:datasets_stats} shows statistics of the datasets used (after making the graphs undirected); the values used for $a$ and $b$ for the real-world datasets are discussed in Section~\ref{real_world_datasets_main_body}.

\begin{table}[ht]
\begin{center}
\begin{tabular}{lccccc} \toprule
       \multicolumn{1}{c}{} & \multicolumn{1}{c}{$\pmb{|V|}$} & \multicolumn{1}{c}{$\pmb{|E|}$} & \multicolumn{1}{c}{$\pmb{k}$} & \multicolumn{1}{c}{$\pmb{a}$} & \multicolumn{1}{c}{$\pmb{b}$}\\ \midrule
    Citeseer & {3,264} & {4,536} & 6 & 11.47 & 0.89 \\
    Cora  & {2,708} & {5,278} & 7 & 17.62 & 0.90 \\
    Polblogs & {1,490} & {16,715} & 2 & 40.69 & 4.23 \\
    Synthetic & [10,000--50,000] & [20,000--700,000] & [2--5] & [2.5--18] & [0.05--1] \\ \bottomrule
\end{tabular}\vspace{2mm}
\caption{Statistics of the synthetic and real-world datasets. For the synthetic datasets, various experiments with a range of parameters are performed.}
\label{table:datasets_stats}
\end{center}
\end{table}

As the measure of accuracy, we use the empirical fraction of labels correctly recovered by each algorithm, that is
$$ 
{\rm Acc } =\hat{\E}\hskip-1mm\left[\max_{\pi\in \fS_k}\frac{1}{n} \hskip-1mm \sum_{v \in V(n)} \hskip-2mm \ind\left(\cA(v; G(n), \ttau) = \pi\circ\tau(v)\right) \right]\hskip-1mm.
$$
In synthetic data we observe that, as soon as $\alpha<(k-1)/k$,
the maximization over $\pi$ is not necessary (as expected from theory), and hence we drop it.
%

\subsection{Bounded-distance streaming BP}
\label{sec:streambp_star}

In our experimental results presented below, 
we observed that the simple implementation in Algorithm~\ref{alg:streaming-bp-with-side-information}
exhibits undesirable behaviors for certain graphs.
We believe this is caused by two factors. First, unlike \offlinealgo, we do not have an upper bound on the radius of influence of each vertex in \ouralgo; it may indeed use long paths. Second, the number of cycles in the graph increases as $a, b$ grow. Large values of $a$ and $b$ can result in many paths being cycles, which negatively affects the performance of the algorithm.

In order to overcome these problems, we use two modifications. The first one is 
standard: we constrain messages so that $m_{u\to v}(s)\in [\varepsilon,1-\varepsilon]$ for some fixed small $\varepsilon>0$ 
(we essentially constrain the log-likelihood ratios to be bounded).

The second modification 
defines a variant,
presented in Algorithm~\ref{alg:distance-r-sbp}, which we call \boundedalgo.
Here
the estimate at node $v$ is guaranteed to depend only on the graph structure and side information within
$\Ball_R^n(v)$, and not on the information outside this ball. This constraint can be implemented in a message-passing
fashion, by keeping, on each edge, $R + 1$ distinct messages
$\bm^0_{u\to v},\dots,\bm^{R}_{u\to v}$,  corresponding to different locality radii.

\begin{breakablealgorithm}
    \caption{\label{alg:distance-r-sbp} \boundedalgo: Bounded-distance streaming BP}
	\begin{algorithmic}[1]
		\State Initialization: ${V}(0) = {E}(0) = {G}(0) = \emptyset$.
		\For{$t = 1,2,\dots, n$}
   		   \State \textit{// Update the graph:}
   		   \State ${V}(t) \gets {V}(t - 1) \cup \{v(t)\}$
			\State 
			$E(t) \gets E(t - 1) \cup \{(v(t), v): v \in V(t - 1),  (v(t), v) \in E(n)\}$
    	   \State ${G}(t) \gets ({V}(t), {E}(t))$  
    	   \State \textit{// Update the incoming messages:}
    	   \For{$v \in D_1^t(v(t))$} 
    	   		\State $\bm_{v \rightarrow v(t)}^0 \gets \bfone/k$
    	   		
    	   		\For{$i = 1,2,\dots, R$}
    	   			\State $\bm^i_{v \rightarrow v(t)} \gets \BP(\{\bm_{v' \rightarrow v}^{i - 1}\}_{v'\in\partial v\setminus\{v(t)\}};\ttau(v))$
    	   			
    	   		\EndFor
    	   \EndFor
    	   \State \textit{// Update the outgoing messages:}
    	   \For{$v \in D_1^t(v(t))$} 
    	   		\State $\bm_{v(t) \rightarrow v}^0 \gets \bfone/k$
    	   		\For{$i = 1,2,\dots, R$}
    	   			\State  $\bm_{v(t) \rightarrow v}^i  \gets \BP(\{\bm_{v' \rightarrow v(t)}^{i - 1}\}_{v'\in\partial v(t)\setminus\{v\}};\ttau(v(t)))$
				
    	   		\EndFor
    	   \EndFor
    	   \For{$r = 2,3,\dots, R$}
     		   \For{$v \in {D}_r^t(v(t))$}
       			   \State Let $v' \in {D}_1^t(v)$ on a shortest path connecting $v$ and $v(t)$
       			   \For{$i = 1,2,\dots, R$}
       			   		\State $\bm_{v' \rightarrow v}^i \gets  \BP(\{\bm_{u \rightarrow v'}^{i - 1}\}_{u\in\partial v'\setminus\{v\}};\ttau(v'))$
       			   \EndFor
    		   \EndFor
    	   \EndFor
    	\EndFor
    	\State \textit{// Compute the estimates:}
    	\For{$u \in V$}
			\State $\bm_u \gets  \BP(\{\bm_{v \rightarrow u}^{ R}\}_{v\in\partial u};\ttau(u))$
			\State Output $\htau(u) := \arg\max_s m_u(s)$ 
		\EndFor
	\end{algorithmic}
\end{breakablealgorithm}

\subsection{Synthetic datasets}
\label{synthetic_datasets_main_body}
Figure \ref{fig:allmain} illustrates the effect of the radius on the performance of the algorithms.%
\footnote{Notice that the three voting algorithms do not depend on the radius, resulting in horizontal lines in the diagram. }
We use various settings for $k, a, b, \alpha$.  We observe that voting algorithms do not
perform significantly better than the baseline $1-\alpha$ (dashed line).
This is due both to the very small radius $R=1$ of these 
algorithms, and to the specific choice of the update rule. For 
$R=1$, \ouralgo and \boundedalgo perform significantly better than voting,
showing that their update rule is preferable. Their accuracy improves
with $R$, and is often close to the optimal accuracy (i.e. the accuracy 
of \offlinealgo for large $R$) already for $R\approx 5$. 

In Figure \ref{fig:constmain}, we study the effect of the SNR parameter $\lambda$, defined in Equation \eqref{eq:KS}, on the performance of the BP algorithms.
The accuracy of the algorithms improves as $\lambda$ increases.
It is close to the baseline $1-\alpha$ when the SNR is close to
the KS threshold at $\lambda=1$, and then it improves for large $\lambda$.
This is a trace of the phase transition at the KS threshold which is blurred because of side information.
 For large values of $R$, the accuracy of our streaming algorithms \ouralgo and \boundedalgo nearly matches that of the optimal {\em offline} algorithm \offlinealgo.

Further experiments on synthetic datasets are reported in Appendix~\ref{sec:synthetic}. We then present in Appendix~\ref{sec:nosideinfo} the result of experiments where no side information is provided to the algorithms. We observe that in the streaming setting, and for small radius ($R$), neither \ouralgo nor \boundedalgo achieves high accuracy (above $1/k$) in the absence of side information, as suggested by our theoretical results.

\subsection{Real-world datasets}
\label{real_world_datasets_main_body}

We further investigate the performance of our algorithms  on three real-world datasets: Cora~\cite{nr}, Citeseer~\cite{nr}, and Polblogs~\cite{DBLP:conf/kdd/AdamicG05}. Cora and Citeseer are academic citation networks: vertices represent scientific publications partitioned into $k = 7$ and $k = 6$ classes respectively; directed edges represent citations of a publication by another. The Polblogs dataset represents the structure of the political blogosphere in the United States around its 2004 presidential election: vertices correspond to political blogs, and directed edges represent the existence of hyperlinks from one blog to another. The blogs are partitioned into $k = 2$ classes based on their political orientation (liberal or conservative).

As mentioned in the beginning of Section~\ref{sec:empirical}, in a preprocessing step we convert the input graphs of the real-world datasets into undirected graphs by simply ignoring edge directions. Also, since the graphs do not stem from the models of Section \ref{sec:model}, 
we use oracle estimates of parameters $a$ and $b$, obtained by 
matching the density of intra- and inter-community edges
with those in the model $\STSSBM(n, k, \ta, \tb, \alpha)$.
Namely, for a graph $G = (V, E)$, letting $V_1, \dots, V_k \subseteq V$ be the ground-truth communities, we set
\begin{align*}
\ta = |V| \cdot \frac{ \sum_{i \in [k]} \sum_{(u, v) \in E} 
\ind {\{u, v \in V_i\}}}{\sum_{i \in [k]} \binom{|V_i|}{2}},
%
\quad
\tb = |V| \cdot \frac{\sum_{\{i, j\} \in \binom{[k]}{2}} \sum_{(u, v) \in E} 
\ind {\{u \in V_i, v \in V_j\}}}{\sum_{\{i, j\} \in \binom{[k]}{2}} |V_i||V_j|}.
\end{align*}
For each dataset, we run the streaming algorithms \ouralgo and \boundedalgo 
using the parameters $a = \ta$ and $b = \tb$. 
Notice that these estimates cannot be implemented in practice because we do not know the 
communities $V_j$ to start with. 
However, the performances appear not to be too sensitive to these estimates;
we provide empirical evidence of this in  Appendix~\ref{sec:synthetic}.


Figure \ref{fig:real_world_datasets} shows the accuracy 
of different algorithms
on these datasets, for selected values of $\alpha$.
Although the graphs in these datasets are not random graphs generated according to \STSBM,
the empirical results generally align with the theoretical results we proved for that model. Specifically, we see that our streaming algorithm \boundedalgo is approximately as accurate as the {\em offline} algorithm \offlinealgo, and significantly better than the voting algorithms. 
\ouralgo produces high-quality results for Cora and Citeseer datasets, but behaves erratically on the Polblogs dataset. That is likely due to the issues discussed in Section~\ref{sec:streambp_star}.

Appendix~\ref{sec:realworld} provides additional details on these experiments, as well as results for other values of $\alpha$.

\begin{figure*}[ht]
\begin{subfigure}{.4\textwidth}
  \centering
\includegraphics[scale=.17]{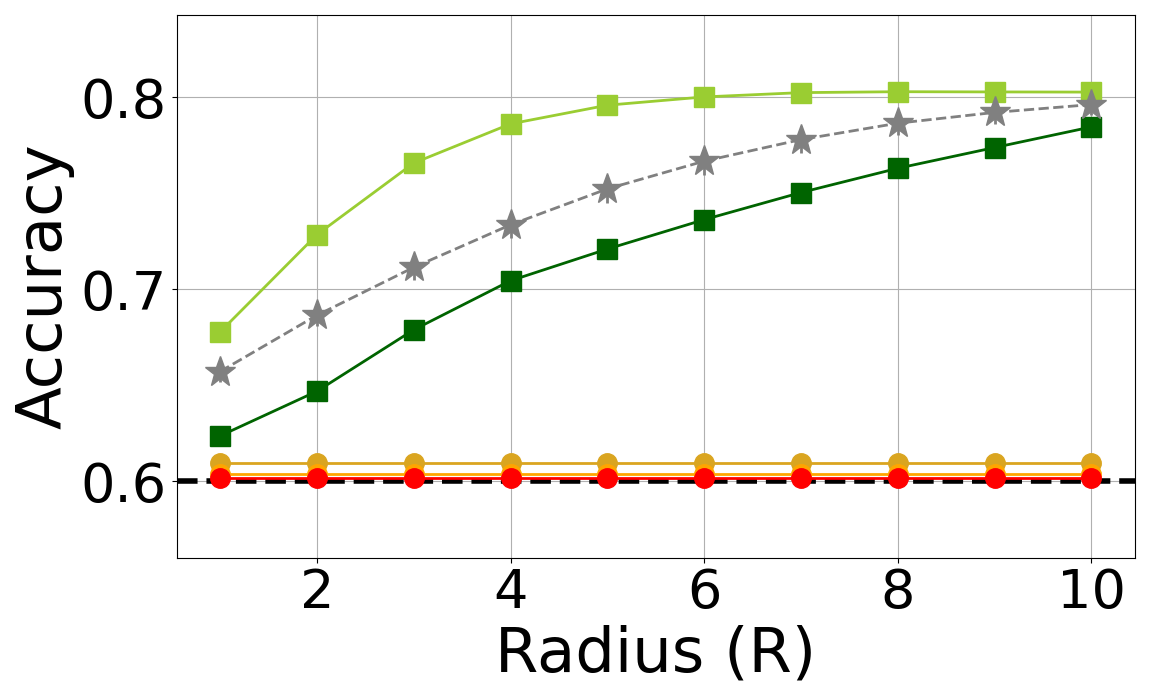} 
\caption{$k=2$, $a=3$, $b=.1$, $\alpha=.4$.}
\end{subfigure}
\begin{subfigure}{.6\textwidth}
  \centering
\includegraphics[scale=.17]{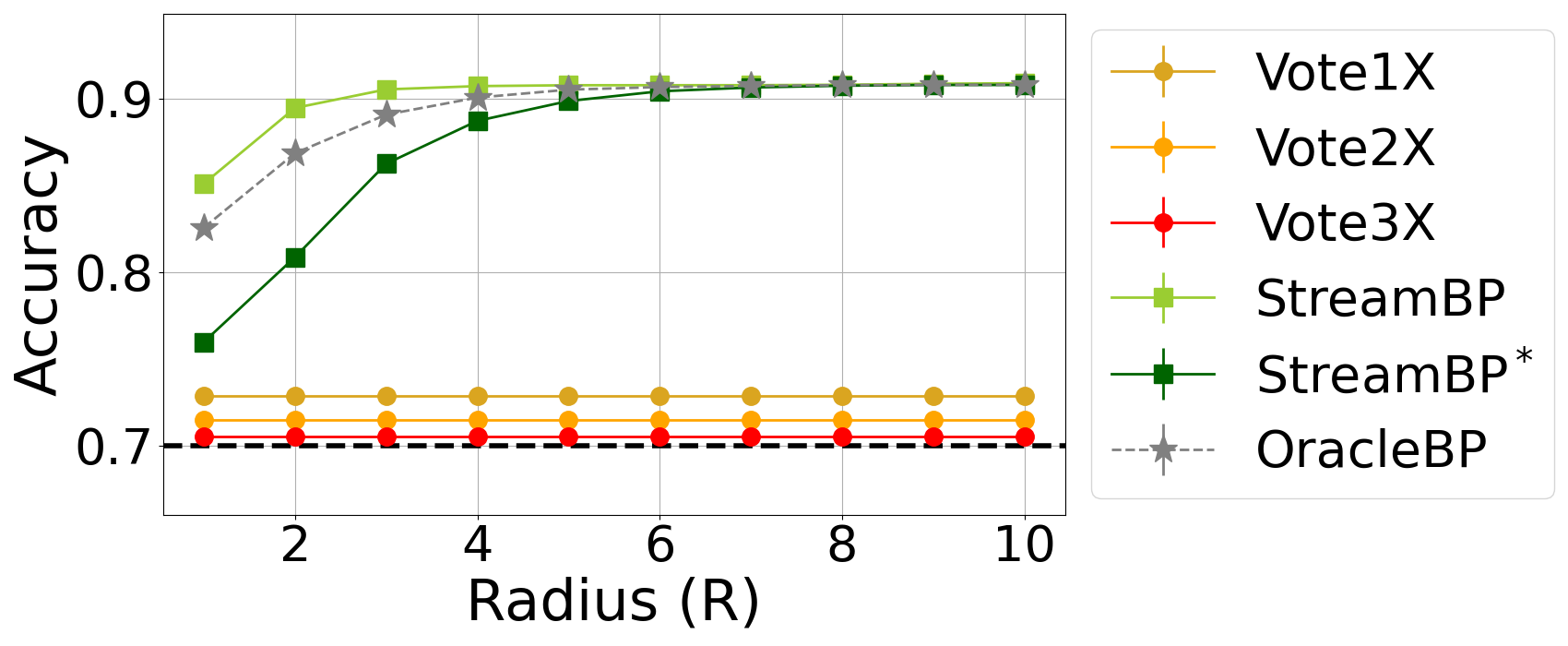} 
\caption{$k=2$, $a=5$, $b=.5$, $\alpha=.3$.}
\end{subfigure}\\

\begin{subfigure}{.33\textwidth}
  \centering
\includegraphics[scale=.155]{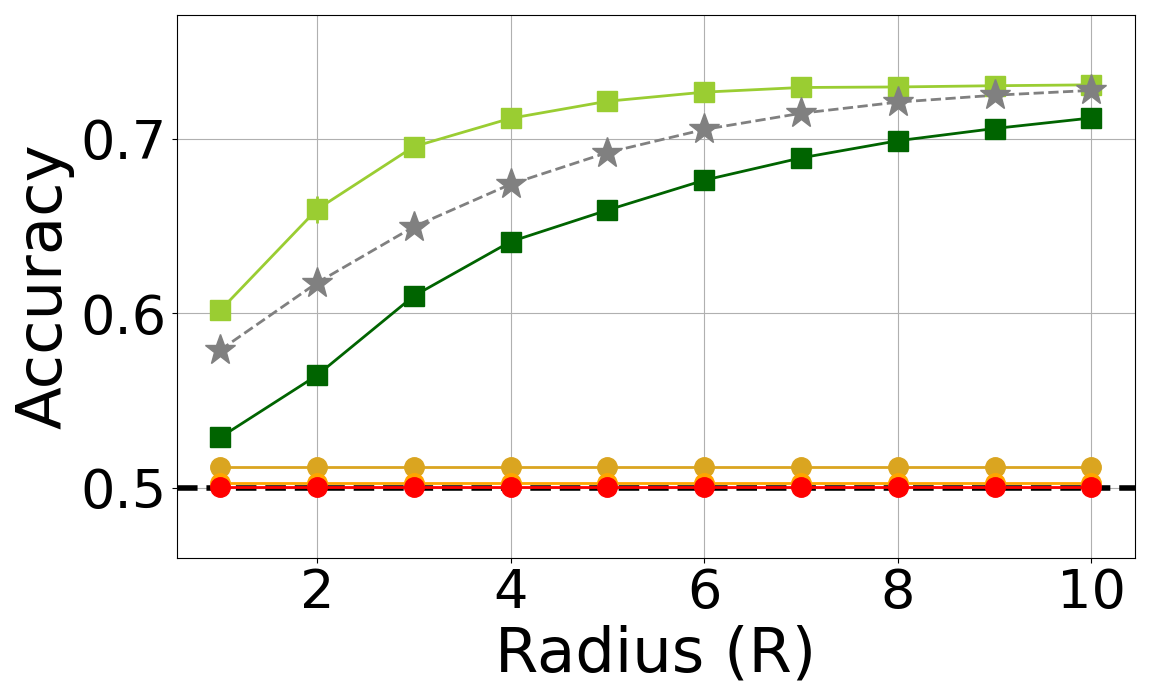} 
\caption{$k=3$, $a=4$, $b=.05$, $\alpha=.5$.}
\end{subfigure}
\begin{subfigure}{.33\textwidth}
  \centering
\includegraphics[scale=.155]{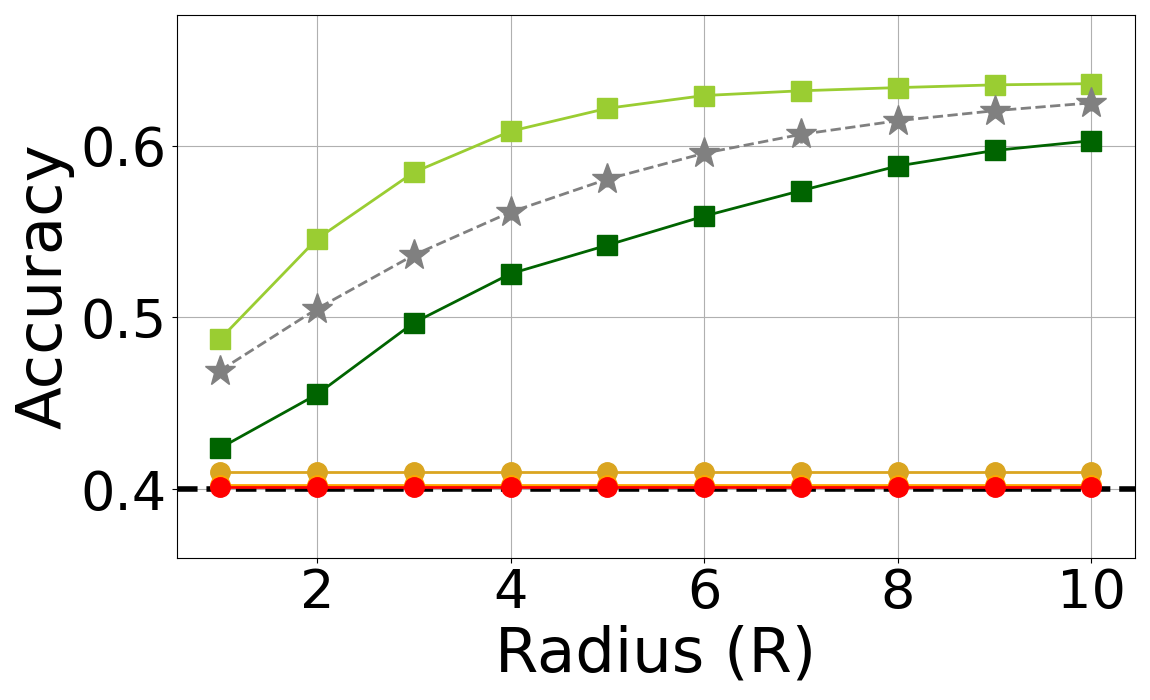} 
\caption{$k=4$, $a=3$, $b=.1$, $\alpha=.6$.}
\end{subfigure}
\begin{subfigure}{.33\textwidth}
  \centering
\includegraphics[scale=.155]{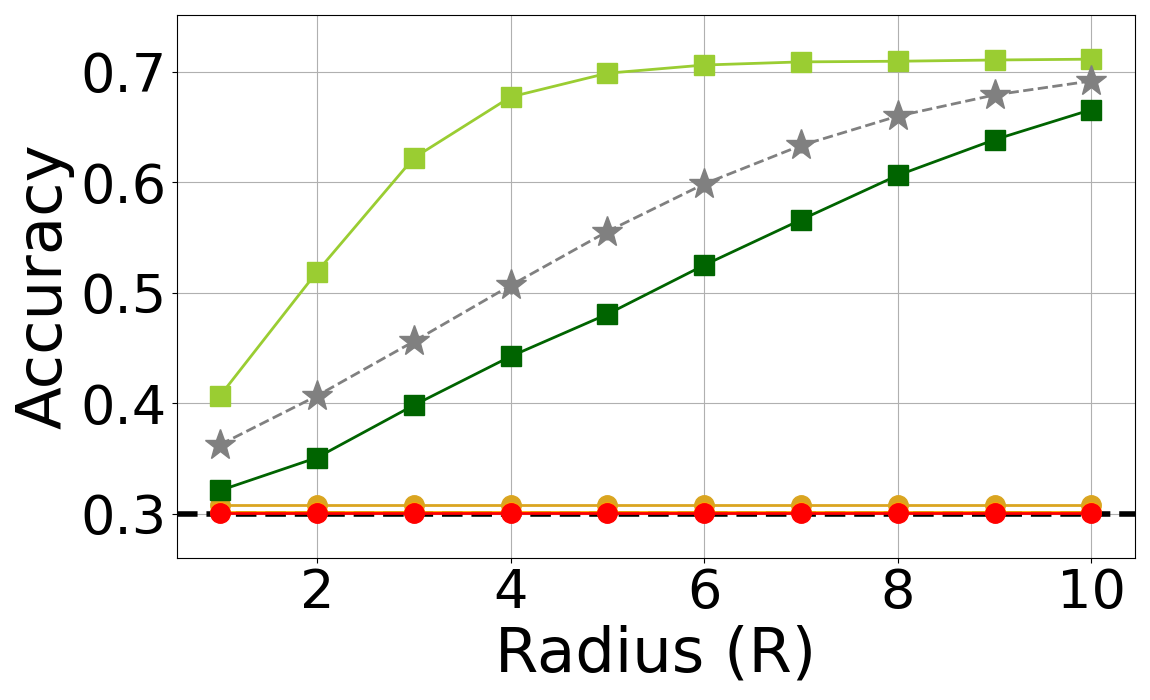} 
\caption{$k=5$, $a=8$, $b=.1$, $\alpha=.7$.}
\end{subfigure}\\
\caption{\label{fig:allmain}Results of the experiments on synthetic datasets with 50,000 vertices. The black dashed line represents the accuracy of the noisy side information (without using the graph at all), namely $1-\alpha$.}
\end{figure*}

\begin{figure*}[ht]
\begin{subfigure}{.33\textwidth}
  \centering
\includegraphics[scale=.15]{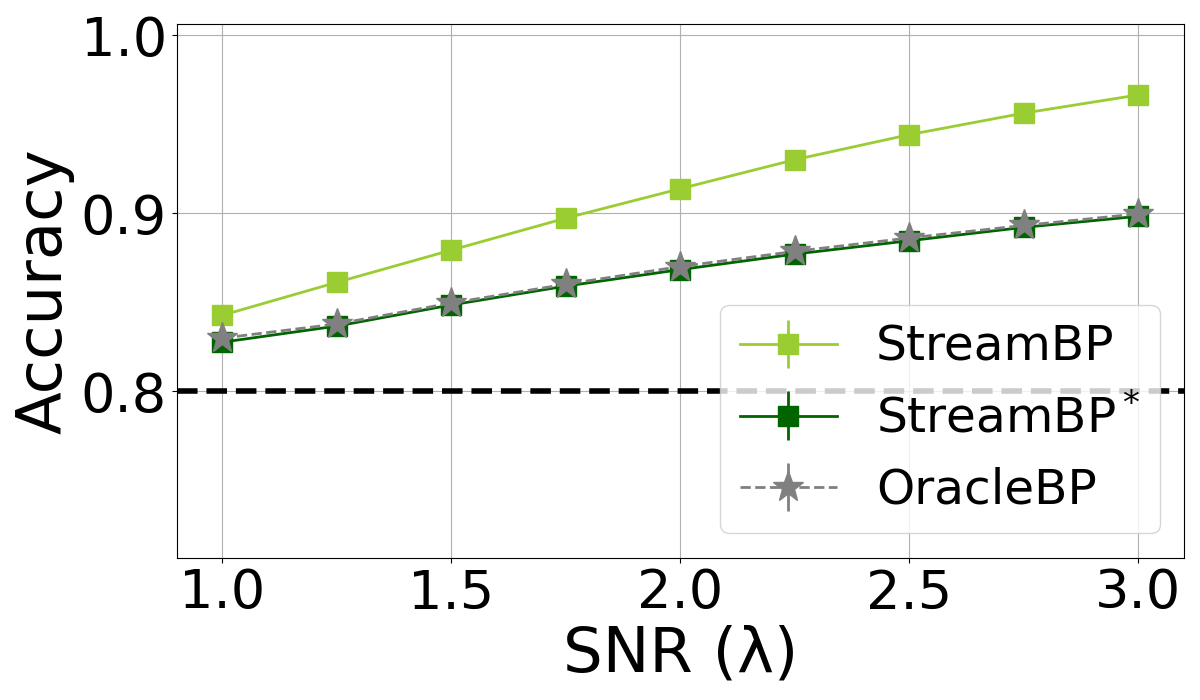}
\caption{$R = 1$}
\end{subfigure}
\begin{subfigure}{.33\textwidth}
  \centering
\includegraphics[scale=.15]{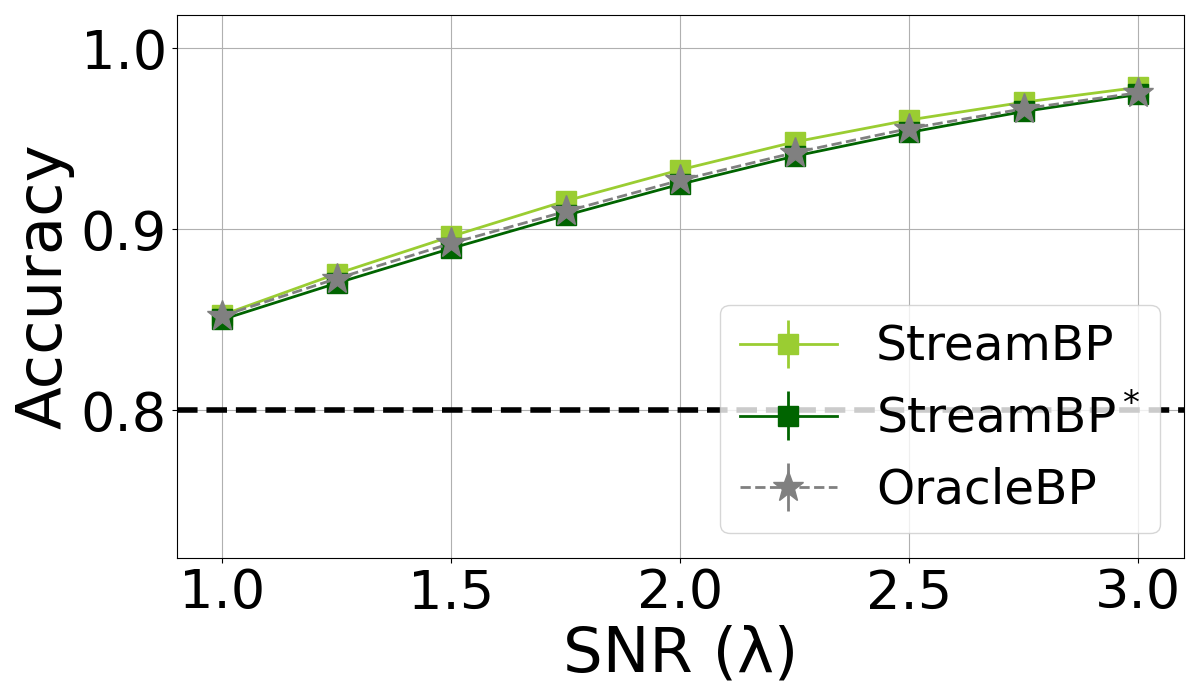} 
\caption{$R = 3$}
\end{subfigure}
\begin{subfigure}{.33\textwidth}
  \centering
\includegraphics[scale=.15]{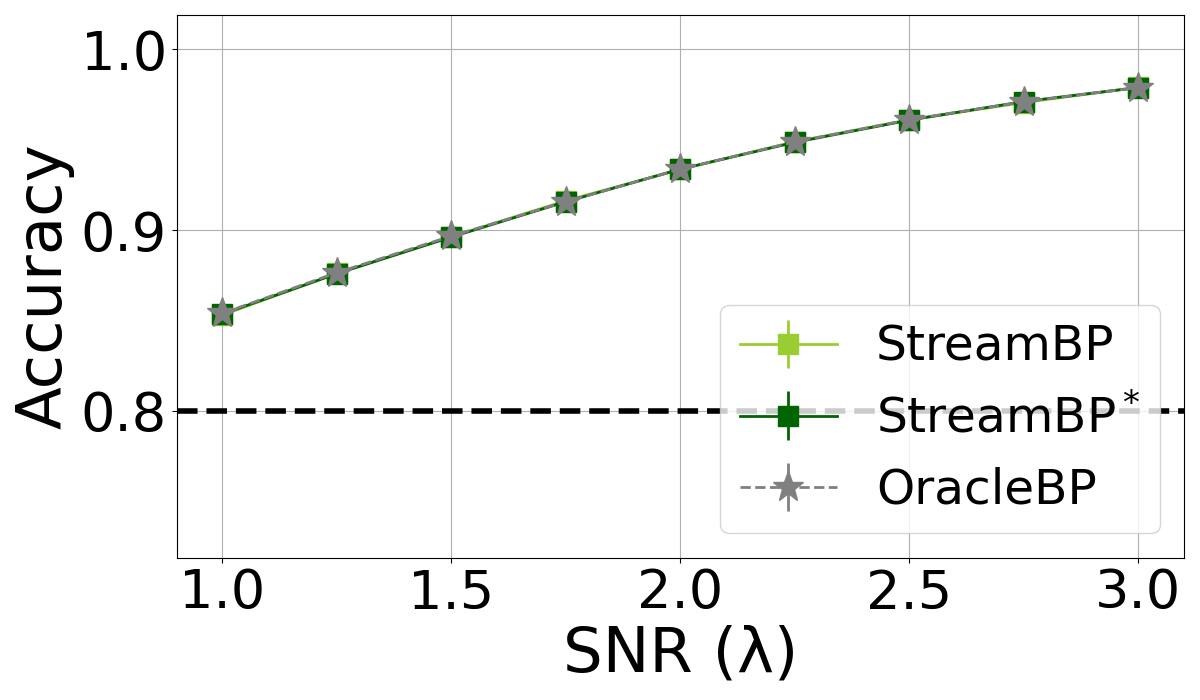} 
\caption{$R = 5$}
\end{subfigure}\\
\caption{\label{fig:constmain}Results of the experiments on synthetic datasets with 50,000 vertices, with $k = 2, a+b = 8, \alpha=0.2$,
as we increase the ratio $\lambda$ from 1.0 to 3.0. The black dashed line represents the accuracy of the noisy side information (without using the graph at all), namely $1-\alpha$.}
\end{figure*}
\begin{figure*}[ht]
\centering
\begin{subfigure}{.28\textwidth}
  \centering
   \includegraphics[scale=.14]{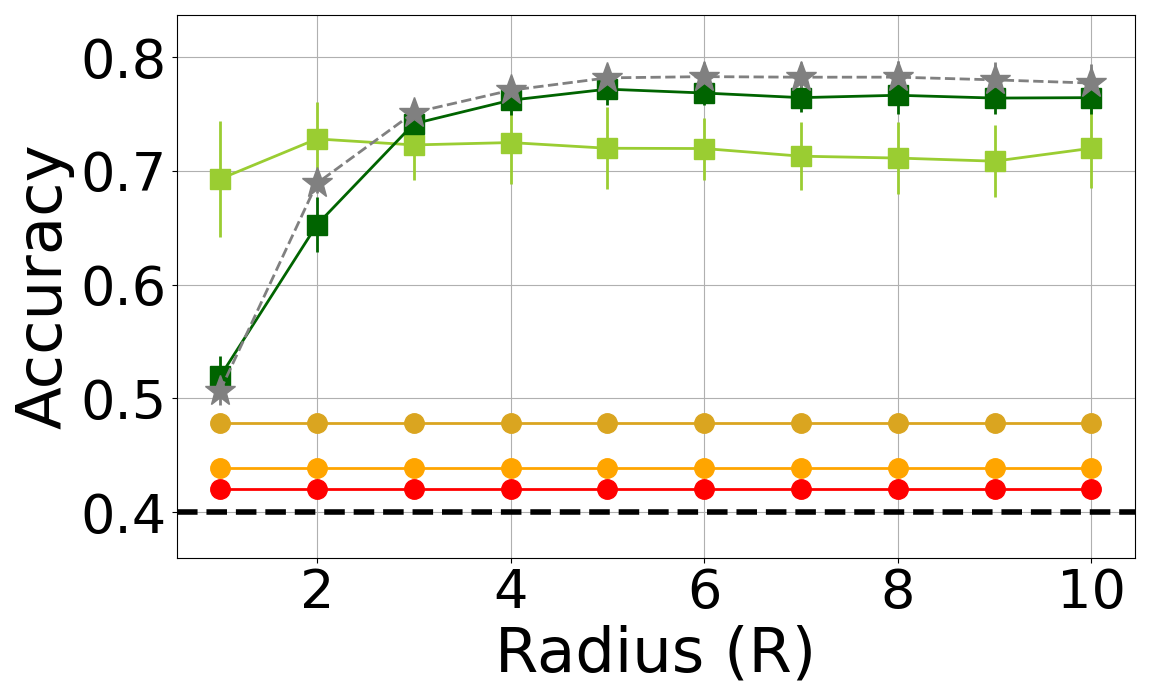} 
\caption{Cora ($k = 7$), $\alpha = 0.6$.}
\end{subfigure}
\begin{subfigure}{.28\textwidth}
  \centering
    \includegraphics[scale=.14]{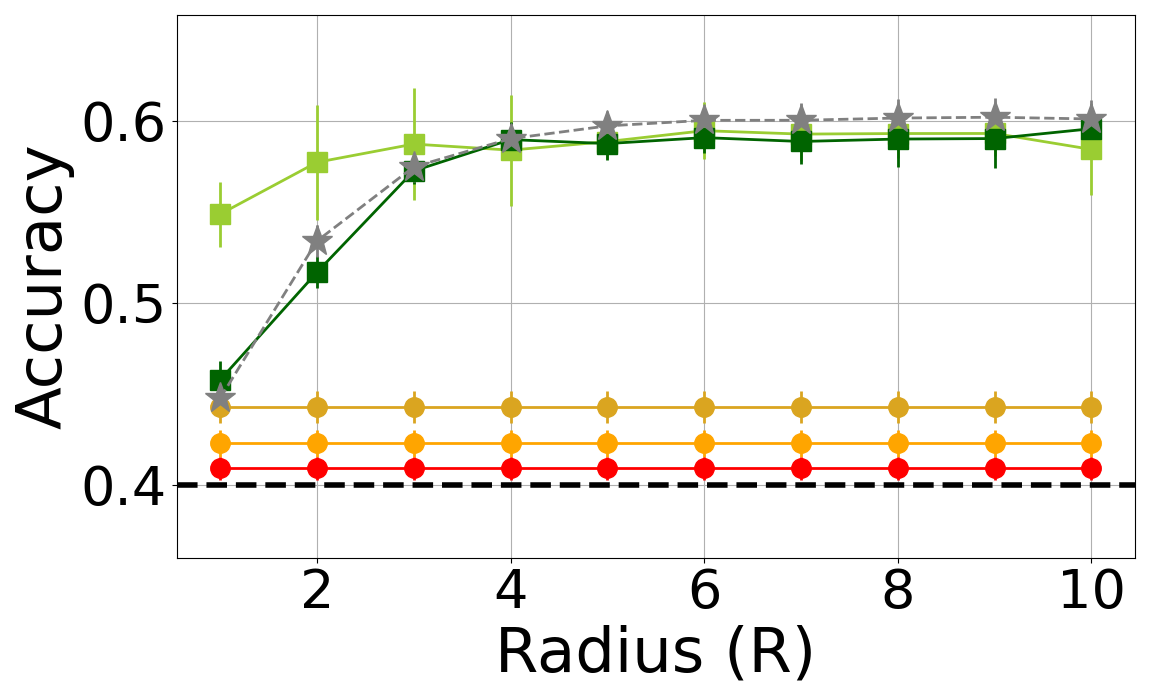} 
\caption{Citeseer ($k = 6$), $\alpha = 0.6$.}
\end{subfigure}
\begin{subfigure}{.42\textwidth}
  \centering
\includegraphics[scale=.14]{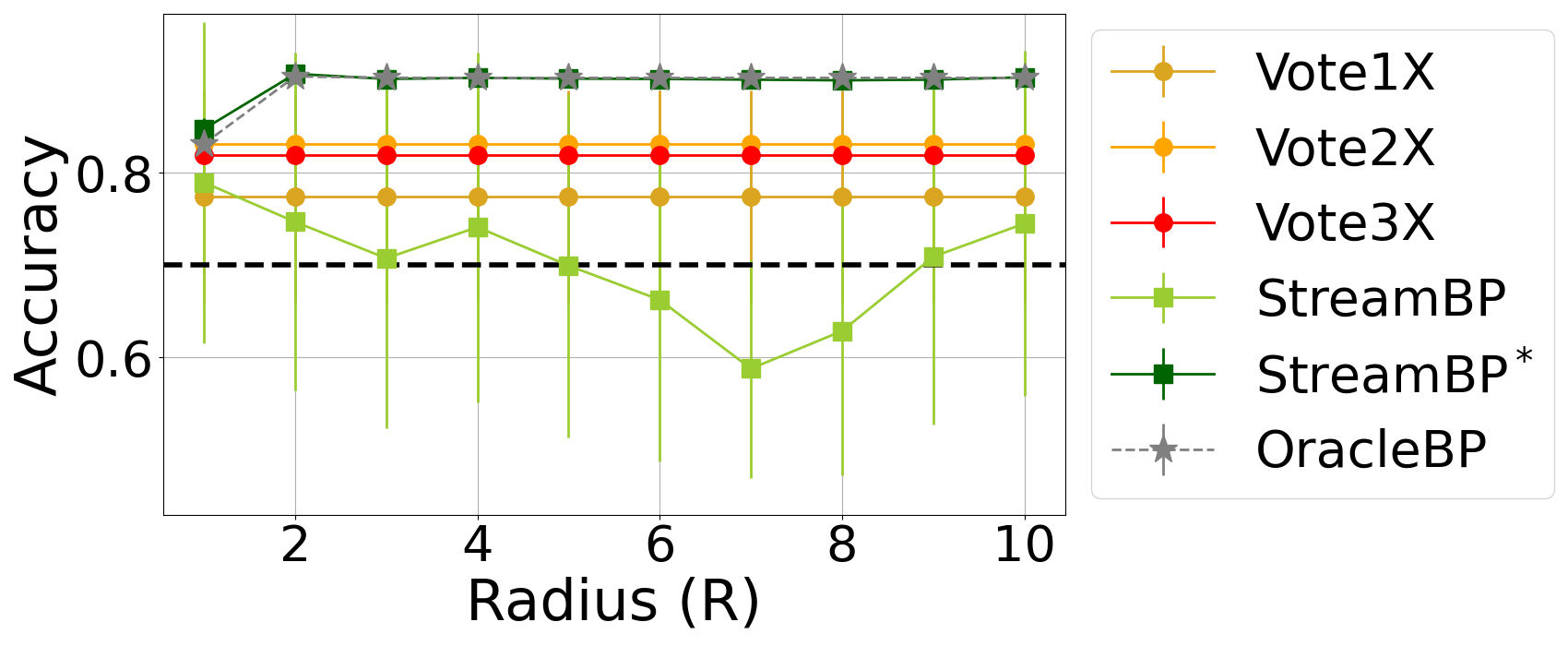} 
\caption{Polblogs ($k = 2$), $\alpha = 0.3$.}
\end{subfigure}\\
\caption{\label{fig:real_world_datasets}Results of  experiments on real-world datasets. The black dashed line represents the accuracy of the noisy side information (without using the graph at all), namely $1-\alpha$.}
\end{figure*}

\section*{Acknowledgements}

A.M. and Y.W. were partially supported by NSF grants CCF-2006489, IIS-1741162 and the ONR grant N00014-18-1-2729.

\bibliographystyle{alpha}

\newpage
\newcommand{\etalchar}[1]{$^{#1}$}


\newpage
\begin{appendices}
	\section{Further experiments with synthetic datasets}
\label{sec:synthetic}
\paragraph{Results with different sets of parameters.} In Figures~\ref{fig:allapp2}--\ref{fig:allapp5}, we report more comprehensively the dependence of our algorithms' performance on the radius, where each figure corresponds to one choice of $k = 2, 3, 4, 5$.
Similarly to Figure~\ref{fig:allmain}, we can see that as the radius increases, the performance of the three belief propagation algorithms (\ouralgo, \boundedalgo, and \offlinealgo) improves and converges to the same value. For some parameter settings, we observe that \ouralgo performs poorly when the radius is above a certain threshold; that is likely due to the issues discussed in Section~\ref{sec:streambp_star}.

\begin{figure*}[ht]
\begin{subfigure}{.49\textwidth}
  \centering
   \includegraphics[scale=.16]{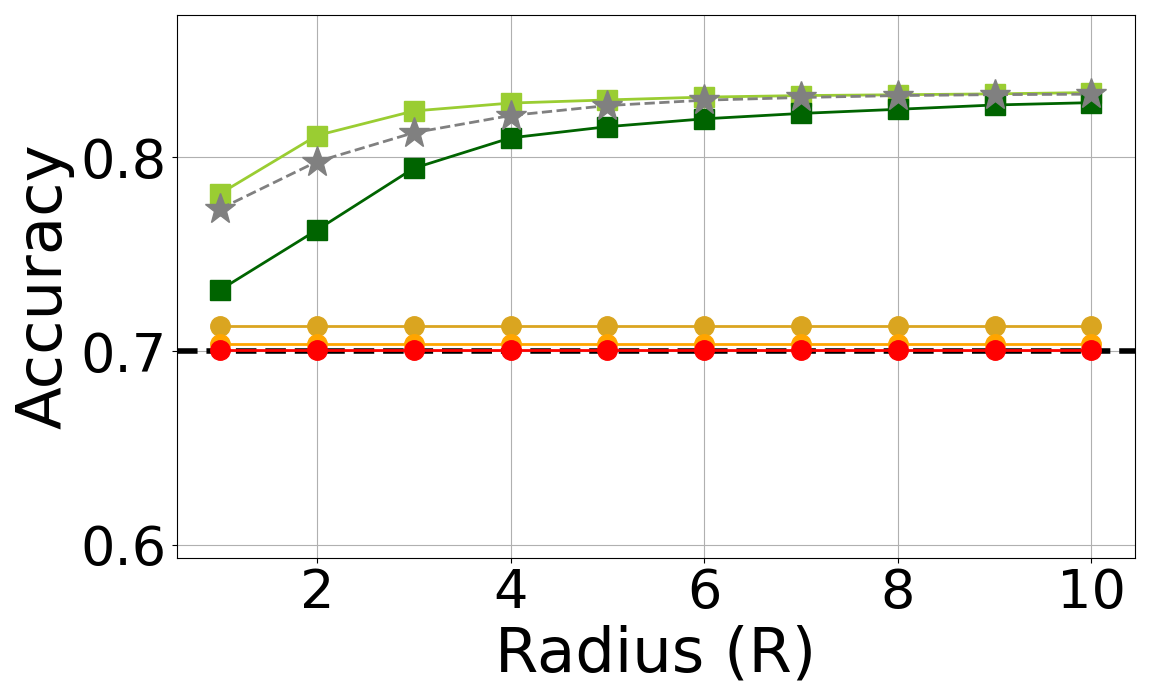} 
\caption{$a=2.5$, $b=0.05$, $\alpha=0.3$.}
\end{subfigure}
\begin{subfigure}{.49\textwidth}
  \centering
    \includegraphics[scale=.16]{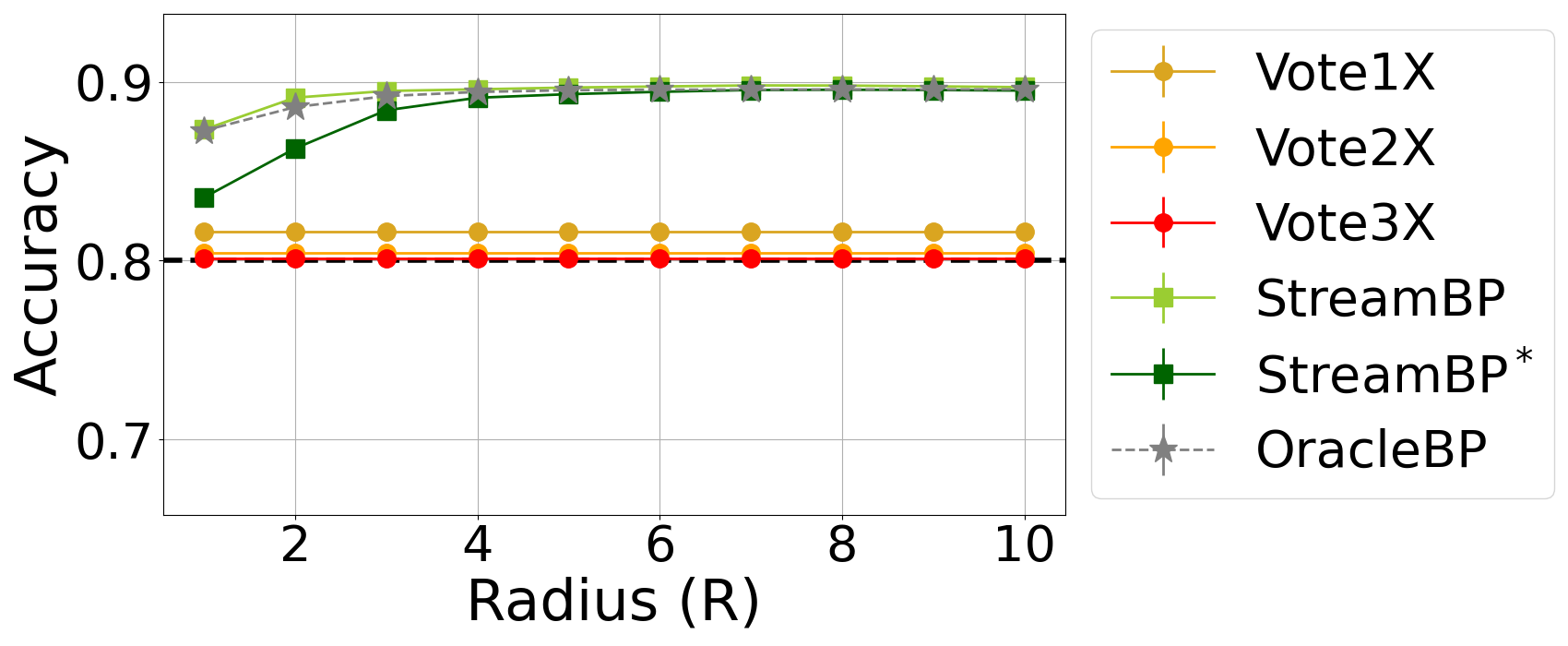} 
\caption{$a=2.5$, $b=0.05$, $\alpha=0.2$.}
\end{subfigure}\\

\begin{subfigure}{.33\textwidth}
  \centering
\includegraphics[scale=.16]{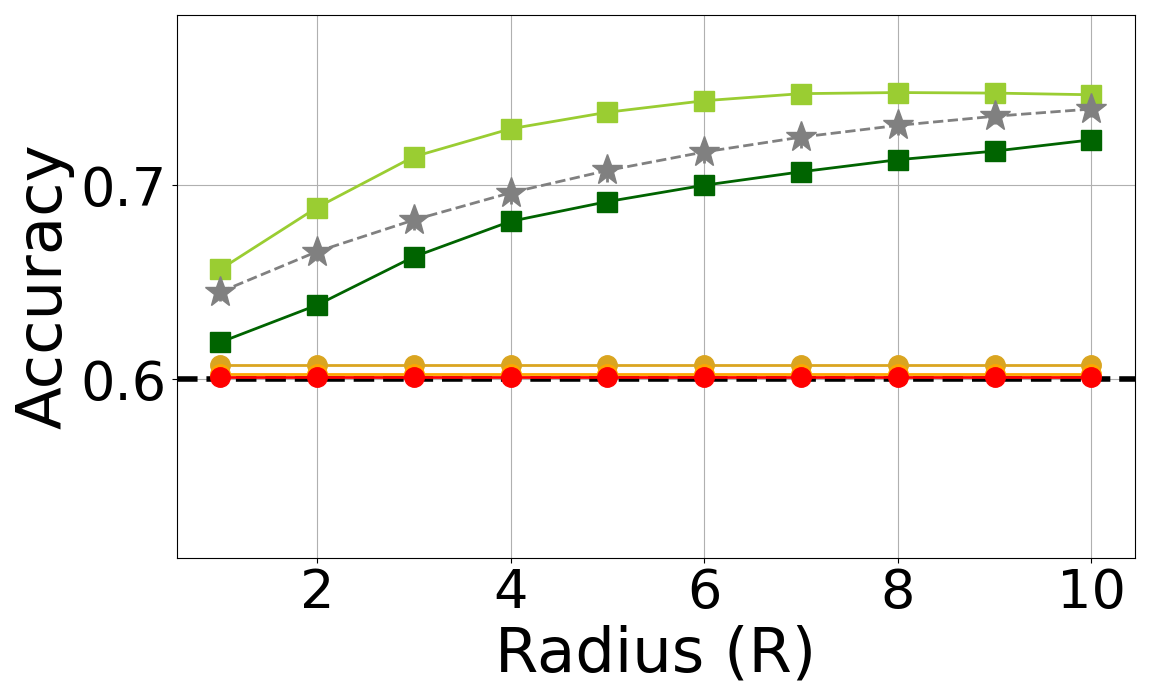} 
\caption{$a=2.5$, $b=0.05$, $\alpha=0.4$.}
\end{subfigure}
\begin{subfigure}{.33\textwidth}
  \centering
\includegraphics[scale=.16]{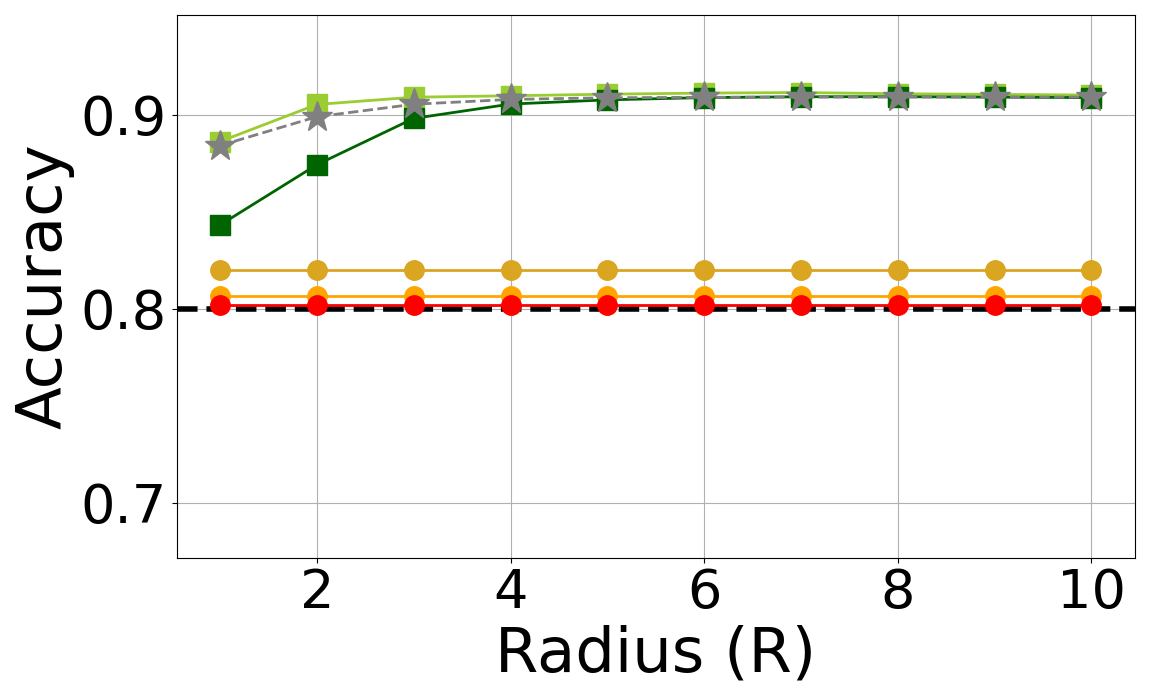} 
\caption{$a=3$, $b=0.1$, $\alpha=0.2$.}
\end{subfigure}
\begin{subfigure}{.33\textwidth}
  \centering
\includegraphics[scale=.16]{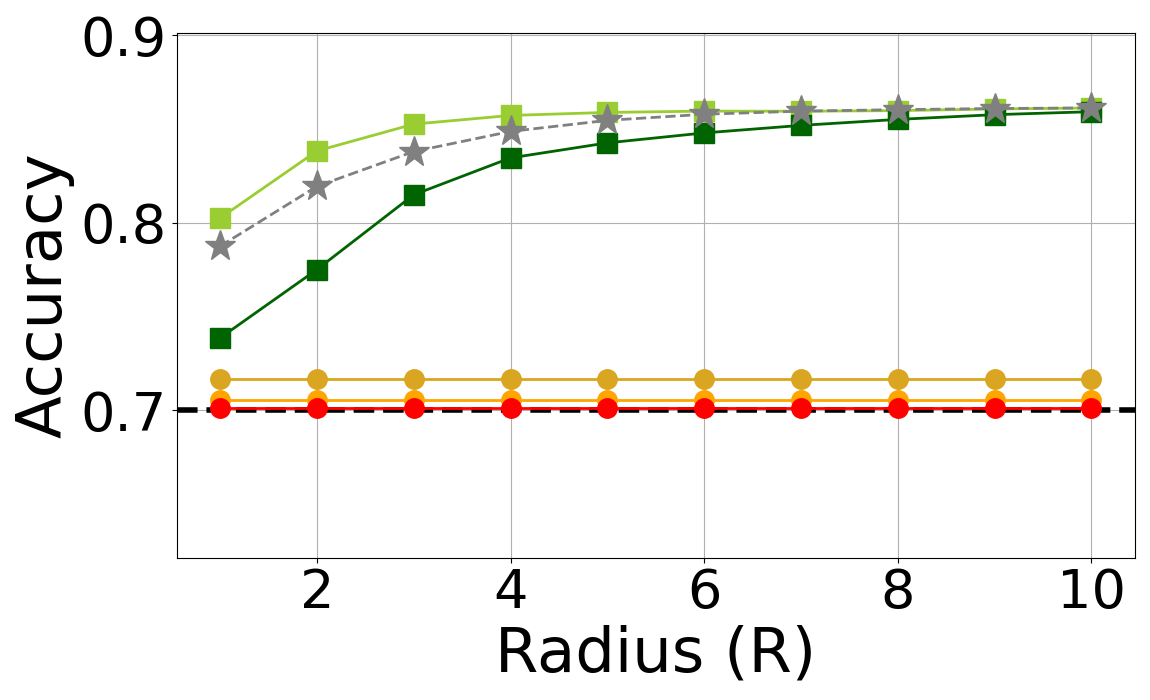} 
\caption{$a=3$, $b=0.1$, $\alpha=0.3$.}
\end{subfigure}\\
\begin{subfigure}{.33\textwidth}
  \centering
\includegraphics[scale=.16]{Figures/AllAlgos/without/allalgo_50000_2_3.000000_0.100000_0.400000.png} 
\caption{$a=3$, $b=0.1$, $\alpha=0.4$.}
\end{subfigure}
\begin{subfigure}{.33\textwidth}
  \centering
\includegraphics[scale=.16]{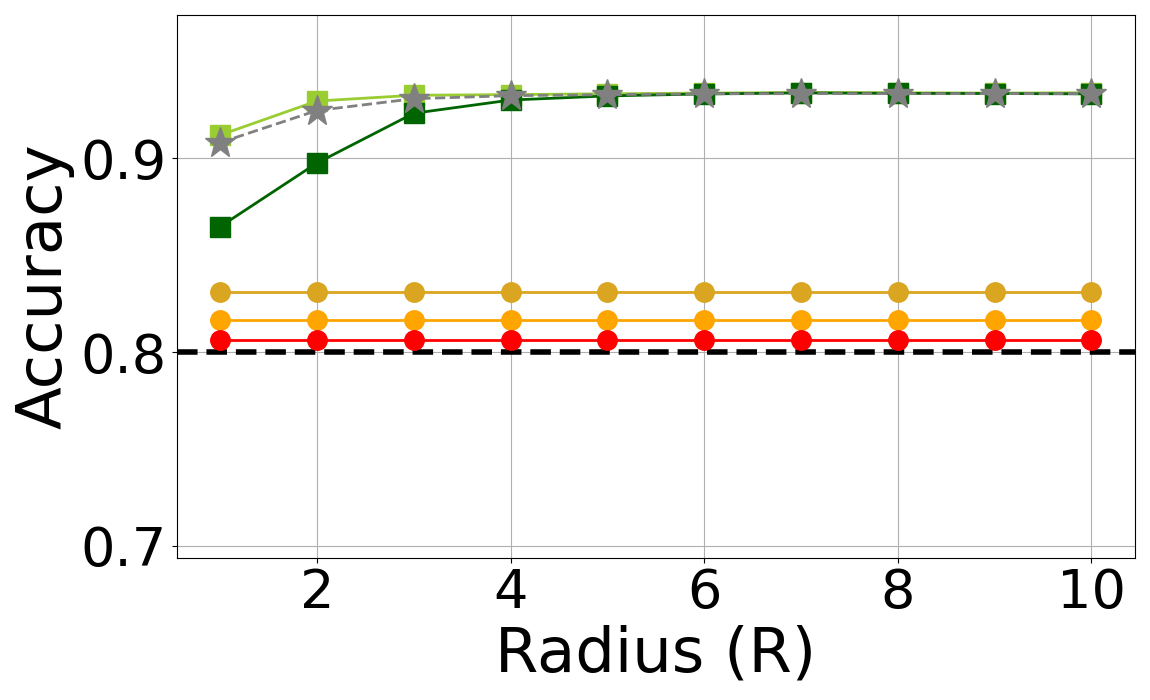} 
\caption{$a=5$, $b=0.5$, $\alpha=0.2$.}
\end{subfigure}
\begin{subfigure}{.33\columnwidth}
  \centering
\includegraphics[scale=.16]{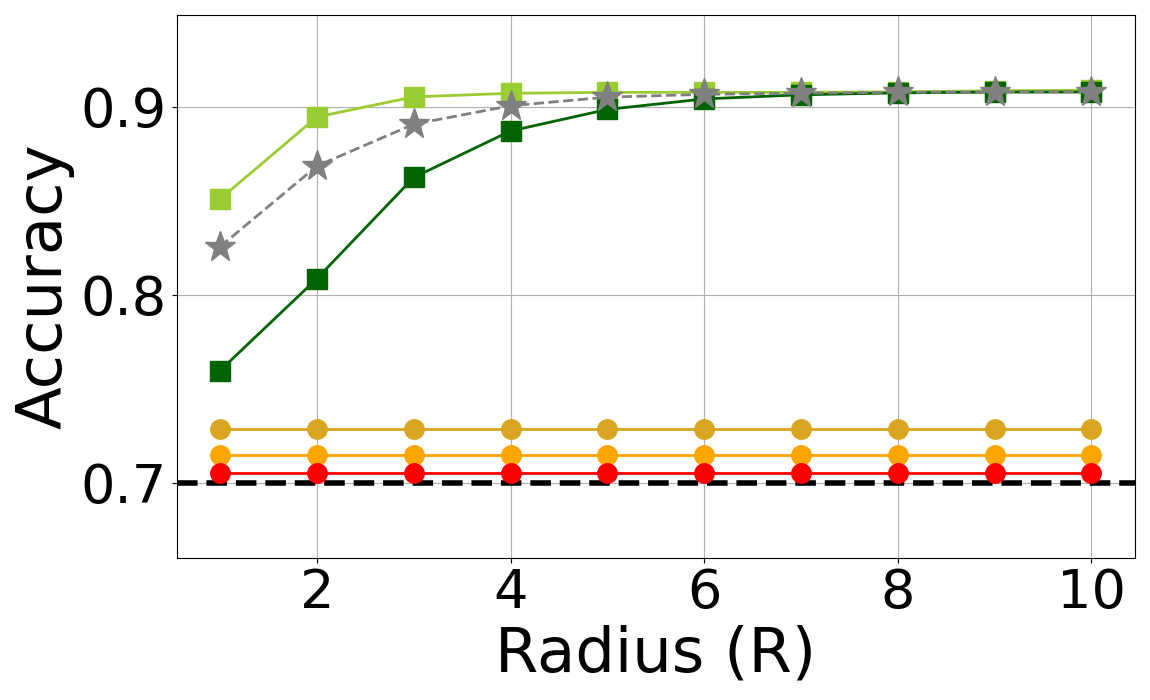} 
\caption{$a=5$, $b=0.5$, $\alpha=0.3$.}
\end{subfigure}\\
\caption{\label{fig:allapp2}Results of the experiments on synthetic datasets with 50,000 vertices, and $k = 2$. The black dashed line represents the accuracy of the noisy side information (without using the graph at all), namely $1-\alpha$.}
\end{figure*}

\begin{figure*}[ht]
\begin{subfigure}{.49\textwidth}
  \centering
   \includegraphics[scale=.16]{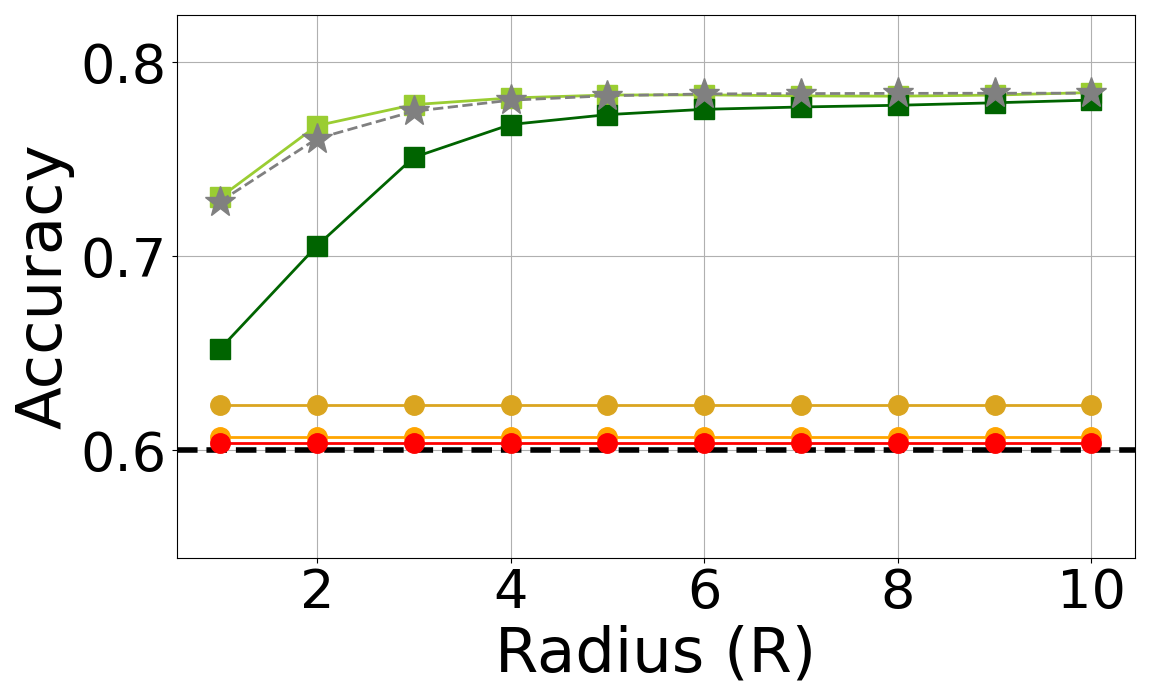} 
\caption{$a=6$, $b=0.05$, $\alpha=0.4$.}
\end{subfigure}
\begin{subfigure}{.49\textwidth}
  \centering
    \includegraphics[scale=.16]{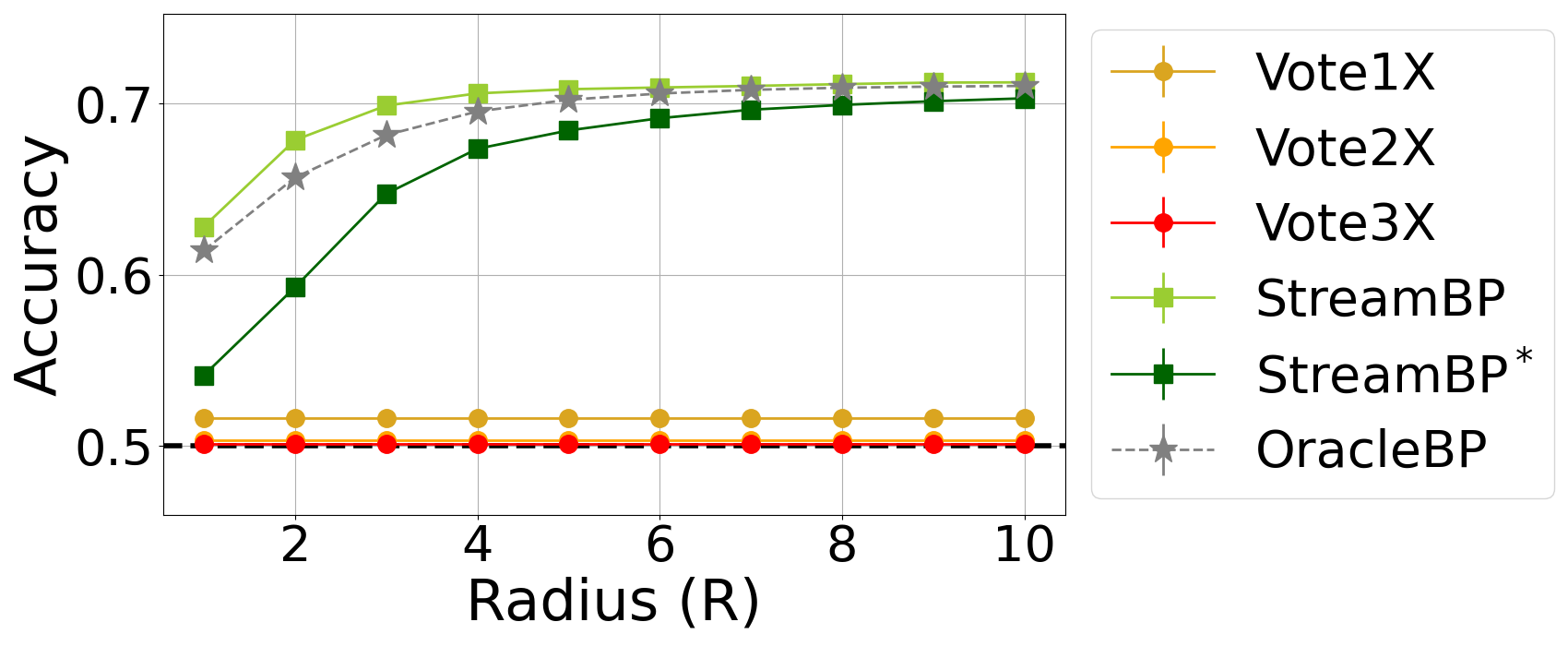} 
\caption{$a=6$, $b=0.05$, $\alpha=0.5$.}
\end{subfigure}\\

\begin{subfigure}{.33\textwidth}
  \centering
\includegraphics[scale=.16]{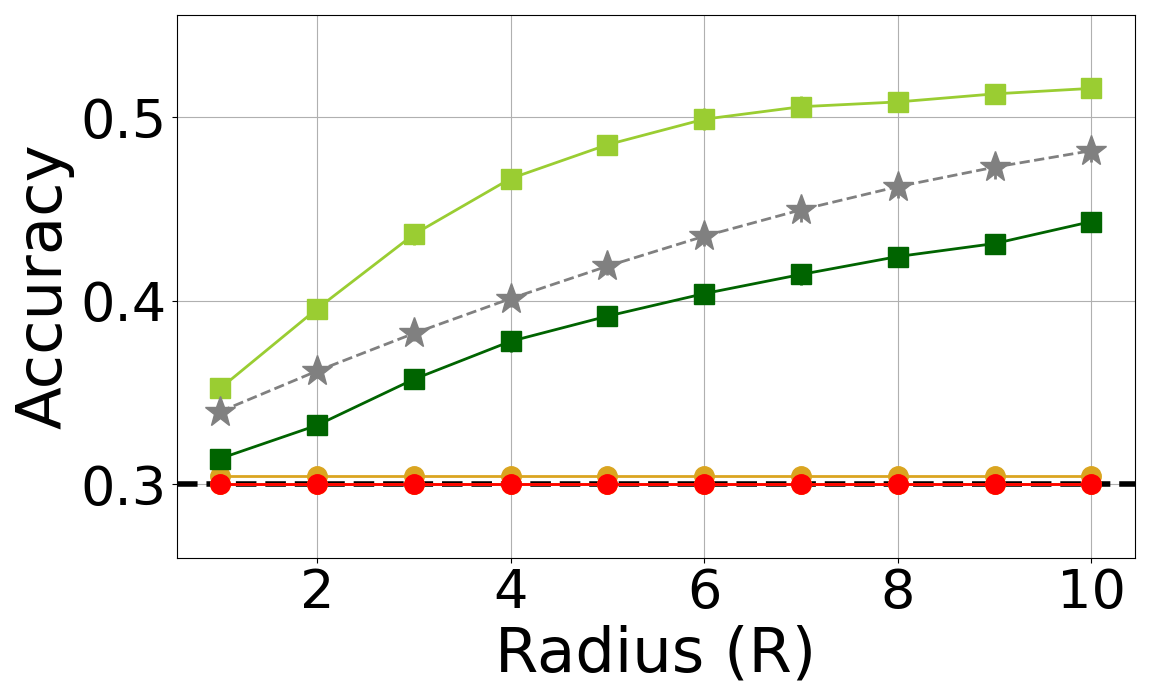} 
\caption{$a=6$, $b=0.05$, $\alpha=0.7$.}
\end{subfigure}
\begin{subfigure}{.33\textwidth}
  \centering
\includegraphics[scale=.16]{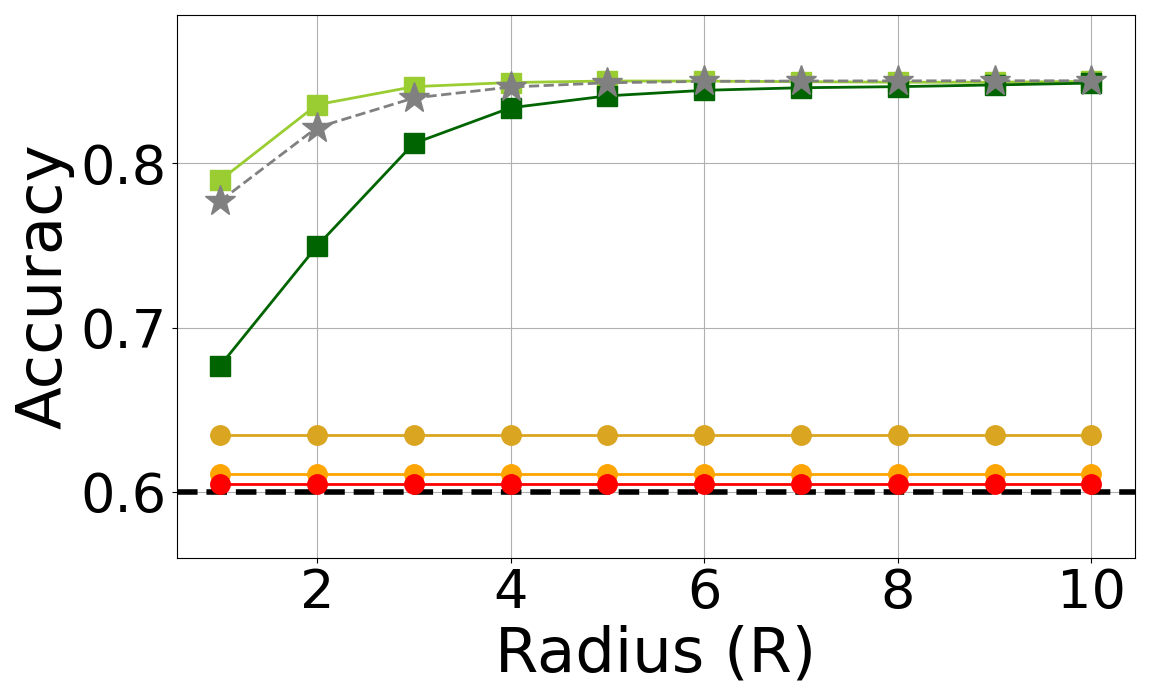} 
\caption{$a=8$, $b=0.1$, $\alpha=0.4$.}
\end{subfigure}
\begin{subfigure}{.33\textwidth}
  \centering
\includegraphics[scale=.16]{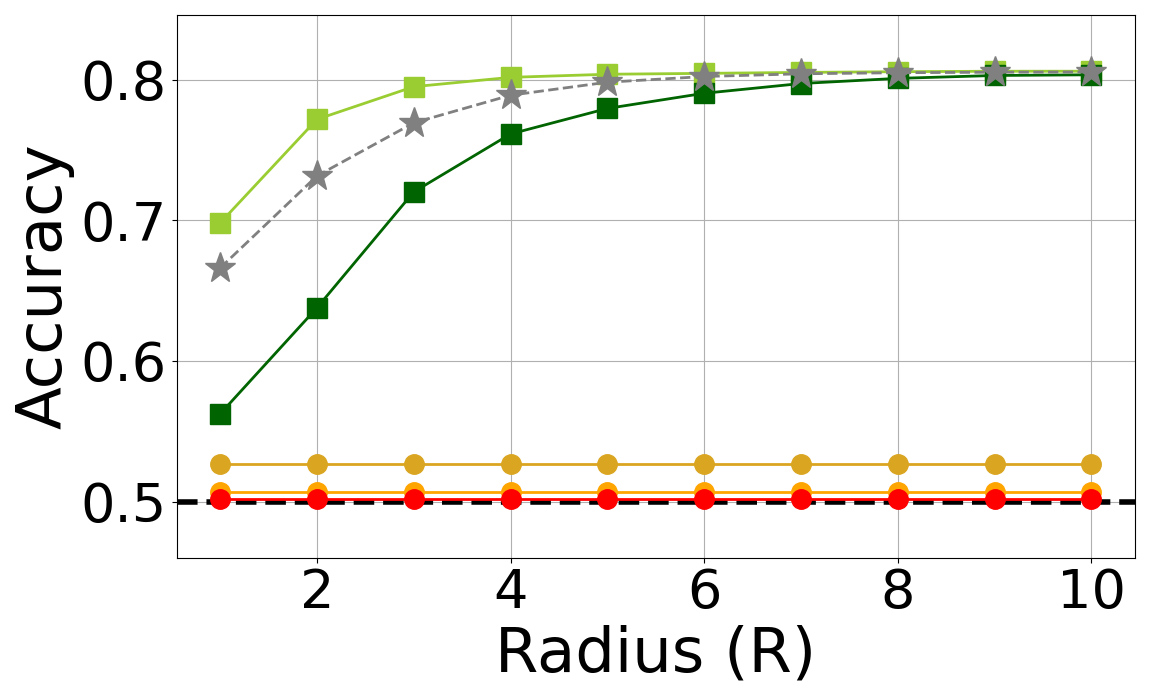} 
\caption{$a=8$, $b=0.1$, $\alpha=0.5$.}
\end{subfigure}\\
\begin{subfigure}{.33\textwidth}
  \centering
\includegraphics[scale=.16]{Figures/AllAlgos/without/allalgo_50000_5_8.000000_0.100000_0.700000.png} 
\caption{$a=8$, $b=0.1$, $\alpha=0.7$.}
\end{subfigure}
\begin{subfigure}{.33\textwidth}
  \centering
\includegraphics[scale=.16]{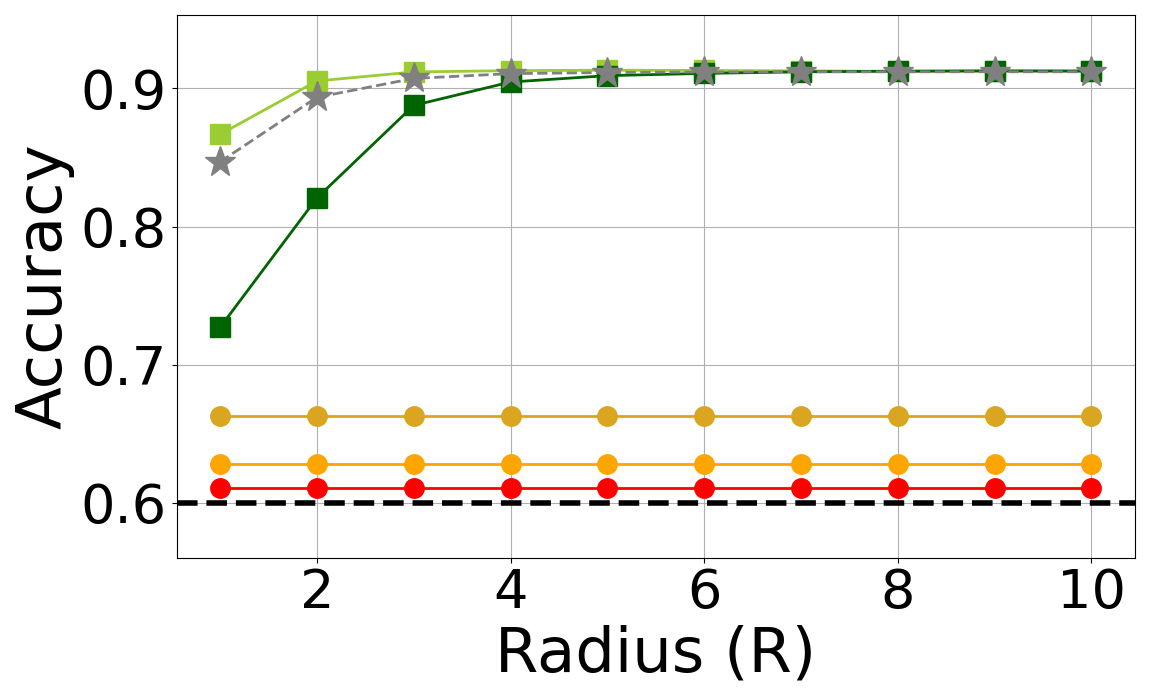} 
\caption{$a=13$, $b=0.5$, $\alpha=0.4$.}
\end{subfigure}
\begin{subfigure}{.33\textwidth}
  \centering
\includegraphics[scale=.16]{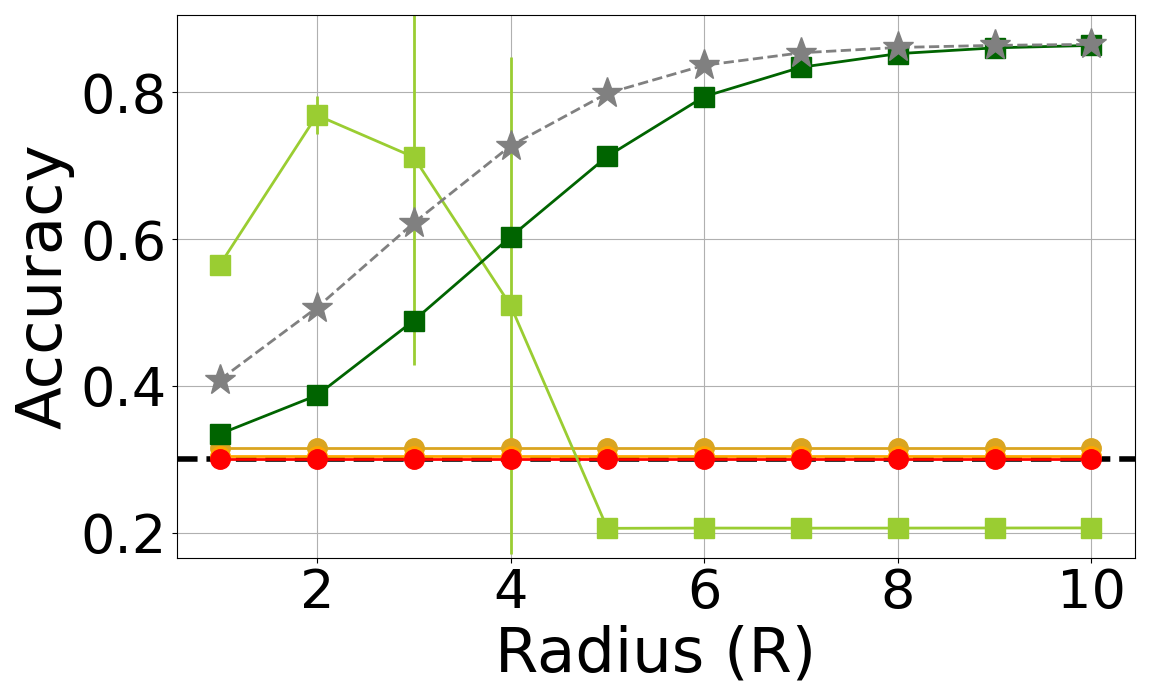} 
\caption{$a=13$, $b=0.5$, $\alpha=0.7$.}
\end{subfigure}\\
\caption{\label{fig:allapp5}Results of the experiments on synthetic datasets with 50,000 vertices, and $k = 5$. The black dashed line represents the accuracy of the noisy side information (without using the graph at all), namely $1-\alpha$.}
\end{figure*}

\begin{figure*}[ht]
\begin{subfigure}{.49\columnwidth}
  \centering
\includegraphics[scale=.16]{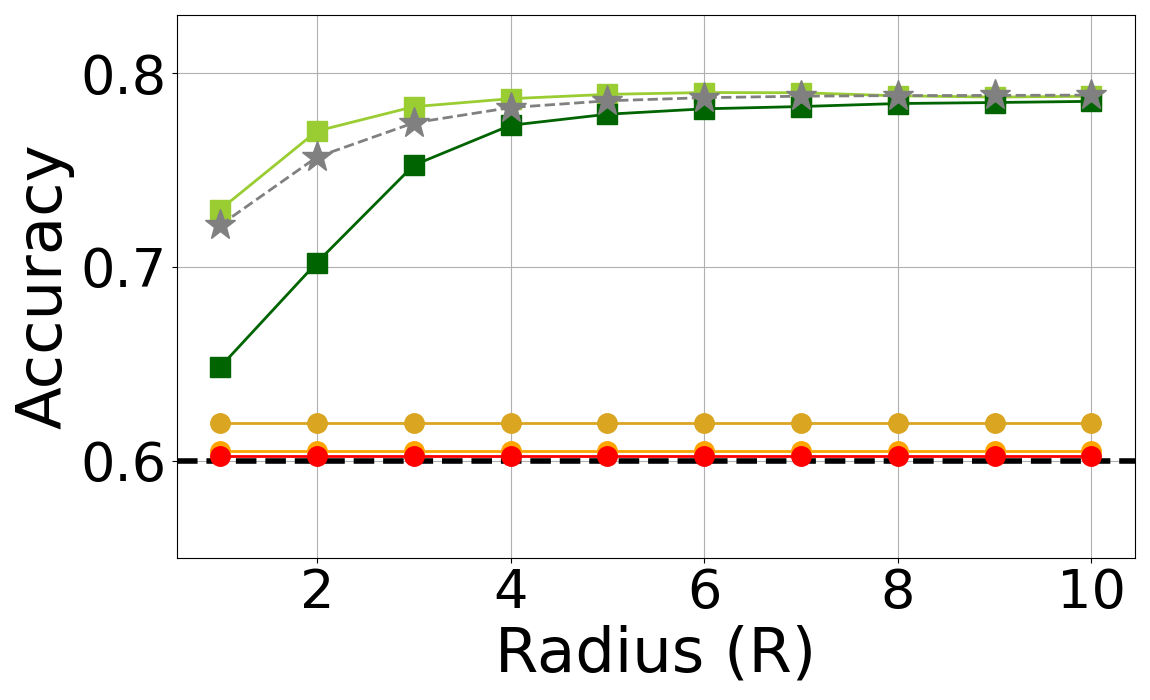} 
\caption{$a=5$, $b=0.05$, $\alpha=0.4$.}
\end{subfigure}
\begin{subfigure}{.49\columnwidth}
  \centering
   \includegraphics[scale=.16]{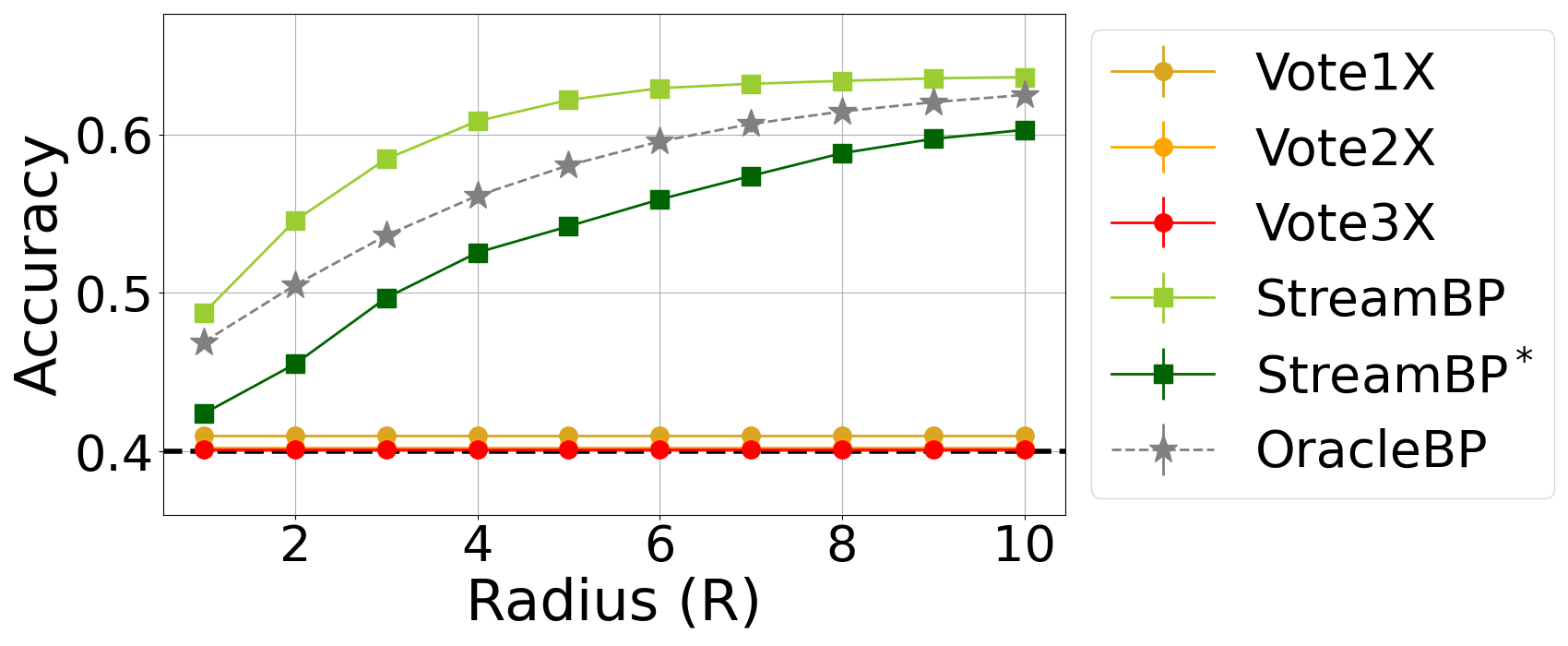} 
\caption{$a=5$, $b=0.05$, $\alpha=0.6$.}
\end{subfigure}\\

\begin{subfigure}{.33\columnwidth}
  \centering
\includegraphics[scale=.16]{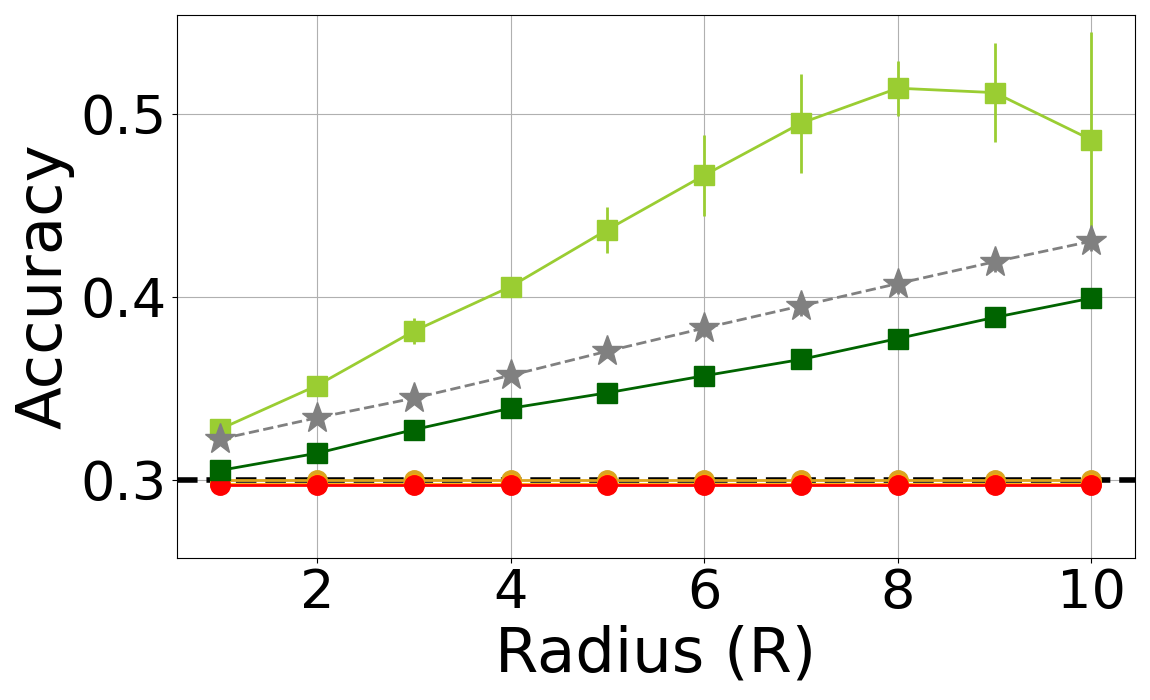} 
\caption{$a=5$, $b=0.05$, $\alpha=0.7$.}
\end{subfigure}
\begin{subfigure}{.33\columnwidth}
  \centering
\includegraphics[scale=.16]{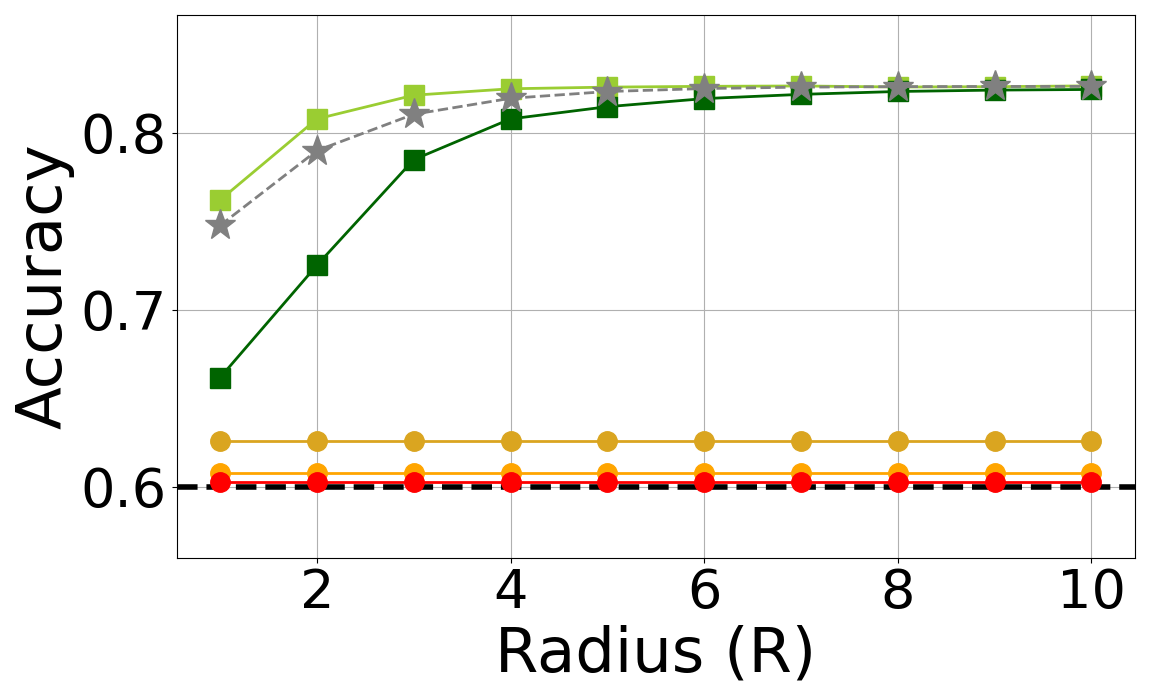} 
\caption{$a=6$, $b=0.1$, $\alpha=0.4$.}
\end{subfigure}
\begin{subfigure}{.33\columnwidth}
  \centering
\includegraphics[scale=.16]{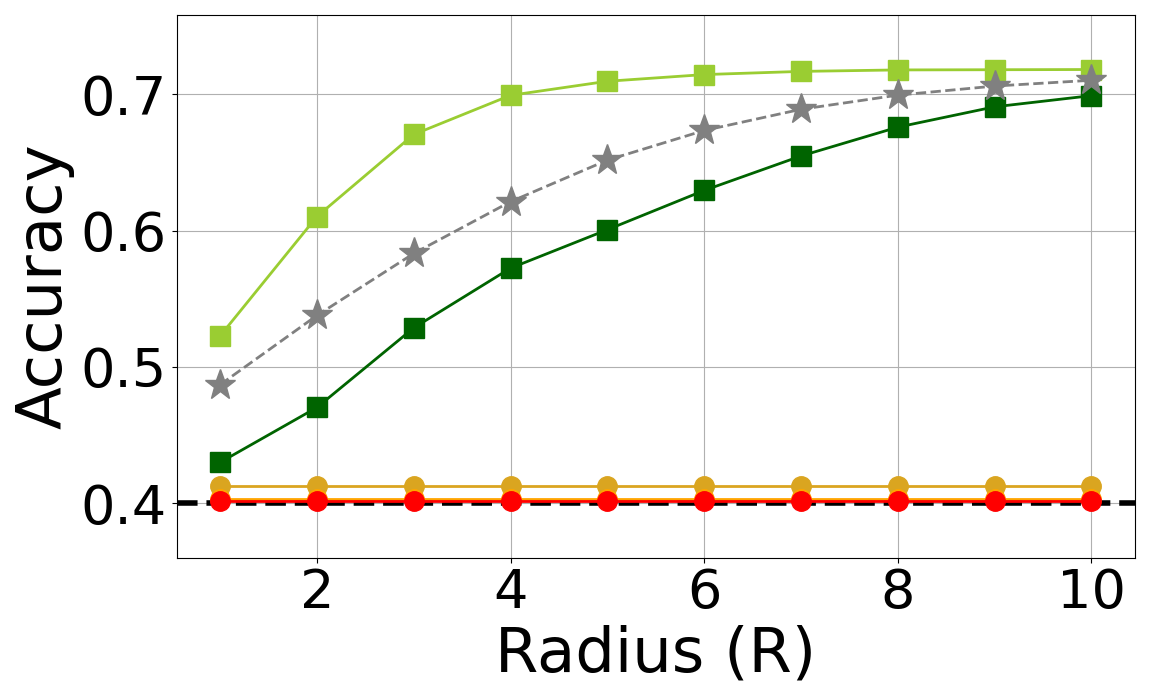} 
\caption{$a=6$, $b=0.1$, $\alpha=0.6$.}
\end{subfigure}\\
\begin{subfigure}{.33\columnwidth}
  \centering
\includegraphics[scale=.16]{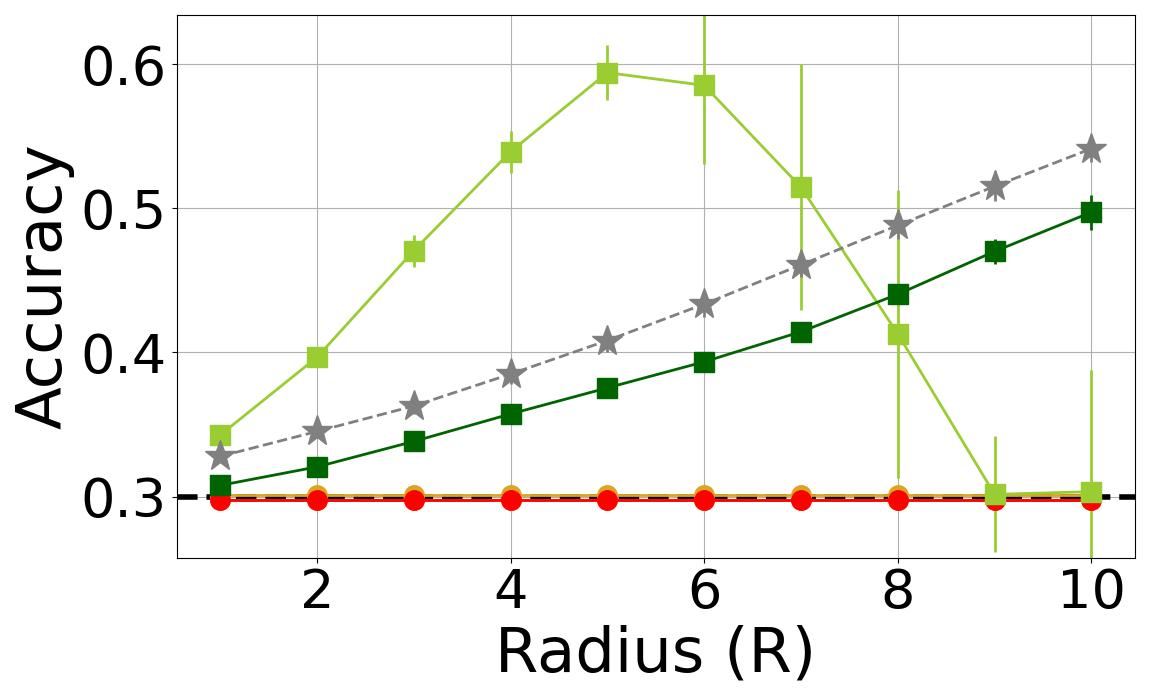} 
\caption{$a=6$, $b=0.1$, $\alpha=0.7$.}
\end{subfigure}
\begin{subfigure}{.33\columnwidth}
  \centering
\includegraphics[scale=.16]{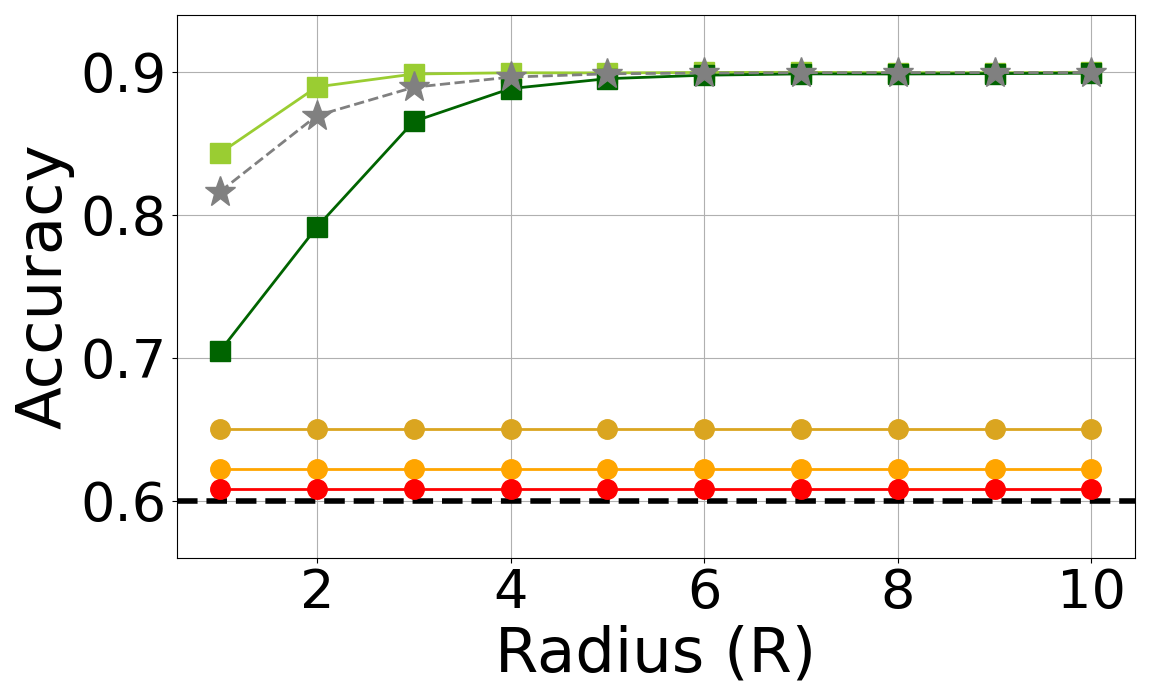} 
\caption{$a=10$, $b=0.5$, $\alpha=0.4$.}
\end{subfigure}
\begin{subfigure}{.33\columnwidth}
  \centering
\includegraphics[scale=.16]{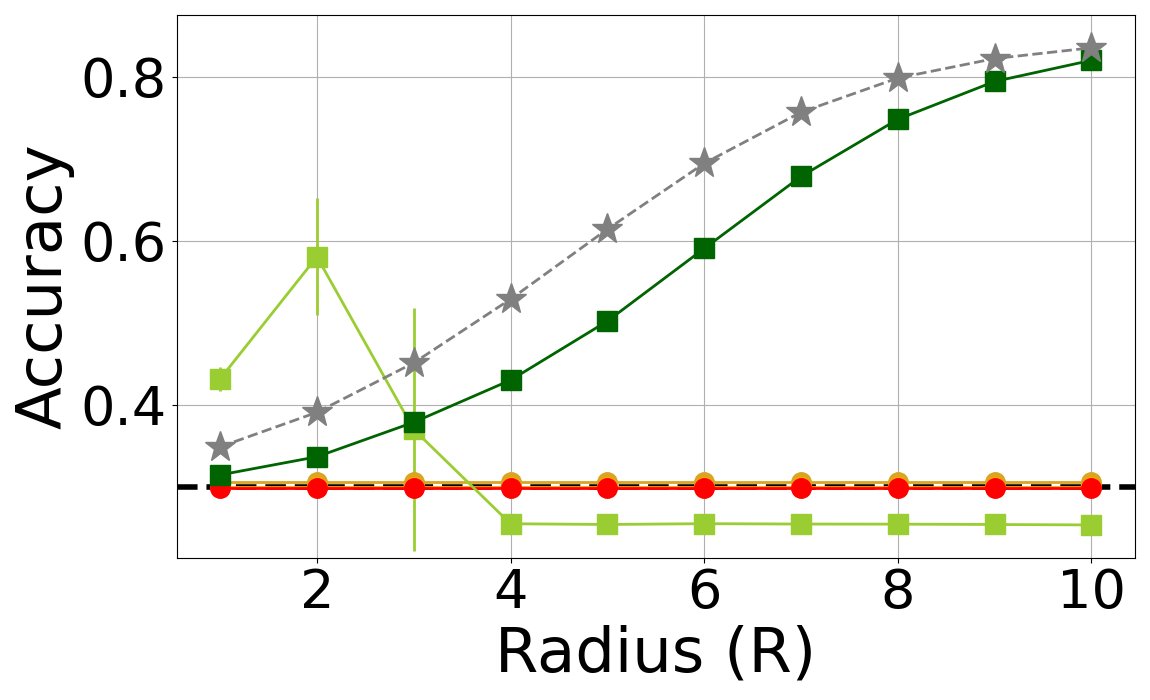} 
\caption{$a=10$, $b=0.5$, $\alpha=0.7$.}
\end{subfigure}\\
\caption{\label{fig:allapp4}Results of the experiments on synthetic datasets with 50,000 vertices, and $k = 4$. The black dashed line represents the accuracy of the noisy side information (without using the graph at all), namely $1-\alpha$.}
\end{figure*}
\begin{figure*}[ht]
\begin{subfigure}{.49\textwidth}
  \centering
   \includegraphics[scale=.16]{Figures/AllAlgos/without/allalgo_50000_5_6.000000_0.050000_0.400000.png} 
\caption{$a=6$, $b=0.05$, $\alpha=0.4$.}
\end{subfigure}
\begin{subfigure}{.49\textwidth}
  \centering
    \includegraphics[scale=.16]{Figures/AllAlgos/with/allalgo_50000_5_6.000000_0.050000_0.500000.png} 
\caption{$a=6$, $b=0.05$, $\alpha=0.5$.}
\end{subfigure}\\

\begin{subfigure}{.33\textwidth}
  \centering
\includegraphics[scale=.16]{Figures/AllAlgos/without/allalgo_50000_5_6.000000_0.050000_0.700000.png} 
\caption{$a=6$, $b=0.05$, $\alpha=0.7$.}
\end{subfigure}
\begin{subfigure}{.33\textwidth}
  \centering
\includegraphics[scale=.16]{Figures/AllAlgos/without/allalgo_50000_5_8.000000_0.100000_0.400000.png} 
\caption{$a=8$, $b=0.1$, $\alpha=0.4$.}
\end{subfigure}
\begin{subfigure}{.33\textwidth}
  \centering
\includegraphics[scale=.16]{Figures/AllAlgos/without/allalgo_50000_5_8.000000_0.100000_0.500000.png} 
\caption{$a=8$, $b=0.1$, $\alpha=0.5$.}
\end{subfigure}\\
\begin{subfigure}{.33\textwidth}
  \centering
\includegraphics[scale=.16]{Figures/AllAlgos/without/allalgo_50000_5_8.000000_0.100000_0.700000.png} 
\caption{$a=8$, $b=0.1$, $\alpha=0.7$.}
\end{subfigure}
\begin{subfigure}{.33\textwidth}
  \centering
\includegraphics[scale=.16]{Figures/AllAlgos/without/allalgo_50000_5_13.000000_0.500000_0.400000.png} 
\caption{$a=13$, $b=0.5$, $\alpha=0.4$.}
\end{subfigure}
\begin{subfigure}{.33\textwidth}
  \centering
\includegraphics[scale=.16]{Figures/AllAlgos/without/allalgo_50000_5_13.000000_0.500000_0.700000.png} 
\caption{$a=13$, $b=0.5$, $\alpha=0.7$.}
\end{subfigure}\\
\caption{\label{fig:allapp5}Results of the experiments on synthetic datasets with 50,000 vertices, and $k = 5$. The black dashed line represents the accuracy of the noisy side information (without using the graph at all), namely $1-\alpha$.}
\end{figure*}

\paragraph{Variation of the accuracy of \ouralgo and \boundedalgo during their execution.}
We also investigate how the accuracy of the streaming algorithms \ouralgo and \boundedalgo varies during their execution on a given graph, as new vertices arrive one at a time. Figure~\ref{fig:accuracy_per_iteration} shows the results obtained for graphs generated according to the distribution $\STSSBM(n, a, b, k, \alpha)$, for various parameter settings.

\begin{figure*}[ht]
\begin{subfigure}{.49\textwidth}
  \centering
    \includegraphics[scale=.16]{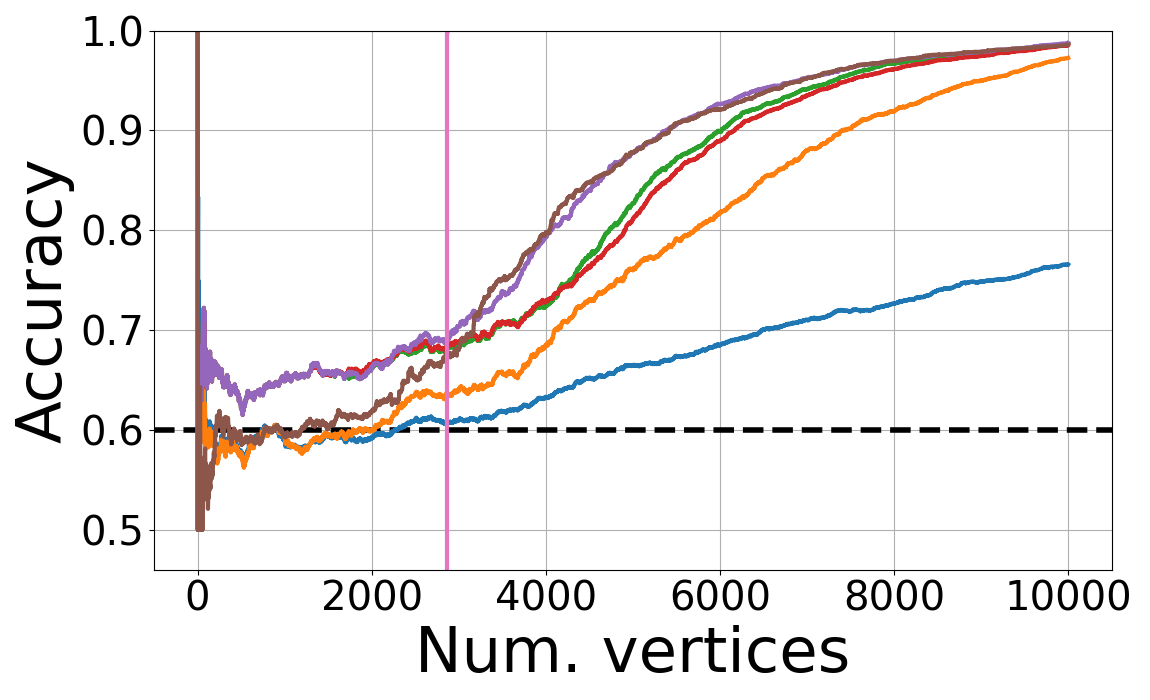} 
\caption{$k = 2$, $a=7.0$, $b = 0.1$, $\alpha = 0.4$.}
\end{subfigure}
\begin{subfigure}{.49\textwidth}
  \centering
\includegraphics[scale=.16]{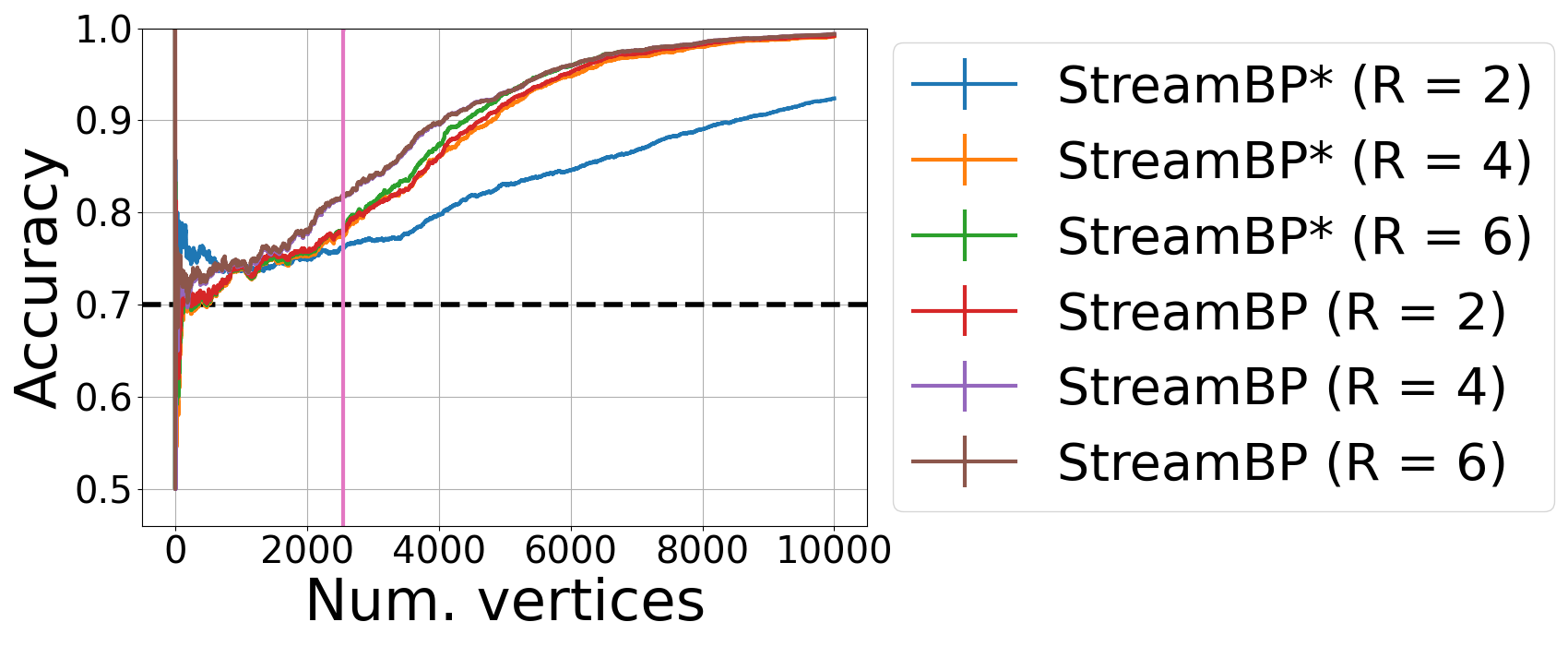} 
\caption{$k = 2$, $a=8.0$, $b = 0.5$, $\alpha = 0.3$.}
\end{subfigure} \\
\begin{subfigure}{.3\textwidth}
  \centering
   \includegraphics[scale=.16]{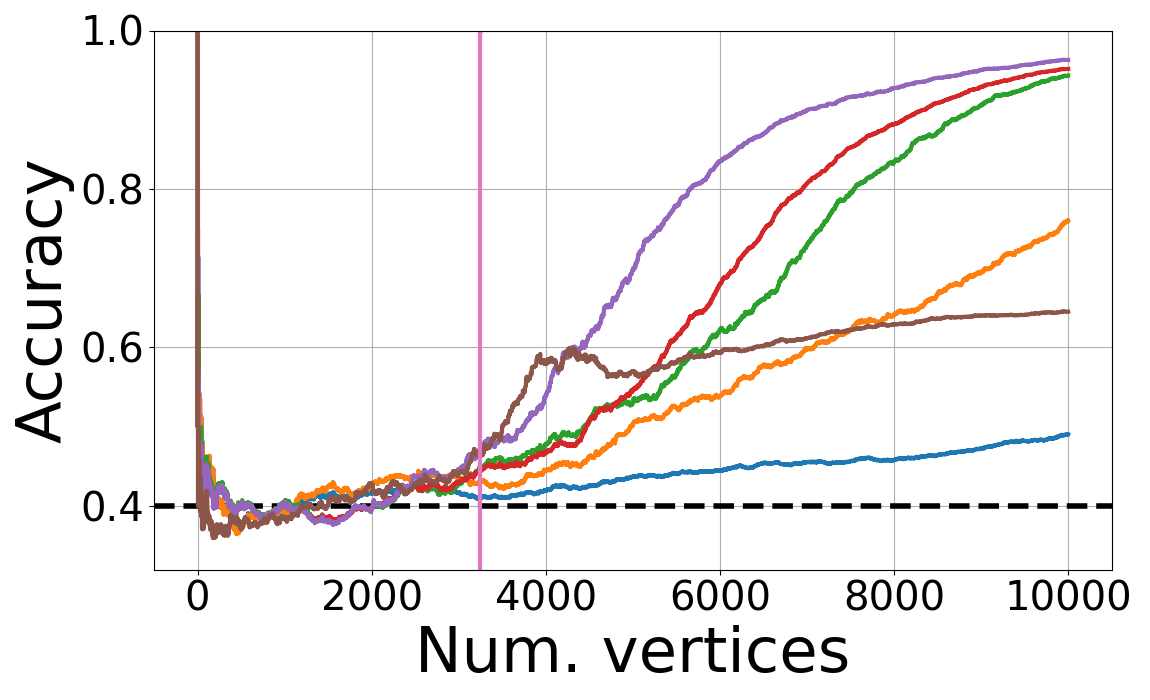}
\caption{$k = 3$, $a = 10.0$, $b = 0.2$, $\alpha = 0.6$.}
\end{subfigure}
\begin{subfigure}{.3\textwidth}
  \centering
    \includegraphics[scale=.16]{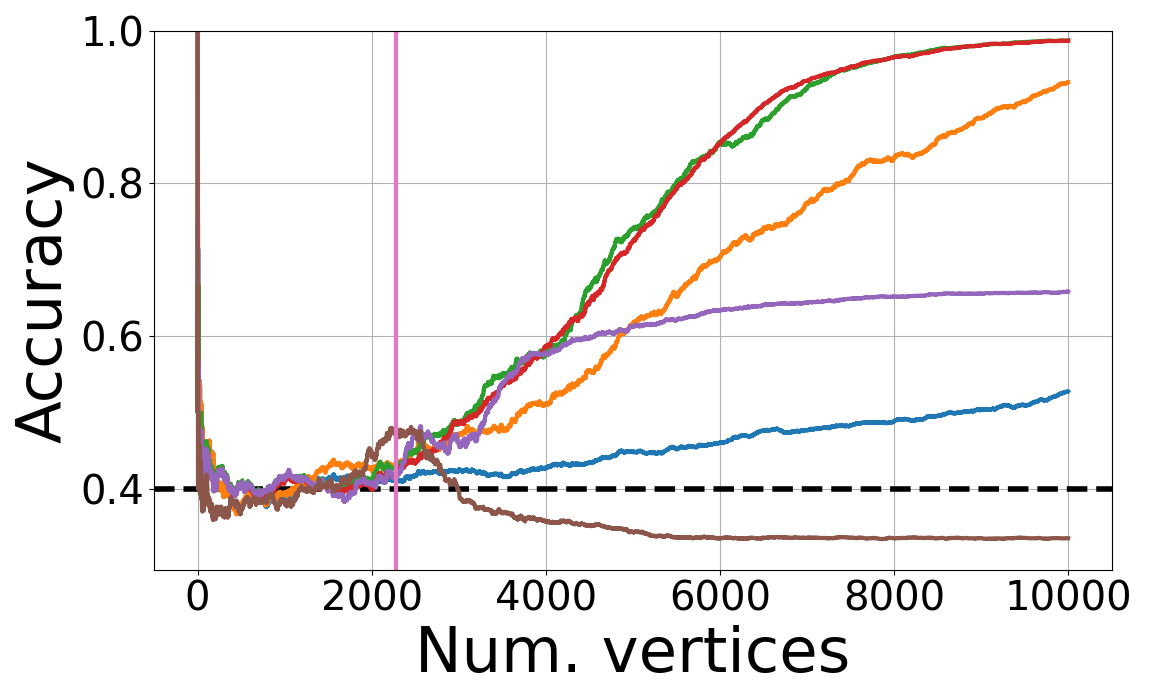} 
\caption{$k = 3$, $a = 15.0$, $b = 0.5$, $\alpha = 0.6$.}
\end{subfigure}
\begin{subfigure}{.3\textwidth}
  \centering
\includegraphics[scale=.16]{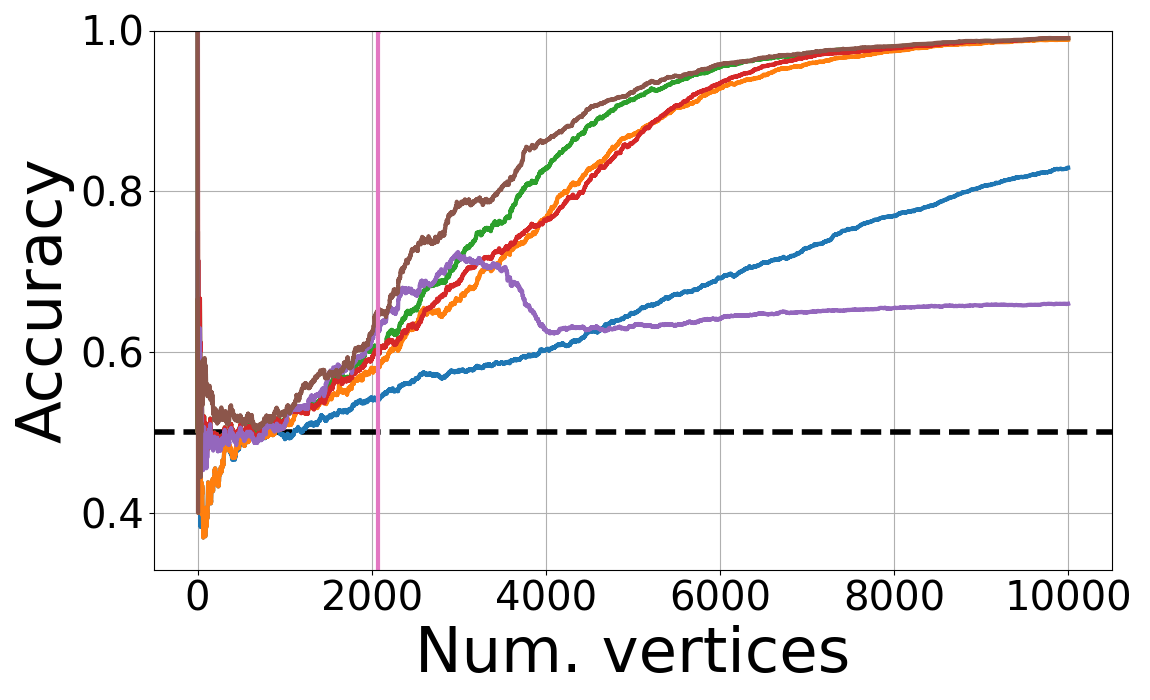}
\caption{$k = 3$, $a = 18.0$, $b = 1.0$, $\alpha = 0.5$.}
\end{subfigure} \\
\begin{subfigure}{.33\textwidth}
  \centering
   \includegraphics[scale=.16]{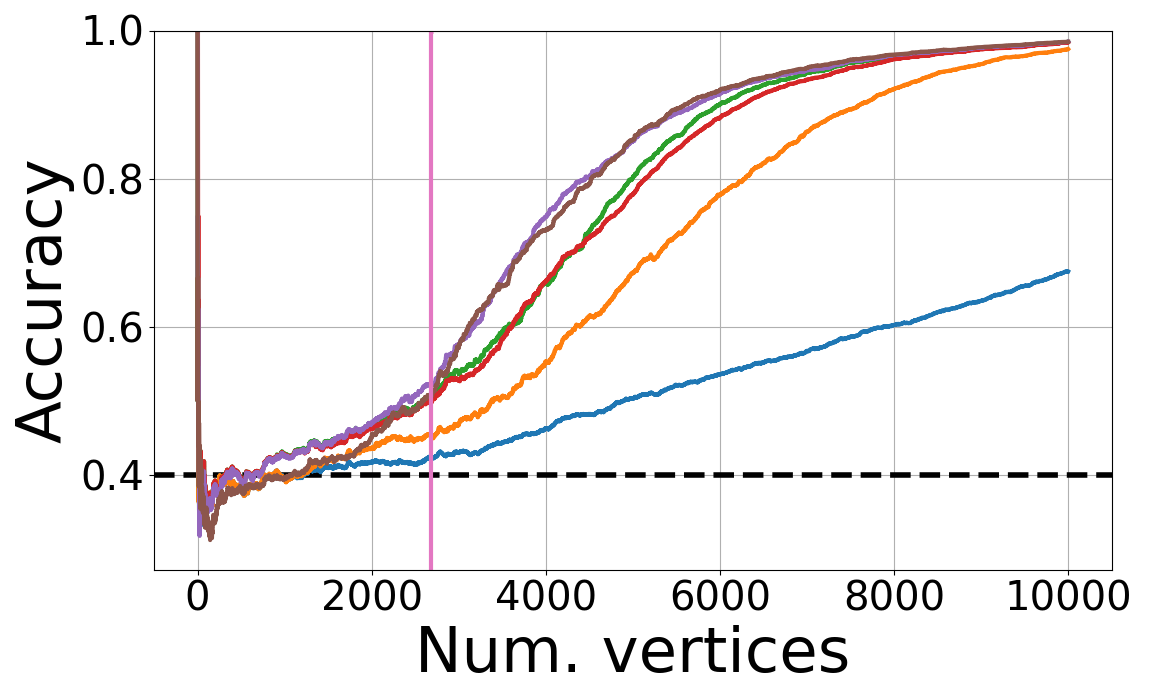}
\caption{$k = 4$, $a = 15.0$, $b = 0.01$, $\alpha = 0.6$.}
\end{subfigure}
\begin{subfigure}{.33\textwidth}
  \centering
    \includegraphics[scale=.16]{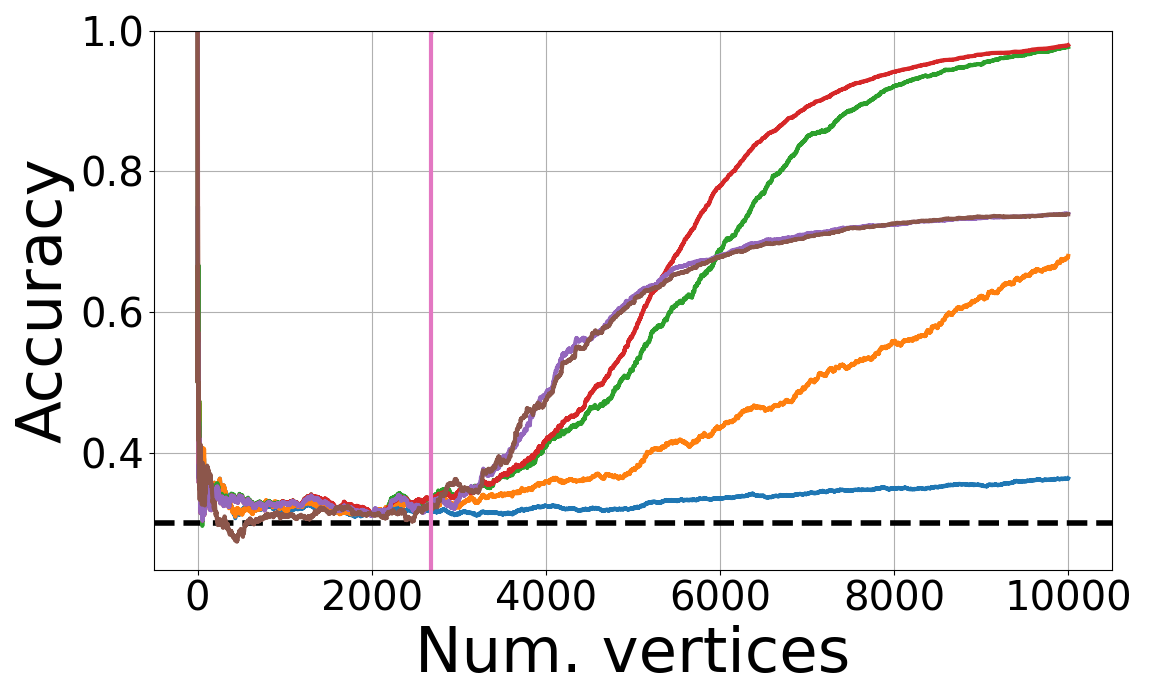} 
\caption{$k = 4$, $a = 15.0$, $b = 0.01$, $\alpha = 0.7$.}
\end{subfigure}
\begin{subfigure}{.33\textwidth}
  \centering
\includegraphics[scale=.16]{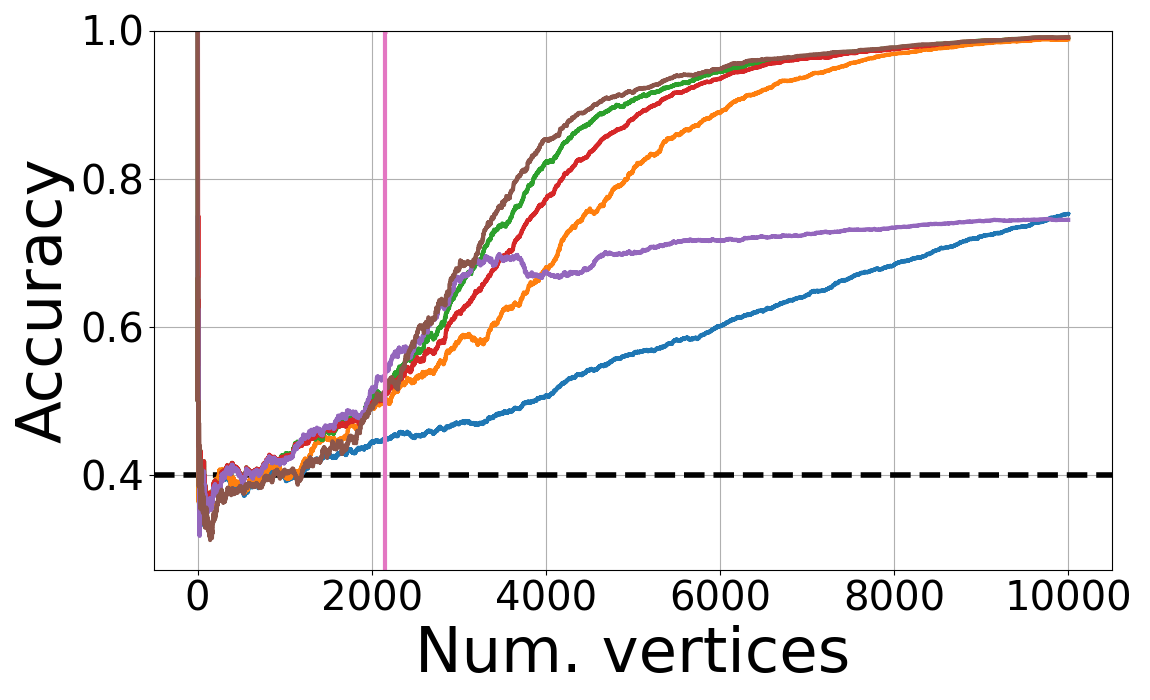}
\caption{$k = 4$, $a = 23.0$, $b = 1.0$, $\alpha = 0.6$.}
\end{subfigure} \\
\caption{\label{fig:accuracy_per_iteration}Accuracy per iteration obtained by running \ouralgo and \boundedalgo on a graph sampled from $\STSSBM(n, k, a, b, \alpha)$ with $n = 10,000$. The black dashed line represents the accuracy of the noisy side information (without using the graph at all), namely $1-\alpha$. The pink vertical line indicates the point where the signal-to-noise ratio crosses the threshold $1.0$.}
\end{figure*}

We generally observe a very high accuracy  (close to 1.0) for the first very few vertices, and then a sharp decline. This is not surprising given our use of the estimation accuracy $\est_ac(\cA)$ defined in Section~\ref{sec:model};
in particular, the accuracy upon arrival of the first vertex is always equal to $1.0$. After this initial sharp decline, the arrival of more vertices almost steadily improves the accuracy, which eventually stabilizes. Note that this improvement accelerates at some point; this is a trace of the phase transition at the KS threshold which is blurred because of side information. Also note that the truncation of a triple $(\tau,\ttau, \bG) \sim \STSSBM(n, k, a, b, \alpha)$ to the first $n' < n$ vertices induces the distribution $ \STSSBM(n', k, \frac{n'}{n}a, \frac{n'}{n}b, \alpha)$, whose signal-to-noise ratio is $\lambda(n') \triangleq \frac{n'}{n} \lambda$; i.e., the signal-to-noise ratio increases linearly with $n'$. Each plot in Figure~\ref{fig:accuracy_per_iteration} has a vertical line showing the point where $\lambda(n')$ crosses the threshold $1.0$.

\paragraph*{Robustness of \ouralgo and \boundedalgo w.r.t.\ the parameters $a$ and $b$.}
Note that our proposed algorithms use $a$ and $b$ as input parameters. When applying these algorithms to real-world datasets that do not conform to the \STSSBM~model, we must approximate these parameters. Indeed in Section~\ref{sec:empirical} we used empirical estimates of $a$ and $b$.

In this section we provide some evidence that our algorithms' behavior is robust to the choice of $a$ and $b$. In Figures~\ref{fig:robustness-a} and~\ref{fig:robustness-b} we present some results of experiments on synthetic data, where the algorithm and the model generating the input graph are given different $a$ or $b$ parameters. We observe that even with a relatively high discrepancy between the approximate and true parameters, \boundedalgo still achieves results comparable to the optimal setting of the parameters. Similar observations can be made about \ouralgo.

Various settings of $k$, $a$, $b$, and $\alpha$ are given as the caption to the individual plots. Here $a$ and $b$ signify the true parameters, according to which the input graph was generated. The performance of \boundedalgo is then plotted with various input parameters. For instance, the label ``a+200\%" in Figure~\ref{fig:robustness-a} indicates that the algorithm receives a parameter $a$ that is three times greater than the true value. Similarly 
``b-67\%" in Figure~\ref{fig:robustness-b} indicates that the algorithm receives a parameter $b$ that is 67\% less than the true value.

\begin{figure*}
\begin{subfigure}{.49\textwidth}
  \centering
   \includegraphics[scale=.13]{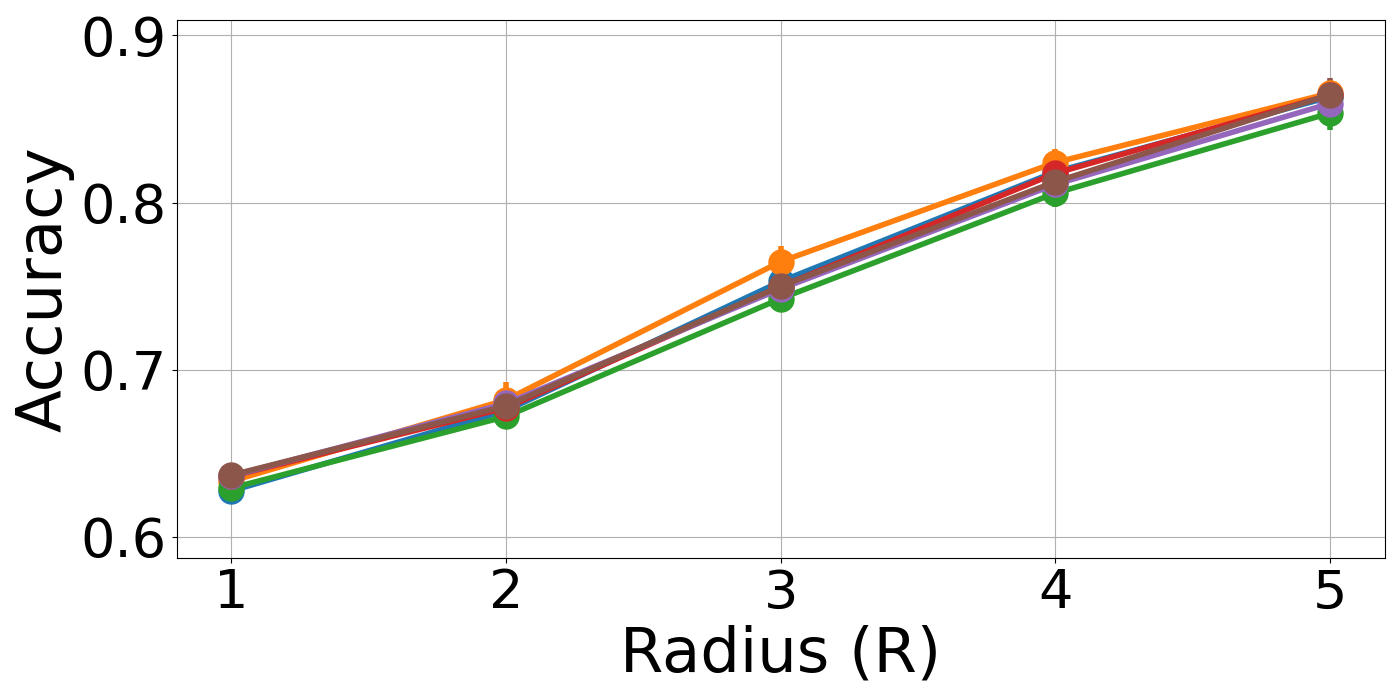}
\caption{$k=2$, $a=4.5$, $b=0.1$, $\alpha=0.4$.}
\end{subfigure}
\begin{subfigure}{.49\textwidth}
  \centering
    \includegraphics[scale=.13]{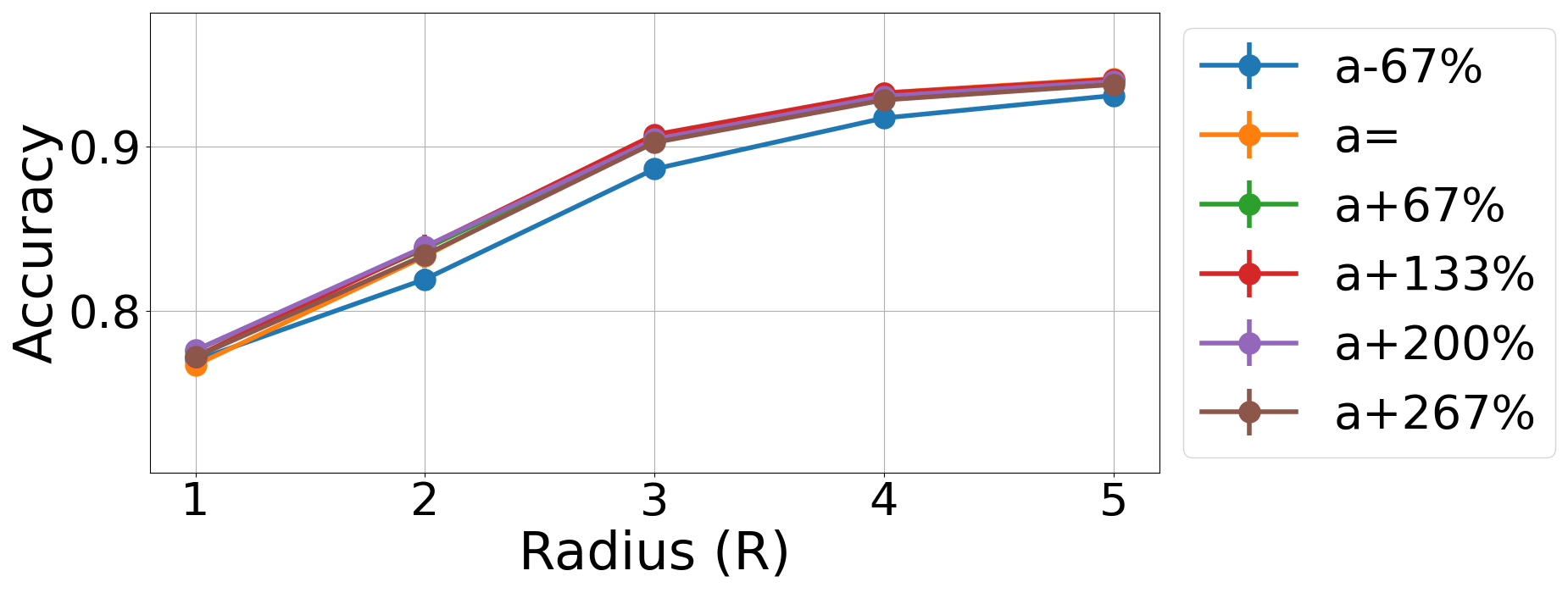}
\caption{$k=2$, $a=6$, $b=0.5$, $\alpha=0.3$.}
\end{subfigure}\\

\begin{subfigure}{.33\textwidth}
  \centering
\includegraphics[scale=.13]{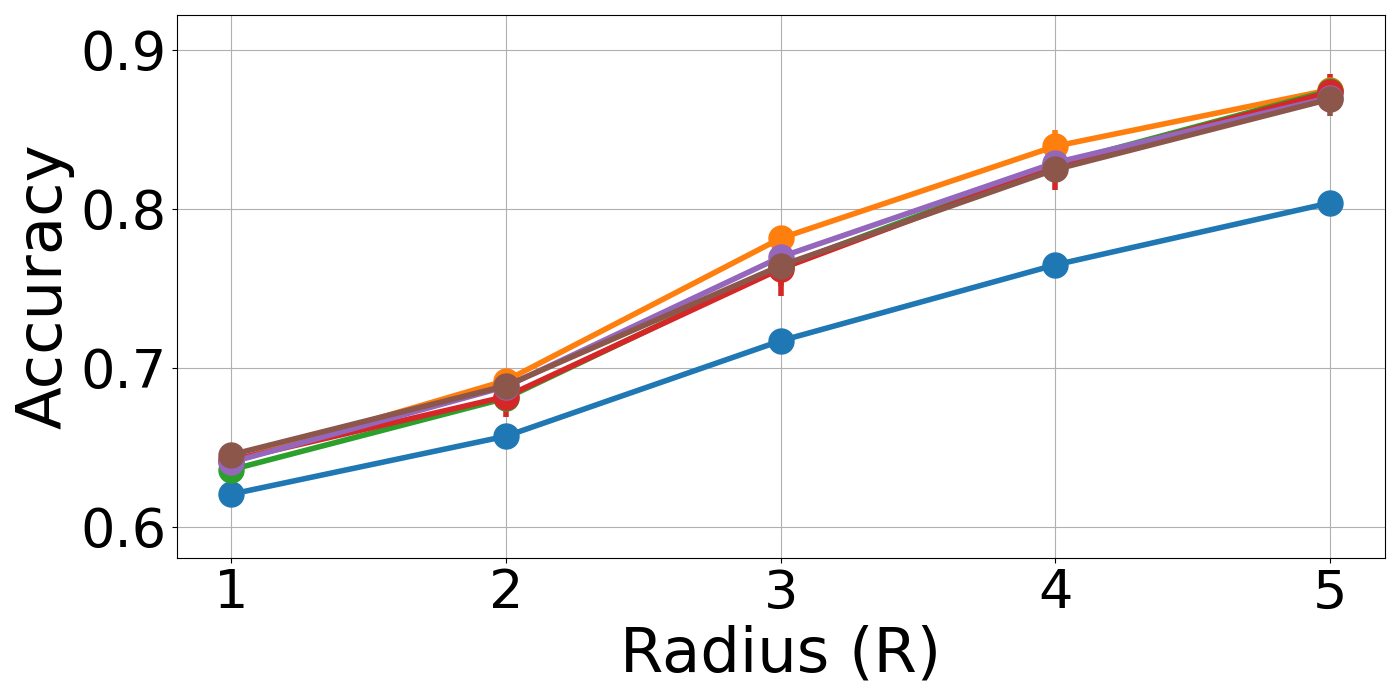}
\caption{$k=2$, $a=7$, $b=1$, $\alpha = 0.4$.}
\end{subfigure}
\begin{subfigure}{.33\textwidth}
  \centering
\includegraphics[scale=.13]{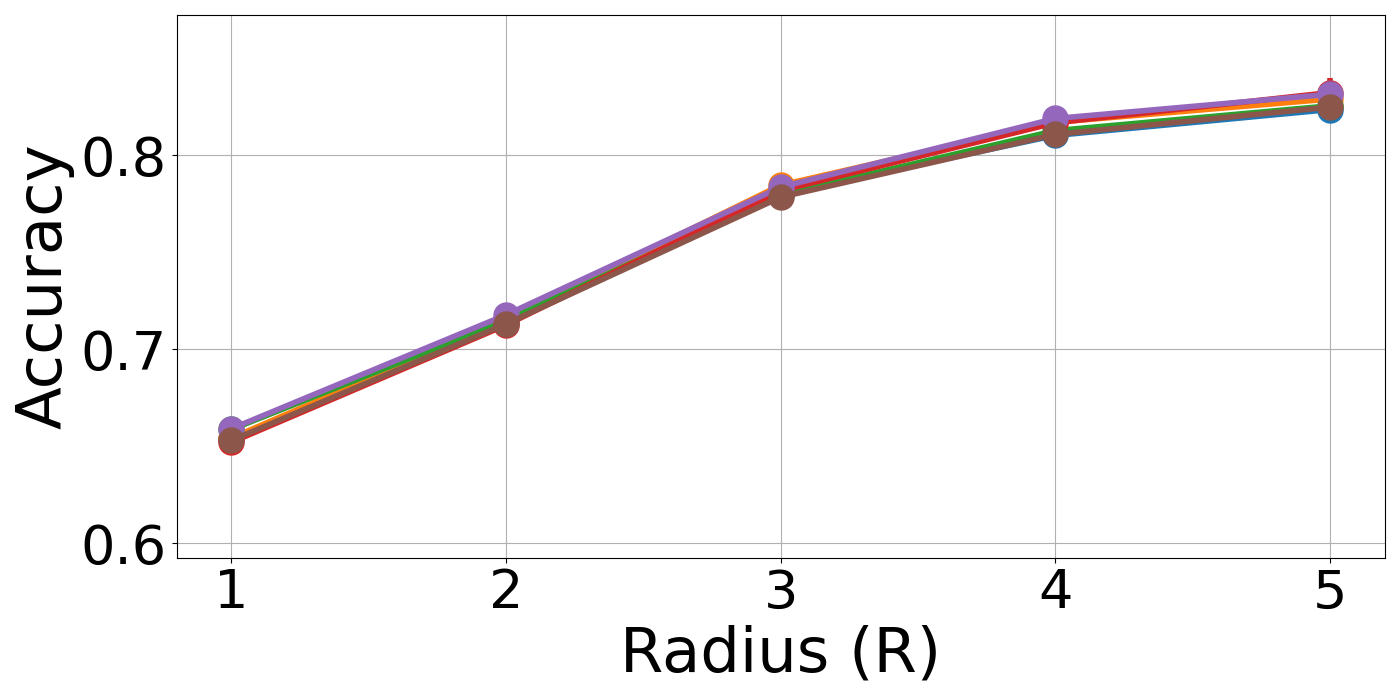} 
\caption{$k=3$, $a=5$, $b=0.1$, $\alpha=0.4$.}
\end{subfigure}
\begin{subfigure}{.33\textwidth}
  \centering
\includegraphics[scale=.13]{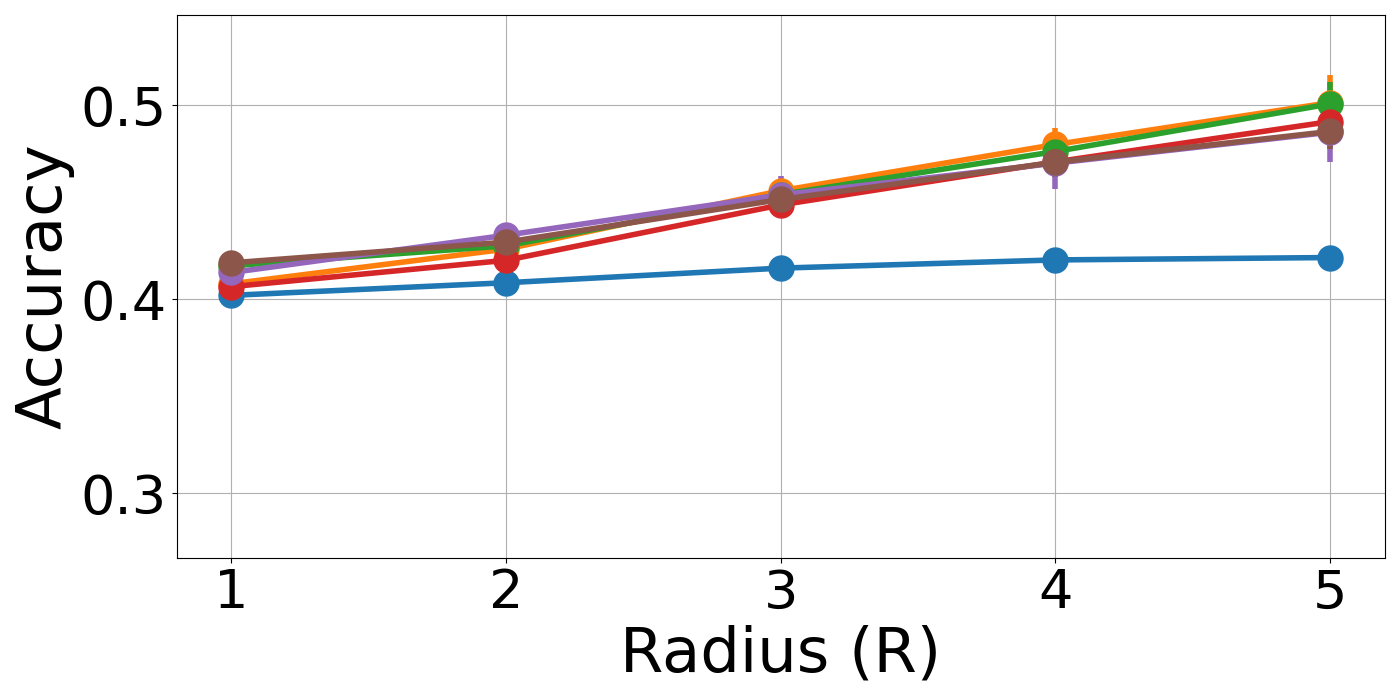} 
\caption{$k=3$, $a=7$, $b=1$, $\alpha=0.6$.}
\end{subfigure}\\
\caption{\label{fig:robustness-a}Robustness of \boundedalgo w.r.t. to the choice of $a$.}
\end{figure*}

\begin{figure*}
\begin{subfigure}{.49\textwidth}
  \centering
   \includegraphics[scale=.13]{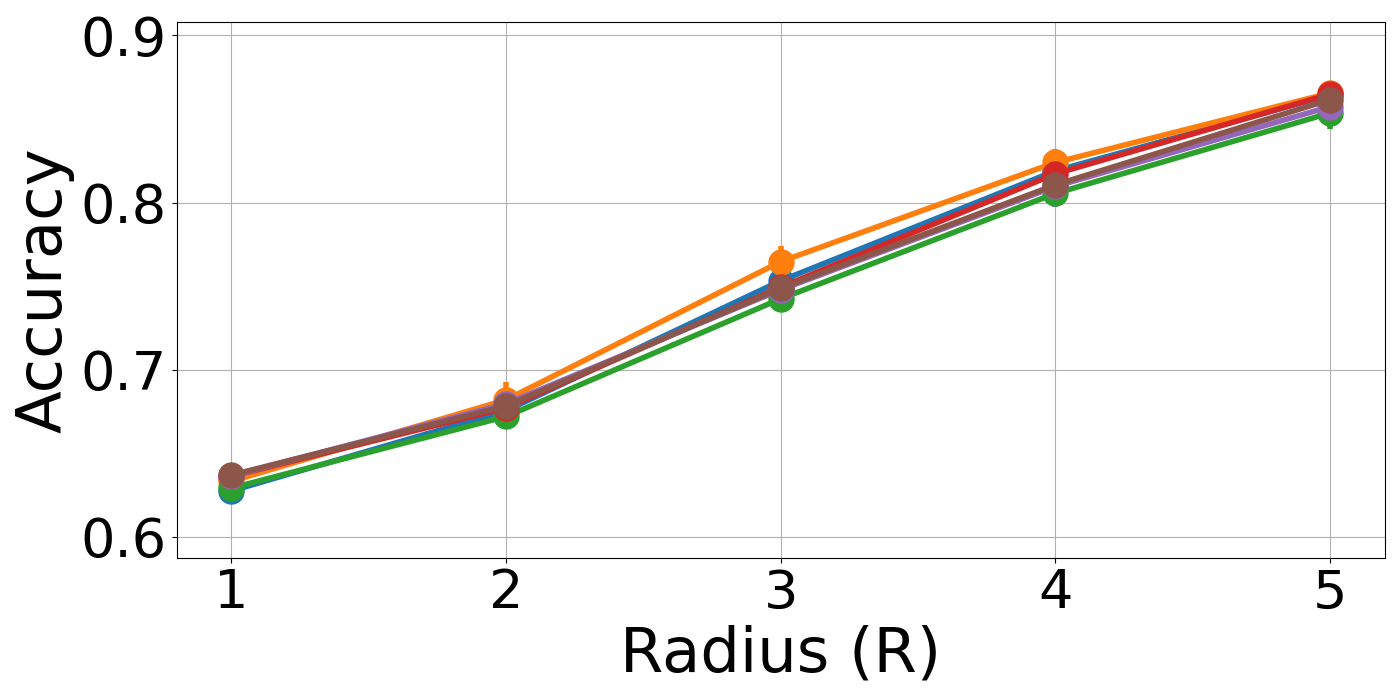}
\caption{$k=2$, $a=4.5$, $b=0.1$, $\alpha=0.4$.}
\end{subfigure}
\begin{subfigure}{.49\textwidth}
  \centering
    \includegraphics[scale=.13]{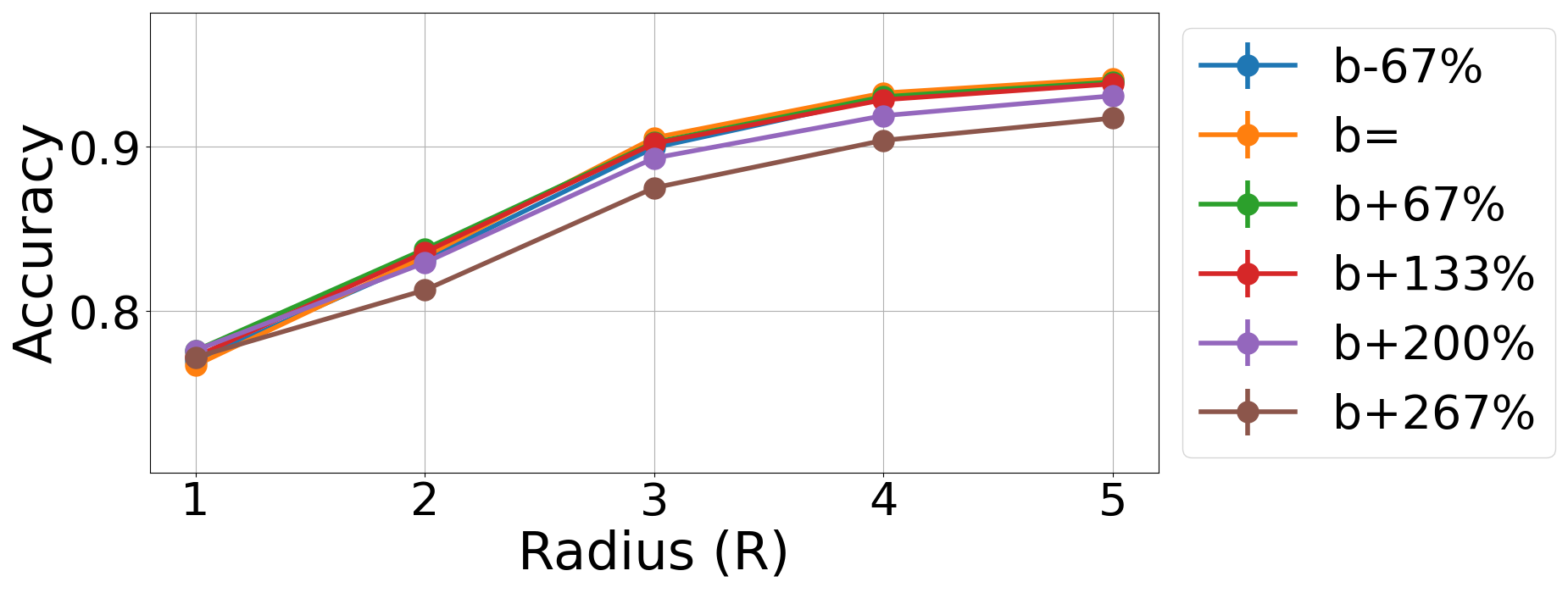}
\caption{$k=2$, $a=6$, $b=0.5$, $\alpha=0.3$.}
\end{subfigure}\\

\begin{subfigure}{.33\textwidth}
  \centering
\includegraphics[scale=.13]{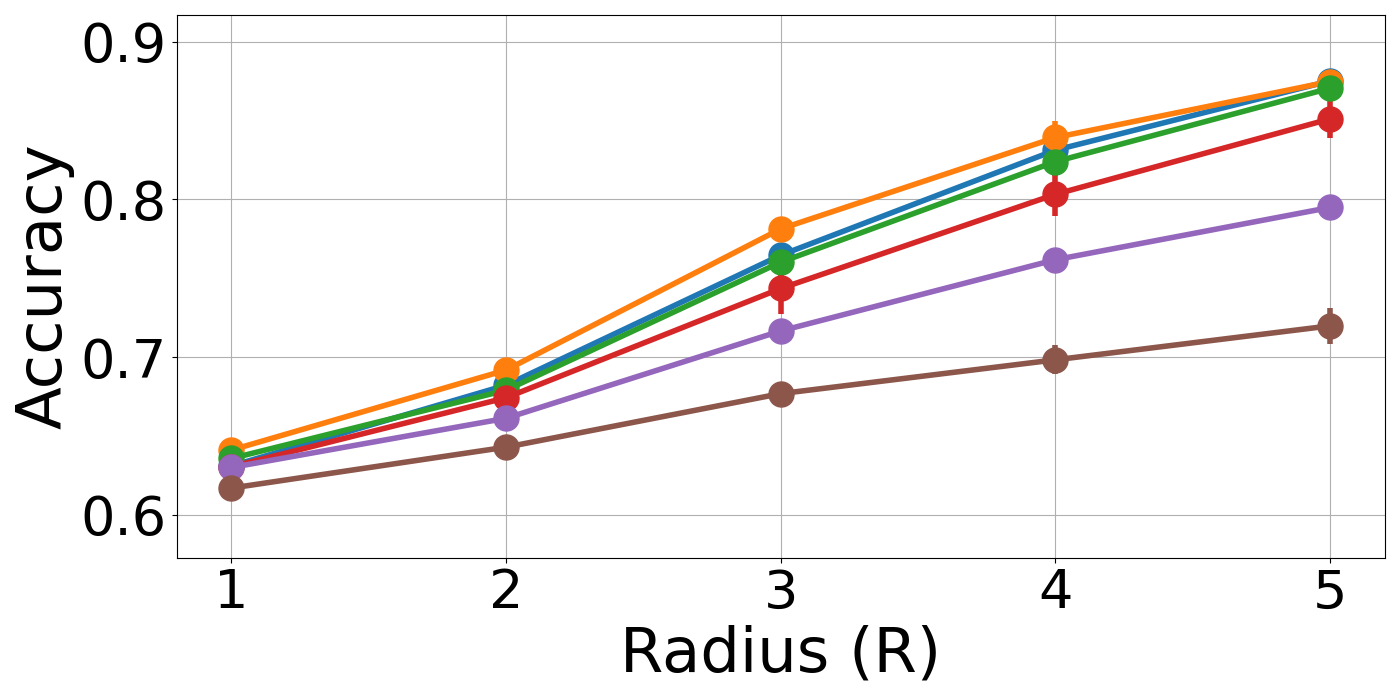}
\caption{$k=2$, $a=7$, $b=1$, $\alpha=0.4$.}
\end{subfigure}
\begin{subfigure}{.33\textwidth}
  \centering
\includegraphics[scale=.13]{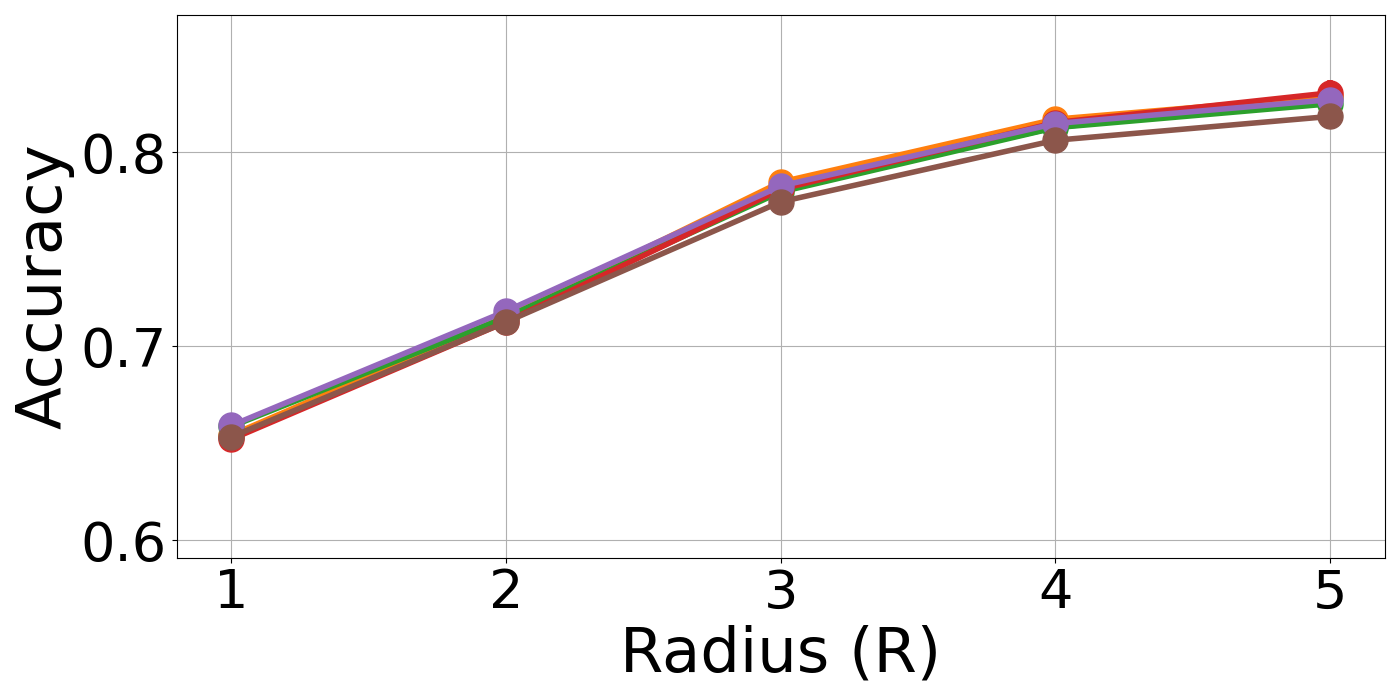} 
\caption{$k=3$, $a=5$, $b=0.1$, $\alpha=0.4$.}
\end{subfigure}
\begin{subfigure}{.33\textwidth}
  \centering
\includegraphics[scale=.13]{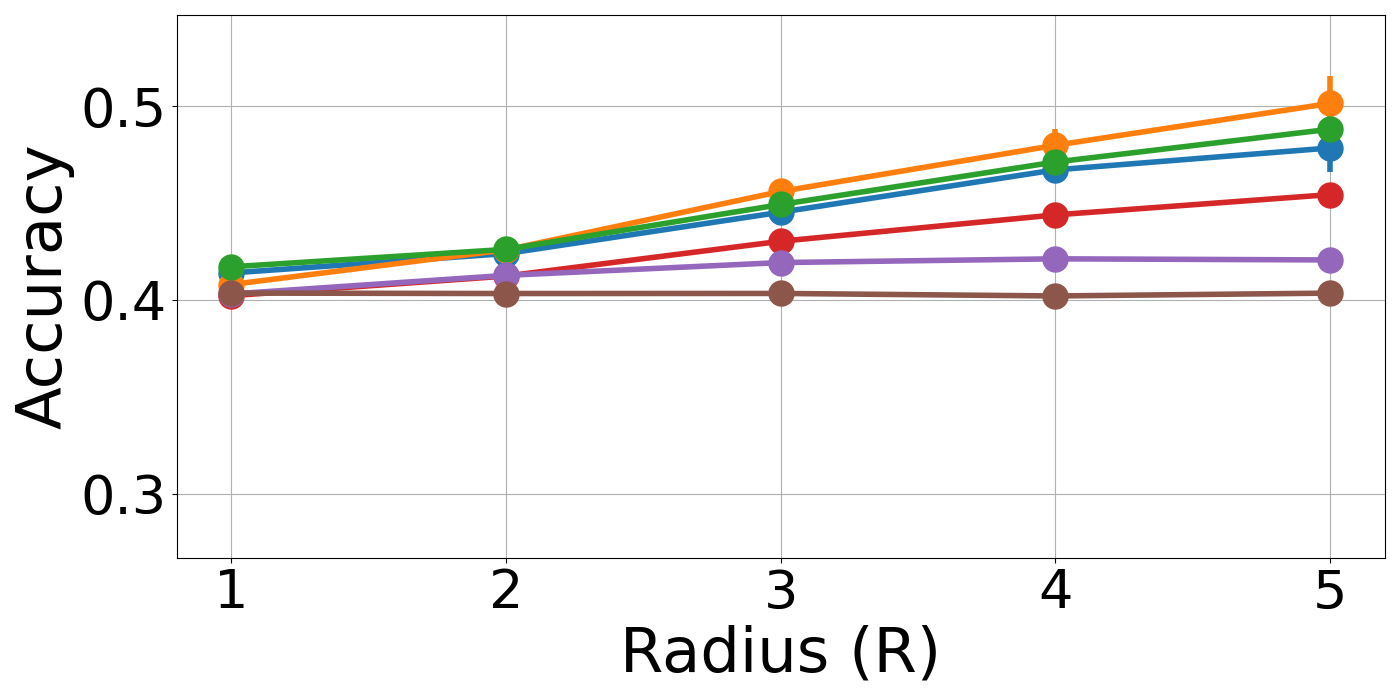} 
\caption{$k=3$, $a=7$, $1$, $\alpha=0.6$.}
\end{subfigure}\\
\caption{\label{fig:robustness-b}Robustness of \boundedalgo w.r.t. to the choice of $b$.}
\end{figure*}


\section{Experiments without side information}
In this section, we provide details of our experiments on synthetic data, when no side information is
provided to the algorithms. Here $\alpha$ is set to $1 - 1/k$, that is, $\ttau$ is completely independent
from $\tau$. Note that in this setting we cannot hope to have large overlap between the output labels
$\htau$ and the true labels $\tau$. Therefore, we must evaluate the algorithms based on the best possible
permutation of the output labels, as described in
Section~\ref{sec:defs} in the definition of $\est_ac(\cA)$:

$$\max_{\pi\in \fS_k}\frac{1}{n}  \sum_{v \in V(n)} \hskip-2mm \ind\left(\htau(v) =  \pi\circ\tau(v)\right).$$

Our observations confirm the theoretical result from Corollary~\ref{cor:streaming-without-sideinfo}. As predicted, when $r$ is small, no algorithm---neither the baselines nor our algorithms---can achieve any meaningful improvement over the trivial $1/k$ accuracy. We further observe that for sufficiently large $r$, both the offline algorithm \offlinealgo and our algorithm \boundedalgo perform significantly better than $1/k$. 
However, this occurs when $r$ is comparable to the diameter of the graph, when an online algorithm is no longer efficient. \ouralgo is unable to get any non-trivial result due to issues described in Section~\ref{sec:streambp_star} which become especially problematic for large values of $a$, $b$, and $r$.

We present the results of our experiments for various settings of $k$, $a$, and $b$ in Figure~ \ref{fig:no-side-info}. Note that the settings always satisfy the Kesten-Stigum condition from Equation~\eqref{eq:KS}. Without side information, this is necessary to see any non-trivial behavior, even for large radii.

\begin{figure*}
\begin{subfigure}{.49\textwidth}
  \centering
   \includegraphics[scale=.16]{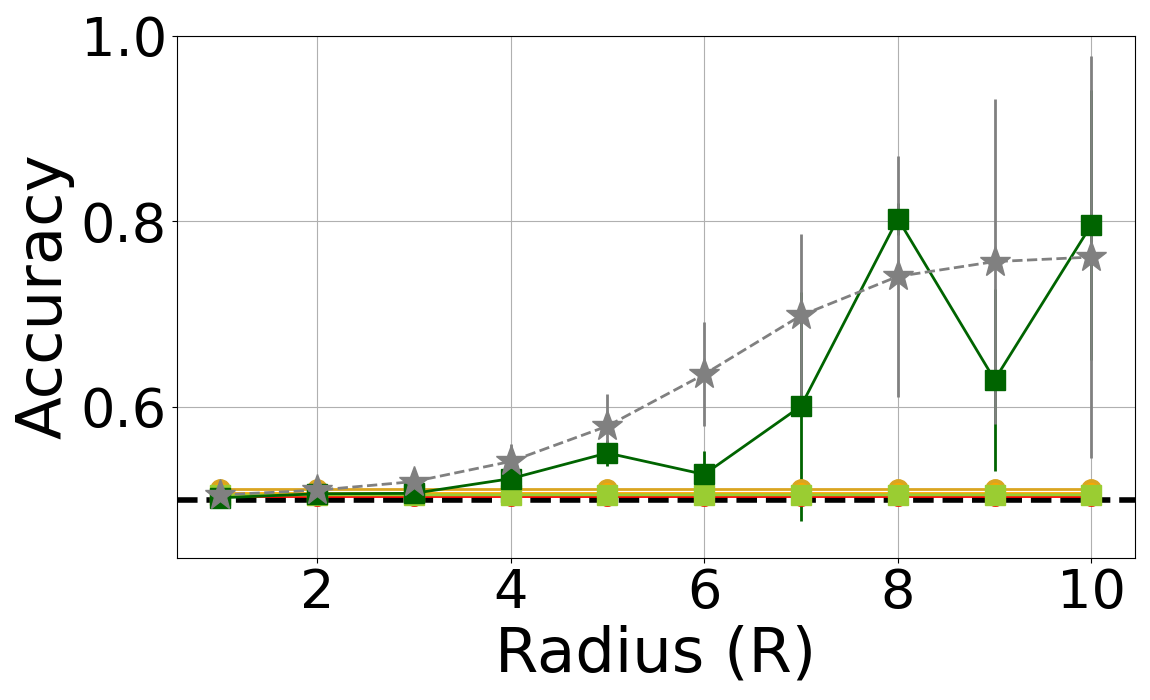}
\caption{$k=2$ $a=7.5$, $b=0.1$.}
\end{subfigure}
\begin{subfigure}{.49\textwidth}
  \centering
    \includegraphics[scale=.16]{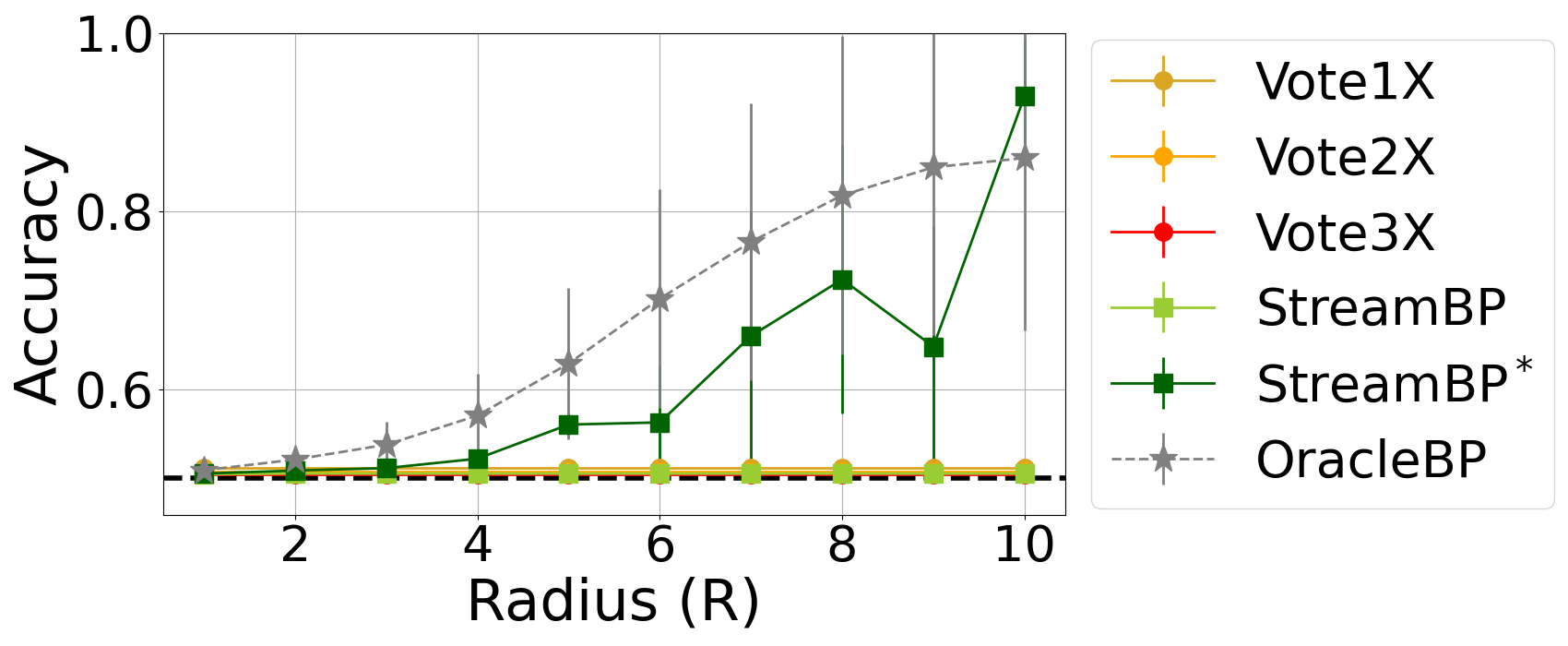} 
\caption{$k=2$, $a=8$, $b=0.1$.}
\end{subfigure}\\

\begin{subfigure}{.33\textwidth}
  \centering
\includegraphics[scale=.16]{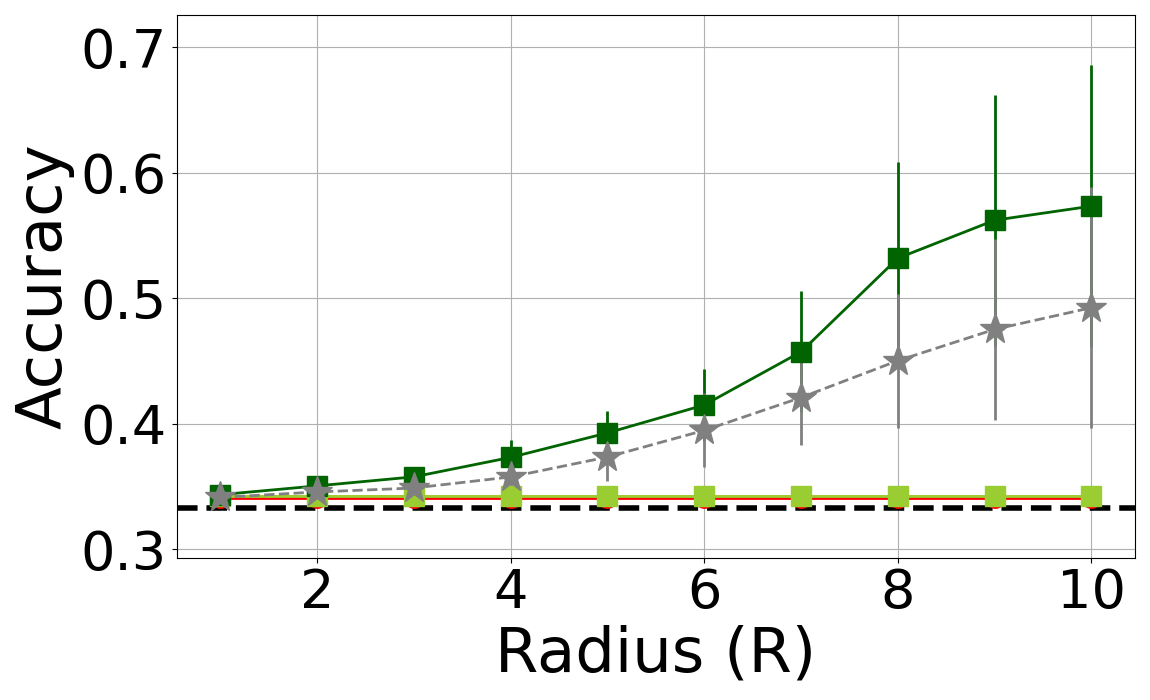} 
\caption{$k=3$, $a=7.5$, $b=0.1$.}
\end{subfigure}
\begin{subfigure}{.33\textwidth}
  \centering
\includegraphics[scale=.16]{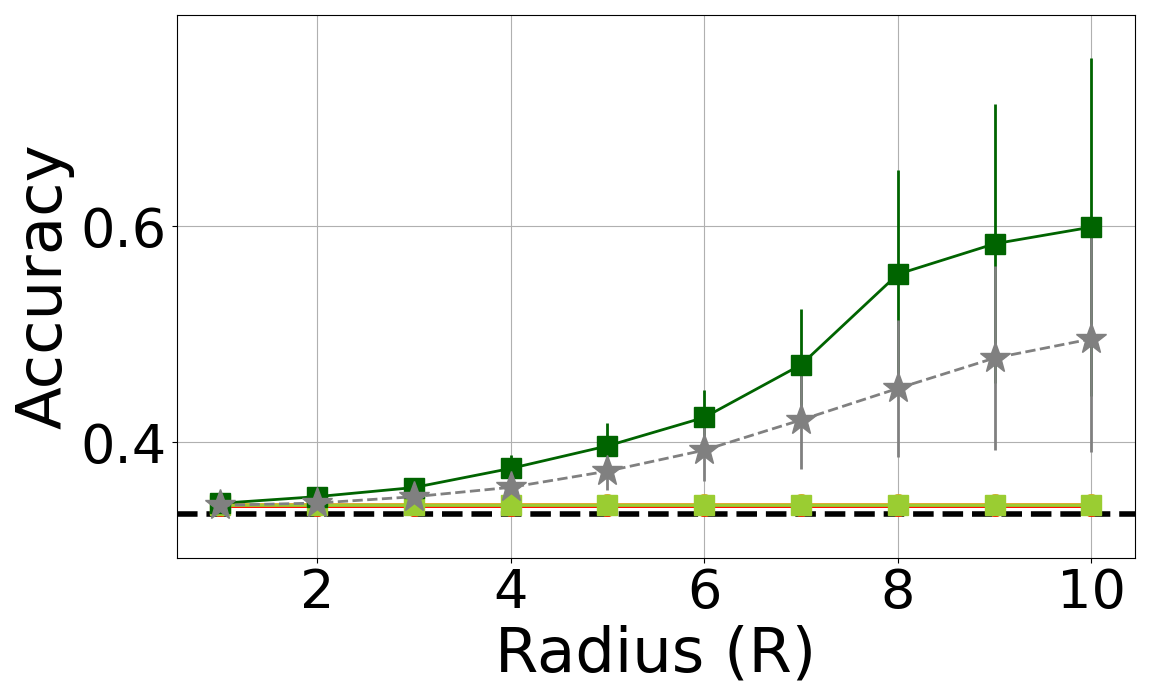} 
\caption{$k=3$, $a=8$, $b=0.1$.}
\end{subfigure}
\begin{subfigure}{.33\textwidth}
  \centering
\includegraphics[scale=.16]{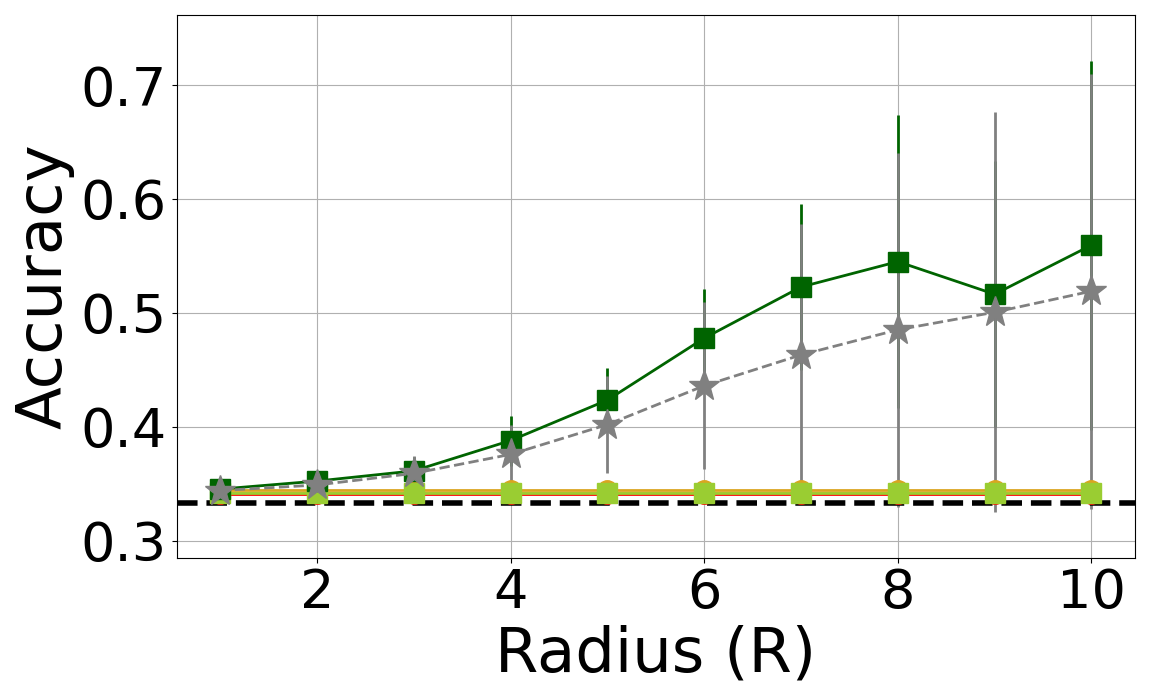} 
\caption{$k=3$, $a=10$, $b=0.3$.}
\end{subfigure}\\
\begin{subfigure}{.33\textwidth}
  \centering
\includegraphics[scale=.16]{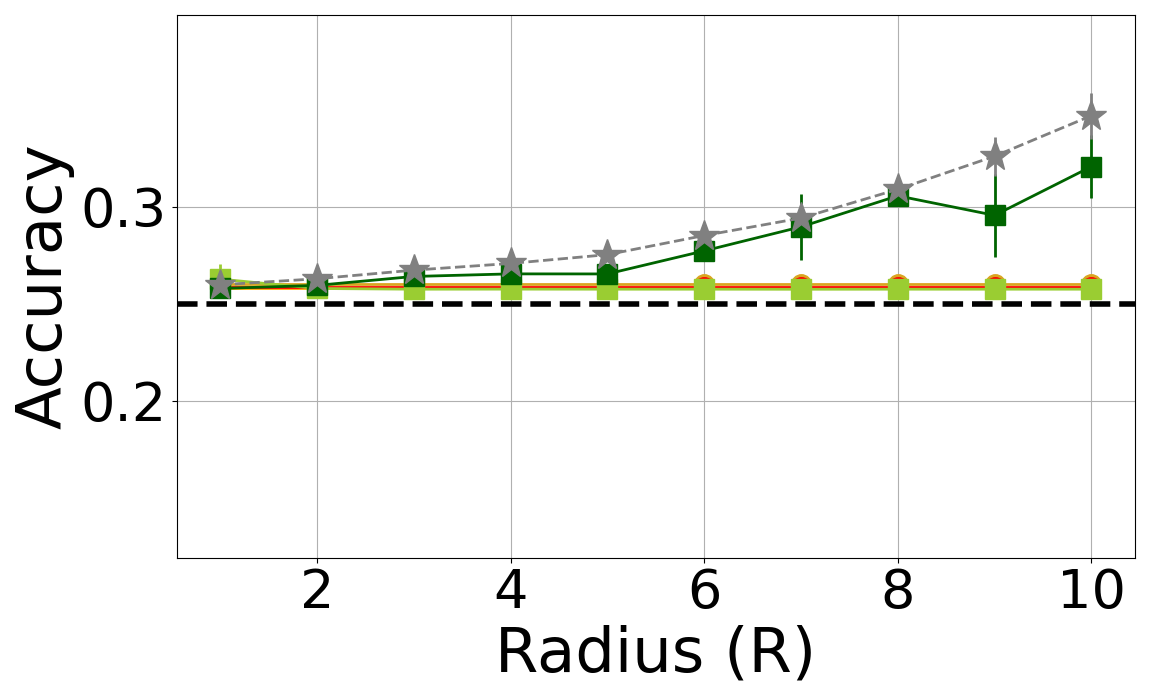} 
\caption{$k=4$, $a=7.5$, $b=0.1$.}
\end{subfigure}
\begin{subfigure}{.33\textwidth}
  \centering
\includegraphics[scale=.16]{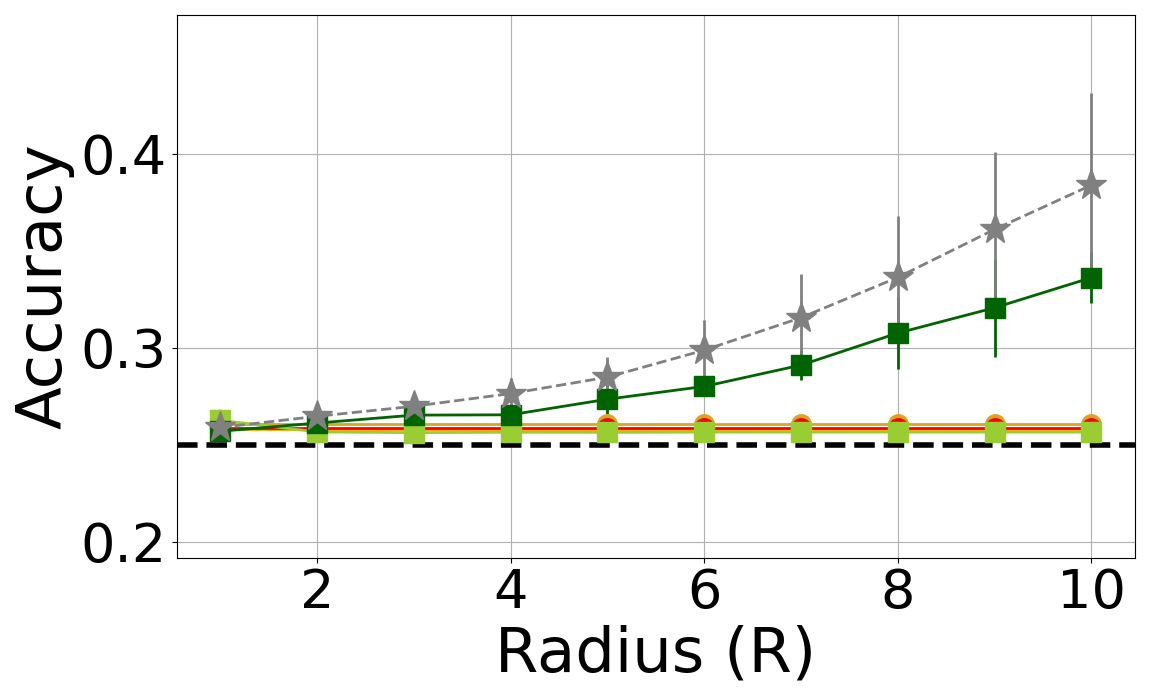} 
\caption{$k=4$, $a=8$, $b=0.1$.}
\end{subfigure}
\begin{subfigure}{.33\textwidth}
  \centering
\includegraphics[scale=.16]{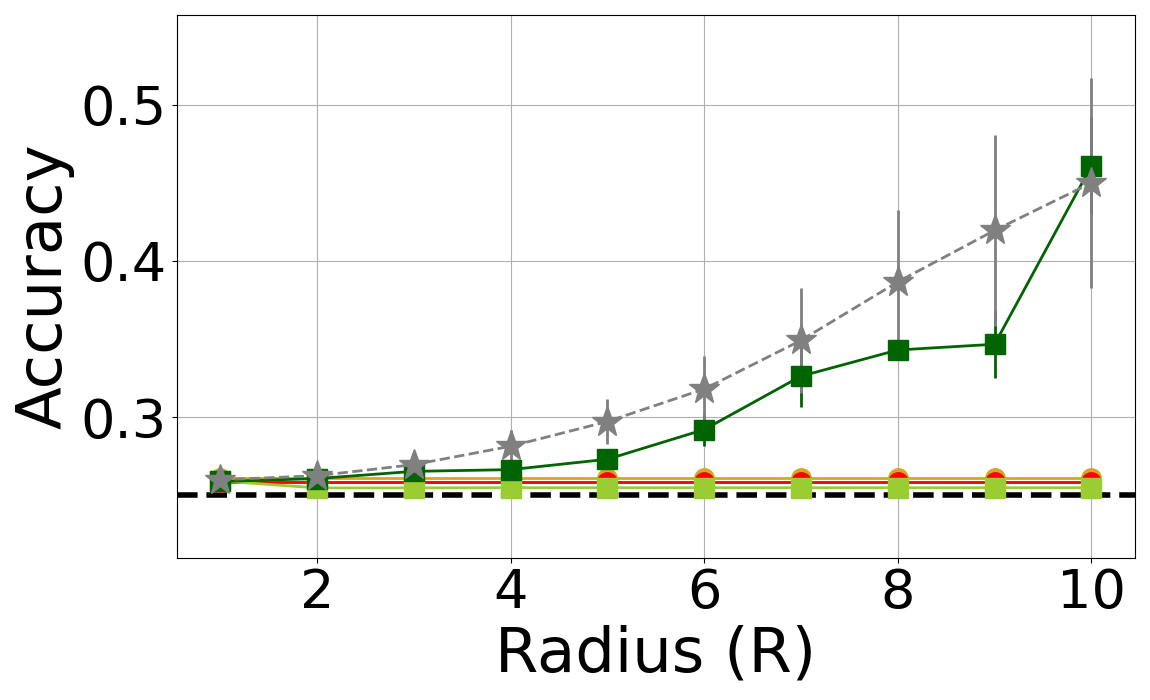} 
\caption{$k=4$, $a=10$, $b=0.3$.}
\end{subfigure}\\
\caption{\label{fig:no-side-info}Results of the experiments on synthetic datasets with 10,000 vertices, and no side information (i.e. $\alpha = 1 - 1/k$).}
\end{figure*}

\label{sec:nosideinfo}

\section{Additional results on real-world datasets}

\begin{figure*}
\begin{subfigure}{.29\textwidth}
  \centering
   \includegraphics[scale=.15]{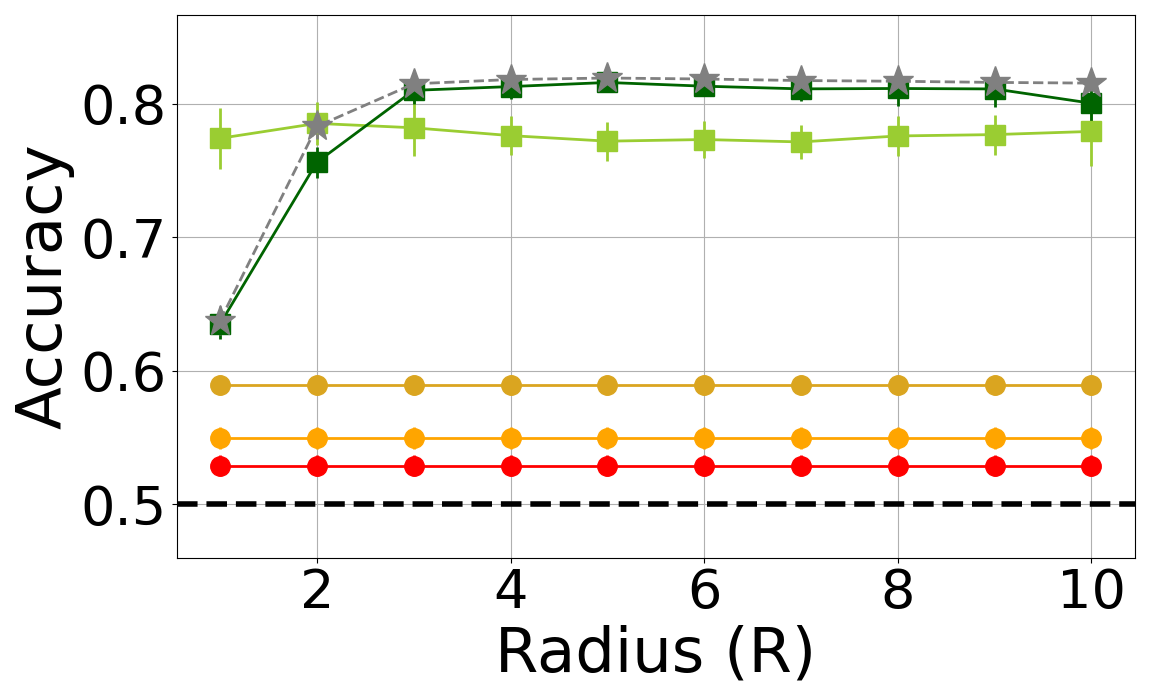} 
\caption{\label{fig:rw_original_beg}Cora ($k = 7$), $\alpha = 0.5$.}
\end{subfigure}
\begin{subfigure}{.29\textwidth}
  \centering
    \includegraphics[scale=.15]{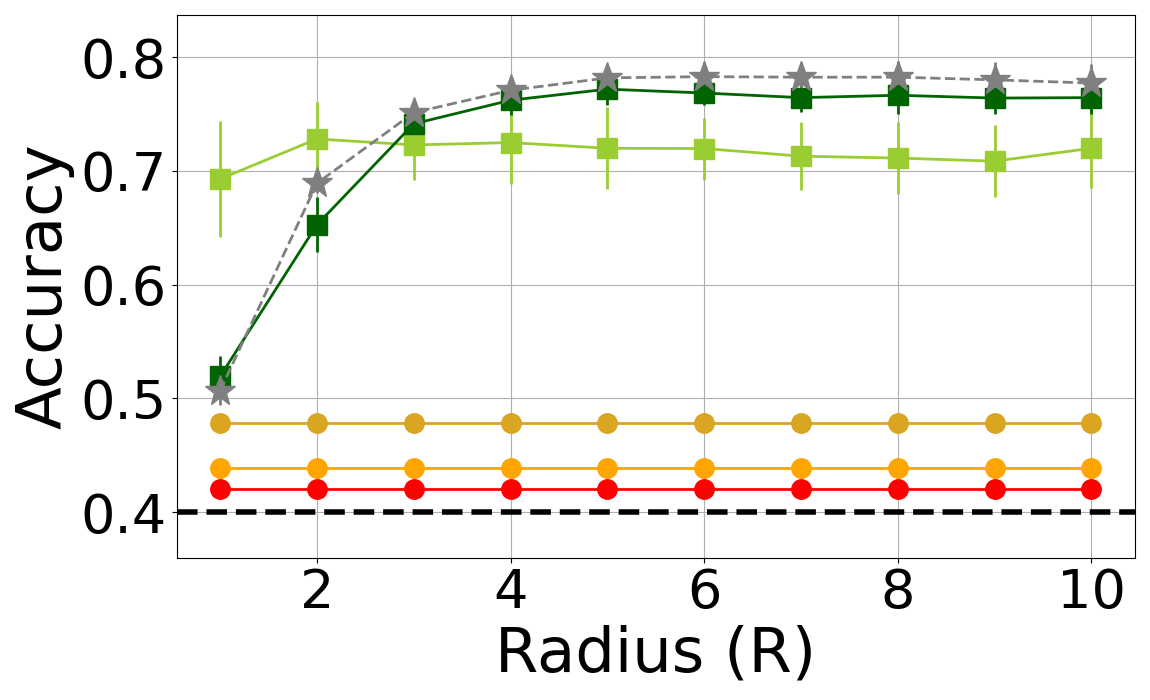} 
\caption{Cora ($k = 7$), $\alpha = 0.6$.}
\end{subfigure}
\begin{subfigure}{.4\textwidth}
  \centering
\includegraphics[scale=.15]{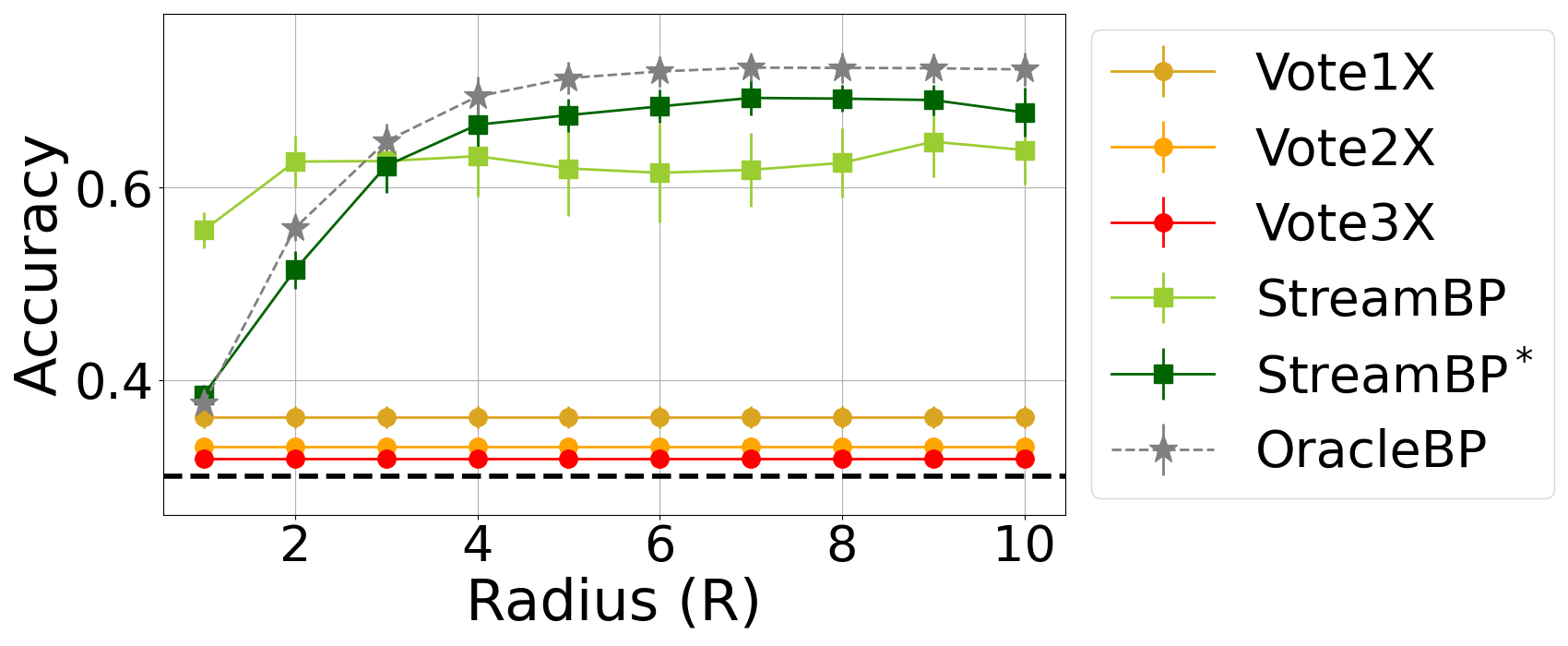} 
\caption{Cora ($k = 7$), $\alpha = 0.7$.}
\end{subfigure} \\
\begin{subfigure}{.29\textwidth}
   \includegraphics[scale=.15]{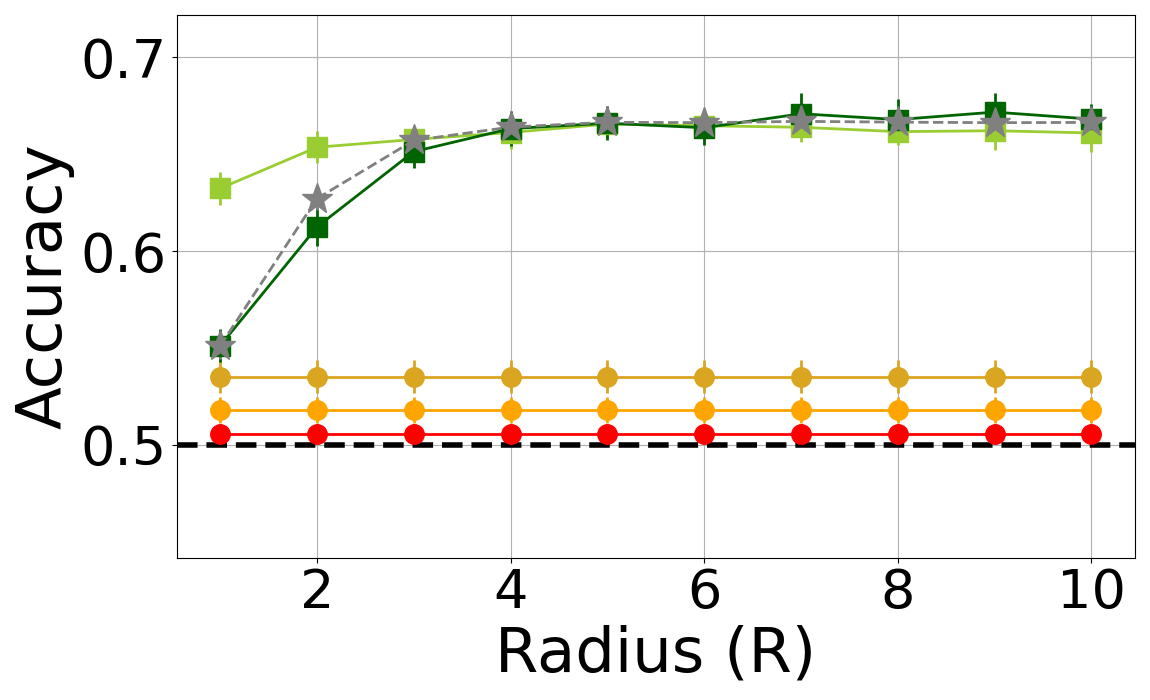} 
\caption{Citeseer ($k = 6$), $\alpha = 0.5 $.}
\end{subfigure}
\begin{subfigure}{.29\textwidth}
  \centering
    \includegraphics[scale=.15]{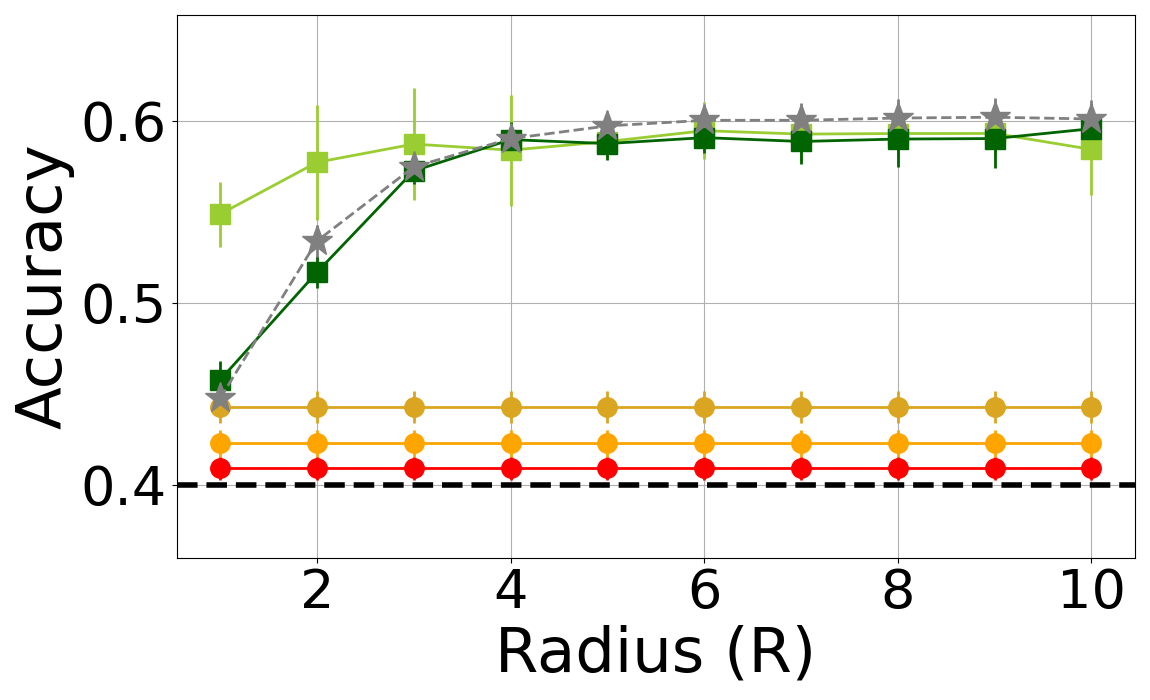} 
\caption{Citeseer ($k = 6$), $\alpha = 0.6$.}
\end{subfigure}
\begin{subfigure}{.4\textwidth}
  \centering
\includegraphics[scale=.15]{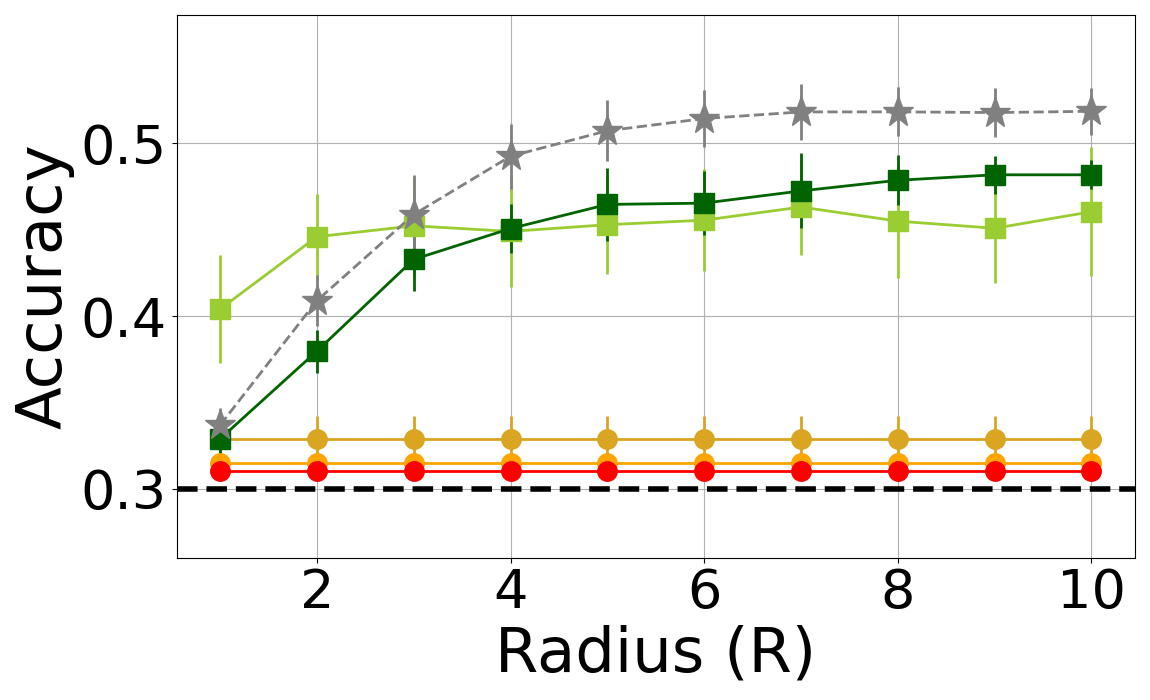} 
\caption{Citeseer ($k = 6$), $\alpha = 0.7$.}
\end{subfigure} \\
\begin{subfigure}{.29\textwidth}
  \centering
   \includegraphics[scale=.15]{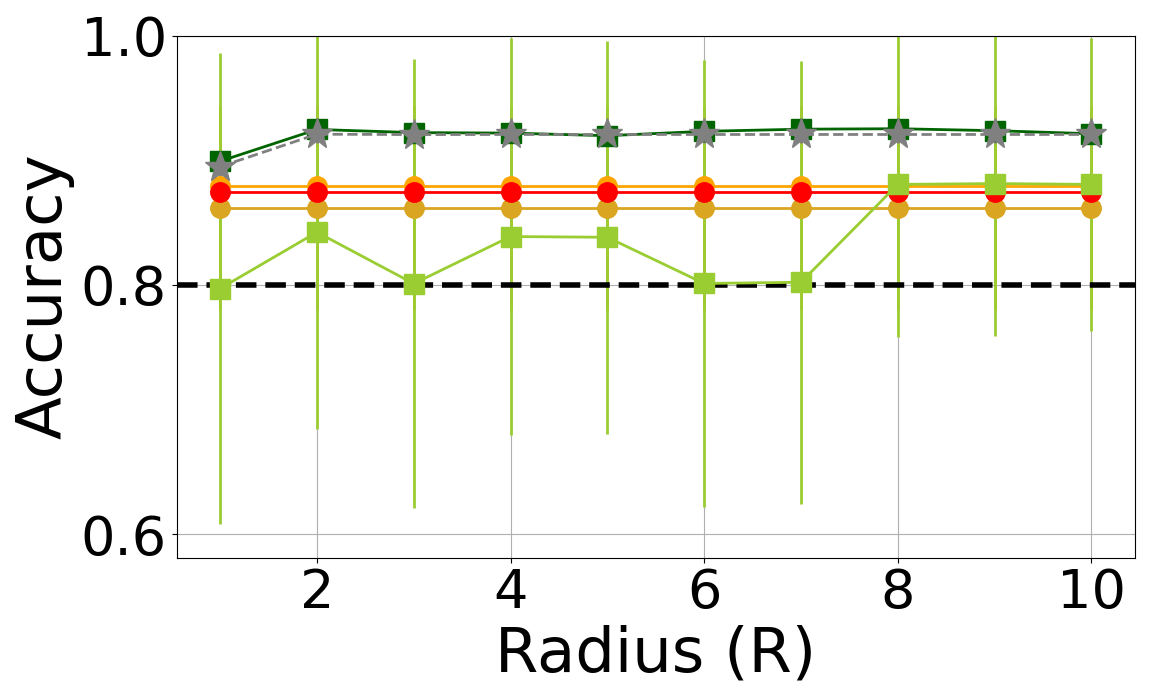} 
\caption{Polblogs ($k = 2$), $\alpha = 0.2$.}
\end{subfigure}
\begin{subfigure}{.29\textwidth}
  \centering
    \includegraphics[scale=.15]{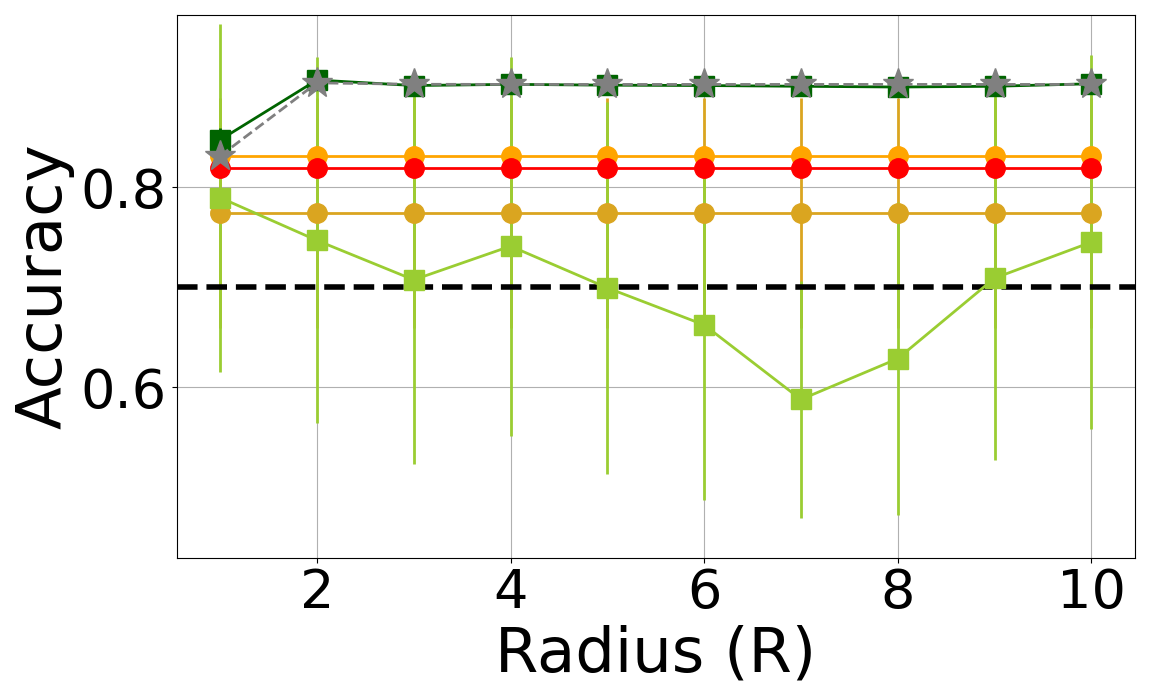} 
\caption{Polblogs ($k = 2$), $\alpha = 0.3$.}
\end{subfigure}
\begin{subfigure}{.4\textwidth}
  \centering
\includegraphics[scale=.15]{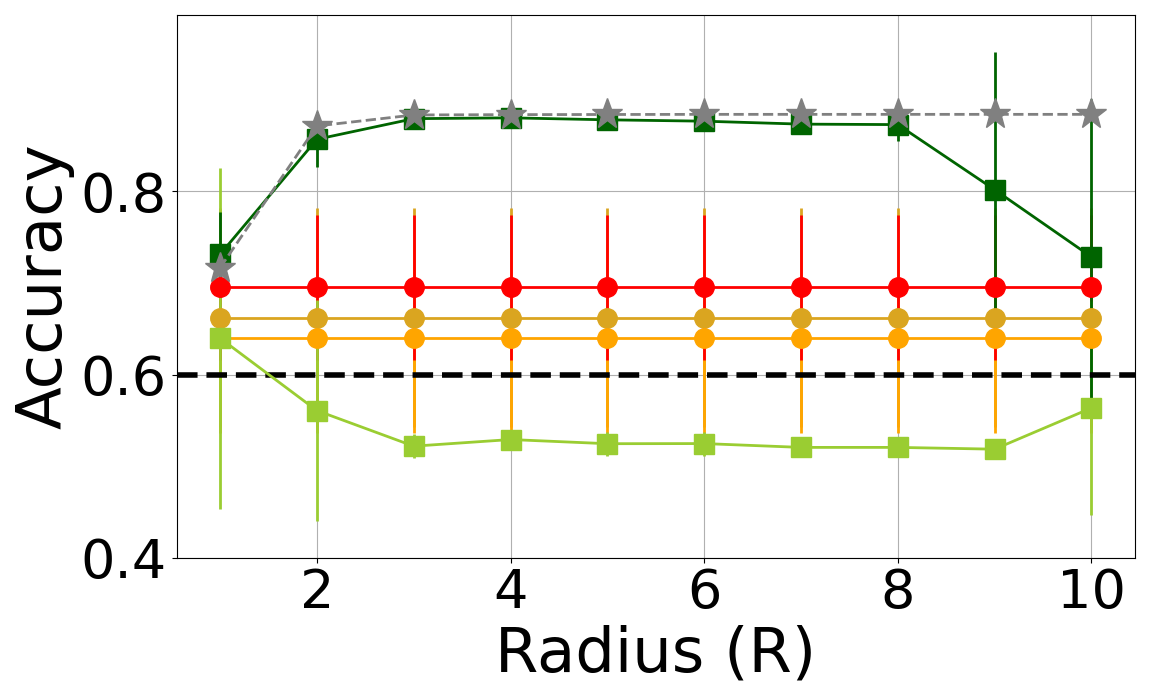} 
\caption{\label{fig:rw_original_end} Polblogs ($k = 2$), $\alpha = 0.4$.}
\end{subfigure} \\
\caption{\label{fig:real_world_datasets_appendix}Results of the experiments on real-world datasets. The black dashed line represents the accuracy of the noisy side information (without using the graph at all), namely $1-\alpha$.}
\end{figure*}

\label{sec:realworld}


Figure~\ref{fig:real_world_datasets_appendix} summarizes the results obtained by running the different algorithms on the datasets Cora, Citeseer, and Polblogs. For each dataset and choice of the radius (used by the algorithms \ouralgo, \boundedalgo, and \offlinealgo) and of the parameter $\alpha$, we run each algorithm $9$ times; each run independently chooses an arrival order of the vertices (uniformly at random among all permutations of the vertices) and the side information (as described in Section~\ref{sec:model}). We note that the accuracy of our streaming algorithm \boundedalgo is comparable to that of the {\em offline} algorithm \offlinealgo, and significantly superior to the accuracy of the voting algorithms. \ouralgo produces high-quality results for the datasets Cora and Citeseer, but behaves erratically in the Polblogs dataset, generally worse than the voting algorithms; that is likely due to the issues discussed in Section~\ref{sec:streambp_star}.

\section{Definitions and technical lemmas}
For completeness, we reproduce a standard lemma establishing that sparse graphs from \SBM \ are locally tree-like. 
\begin{lemma}[\cite{mossel2015reconstruction}]\label{lemma:locally-tree-like}
Let $(X, G) \sim \SSBM(n, 2, a / n, b / n)$ and $R = R(n) = \lfloor \frac{1}{10}\log (n) / \log(2(a + b))\rfloor$. Let $B_R := \{v \in [n]: d_G(1, v) \leq R\}$ be the set of vertices at graph distance at most $R$ from vertex 1, $G_R$ be the restriction of $G$ on $B_R$, and let $X_R = \left\{X_u: u \in B_R\right\}$. Let $T_R$ be a Galton-Watson tree with offspring Poisson$(a+b)/2$ and $R$ generations, and let $\tilde{X}^{(t)}$ be the labelling on the vertices at generation $t$ obtained by broadcasting the bit $\tilde{X}^{(0)}:= X_1$ from the root with flip probability $b / (a + b)$. Let $\tilde{X}_R = \{\tilde{X}_u^{(t)}: t \leq R\}$. Then, there exists a coupling between $(G_R, X_R)$ and $(T_R, \tilde{X}_R)$ such that 
\begin{align*}
    \lim\limits_{n \rightarrow \infty} \P\left( (G_R, X_R) = (T_R, \tilde{X}_R) \right) = 1.
\end{align*}
\end{lemma}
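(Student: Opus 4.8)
The plan is to run a breadth-first exploration of the ball $B_R$ around vertex $1$ in $G$, revealing edges and vertex labels only as they are needed, and to couple this exploration step by step with the Galton--Watson broadcast process $(T_R,\tilde X_R)$. I maintain a queue of ``active'' vertices whose neighbourhoods have not yet been explored, initialised with vertex $1$, whose label I set equal to the root label $\tilde X^{(0)}=X_1$. When an active vertex $u$ at depth $t<R$ with label $X_u$ is processed, I reveal, for every vertex $w$ not yet in the explored set $S$, whether $uw\in E$; by the definition of $\SSBM(n,2,a/n,b/n)$ these are independent coins with success probability $a/n$ if $X_w=X_u$ and $b/n$ otherwise, and conditionally on the history the labels of the vertices outside $S$ are still i.i.d.\ uniform. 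Hence the children of $u$ split into $\mathrm{Binomial}(M_1,a/n)$ same-class vertices and $\mathrm{Binomial}(M_2,b/n)$ opposite-class ones, with $M_1,M_2=n/2-O(|S|)$. In the tree process, $u$ has $\mathrm{Poisson}\big((a+b)/2\big)$ children, each carrying the parent's label with probability $a/(a+b)$ and the opposite label with probability $b/(a+b)$.

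At each exploration step I use two standard couplings. First, a $\mathrm{Binomial}(M,p)$-to-$\mathrm{Poisson}(Mp)$ coupling (failure probability $O(Mp^2)$) composed with a $\mathrm{Poisson}(Mp)$-to-$\mathrm{Poisson}(\lambda)$ coupling (failure probability $O(|Mp-\lambda|)$), applied with $(p,\lambda)\in\{(a/n,a/2),(b/n,b/2)\}$, so each such coupling fails with probability $O\big((1+|S|)/n\big)$. Second, given the numbers of same- and opposite-class children, the labels carried by the newly discovered vertices are exactly an i.i.d.\ sequence of $\pm1$ values obtained from the parent's label by independent flips with probability $b/(a+b)$; this matches the broadcast rule exactly, so the label step introduces no error. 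I declare a \emph{failure} at a step if the Poisson coupling fails, or a revealed neighbour of $u$ lies in $S$, or two active vertices turn out to be adjacent (so a cycle is formed). On the complement of all failures the revealed labelled subgraph is a tree and coincides with $(T_R,\tilde X_R)$.

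It remains to bound the total failure probability. The quantitative input is that the exploration is small: the degree of each explored vertex is stochastically dominated by $\mathrm{Binomial}(n,(a+b)/n)$, so $|B_R|$ is dominated by the size of a depth-$R$ Galton--Watson tree with mean offspring $a+b$, whose expectation is at most $2(a+b)^R$; Markov's inequality with $R=\lfloor\tfrac1{10}\log n/\log(2(a+b))\rfloor$ gives $|B_R|\le (2(a+b))^R=n^{1/10}$ with probability $1-O(2^{-R})=1-o(1)$. Conditioning on $\{|B_R|\le n^{1/10}\}$, the expected number of collision/cycle events is $O(|B_R|^2/n)=O(n^{1/5}/n)=o(1)$, and summing the per-step Poisson-coupling failure probabilities over the at most $n^{1/10}$ steps gives $O(n^{1/10}\cdot n^{1/10}/n)=O(n^{-4/5})=o(1)$. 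A union bound then yields $\P\big((G_R,X_R)=(T_R,\tilde X_R)\big)\to1$. The main obstacle is the careful bookkeeping of the conditioning: because the edge probabilities along the exploration depend on the labels already revealed, one must argue at each step that, conditionally on the history, the labels of the unexplored vertices remain i.i.d.\ uniform and the unrevealed edges remain independent, so that the coupling can legitimately be carried out one step at a time --- this is precisely what the ``reveal only what is needed'' discipline of BFS guarantees.
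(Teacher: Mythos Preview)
The paper does not prove this lemma: it is stated with a citation to \cite{mossel2015reconstruction} and used as a black box (``For completeness, we reproduce a standard lemma\dots''), so there is no in-paper argument to compare against. Your BFS-exploration coupling is exactly the standard proof of this fact and is essentially what Mossel--Neeman--Sly do; the structure (reveal edges on demand, couple $\mathrm{Binomial}$ offspring counts to $\mathrm{Poisson}$, control cycles and collisions by a first-moment bound on $|B_R|$) is correct and the arithmetic with $R=\lfloor \tfrac{1}{10}\log n/\log(2(a+b))\rfloor$ closes.

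One small imprecision worth tightening: when you write that each per-step coupling fails with probability $O((1+|S|)/n)$, you are implicitly treating the number of unexplored vertices in each class as $n/2-O(|S|)$. Since labels are i.i.d.\ uniform, the class sizes fluctuate by $O(\sqrt{n})$, so the honest bound on $|M_ip-\lambda|$ is $O((\sqrt{n}+|S|)/n)$. This changes nothing in the end (summing $O(n^{-1/2})$ over $n^{1/10}$ steps is still $o(1)$), but it is the one place where your bookkeeping glosses over a term. Otherwise the argument is sound.
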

\noindent
For simplicity of presentation, we introduce the following definitions:

\begin{definition}[Labeled branching tree]\label{def:branching-tree}
	For $r \in \NN^+$, $d > 0$, $p \in (0,1)$, let $\P^T_{r,d,p}$ denote the law of a labeled Galton Watson branching tree: this tree has $r$ generations, with the offspring distribution being Poisson with expectation $d$. Each vertex in the tree is associated with a label taking values in $\{1,2\}$. The label at the root node is uniformly distributed over $\{1,2\}$, and labels on the rest of the vertices are obtained by broadcasting the label from the root node with flip probability $p$.  
\end{definition}

\section{Analysis of local streaming algorithms}
\subsection{Proof of Theorem \ref{thm:streaming-local-implies-local}}
With a slight abuse of notations, in this part of the proof, when we refer to $V_r^t(v), E_r^t(v),\Ball_r^t(v)$ and $\cG_r^t(v)$, we do not assume we know the revealing orderings of vertices within. 
For the sake of simplicity, we consider $k = 2$, cases with $k > 2$ can be proven similarly. Let $d = (a + b)/2$ be the average degree, $r \in \mathbb{N}^+$ satisfying 
$$r \geq 2^{2R+3}R\left(1 + (1 - e^{-d})^{-1}\sum\limits_{i=0}^Rd^i\right)d^{2R}e.$$
Let $(\lambda_{r + R}, T_{r + R}) \sim \P^T_{r + R, d, \frac{b}{a + b}}$ as in Definition \ref{def:branching-tree}, with $T_{r + R}$ being the graph and $\lambda_{r + R}$ being the set of labels. By Lemma \ref{lemma:locally-tree-like}, for any $\epsilon > 0$, there exists $n_{\epsilon} \in \mathbb{N}^+$ which is a function of $r,R$ and $\epsilon$, such that for $n \geq n_{\epsilon}$, there exists a coupling of $(\lambda_{r + R}, T_{r + R})$ and $(\tau({V}_{r + R}^n(v_0)), \Ball_{r + R}^n(v_0))$ preserving $u_0$ paired up with $v_0$, and satisfies
\begin{align*}
    \P\left((\lambda_{r + R}, T_{r + R}) \neq (\tau({V}_{r + R}^n(v_0)), \Ball_{r + R}^n(v_0))\right) \leq \epsilon / 2.
\end{align*}
If $\mathcal{G}_{v_0}^n$ is not a subgraph of $\Ball_r^n(v_0)$, then there must exist $v_b \in {D}_r^n(v_0)$ such that $v_b$ belongs to $\cG_{v_0}^n$. This is equivalently saying that there exists an ``information flow" starting at $v_b$, proceeds as vertices are gradually revealed, and finally could reach $v_0$ by the end.


\begin{definition}[Information flow]\label{def:generalized-information-flow}
	Given the graph $G(n) = (V(n), E(n))$, an \emph{information flow} with origin at $v_b$ and end at $v_0$ is defined as a sequence of vertices $p_1, p_2, \cdots, p_l \in V(n)$, such that 
	\begin{enumerate}
		\item $t^{\ast}(p_i) = t_i$, and $t_i < t_{i + 1}$ for all $i \in [l - 1]$.
		\item $V_R^{n}(p_i) \cap V_R^{n}(p_{i + 1}) \neq \emptyset$. Furthermore, $\min\{d(v_0, v): v \in V_R^n(p_{i + 1})\} < \min\{d(v_0, v): v \in V_R^n(p_{i})\}$, for all $i \in [l - 1]$. 
		\item $v_b \in V_R^{n}(p_1)$, $v_0 \in V_R^{n}(p_l)$, $v_0 \notin V_R^{n}(p_i)$ for all $i \in [l - 1]$.
	\end{enumerate}
\end{definition}
Notice that on the event $\Ball_{r + R}^n(v_0)$ is a tree, a necessary condition for $v_b \in \cG_{v_0}^n$ is that there exists an information flow $u_1, u_2, \cdots, u_l$ with origin at $v_b$ and end at $v_0$. For $i \in [l]$, among all vertices on the shortest path connecting $v_0$ and $v_b$, let $v_i$ be a vertex with the smallest graph distance to $u_i$. A moment of thought reveals that we can find an eligible information flow such that $v_1, \cdots, v_l$ are distinct. 


Denote the set of vertices on the path connecting $v_0$ and $v_b$ by $V_p$. Given $l$ and the graph, the number of (unordered) vertex combinations $\{v_1, v_2, \cdots, v_l\}$ is upper bounded by ${r}\choose{l}$. 
For each $x \in \{v_1, v_2, \cdots, v_l\}$, we define the following set:
\begin{align*}
	V_p^x := \left\{ v \in {V}_R^n(x): \nexists v' \in V_p, v' \neq x, d(v, v') < d(v, x) \right\}.	
\end{align*}
Given $\{v_1, v_2, \cdots, v_l\}$, the total number of possible unordered vertex combinations $\{u_1, u_2, \cdots, u_l\}$ is upper bounded by $\prod_{i=1}^l |  V_p^{v_i} |$. Furthermore, given unordered set $\{u_1, u_2, \cdots, u_l\}$, by definition one can specify their relative ordering $u_1, u_2, \cdots, u_l$ by sorting the following distances: $\min \{d(v_0, v): v \in V_R^n(u_i)\}$. Finally, for such combinations to exist, one must have $\lfloor {r}/{2R} \rfloor \leq l \leq r$.  

As a result, given $\Ball_{r + R}^n(v_0)$, if it is a tree, then the conditional probability of $\mathcal{G}_{v_0}^n$ not being a subgraph of $\Ball_r^n(v_0)$ is upper bounded by
\begin{align*}
\sum\limits_{v_b \in {V}(n)}\ind\{v_b \in {D}_r^n(v_0)\} \sum\limits_{l = \lfloor {r / 2R}\rfloor}^{r}\sum\limits_{\{v_1,v_2,\cdots,v_l\}}\prod\limits_{i=1}^l |  V_p^{v_i} |\times \frac{1}{l!}.
\end{align*}
Let $\P_{bt}(\cdot)$ denote the probability distribution over $(\lambda_{r + R}, T_{r + R})$, and let $\E_{bt}(\cdot)$ be the corresponding notation for taking expectation under that distribution. Then we have
\begin{align}
    &\P(\cG_{v_0}^n \mbox{ is not a subgraph of } \Ball_r^n(v_0))\nonumber\\
    \leq & \P((\lambda_{r + R}, T_{r + R}) \neq (\tau({V}_{r + R}^n(v_0)), \Ball_{r + R}^n(v_0))) + \nonumber \\
    &\P\left(\cG_{v_0}^n \mbox{ is not a subgraph of } \Ball_r^n(v_0), (\lambda_{r + R}, T_{r + R}) = (\tau({V}_{r + R}^n(v_0)), \Ball_{r + R}^n(v_0))\right)\nonumber\\
    \leq & \epsilon / 2 + \sum\limits_{v_b \in {V}(n)} \E_{bt}\left[\ind\{v_b \in {D}_r^n(v_0)\} \sum\limits_{l = \lfloor {r / 2R}\rfloor}^{r}\sum\limits_{\{v_1,v_2,\cdots, v_l\}}\prod\limits_{i=1}^l | V_p^{v_i} |\times \frac{1}{l!}\right] \nonumber \\
    \overset{(i)}{\leq} & \epsilon / 2 + \sum\limits_{v_b \in {V}(n)}\E_{bt}\left[\ind\{v_b \in {D}_r^n(v_0)\}\sum\limits_{l = \lfloor r / 2R\rfloor}^r \sum\limits_{\{v_1,v_2,\cdots,v_l\}} \frac{1}{l!} \times \underbrace{ \left(1 + (1 - e^{-d})^{-1}\sum\limits_{i=1}^Rd^i\right)^l }_{C(d,R)^l}\right] \nonumber \\
    \leq & \epsilon / 2 + \sum\limits_{l = \lfloor r / 2R\rfloor}^r {r \choose l}\frac{1}{l!} C(d,R)^l d^r,\label{eq-thm1-1}
\end{align}
where \emph{(i)} uses the fact that conditioning on $v_b \in D_r^n(v_0)$, the path connecting $v_0$ and $v_b$, and the choice of $v_1, \cdots, v_l$, in a Poisson branching process, the random variables $\{| V_p^{v_i} |: i \in [l]\}$ are conditionally independent. When $R = 1$, we have the conditional distribution $\P(| V_p^{v_i}  | = \cdot)\overset{}{=}\P( X = \cdot \mid X \geq 1)$, with $X \sim\mbox{Poisson}(d)$, thus the bound follows. Upper bound when $R > 1$ can be derived similarly. 

By Stirling's formula, there exists numerical constants $C_1$ and $C_2$, such that for any positive integer $m$,
\begin{align*}
    C_2 \sqrt{ m}\left(\frac{m}{e}\right)^m \leq m! \leq C_1 \sqrt{ m}\left(\frac{m}{e}\right)^m.
\end{align*}
Then application of Stirling's formula gives us
\begin{align}
    & \sum\limits_{l = \lfloor r / 2R \rfloor}^r{r\choose l}\frac{1}{l!}C(d,R)^ld^r \leq \sum\limits_{l = \lfloor r / 2R \rfloor}^r \frac{2^rd^r}{C_2}\left(\frac{e}{l}\right)^lC(d,R)^l \leq \frac{2^{2R} d^{2R}}{C_2 } \sum\limits_{l = \lfloor r / 2R \rfloor}^{\infty} \left( \frac{2^{2R + 2}R C(d,R) d^{2R}e}{r} \right)^l. \label{eq:39}
\end{align}
where we use the fact that $r / 2R - 1 \leq l \leq r / 2R $ and $2R/(r - 2R) \leq 4R / r$ for all $r \geq 2^{2R+3}RC(d,R)d^{2R}e$. Furthermore, with $r$ having value exceeding this threshold,$\left( \frac{2^{2R + 2}R C(d,R) d^{2R}e}{r} \right) \leq \frac{1}{2}$, thus
\begin{align}
	\frac{2^{2R} d^{2R}}{C_2 } \sum\limits_{l = \lfloor r / 2R \rfloor}^{\infty} \left( \frac{2^{2R + 2}R C(d,R) d^{2R}e}{r} \right)^l \leq &  \frac{2^{2R + 1} d^{2R}}{C_2 }  \left( \frac{2^{2R + 2}R C(d,R) d^{2R}e}{r} \right)^{\lfloor r / 2R \rfloor} \nonumber \\
	 = & C_R\left(\frac{2^{2R+2}RC(d,R)d^{2R}e}{r}\right)^{\lfloor r / 2R \rfloor}, \label{eq:15}
\end{align}
where $C_R$ is a constant that depends only on $d,R$. Then there exists an $r_{\epsilon} \in \mathbb{N}^+$, such that for $r \geq r_{\epsilon}$, 
\begin{align}
	C_R\left(\frac{2^{R + 2}RC(d,R)d^{2R}e}{r}\right)^{\lfloor r / 2R\rfloor} \leq \epsilon / 2.\label{eq:16}
\end{align}
Combining equations \eqref{eq-thm1-1}, \eqref{eq:39}, \eqref{eq:15} and \eqref{eq:16} gives us $\P(\cG_{v_0}^n \mbox{ is a subgraph of } \Ball_r^n(v_0)) \leq \epsilon$ for large enough $n$ and $r$. Note that the choice of $r_{\epsilon}$ indeed only depends on $R,d$ and $\epsilon$. Having decided the value of $r_{\epsilon}$, what remains to be done is to select $n_{\epsilon}$ large enough to accommodate with this choice of $r_{\epsilon}$ such that for $n \geq n_{\epsilon}$, the $r_{\epsilon} + R$ neighborhood of $v_0$ is locally tree-like with high probability as in Lemma \ref{lemma:locally-tree-like}. Note that since the choice of $r_{\epsilon}$ depends only on $R,d$ and $\epsilon$, the choice of $n_{\epsilon}$ depends only on $R,d$ and $\epsilon$. This finishes the proof of the first part of Theorem \ref{thm:streaming-local-implies-local}. 

As for the second part of the theorem, we simply reverse the direction of information flow analysis and everything else remains the same.

\subsection{Proof of Corollary \ref{cor:streaming-without-sideinfo}}
By Theorem \ref{thm:streaming-local-implies-local}, for any $\epsilon > 0$, there exists $n_{\epsilon}, r_{\epsilon} \in \mathbb{N}^+$, such that for all $n \geq n_{\epsilon}$,
\begin{align*}
	& \E\left[ \max\limits_{\pi \in \fS_k} \frac{1}{n}\sum\limits_{i = 1}^n \ind\{ \mathcal{A}(i; G(n)) = \pi \circ \tau(i)\} \right] \\
	\leq & \E\left[ \max\limits_{\pi \in \fS_k} \frac{1}{n}\sum\limits_{i = 1}^n \left( \ind\{ \mathcal{A}(i; G(n)) = \pi \circ \tau(i), \cG_i^n \mbox{ is a subgraph of }\Ball_{r_{\epsilon}}^n(i)\} + \ind\{\cG_i^n \mbox{ is not a subgraph of }\Ball_{r_{\epsilon}}^n(i)\}\right) \right] \\
	\leq & \epsilon + \sup\limits_{\mathcal{A}'}\E\left[ \max\limits_{\pi \in \fS_k} \frac{1}{n}\sum\limits_{i = 1}^n \ind\{ \mathcal{A}'(i; G(n)) = \pi \circ \tau(i)\} \right],
\end{align*}
where in the last line $\cA'$ is taken over the family of $r_{\epsilon}$-local algorithms. Lemma \ref{lemma:locally-tree-like} and Corollary \ref{cor:concentrate} imply that the last line above has limiting supremum no larger than $1 / k + \epsilon$ as $n \rightarrow \infty$ (for detailed arguments, see \cite{kanade2016global}). Since $\epsilon$ is arbitrary, then the corollary directly follows.

\section{Analysis of local streaming algorithms with summary statistics}
For the sake of simplicity, we assume $m = 1$. The extension to $m \geq 2$ is straightforward.
\subsection{An auxiliary algorithm}
To prove Theorem \ref{thm:local-with-summary-stat-is-trivial}, we first introduce an auxiliary algorithm (Algorithm \ref{alg:aux-local-alg-with-summary-stat}) which is close to a local algorithm. Then we show that the algorithm with summary statistics can be well approximated by the proposed auxiliary algorithm (Lemma \ref{lemma:two-algs-are-close}). Finally, we show that the auxiliary algorithm can not achieve non-trivial estimation accuracy (Lemma \ref{lemma:aux-alg-is-trivial}).  

Let $\delta \in (0,1)$ be a small constant independent of $n$, and let $n_h = |\{v \in V(n): \tau(v) = h\}|$, $h \in [k]$. As shown in Algorithm \ref{alg:aux-local-alg-with-summary-stat}, we run a global algorithm up to time $\lceil \delta n \rceil$, followed by a local algorithm. To represent the information up to time $\lceil \delta n \rceil$ and the size of communities at time $n$, we introduce the following sigma-algebra:
 \begin{align*}
 	\mathcal{F}_a := \sigma \left\{ G(\lceil \delta n \rceil), (i,v(i), \xi_{v(i)}, \tau(v(i))), w_{v(i)}^{i - 1}, e_{v(j)v(s)}^{j \vee s - 1}, n_h: 1 \leq i,j,s \leq \lceil \delta n \rceil,  (v(j), v(s)) \in E(\lceil \delta n \rceil),  h \in [k]  \right\}.
 \end{align*} 
In the auxiliary algorithm, we attach a number $b_i^t$ to node $i$ at time $t$,  with the same initialization as $w$: $b_i^{t_*(i) - 1} = w_i^{t_*(i) - 1}$. Similarly, we attach to edge $(v(i), v(j))$ at time $t$ a number $c_{ij}^t = c_{ji}^t$ with initialization $c_{ij}^{t_*(i) \vee t_*(j) - 1} = e_{ij}^{t_*(i) \vee t_*(j) - 1}$. Note that $w_i^{t_*(i) - 1}$ and $e_{ij}^{t_*(i) \vee t_*(j) - 1}$ are as defined in the original algorithm. Denote the vector containing all $b$'s attached to vertices in $\localV_s$ at time $t$ by $\bb_s^t$, and the vector containing all $c$'s attached to edges in $\localE_s$ at time $t$ by $\bc_s^t$. Similarly, let $\bw_s^t$ and $\be_s^t$ denote the restrictions of $\bw^t$ and $\be^t$ to $\localV_s$ and $\localE_s$, respectively. For $\lceil \delta n \rceil \leq t \leq [n]$, let
\begin{align*}
	\mathsf{average}_b(t) = \frac{1}{|V(t)|} \sum_{v \in V(t)} b_{v}^t, \qquad \mathsf{average}_c(t) = \frac{1}{|E(t)|}\sum_{(j,k) \in E(t)} c_{jk}^t.
\end{align*}
Then we introduce the following auxiliary algorithm:
 \begin{breakablealgorithm}\label{alg:aux-local-alg-with-summary-stat}
	\caption{Auxiliary algorithm with summary statistics}
	\begin{algorithmic}[1]
		\For{$i \in V(\lceil \delta n \rceil)$}
			\State $b_i^{\lceil \delta n \rceil} = w_i^{\lceil \delta n \rceil}$
		\EndFor
		\For{$(i, j) \in E(\lceil \delta n \rceil)$}
			\State $c_{ij}^{\lceil \delta n \rceil} = c_{ji}^{\lceil \delta n \rceil} = e_{ij}^{\lceil \delta n \rceil}$
		\EndFor
		\State $\bar{b}^{\lceil \delta n \rceil} \gets \mathsf{average}_b(\lceil \delta n \rceil)$
		\State $\bar{c}^{\lceil \delta n \rceil} \gets \mathsf{average}_c(\lceil \delta n \rceil)$
		\For{$t = \lceil \delta n \rceil + 1, \lceil \delta n \rceil + 2, \cdots, n$}
			\For{$i \in \localV_t$}
				\State $b_i^t \gets F_w^t(\bb_t^{t - 1}, \bc_t^{t - 1}, \E[\bar{b}^{t - 1} \mid \mathcal{F}_a], \E[\bar{c}^{t - 1}\mid \mathcal{F}_a] \mid i)$ 
			\EndFor
			\For{$(i, j) \in \localE_t$}
				\State $c_{ij}^t \gets F_e^t(\bb_t^{t - 1}, \bc_t^{t - 1}, \E[\bar{b}^{t - 1}\mid \mathcal{F}_a], \E[\bar{c}^{t - 1}\mid \mathcal{F}_a] \mid i, j)$
			\EndFor
			\State $\bar{b}^t \gets\mathsf{average}_b(t)$
			\State $\bar{c}^t \gets\mathsf{average}_c(t)$
		\EndFor
		\For{$t = 1,2,\cdots,n$}
			\State Output $\htau(b_t ^n)$ as an estimate for $\tau(t)$.
		\EndFor
	\end{algorithmic}
\end{breakablealgorithm}
We can prove the following lemmas regarding Algorithm \ref{alg:aux-local-alg-with-summary-stat}:
\begin{lemma}\label{lemma:two-algs-are-close}
Under the conditions stated in Theorem \ref{thm:local-with-summary-stat-is-trivial}, Algorithm \ref{alg:aux-local-alg-with-summary-stat} and the original algorithm proposed in Section \ref{sec:summary-stat} are asymptotically equivalent, in the sense that as $n \rightarrow \infty$,
\begin{align*}
\frac{1}{n}\sum\limits_{i =1 }^n|w_i^n - b_i^n| + \frac{1}{|E(n)|}\sum\limits_{(i,j) \in E(n)}|e_{ij}^n - c_{ij}^n| \overset{P}{\rightarrow} 0.	
\end{align*}
\end{lemma}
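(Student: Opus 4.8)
The plan is to couple the two algorithms by running them on the same realization of $(\tau,\tilde\tau,\bG)$ and the same initialization draws $w_i^{t_*(i)-1}$, $e_{ij}^{t_*(i)\vee t_*(j)-1}$, and to track $\Delta_i^t:=|w_i^t-b_i^t|$ and $\Gamma_{ij}^t:=|e_{ij}^t-c_{ij}^t|$ as $t$ grows. By construction the runs agree on $[0,\lceil\delta n\rceil]$, so $\Delta_i^t=\Gamma_{ij}^t=0$ for $t\le\lceil\delta n\rceil$, and for $t>\lceil\delta n\rceil$ the value $\Delta_i^t$ (resp.\ $\Gamma_{ij}^t$) changes only when $i\in\localV_t$ (resp.\ $(i,j)\in\localE_t$). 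The bounded-Lipschitz hypothesis on $F_w^t,F_e^t$ then gives, for $i\in\localV_t$,
\begin{align*}
\Delta_i^t\ \le\ L_F\Big(\textstyle\sum_{j\in\localV_t}\Delta_j^{t-1}+\sum_{(j,l)\in\localE_t}\Gamma_{jl}^{t-1}+\big|\bar w^{t-1}-\E[\bar b^{t-1}\mid\mathcal{F}_a]\big|+\big|\bar e^{t-1}-\E[\bar c^{t-1}\mid\mathcal{F}_a]\big|\Big),
\end{align*}
and analogously for $\Gamma_{ij}^t$ (constants depending only on $C_{act}$ from the choice of norm are suppressed). So everything reduces to (a) bounding the ``forcing'' term, i.e.\ the gap between the true summary statistics and their frozen conditional expectations, and (b) controlling how this forcing propagates through the recursion.

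For (a) I would split $|\bar w^{t-1}-\E[\bar b^{t-1}\mid\mathcal{F}_a]|\le|\bar w^{t-1}-\bar b^{t-1}|+|\bar b^{t-1}-\E[\bar b^{t-1}\mid\mathcal{F}_a]|$, and similarly for the edge term. The first piece is at most $\frac{1}{t-1}\sum_i\Delta_i^{t-1}\le\frac{1}{\lceil\delta n\rceil}\sum_i\Delta_i^{t-1}$, i.e.\ $O(1/n)$ times the current total $\ell^1$ error; this feeds back into the recursion and will be absorbed by a Grönwall-type estimate. The second piece is a pure concentration statement about Algorithm~\ref{alg:aux-local-alg-with-summary-stat}: conditionally on $\mathcal{F}_a$ its global inputs are \emph{frozen} ($\mathcal{F}_a$-measurable by construction), so $b_i^n$ depends on the remaining ``fresh'' randomness (the late arrival order, late edges, late initializations) only through a bounded graph neighborhood of $i$, with \emph{no} global feedback loop. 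Hence altering one fresh coordinate changes only $O(C_{act}|V_R^n(\cdot)|)$ of the values $b_i^n\in[-L_F,L_F]$, and a bounded-differences inequality yields $|\bar b^{t-1}-\E[\bar b^{t-1}\mid\mathcal{F}_a]|=O(\sqrt{\log n/n})$ with high probability, uniformly over $t\le n$ after a union bound. Here I would invoke the standard high-probability facts that under $\STSSBM$ the maximum degree is $O(\log n)$, $|E(t)|=\Theta(t)$ for $t\ge\lceil\delta n\rceil$, the community sizes are $(1+o(1))n/k$, and all but $o(n)$ vertices have bounded $R$-balls.

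For (b) the forcing has magnitude $\rho_n:=O(\sqrt{\log n/n})+O(1/n)\max_{s\le t}\big(\sum_i\Delta_i^s+\sum_{ij}\Gamma_{ij}^s\big)$, injected at every step along the current range of action. Unrolling the recursion bounds $\Delta_i^t$ by $\rho_n$ times a sum over ``information-flow'' chains ending at $i$ in the sense of Definition~\ref{def:generalized-information-flow}, each hop contributing a factor $\le C_{act}L_F$. By (the proof of) Theorem~\ref{thm:streaming-local-implies-local}, for all but an $\epsilon$-fraction of vertices every such chain has length at most a constant $r_\epsilon$ and there are only $O_{r_\epsilon}(1)$ of them, whence $\Delta_i^t\le C_{r_\epsilon}\rho_n$ for those vertices; for the remaining $o(n)$ (atypical, high-degree) vertices one uses the trivial bound $\Delta_i^t\le 2L_F$. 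Summing gives $\sum_i\Delta_i^t+\sum_{ij}\Gamma_{ij}^t\le C_{r_\epsilon}\,n\,\rho_n+o(n)$ uniformly in $t$; since $\rho_n$ itself contains the $O(1/n)\max_s(\cdots)$ feedback term with prefactor $O(1/\delta)$, a discrete Grönwall inequality (with $\epsilon$, hence $r_\epsilon$, fixed first) closes the loop and yields $\frac1n\sum_i\Delta_i^n+\frac1{|E(n)|}\sum_{ij}\Gamma_{ij}^n\overset{P}{\rightarrow}0$.

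The main obstacle I anticipate is exactly the feedback loop underlying (a)--(b): the summary statistics depend on all vertex/edge states, which in turn depend on the summary statistics, and a single Lipschitz step can amplify an error by $C_{act}L_F$, which need not be $<1$; care is needed so that errors do not blow up over the $\Theta(n)$ update rounds. The two features that make it go through --- and that the write-up must exploit carefully --- are (i) the \emph{auxiliary} algorithm has no feedback loop, so its averages concentrate cleanly, and (ii) local tree-likeness of $\STSSBM$, which bounds the length and number of information-flow chains for typical vertices and confines the trivially-bounded errors to a vanishing vertex fraction. Matching the constants produced by the information-flow unrolling against the $1/(\delta n)$ gain from averaging, so that the Grönwall step actually closes, is the delicate point.
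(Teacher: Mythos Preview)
Your overall architecture---coupling the two runs, using the bounded-Lipschitz hypothesis to get a linear recursion for $\Delta_i^t$ and $\Gamma_{ij}^t$, and splitting the forcing into the pure concentration piece $|\bar b^{t-1}-\E[\bar b^{t-1}\mid\mathcal F_a]|$ plus the feedback $|\bar w^{t-1}-\bar b^{t-1}|$---matches the paper exactly. The paper likewise drops the edge variables for presentation.

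On (a), your bounded-differences route is different from the paper's. The paper proves $\sup_t|\bar b^t-\E[\bar b^t\mid\mathcal F_a]|=o_P(1)$ by a second-moment decoupling argument: it shows $\sup_{x,y,t}|\E[(b_{v(x)}^t-\E[b_{v(x)}^t\mid\mathcal F_a])(b_{v(y)}^t-\E[b_{v(y)}^t\mid\mathcal F_a])]|\to0$ by constructing a conditionally independent copy $(\bG',\tau',t_*')$ given $\mathcal F_a$, and then passes to the sup over $t$ by discretizing time into $M$ points and using the crude increment bound $|\bar b^{t+1}-\bar b^t|\le 2L_F(C_{act}+1)/t$. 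Your McDiarmid-with-high-probability-bounded-effect approach is plausible but heavier (permutation coordinates, random edge set, exceptional-set concentration); the paper's covariance argument is more direct.

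The genuine gap is in (b): your Gr\"onwall does not close. After your unrolling you obtain, for typical $i$, $\Delta_i^t\le C_{r_\epsilon}\rho_n$ with $\rho_n=\eta_n+\tfrac{1}{\delta n}\max_sS_s$ and $C_{r_\epsilon}\ge L_F\ge1$ (already the length-one chain at $i$'s last update contributes $L_F$). Summing gives $\max_tS_t\le C_{r_\epsilon}n\eta_n+\tfrac{C_{r_\epsilon}}{\delta}\max_sS_s+2L_F\epsilon n$, and solving for $\max_sS_s$ requires $C_{r_\epsilon}/\delta<1$; since $\delta\in(0,1)$ is fixed while $C_{r_\epsilon}\ge1$, this is impossible, and sending $\epsilon\to0$ only makes $C_{r_\epsilon}$ larger. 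The paper avoids this by \emph{not} separating out the feedback: it packages $|\bar w^t-\bar b^t|$ as the $0$-th coordinate of a state vector $\bd_t\in\RR^{n+1}$ and writes a single linear recursion $\bd_{t+1}\le(I+L_FA_{t+1})\bd_t+L_F\eta_tA_{t+1}\mathbf 1_{v(t+1)}$ with only the \emph{pure} forcing $\eta_t=|\bar b^t-\E[\bar b^t\mid\mathcal F_a]|$ on the right. Unrolling yields $\langle\bd_n,\vec 1\rangle\le(\max_t\eta_t)\sum_th_t$ with no self-reference; the feedback sits inside the matrices $A_t$ as $O(1/t)$ couplings between the $0$-th and local coordinates. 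The paper then proves $\sum_th_t=O_P(n)$ via an explicit product identity for $A_{t_m}\cdots A_{t_1}$ (so the chains you describe do appear, but weighted by the $1/t$ factors and a $1/(m-1)!$ from the time ordering), closing with $H(s)\le(1+\tfrac{C(s)}{\lceil\delta n\rceil})H(s+1)+C(s)$, whose product telescopes to $\exp(\tfrac{1}{\lceil\delta n\rceil}\sum_sC(s))=\exp(O_P(1/\delta))$ because $\E[C(s)]$ is bounded uniformly in $n,s$ by a moment estimate on $|V_{2R}^n(v)|$. That is the ``constant matching'' you correctly flagged as delicate, and it does not go through via a per-vertex chain bound followed by a global contraction.
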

\begin{lemma}\label{lemma:aux-alg-is-trivial}
Under the conditions stated in Theorem \ref{thm:local-with-summary-stat-is-trivial}, for any $\epsilon > 0$, the following holds:
\begin{align*}
	\limsup\limits_{\delta \rightarrow 0^+}\limsup\limits_{n \rightarrow \infty}\E\left[\max\limits_{\pi\in \fS_k} \frac{1}{n} \sum\limits_{i = 1}^n \ind\{\htau(b_i^n + \epsilon U_i) = \pi \circ \tau(i)\}\right] = \frac{1}{k}. 
\end{align*}
\end{lemma}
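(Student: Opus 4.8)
The plan is to condition on $\mathcal{F}_a$ and argue that, in the conditioned measure, the auxiliary algorithm restricted to the vertices arriving after time $\lceil\delta n\rceil$ is — apart from a vanishing fraction of exceptional vertices — a local algorithm, and hence cannot beat $1/k$ by the impossibility of non-trivial local estimation in the symmetric model without side information. The lower bound $\ge 1/k$ is immediate: for any realization, averaging $\frac1n\sum_i\ind\{\htau(b_i^n+\epsilon U_i)=\pi\circ\tau(i)\}$ over the $k!$ permutations $\pi\in\fS_k$ gives exactly $1/k$ (for each $i$, exactly $(k-1)!$ permutations send $\tau(i)$ to $\htau(b_i^n+\epsilon U_i)$), so $\max_\pi(\cdots)\ge 1/k$; it therefore suffices to prove the matching upper bound.

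First I would localize the dependence. Fix $\epsilon,\delta>0$ and an auxiliary parameter $\gamma>0$, and apply the first part of Theorem \ref{thm:streaming-local-implies-local} with accuracy parameter $\gamma$ to obtain $r_\gamma,n_\gamma$ such that for $n\ge n_\gamma$, with probability at least $1-\gamma$ over $v_0\sim\Unif([n])$ the dependency set of $v_0$ — the vertices and edges whose initializations $w^{\cdot}_{\cdot},e^{\cdot}_{\cdot},\ttau(\cdot),\xi_{\cdot}$ enter the computation of $b_{v_0}^n$ — is contained in $\Ball_{r_\gamma}^n(v_0)$. This is legitimate because in Algorithm \ref{alg:aux-local-alg-with-summary-stat} each range of action $(\localV_t,\localE_t)$ is a subset of $V_R^t(v(t))$, so the dependency set propagates no faster than the process $\cV_v^t$ of Definition \ref{def:streaming-R-local-algorithm}, and the only other inputs to the updates, $\E[\bar b^{t-1}\mid\mathcal{F}_a]$ and $\E[\bar c^{t-1}\mid\mathcal{F}_a]$, are $\mathcal{F}_a$-measurable, hence constants once we condition on $\mathcal{F}_a$. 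Call $v_0$ \emph{good} if it is a late vertex ($t_*(v_0)>\lceil\delta n\rceil$), its dependency set lies in $\Ball_{r_\gamma}^n(v_0)$, and $\Ball_{r_\gamma}^n(v_0)$ is a tree disjoint from $V(\lceil\delta n\rceil)$. The non-late vertices are a fraction $\delta+o(1)$; the expected fraction of vertices whose $r_\gamma$-ball meets the early subgraph is $O(\delta\,d^{r_\gamma})$ with $d=(a+b)/2$; the ball fails to be a tree with probability $o_n(1)$ by Lemma \ref{lemma:locally-tree-like} (valid since $r_\gamma$ is a fixed constant); and the dependency set escapes the ball with probability $\le\gamma+o_n(1)$. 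Hence $\E[\,|\{v_0:\text{not good}\}|\,]/n\le\delta(1+Cd^{r_\gamma})+\gamma+o_n(1)$ for $C=C(a,b)$. For a good $v_0$, given $\mathcal{F}_a$, $b_{v_0}^n$ is a deterministic function of the rooted ball $\bar\Ball_{r_\gamma}^n(v_0)$ augmented with the arrival times and the i.i.d.\ initializations of its (late) vertices and edges; consequently $\htau(b_{v_0}^n+\epsilon U_{v_0})$ is an $r_\gamma$-local function of the neighborhood of $v_0$ together with the fresh randomness $U_{v_0}$.

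Next I would invoke local impossibility in the conditioned measure. Given $\mathcal{F}_a$, this local function is asymptotically independent of $\tau(v_0)$: in $\STSSBM$ without side information the ball $\bar\Ball_{r_\gamma}^n(v_0)$ records only graph structure, arrival times, the $\xi$-randomness, and the pure-noise labels $\ttau$, and in the symmetric model the local tree law of $v_0$ does not depend on $\tau(v_0)$ to leading order; moreover conditioning on $\mathcal{F}_a$ only fixes the community sizes $n_h$ and reveals the labels of the $\delta$-fraction of early vertices, all far from $v_0$, so the residual conditional law of $\tau(v_0)$ is a draw from a fixed multiset converging to $\Unif([k])$. This is exactly the mechanism behind Corollary \ref{cor:streaming-without-sideinfo} and \cite{kanade2016global}; running the same tree-coupling argument in the conditioned space gives, for each fixed $\pi$, $\E[\frac1n\sum_{v_0\text{ good}}\ind\{\htau(b_{v_0}^n+\epsilon U_{v_0})=\pi\circ\tau(v_0)\}]\le 1/k+o_n(1)$, and the concentration estimate of Corollary \ref{cor:concentrate} (distant late vertices have asymptotically independent neighborhoods given $\mathcal{F}_a$, so these sums have vanishing variance) upgrades this, via a union bound over the $k!$ permutations, to $\E[\max_\pi\frac1n\sum_{v_0\text{ good}}\ind\{\cdots\}]\le 1/k+o_n(1)$. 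Combining with the bound on the non-good vertices and using $\max_\pi(A_\pi+B_\pi)\le\max_\pi A_\pi+\max_\pi B_\pi$ yields $\limsup_{n\to\infty}\E[\max_\pi\frac1n\sum_{i=1}^n\ind\{\htau(b_i^n+\epsilon U_i)=\pi\circ\tau(i)\}]\le 1/k+\delta(1+Cd^{r_\gamma})+\gamma$; taking $\limsup_{\delta\to0^+}$ leaves $1/k+\gamma$, and since $\gamma>0$ was arbitrary the upper bound $1/k$ follows.

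The hard part will be the conditional asymptotic independence used above: making rigorous that conditioning on $\mathcal{F}_a$ — which exposes the entire early subgraph, its true labels, and the exact community sizes $n_h$ — does not give a local estimator centered at a far-away late vertex $v_0$ any non-vanishing information about $\tau(v_0)$. This amounts to re-running the coupling to a Galton–Watson broadcast tree (in the spirit of Lemma \ref{lemma:locally-tree-like}) inside the conditioned probability space and checking that the broadcast stays symmetric there (label-independent offspring law; revealed labels at diverging distance; negligible effect of the multiset constraint on a single vertex), and that the cross-vertex concentration of Corollary \ref{cor:concentrate} survives the conditioning. Everything else — the information-flow localization borrowed from Theorem \ref{thm:streaming-local-implies-local} and the bookkeeping of the $\delta$-, $\gamma$-, and $o_n(1)$-error terms — is routine.
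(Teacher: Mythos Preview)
Your proposal is correct and uses the same ingredients as the paper's proof --- conditioning on $\mathcal{F}_a$, localizing via Theorem~\ref{thm:streaming-local-implies-local}, concentration from Corollary~\ref{cor:concentrate} to handle $\max_\pi$, and the tree-coupling reduction to the fact that $\P(\tau(i)=h\mid\text{local ball},\mathcal{F}_a)\to 1/k$ --- but the organization differs in one noteworthy way. The paper applies concentration first to replace the empirical accuracy by its conditional mean $\frac1n\sum_i\P(\htau(b_i^n+\epsilon U_i)=\pi\circ\tau(i)\mid\mathcal{F}_a)$, bounds this by $\frac1n\sum_i\sum_h\E\big|\P(\tau(i)=h\mid\mathcal{F}_a,b_i^n)-1/k\big|$, and then uses a conditional Jensen step: on the good event $b_i^n$ is measurable with respect to $\sigma(\mathcal{F}_a,\Ball_{r_\epsilon}^n(i))$, so the posterior given $b_i^n$ is a coarsening of the posterior given the full ball, and by convexity of $|x-1/k|$ it suffices to show $\P(\tau(i)=h\mid\mathcal{F}_a,\Ball_{r_\epsilon}^n(i))\to 1/k$, which follows by adapting \cite[Proposition~2]{mossel2015reconstruction}. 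This Jensen shortcut spares the paper from tracking exactly which vertices and edges enter $b_i^n$ --- only measurability with respect to the ball matters --- whereas you track the dependency set explicitly via the information-flow argument and then invoke the accuracy-based impossibility of \cite{kanade2016global}. Both routes land on the same ``hard part'' you correctly flag at the end; the paper's posterior-and-Jensen formulation is marginally shorter, while yours makes the locality of the auxiliary algorithm more transparent.
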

\noindent
We defer the proofs of Lemma \ref{lemma:two-algs-are-close} and \ref{lemma:aux-alg-is-trivial} to later parts of the appendix. With these two lemmas, we are able to prove Theorem \ref{thm:local-with-summary-stat-is-trivial}.

\subsection{Proof of Theorem \ref{thm:local-with-summary-stat-is-trivial}}
For any $\delta \in(0,1)$, using Lemma \ref{lemma:two-algs-are-close}, we conclude that for any $\zeta \in(0,1)$, $n$ large enough, with probability at least $1 - \zeta$, $\frac{1}{n} \sum_{i = 1}^n |w_i^n - b_i^n| \leq \zeta^2$. If this happens, then $\# \{ i: |w_i^n - b_i^n| \geq \zeta \} \leq \zeta n$. For $\epsilon$ given in the theorem, let $\psi_{\epsilon}(\zeta) := \mbox{TV}(\epsilon Z_1, \zeta + \epsilon Z_2)$ where $Z_1, Z_2 \sim \Unif[-1,1]$ and TV stands for the total variation distance between probability distributions. One can verify that $\psi_{\epsilon}(\zeta) \rightarrow 0$ as $\zeta \rightarrow 0$. Then we have
\begin{align}
	& \E\left[\max\limits_{\pi\in \fS_k} \frac{1}{n} \sum\limits_{i = 1}^n \ind\{\htau(w_i^n + \epsilon U_i) = \pi \circ \tau(i)\}\right] \nonumber\\
	\leq & \E\left[\max\limits_{\pi\in \fS_k} \frac{1}{n} \sum\limits_{i = 1}^n \ind\{\htau(w_i^n + \epsilon U_i) = \pi \circ \tau(i)\} \ind\{\frac{1}{n} \sum\limits_{i = 1}^n |w_i^n - b_i^n|\leq \zeta^2\}\right] + \zeta \nonumber \\
	\leq & \E\left[\max\limits_{\pi\in \fS_k} \frac{1}{n} \sum\limits_{i = 1}^n \ind\{\htau(w_i^n + \epsilon U_i) = \pi \circ \tau(i), |w_i^n - b_i^n| \leq \zeta\} \ind\{\frac{1}{n} \sum\limits_{i = 1}^n |w_i^n - b_i^n|\leq \zeta^2\}\right] + 2\zeta := \triangle. \label{eq:52}
\end{align}
Conditioning on given values of $w_i^n$ and $b_i^n$, we may bound the total variation distance between $w_i^n + \epsilon U_i$ and $b_i^n + \epsilon U_i$. Specifically, conditional on all $w_i^n$ and $b_i^n$, there exists $U_i' \iidsim \Unif[-1,1]$ for $i \in [n]$, independent of $w_i^n$ and $b_i^n$, such that with probability at least $1 - \psi_{\epsilon}(|w_i^n - b_i^n|)$, we have $w_i^n + \epsilon U_i = b_i^n + \epsilon U_i'$. Note that $U_i'$ is independent of $w_i^n,b_i^n$, but dependent on $U_i$. Then we have
\begin{align*}
	\triangle \leq & \E\left[\max\limits_{\pi\in \fS_k} \frac{1}{n} \sum\limits_{i = 1}^n \ind\{\htau(b_i^n + \epsilon U_i') = \pi \circ \tau(i), |w_i^n - b_i^n| \leq \zeta\} \ind\{\frac{1}{n} \sum\limits_{i = 1}^n |w_i^n - b_i^n|\leq \zeta^2\}\right] + 2\zeta + \psi_{\epsilon}(\zeta) \\
	\leq & \E\left[ \max\limits_{\pi\in \fS_k} \frac{1}{n} \sum\limits_{i = 1}^n \ind\{\htau(b_i^n + \epsilon U_i') = \pi \circ \tau(i)\} \right] + 2\zeta + \psi_{\epsilon}(\zeta).
\end{align*}
Since $\zeta$ can be arbitrarily small, 
\begin{align*}
	\limsup\limits_{n \rightarrow \infty}\E\left[\max\limits_{\pi\in \fS_k} \frac{1}{n} \sum\limits_{i = 1}^n \ind\{\htau(w_i^n + \epsilon U_i) = \pi \circ \tau(i)\}\right] \leq \limsup\limits_{n \rightarrow \infty}\E\left[ \max\limits_{\pi\in \fS_k} \frac{1}{n} \sum\limits_{i = 1}^n \ind\{\htau(b_i^n + \epsilon U_i') = \pi \circ \tau(i)\} \right].
\end{align*}
This holds for any value of $\delta$. Taking $\delta \rightarrow 0^+$ then using Lemma \ref{lemma:aux-alg-is-trivial} finishes the proof of Theorem \ref{thm:local-with-summary-stat-is-trivial}.

\subsection{Proof of Lemma \ref{lemma:two-algs-are-close}}\label{sec:proof-of-lemma2}
For simplicity of presentation, in the proof of this lemma we drop the edge variables (i.e., setting $e_{ij}^t = c_{ij}^t = 0$), and consider only the vertex variables. The proof involving edge variables can be conducted almost identically. By the Lipschitz continuity assumption, as the $(t + 1)$-th vertex joins we have
\begin{align}
	\mbox{for all }l \in \localV_{t + 1}, \qquad & |w_l^{t + 1} - b_l^{t + 1}|  \leq L_F \sum\limits_{i \in \localV_{t + 1}}|w_i^t - b_i^t| + L_F|\bar{w}^t - \bar{b}^t|  + L_F|\bar{b}^t - \E[\bar{b}^t | \mathcal{F}_a]|, \label{eq-4.2-1} \\
	\mbox{for all }l \notin \localV_{t + 1}, \qquad & |w_l^{t + 1} - b_l^{t + 1}| = |w_l^{t} - b_l^{t}|.\label{eq-4.2-3}
\end{align}
%
Using equations \eqref{eq-4.2-1} and \eqref{eq-4.2-3}, we have
\begin{align}\label{eq-4.2-5}
	& |\bar{w}^{t + 1} - \bar{b}^{t + 1}| 	\leq |\bar{w}^t - \bar{b}^t| + \frac{1}{t +1} \sum\limits_{i \in \localV_{t + 1}} |w_i^{t + 1} - b_i^{t + 1}| + \frac{1}{t + 1} \sum\limits_{i \in \localV_{t + 1}} |w_i^t - b_i^t| \nonumber\\
	\leq & \left(1 + \frac{L_F|\localV_{t + 1}|}{t + 1}\right)|\bar{w}^t - \bar{b}^t|  + \frac{L_F|\localV_{t + 1}| + 1}{t + 1}\sum\limits_{i \in \localV_{t + 1}}|w_i^t - b_i^t|  + \frac{L_F|\localV_{t+1}|}{t + 1}|\bar{b}^t - \E[\bar{b}^t | \mathcal{F}_a]|.
\end{align}
For $t \in [n]$, let $\bd_t \in \mathbb{R}^{n + 1}$ with entries indexed from $0$ to $n$. The first entry is set to $|\bar{w}^t - \bar{b}^t|$, and for $i \in [n]$, the entry with index $i$ is set to $|w_{i}^t - b_{i}^t|$. By definition for all $1 \leq s \leq \lceil \delta n \rceil$, $\bd_s$ has only zero entries. 

 For $S_1, S_2, S \subseteq \{0\} \cup [n]$, let $\mathbf{1}_{S} \in \RR^{n + 1}$ be a vector with entries indexed by $0$ to $n$, and the entry with index $\alpha$ is 1 if and only if $\alpha \in S$ otherwise it is zero. For simplicity, let $\mathbf{1}_{t} = \mathbf{1}_{\{t\}}$, and let $Q(S_1, S_2) := \mathbf{1}_{S_1}\mathbf{1}_{S_2}^T \in \mathbb{R}^{(n + 1) \times (n + 1)}$. 
Then define the following matrices:
\begin{align*}
	 & A_t := Q(\localV_t, \{0\} \cup \localV_t) + \frac{|\localV_t| + 1}{t}Q(\{0\}, \{0\} \cup \localV_t ), \\
	 &  A_{t_1, t_2 }  :=  Q(\localV_{t_1} , \{0\} \cup \localV_{t_2}) + \frac{|\localV_{t_1}| + 1}{t_1}Q(\{0\}, \{0\} \cup \localV_{t_2} ).
\end{align*}
 Without loss of generality we may assume $L_F \geq 1$, then combining equations \eqref{eq-4.2-1}, \eqref{eq-4.2-3} and \eqref{eq-4.2-5} gives
 \begin{align*}
 \bd_{t + 1} \leq \left(I + L_F A_{t + 1}\right) \bd_t + L_F(|\bar{b}^t - \E[\bar{b}^t | \mathcal{F}_a]| )A_{t+1}  \mathbf{1}_{v(t+1)}	,
 \end{align*}
here ``$\leq$" refers to element-wise comparison.
Let 
$$\tilde{\bd}_{\lceil \delta n \rceil} = \bd_{\lceil \delta n \rceil} = \vec{0}, \qquad \tilde{\bd}_{t+1} = (I + L_F A_{t+1})\tilde{\bd}_t + (|\bar{b}^t - \E[\bar{b}^t | \mathcal{F}_a]|)L_FA_{t + 1} \mathbf{1}_{v(t+1)},$$ then we have $\bd_t \leq \tilde{\bd}_t$ for all $t \geq \lceil \delta n \rceil$. Furthermore, we have the following decomposition with $h_t$ defined in equation \eqref{eq:19}:
\begin{align}
	\langle \bd_n, \vec{1} \rangle \leq \langle \tilde{\bd}_n, \vec{1} \rangle = \sum\limits_{t = \lceil \delta n \rceil + 1}^{n - 1} h_t |\bar{b}^t - \E[\bar{b}^t | \mathcal{F}_a]|. \label{eq:59}
\end{align}
Before elaborating on the definition of $h_t$, we state the following lemma without proof. Notice that the proof is nothing but basic linear algebra.
\begin{lemma}\label{lemma-matrix-calculation}
	For $n \geq t_m > t_{m - 1} > \cdots > t_1 \geq \lceil \delta n \rceil + 1$, we have 
	\begin{align*}
		A(t_m, t_{m - 1}, \cdots, t_1) := A_{t_m}A_{t_{m - 1}} \cdots A_{t_1}	 =  \prod\limits_{k = 1}^{m - 1} \left(\frac{|\localV_{t_k}| + 1}{t_k}  + |\localV_{t_k} \cap \localV_{t_{k + 1}}| \right)A_{t_m, t_1}.
	\end{align*}
\end{lemma}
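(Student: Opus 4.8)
The plan is to exploit the fact that each $A_t$ has rank one. Writing $\mathbf{1}_{\{0\}}$ for the indicator of the ``average'' coordinate (the one with index $0$) and recalling that $\localV_t \subseteq V(t) \subseteq [n]$, so that $0 \notin \localV_t$ for every $t$, one reads off from the definitions that
\begin{align*}
  A_t = u_t\, v_t^{T}, \qquad u_t := \mathbf{1}_{\localV_t} + \frac{|\localV_t|+1}{t}\,\mathbf{1}_{\{0\}}, \qquad v_t := \mathbf{1}_{\{0\}\cup\localV_t},
\end{align*}
and likewise $A_{t_1,t_2} = u_{t_1}\, v_{t_2}^{T}$ with the very same vectors. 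So the product in question is a telescoping chain of rank-one matrices, and the whole computation reduces to evaluating the scalars produced where consecutive factors meet.

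Next I would compute that scalar. For $1 \le k \le m-1$,
\begin{align*}
  v_{t_{k+1}}^{T} u_{t_k} = \mathbf{1}_{\{0\}\cup\localV_{t_{k+1}}}^{T}\mathbf{1}_{\localV_{t_k}} + \frac{|\localV_{t_k}|+1}{t_k}\,\mathbf{1}_{\{0\}\cup\localV_{t_{k+1}}}^{T}\mathbf{1}_{\{0\}} = |\localV_{t_k}\cap\localV_{t_{k+1}}| + \frac{|\localV_{t_k}|+1}{t_k},
\end{align*}
where the first term uses $\langle \mathbf{1}_{S},\mathbf{1}_{S'}\rangle = |S\cap S'|$ together with $0\notin\localV_{t_k}$, and the second uses $0 \in \{0\}\cup\localV_{t_{k+1}}$. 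This is precisely the factor appearing in the claimed formula.

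Finally, inserting the factorizations into the product and collapsing the inner scalars,
\begin{align*}
  A_{t_m}A_{t_{m-1}}\cdots A_{t_1} = u_{t_m}\Big(\prod_{k=1}^{m-1} v_{t_{k+1}}^{T} u_{t_k}\Big) v_{t_1}^{T} = \Big(\prod_{k=1}^{m-1}\Big(\tfrac{|\localV_{t_k}|+1}{t_k}+|\localV_{t_k}\cap\localV_{t_{k+1}}|\Big)\Big)\, u_{t_m}v_{t_1}^{T},
\end{align*}
and $u_{t_m}v_{t_1}^{T} = A_{t_m,t_1}$ by definition, which is the assertion. (Equivalently, one can run a one-line induction on $m$, peeling off the leftmost factor $A_{t_m}$ and using the case $m=2$, namely $A_{t_2}A_{t_1} = (v_{t_2}^{T} u_{t_1})\,A_{t_2,t_1}$.) There is no real obstacle here — the identity is purely algebraic, and the ordering hypothesis $t_m > \cdots > t_1 \ge \lceil\delta n\rceil+1$ plays no role in the computation (it is present only because that is the regime in which the lemma is applied). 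The single point needing care is the index bookkeeping: one must keep the average coordinate $0$ disjoint from all the vertex coordinates throughout, which is exactly what $\localV_t \subseteq [n]$ guarantees.
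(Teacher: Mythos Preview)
Your proof is correct. The paper itself omits the proof entirely, remarking only that ``the proof is nothing but basic linear algebra''; your rank-one factorization $A_t = u_t v_t^{T}$ and the resulting telescoping computation of the inner products $v_{t_{k+1}}^{T}u_{t_k}$ is exactly the basic linear algebra the authors have in mind, and you have supplied the omitted details cleanly.
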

\noindent
Applying Lemma \ref{lemma-matrix-calculation}, we have for all
$\lceil \delta n \rceil + 1 \leq t \leq n - 1$, 
\begin{align}\label{eq:19}
	h_t =&  \sum\limits_{n \geq t_m > \cdots > t_1 = t + 1} \langle L_F^m A(t_m, t_{m - 1}, \cdots, t_1)\mathsf{1}_{v(t + 1)}, \vec{1} \rangle \nonumber \\
	=& \sum\limits_{n \geq t_m > \cdots > t_1 = t + 1} L_F^m \left( |\localV_{t_m}|  + \frac{|\localV_{t_m}| + 1}{t_m}  \right) \times  \prod\limits_{k = 1}^{m - 1}\left(|\localV_{t_k} \cap \localV_{t_{k + 1}}| + \frac{|\localV_{t_k}| + 1}{t_k} \right). 
\end{align}
For $ \lceil \delta n  \rceil + 1 \leq s \leq n - 1$, let $H(s) = \sum_{t = s}^{n - 1} h_t$, and let 
\begin{align*}
	C(s) = &  \sum\limits_{{n \geq t_m > \cdots > t_1 = s + 1}}  L_F^m \left( |\localV_{t_m}|  + \frac{|\localV_{t_m}| + 1}{t_m}  \right) \times  \prod\limits_{k = 1}^{m - 1}\left(|\localV_{t_k} \cap \localV_{t_{k + 1}}| + \frac{|\localV_{t_k}| + 1}{t_k} \right) \ind\left\{|\localV_{t_k} \cap \localV_{t_{k + 1}}|  > 0\right\}.
\end{align*}
Then we can provide an upper bound for $H(s)$ using $C(s)$ and $H(s + 1)$: 
$$H(s) \leq \left(1 + \frac{1}{\lceil \delta n \rceil  }C(s)\right)H(s + 1) + C(s).$$
By induction, and applying the fact that $\log(1 + x) \leq x$ for all $x > -1$, we have
\begin{align}
H_{\lceil \delta n \rceil + 1}	 \leq  \sum\limits_{s = \lceil \delta n \rceil + 1}^{n - 1} C(s) \prod\limits_{s = \lceil \delta n \rceil + 1}^{n - 1}\left(1 + \frac{C(s)}{\lceil \delta n \rceil }\right) \leq  \sum\limits_{s = \lceil \delta n \rceil + 1}^{n - 1} C(s) \exp\left( \frac{1}{\lceil \delta n \rceil }\sum\limits_{s = \lceil \delta n \rceil + 1}^{n - 1} C(s)\right).\label{eq-H}
\end{align}
Using equation \eqref{eq:59}, we can prove Lemma \ref{lemma:two-algs-are-close} if we can prove the following two lemmas. 
\begin{lemma}\label{lemma-bound-H}
Under the assumptions stated in Theorem \ref{thm:local-with-summary-stat-is-trivial}, for all $\delta > 0$, we have $ H_{\lceil \delta n \rceil + 1} = O_P(n)$.
\end{lemma}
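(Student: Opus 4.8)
The plan is to reduce everything to the first-moment bound $\E\big[\sum_{s}C(s)\big] = O(n)$, where throughout I write $\sum_s$ for $\sum_{s=\lceil\delta n\rceil+1}^{n-1}$. Granting this, Markov's inequality gives $\sum_s C(s) = O_P(n)$; since $\lceil\delta n\rceil\ge\delta n$, the exponent $\tfrac1{\lceil\delta n\rceil}\sum_s C(s)$ is then $O_P(1)$, so $\exp\!\big(\tfrac1{\lceil\delta n\rceil}\sum_s C(s)\big) = O_P(1)$, and plugging this into \eqref{eq-H} yields $H_{\lceil\delta n\rceil+1}\le O_P(n)\cdot O_P(1) = O_P(n)$, which is the claim. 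So from now on I focus on the first-moment estimate.

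First I would bound $C(s)$ crudely in the ``sizes'' but keep its combinatorial skeleton. Using $|\localV_t|\le C_{act}$ for all $t$ and $t_k\ge\lceil\delta n\rceil$, each factor $|\localV_{t_k}\cap\localV_{t_{k+1}}|+\frac{|\localV_{t_k}|+1}{t_k}$ appearing in $C(s)$ (always multiplied there by $\ind\{|\localV_{t_k}\cap\localV_{t_{k+1}}|>0\}$) is at most $C_{act}+1$ for $n$ large, and likewise the leading factor $|\localV_{t_m}|+\frac{|\localV_{t_m}|+1}{t_m}\le C_{act}+1$. Hence, writing $\beta:=L_F(C_{act}+1)$,
\[
\sum_s C(s)\ \le\ \sum_{m\ge1}\beta^m N_m,\qquad
N_m:=\#\Big\{(t_1,\dots,t_m):\ \lceil\delta n\rceil+2\le t_1<\dots<t_m\le n,\ |\localV_{t_k}\cap\localV_{t_{k+1}}|>0\ \forall k\Big\}.
\]
Next, overlap of ranges of action forces geometric closeness: if $u\in\localV_{t_k}\cap\localV_{t_{k+1}}$ then $u\in V_R^{t_k}(v(t_k))\subseteq V_R^n(v(t_k))$ and $u\in V_R^n(v(t_{k+1}))$, so $d_n(v(t_k),v(t_{k+1}))\le2R$. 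Thus $N_m$ is at most the number of increasing-time sequences whose consecutive arrivals are within distance $2R$ in $G(n)$. Conditioning on $G(n)$ and using that the arrival order is a uniform permutation independent of it,
\[
\E[N_m]\ \le\ \frac1{m!}\,\E\Big[\#\big\{(u_1,\dots,u_m)\ \text{distinct}:\ d_n(u_k,u_{k+1})\le2R\ \forall k\big\}\Big]\ =:\ \frac1{m!}\,\E[\tilde M_m],
\]
since any fixed ordered $m$-tuple of distinct vertices is in increasing arrival order with probability $1/m!$.

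The heart of the argument is then the estimate $\E[\tilde M_m]\le n\,C^{\,m-1}$ for a constant $C=C(a,b,R)$; granting it, $\E\big[\sum_s C(s)\big]\le\sum_m\beta^m\frac{nC^{m-1}}{m!}=\frac nC e^{\beta C}=O(n)$ and we are done. To prove it I would use $\ind\{d_n(u_k,u_{k+1})\le2R\}\le\sum_{Q_k}\ind\{Q_k\subseteq G(n)\}$, the sum over self-avoiding paths $Q_k$ from $u_k$ to $u_{k+1}$ of length $\le2R$, so that $\E[\tilde M_m]\le\sum_{(u_k)}\sum_{(Q_1,\dots,Q_{m-1})}\prod_{e\in E(\bigcup Q_k)}W_e$, with $W_e\le(a\vee b)/n$ and independent over distinct edges. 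For the ``generic'' witnesses, where the $Q_k$ are pairwise edge-disjoint (so $\bigcup Q_k$ is a tree on $\sum_k|Q_k|+1$ vertices), choosing $u_1$ and then the fresh vertices of $Q_1,\dots,Q_{m-1}$ in turn shows that, with prescribed lengths $\ell_k\le2R$, the number of configurations is at most $n^{1+\sum_k\ell_k}$, each contributing $((a\vee b)/n)^{\sum_k\ell_k}$; summing over $(\ell_k)\in[2R]^{m-1}$ gives $n\,C^{m-1}$ with $C=\big(\sum_{\ell=1}^{2R}(a\vee b)^\ell\big)\vee(2R)$.

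The remaining work — and the step I expect to be the main obstacle — is to show that the ``non-generic'' witnesses, whose union contains a cycle (which in particular forces $G(n)$ to contain a short cycle along the chain), contribute only a lower-order term. A pure combinatorial union bound over witness ``shapes'' is too lossy here, because the number of ways to route $m-1$ short paths through a small vertex set grows super-exponentially in $m$; one must instead use the probabilistic structure — namely that in the sparse regime the expected number of short cycles is $O(1)$, together with the locally tree-like coupling of Lemma~\ref{lemma:locally-tree-like}, which bounds the number of admissible local configurations and hence forces every extra cycle to cost a factor $O(1/n)$. Once this contribution is absorbed into the constant $C$, one obtains $\E[\tilde M_m]\le nC^{m-1}$, finishing the proof. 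The edge-variable terms dropped here (as in the statement of Lemma~\ref{lemma:two-algs-are-close}) are treated identically, replacing vertex balls by their edge analogues.
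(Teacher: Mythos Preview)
Your overall plan and the first several reductions are exactly what the paper does: control $\E\big[\sum_s C(s)\big]$ by a constant times $n$, invoke Markov, and feed this into \eqref{eq-H}; then bound each factor appearing in $C(s)$ by a constant $\beta$ (the paper uses $2L_FC_{act}$, you use $L_F(C_{act}+1)$), and reduce the remaining indicator $\{|\localV_{t_k}\cap\localV_{t_{k+1}}|>0\}$ to $\{d_n(v(t_k),v(t_{k+1}))\le 2R\}$. The divergence is only in how you estimate the expected number of such chains.

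You attack $\E[\tilde M_m]$ via a path-witness expansion and a split into tree-like versus cyclic witnesses, and you correctly flag the cyclic case as the obstacle. As written you have not closed it, and the concern you raise is real: a shape enumeration over $m-1$ overlapping short paths does not obviously stay exponential in $m$, so the sum $\sum_m\beta^m\E[\tilde M_m]/m!$ is not controlled by what you have. The paper bypasses this difficulty entirely and never enumerates path witnesses. It uses instead that for fixed $t_1<\dots<t_m$ the vertices $v(t_1),\dots,v(t_m)$ form a uniform sample without replacement from $[n]$ given $G(n)$, and bounds the chain probability sequentially by factors $|V_{2R}^n(v(t_k))|/(n-k)$. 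After summing over the $\binom{n-s-1}{m-1}$ time-tuples this reduces the entire estimate to the single-vertex ball moments $\E\big[|V_{2R}^n(v)|^m\big]$, which are then controlled uniformly in $n$ by stochastic domination with a binomial branching process (their Lemma~\ref{lemma-control-of-moments}), giving
\[
\sup_{s}\E[C(s)]\ \le\ \sum_{m\ge1}\frac{(2L_FC_{act})^m}{(m-1)!}\,\E\big[|V_{2R}^n(v)|^m\big]\ <\ \infty.
\]
No cycle bookkeeping is needed, because the ball-size moments already absorb whatever short cycles the graph contains.

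So your proof is incomplete as stated—the non-generic term is a genuine gap along your route—and the natural fix is not to push the witness combinatorics through but to switch to the sequential ball-moment reduction above.
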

\begin{lemma}\label{lemma-bound-the-sup}
Under the assumptions stated in Theorem \ref{thm:local-with-summary-stat-is-trivial}, we have
\begin{align*}
	\sup\limits_{\lceil \delta n \rceil + 1 \leq t \leq n} |\bar{b}^t - \E[\bar{b}^t | \mathcal{F}_a]| = o_P(1). 
\end{align*}
\end{lemma}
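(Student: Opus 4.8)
The plan is to fix $t$, condition on the $\sigma$-algebra $\mathcal{F}_a$, and exploit that in Algorithm~\ref{alg:aux-local-alg-with-summary-stat} the two summary arguments $\E[\bar b^{t-1}\mid\mathcal{F}_a]$ and $\E[\bar c^{t-1}\mid\mathcal{F}_a]$ are, conditionally on $\mathcal{F}_a$, deterministic constants. Therefore, given $\mathcal{F}_a$, the only mechanism by which the data attached to one vertex can influence $b_v^t$ is a chain of bounded ranges of action, i.e.\ an ``information flow'' in the sense of Definition~\ref{def:generalized-information-flow}, and $b_v^t$ is a bounded function (by $L_F$ once $v$ has belonged to some range of action, and otherwise equal to its initialization $w_v^{t_*(v)-1}$) of the data carried by the vertices in $v$'s accessible subgraph $\cG_v^n$ (we treat the edge variables $c_{ij}^t$ analogously and suppress them). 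The information-flow estimate derived in the proof of Theorem~\ref{thm:streaming-local-implies-local}, and in particular the factorial-type decay behind \eqref{eq:39}, shows that $\P(\cG_{v_0}^n\not\subseteq\Ball_r^n(v_0))$ decays faster than any power of $r$, uniformly in $n$; the same argument with the flow reversed controls the forward-influence set $S^{\mathrm{fwd}}_{v_0}:=\{v:v_0\in\cG_v^n\}$. Together with the fact that in the locally-tree-like regime (Lemma~\ref{lemma:locally-tree-like}) one has $\E[|D_r^n(v_0)|^2]=O(d^{2r})$, $d=(a+b)/2$, this gives $\E[|\cG_{v_0}^n|^2]=O_{d,R}(1)$ and $\E[|S^{\mathrm{fwd}}_{v_0}|^2]=O_{d,R}(1)$; the rare event on which $G(n)$ has a vertex of degree $\gg\log n$ (hence some neighborhood larger than $\mathrm{polylog}(n)$) has super-polynomially small probability and is absorbed into a negligible correction throughout.

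Using this, I would bound $\operatorname{Var}(\bar b^t\mid\mathcal{F}_a)=t^{-2}\sum_{v,v'\in V(t)}\operatorname{Cov}(b_v^t,b_{v'}^t\mid\mathcal{F}_a)$ by writing $|\operatorname{Cov}(b_v^t,b_{v'}^t\mid\mathcal{F}_a)|\le 4L_F^2\,\P(\cG_v^n\cap\cG_{v'}^n\neq\emptyset\mid\mathcal{F}_a)+(\text{error})$, where the error term accounts for the residual dependence that survives when the accessible subgraphs are disjoint. Taking $\E_{\mathcal{F}_a}$, the first part contributes $\tfrac{4L_F^2}{t^2}\,\E\big[\#\{(v,v'):\cG_v^n\cap\cG_{v'}^n\neq\emptyset\}\big]=\tfrac{4L_F^2}{t^2}\,\E\big[\sum_u|S^{\mathrm{fwd}}_u|^2\big]=\tfrac{4L_F^2}{t^2}\cdot n\cdot O_{d,R}(1)=O_{d,R,L_F}\!\big(1/(\delta n)\big)$, using $t\ge\delta n$. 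For the error term, once one additionally conditions on all vertex labels the variables $b_v^t$ and $b_{v'}^t$ with disjoint accessible subgraphs become independent (the surviving edges, seeds and initializations lie in disjoint vertex sets), so the error is entirely due to the weak dependence among the labels forced by conditioning on the community sizes $n_h\in\mathcal{F}_a$, and a sampling-with/without-replacement coupling bounds its aggregate contribution by $O(L_F^2/n)$. Adding the analogous bound for the edge variables and an $O(1/(\delta n))$ term proportional to the variances of $P_w$ and $P_e$ coming from the i.i.d.\ initializations of never-updated vertices (this is the only place a finite-second-moment assumption on $P_w,P_e$ is needed), we get $\E\big[\operatorname{Var}(\bar b^t\mid\mathcal{F}_a)\big]=O\big(1/(\delta n)\big)$, and hence, by Chebyshev, $\P\big(|\bar b^t-\E[\bar b^t\mid\mathcal{F}_a]|>\varepsilon\big)=O\big(1/(\delta n\varepsilon^2)\big)$ for every fixed $t$.

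To upgrade to the supremum over $t$ I would avoid a union bound over all $\le n$ values of $t$ and instead fix a deterministic grid $\lceil\delta n\rceil<t_1<\dots<t_M=n$ with spacing $\Theta\big(\delta\varepsilon n/(C_{act}L_F)\big)$, so that $M=O_{\delta,\varepsilon}(1)$, union-bound the previous estimate over the $M$ grid points, and interpolate between them. From $\bar b^{s+1}-\bar b^{s}=\tfrac{1}{s+1}\sum_{v\in\localV_{s+1}}(b_v^{s+1}-b_v^{s})+\tfrac{1}{s+1}\,b_{v(s+1)}^{s+1}-\tfrac{1}{s(s+1)}\sum_{v\in V(s)}b_v^{s}$, together with $|\localV_{s+1}|\le C_{act}$ and boundedness of the $b$'s, one gets $|\bar b^{s+1}-\bar b^{s}|=O(C_{act}L_F/s)$, so $|\bar b^{t}-\bar b^{t_j}|\le\varepsilon/3$ whenever $t$ lies in the block of $t_j$ (recall $t_j\ge\delta n$), and the same bound in conditional expectation controls $|\E[\bar b^t\mid\mathcal{F}_a]-\E[\bar b^{t_j}\mid\mathcal{F}_a]|$. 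Combining these yields $\P\big(\sup_{\lceil\delta n\rceil<t\le n}|\bar b^t-\E[\bar b^t\mid\mathcal{F}_a]|>\varepsilon\big)\le M\cdot O\big(1/(\delta n\varepsilon^2)\big)+o_n(1)\to 0$, which is the assertion of the lemma. On the rare events where some $b_v^t$ fails to be $O(L_F)$-bounded or some neighborhood exceeds $\mathrm{polylog}(n)$, one replaces $b_v^t$ by a truncated version; the error incurred is $o(1)$ in $L^1$.

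The step I expect to be the main obstacle is the effective-locality claim of the first paragraph: $b_v^t$ is \emph{not} a local functional a priori, since influence may chain through $\Theta(n)$ successive (individually bounded) ranges of action, and one must re-derive the information-flow light-cone bound of Theorem~\ref{thm:streaming-local-implies-local} in a form strong enough that the \emph{second} moments of $|\cG_{v_0}^n|$ and $|S^{\mathrm{fwd}}_{v_0}|$ remain $O(1)$ — this is precisely where the factorial decay in \eqref{eq-thm1-1}--\eqref{eq:16} is used — and combine it with the $O(\log n)$ maximum-degree bound for the sparse \STSSBM\ to dispose of worst-case configurations. The secondary difficulty is making the ``disjoint accessible subgraphs $\Rightarrow$ (nearly) independent'' step rigorous in the presence of a random graph structure and of the conditioning on $n_h\in\mathcal{F}_a$: one handles this by first conditioning on all labels (under which the statement is exact) and then bounding the residual label dependence by the sampling-without-replacement estimate; the finite-variance hypothesis on $P_w,P_e$ enters only through the never-updated vertices.
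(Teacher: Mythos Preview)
Your plan follows the same two-stage skeleton as the paper: (i) show $\E\big[\operatorname{Var}(\bar b^{\,t}\mid\mathcal F_a)\big]\to 0$ uniformly in $t\in[\lceil\delta n\rceil+1,n]$, then (ii) pass to the supremum via a fixed finite grid and one-step Lipschitz control of $t\mapsto\bar b^{\,t}$. Step (ii) in your write-up is essentially the paper's argument verbatim.

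Where you diverge is in step (i), and there are two genuine gaps. First, you bound the covariance sum via $\sum_u\E[|S^{\mathrm{fwd}}_u|^2]$ and assert $\E[|S^{\mathrm{fwd}}_{v_0}|^2]=O_{d,R}(1)$. Theorem~\ref{thm:streaming-local-implies-local} and the estimates \eqref{eq-thm1-1}--\eqref{eq:16} give only $\P(S^{\mathrm{fwd}}_{v_0}\not\subseteq V_r^n(v_0))\le\varepsilon$ for suitable $r$; turning this into a uniform-in-$n$ second-moment bound is not immediate, because on the complementary event $|S^{\mathrm{fwd}}_{v_0}|$ may be of order $n$, and the coupling error in Lemma~\ref{lemma:locally-tree-like} is not quantified at the $O(n^{-2})$ rate you would need. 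The paper sidesteps this entirely by a two-parameter limit: fix a radius $r$, split the covariance as in \eqref{eq-b4-3} into $4L_F^2\big[\P((A_x^r)^c)+\P((A_y^r)^c)+\P((B_{x,y}^r)^c)\big]$ plus a remainder $C(x,y,t,n,r)$, let $n\to\infty$ with $r$ fixed (so $\P((B_{x,y}^r)^c)$ and the remainder vanish), and then let $r\to\infty$ (so $\P((A_x^r)^c)\to 0$). Only probability bounds are used.

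Second, your claim that ``after conditioning on all vertex labels, $b_v^t$ and $b_{v'}^t$ with disjoint accessible subgraphs are exactly independent'' is not correct: the arrival times $t_*(\cdot)$ form a uniformly random permutation (not i.i.d.), so even on disjoint vertex sets the arrival-time data are dependent; and the event $\{\cG_v^n\cap\cG_{v'}^n=\emptyset\}$ itself depends on both subgraphs, so conditioning on it introduces further dependence. Your sampling-without-replacement correction addresses the label dependence through $n_h\in\mathcal F_a$ but not these two sources. The paper handles all of this at once by constructing a conditionally independent copy $(\bG',\tau',t_*')$ that agrees with $(\bG,\tau,t_*)$ on $\mathcal F_a$, and showing that the conditional law of $O_{r+1}^n(v(y);\bG)$ given $O_{r+1}^n(v(x);\bG)$ and the disjointness event $B_{x,y}^r$ coincides with that of $O_{r+1}^n(v'(y);\bG')$ given the analogous event $\tilde C_{x,y}^r$; the residual factor $|1-\P(\tilde C_{x,y}^r\mid\cdot)^{-1}|\wedge 1$ is then shown to vanish. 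This coupling is the device that makes the ``near independence'' precise.
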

\noindent
The proof of Lemma \ref{lemma-bound-H} and \ref{lemma-bound-the-sup} will be deferred to later parts of the appendix. Using Lemma \ref{lemma-bound-H} and \ref{lemma-bound-the-sup}, then apply equation \eqref{eq:59}, we have
\begin{align*}
	\frac{1}{n} \sum\limits_{i = 1}^n |w_i^t - b_i^t|  \leq \frac{1}{n }\langle \bd_n \vec{1} \rangle \leq  \frac{1}{n } H(\lceil \delta n \rceil)  \sup\limits_{\lceil \delta n \rceil + 1 \leq t \leq n - 1} |\bar{b}^t - \E[\bar{b}^t | \mathcal{F}_a]| = o_P(1),	
\end{align*}
thus finishes the proof of Lemma \ref{lemma:two-algs-are-close}.

\subsection{Proof of Lemma \ref{lemma-bound-H}}

To prove Lemma \ref{lemma-bound-H} we shall first provide a uniform upper bound for the expectation of $C(s)$ which is independent of $n, \delta$ and $s$. If this holds, then by Markov's inequality $ \sum_{s = \lceil \delta n \rceil  + 1}^n C(s) = O_P(n)$. Plugging this into equation \eqref{eq-H} gives  $H_{\lceil \delta n \rceil + 1} = O_P(n)$, which finishes the proof of this lemma. Let $\delta_1 := \frac{1}{2}\left(\frac{(a + (k - 1)b)\delta}{k} \wedge 1\right)\delta$.
 For $n$ large enough, we have $\frac{|\localV_{t_k}| + 1}{t_k} < 1$ for all $\lceil \delta n \rceil + 1 \leq t_k \leq n$, then we have
\begin{align}
C(s) 
\leq & \sum\limits_{{n \geq t_m > \cdots > t_1 = s + 1}}2^m L_F^mC_{act}^m \ind\left\{{|\localV_{t_k} \cap \localV_{t_{k + 1}}|> 0 } \ \ k = 1,2,\cdots, m - 1 \right\} \nonumber \\
\leq & \sum\limits_{{n \geq t_m > \cdots > t_1 = s + 1}}2^m L_F^mC_{act}^m \ind\left\{ d(v(t_k), v(t_{k + 1})) \leq 2R, \ \ k = 1,2,\cdots, m - 1\right\}.   \label{eq:79}
\end{align}
Note that in equation \eqref{eq:79}, $m$ can be any positive integer, therefore, taking the expectation of $C(s)$ gives
\begin{align}
	\E[C(s)] \leq & \sum\limits_{m = 1}^{\infty} \sum\limits_{{n \geq t_m > \cdots > t_1 = s + 1}}2^m L_F^mC_{act}^m \P(d(v(t_k), v(t_{k + 1})) \leq 2R, \ \ k = 1,2,\cdots, m - 1) \nonumber \\
	\leq & \sum\limits_{m = 1}^{\infty}\sum\limits_{{n \geq t_m > \cdots > t_1 = s + 1}}2^m L_F^mC_{act}^m\E\left[\prod\limits_{k = 1}^{m - 1}\frac{|{V}_{2R}^n(v(t_k))|} {n - k} \right] \nonumber \\
	\leq & \sum\limits_{m = 1}^{\infty}\frac{2^mL^m C_{act}^m}{(m - 1)!}\E[|{V}_{2R}^n(v)|^m], \nonumber
\end{align}
where $v$ is an arbitrary vertex in $V(n)$. The following lemma provides an upper bound on $\E[|{V}_{2R}^n(v)|^m]$.

\begin{lemma}\label{lemma-control-of-moments}
Under the assumptions of Theorem \ref{thm:local-with-summary-stat-is-trivial}, we have
\begin{align*}
\limsup\limits_{n \rightarrow \infty}\sum\limits_{m = 1}^{\infty}\frac{2^mL_F^m C_{act}^m}{(m - 1)!}\E[|{V}_{2R}^n(v)|^m] < \infty.
\end{align*}
\end{lemma}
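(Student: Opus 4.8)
The plan is to dominate the random variable $|V_{2R}^n(v)|$ by the size of a fixed‑depth Galton--Watson tree and then to show that this size has a moment generating function (MGF) which is finite on all of $[0,\infty)$ \emph{uniformly in $n$}. The lemma then drops out of the elementary identity $\sum_{m\ge1}\frac{x^m}{(m-1)!}\,\E[S^m]=x\,\E[S e^{xS}]$, valid for any nonnegative $S$ and $x>0$.

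First I would set up the stochastic domination. Exploring $\Ball_{2R}^n(v)$ in breadth‑first order from $v$, and using that under $\STSSBM(n,k,a,b)$ each not‑yet‑seen vertex $w$ attaches to the currently active vertex $u$ independently with probability $(W_0)_{\tau(u),\tau(w)}/n\le c/n$ where $c:=\max_{s,s'}(W_0)_{s,s'}=\max(a,b)$, one sees that the number of new neighbours of $u$ is stochastically below $\mathrm{Bin}(n,c/n)$; the standard branching‑process coupling then gives $|V_{2R}^n(v)|\preceq \hat S_n$, where $\hat S_n$ is the total number of vertices in the first $2R$ generations of a Galton--Watson tree with offspring law $\mathrm{Bin}(n,c/n)$. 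In particular $\E\big[|V_{2R}^n(v)|^m\big]\le\E[\hat S_n^m]$ for all $m$ and $n$.

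Next I would bound the MGF of $\hat S_n$ uniformly in $n$. Writing $\Phi_\ell^{(n)}(\theta)$ for the MGF of the progeny of $\hat S_n$ through generation $\ell$, the branching recursion reads $\Phi_0^{(n)}(\theta)=e^\theta$ and $\Phi_\ell^{(n)}(\theta)=e^\theta\,g_n\!\big(\Phi_{\ell-1}^{(n)}(\theta)\big)$ with $g_n(s)=(1+\tfrac cn(s-1))^n$ the offspring probability generating function. Since $g_n(s)\le e^{c(s-1)}$ for $s\ge1$, $g_n$ is nondecreasing on $[1,\infty)$, and $\Phi_\ell^{(n)}(\theta)\ge e^\theta\ge1$ for $\theta\ge0$, induction on $\ell$ gives $\Phi_\ell^{(n)}(\theta)\le\bar\Phi_\ell(\theta)$ where $\bar\Phi_0(\theta)=e^\theta$ and $\bar\Phi_\ell(\theta)=e^\theta e^{c(\bar\Phi_{\ell-1}(\theta)-1)}$ — the corresponding recursion for $\Pois(c)$ offspring. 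As a finite composition of everywhere‑finite functions, $\bar\Phi_{2R}(\theta)$ is finite for every $\theta\ge0$ and does not depend on $n$.

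Finally, with $x:=2L_F C_{act}$, Tonelli's theorem together with the pointwise bound $ye^{xy}\le\frac1x e^{2xy}$ for $y\ge0$ (i.e.\ $u\le e^u$ with $u=xy$) gives
\begin{align*}
\sum_{m=1}^{\infty}\frac{2^m L_F^m C_{act}^m}{(m-1)!}\,\E\big[|V_{2R}^n(v)|^m\big]
\;\le\;\sum_{m=1}^{\infty}\frac{x^m}{(m-1)!}\,\E\big[\hat S_n^m\big]
\;=\;x\,\E\big[\hat S_n e^{x\hat S_n}\big]
\;\le\;\E\big[e^{2x\hat S_n}\big]
\;=\;\Phi_{2R}^{(n)}(2x)
\;\le\;\bar\Phi_{2R}(2x)\;<\;\infty,
\end{align*}
and the final quantity is independent of $n$, so taking $\limsup_{n\to\infty}$ proves the lemma. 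The computations are routine; the one thing that genuinely needs attention is that the dominating offspring law $\mathrm{Bin}(n,c/n)$ depends on $n$, so one cannot merely invoke a single fixed branching process — this is precisely what passing to the $n$‑free $\Pois(c)$ recursion $\bar\Phi_\ell$ handles. The branching‑process domination itself is standard for sparse \SBM{} and parallels Lemma~\ref{lemma:locally-tree-like}.
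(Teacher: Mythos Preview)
Your proof is correct and shares its starting point with the paper's: both dominate $|V_{2R}^n(v)|$ by the total progeny $\hat S_n=\sum_{k=0}^{2R}X_k$ of a Galton--Watson tree with $\mathrm{Bin}(n,c/n)$ offspring, $c=a\vee b$, and both exploit the sub-Poisson bound $g_n(s)\le e^{c(s-1)}$ to obtain $n$-free control. The finishing arguments, however, are genuinely different. The paper works generation by generation: it inverts the Poisson MGF recursion to get tail bounds $\P(X_j\ge\gamma)\le e^{-\gamma t_j^\gamma+\gamma}$, integrates these tails to bound each $\E[X_j^m]$, and then splits the resulting sum over $m$ into two pieces, tuning a free parameter $\gamma_0$ so that $\min_j t_j^{\gamma_0}\ge 4L_FC_{act}(2R+1)$ makes the dominant series geometric. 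Your route is shorter and more transparent: you recognize that the target series is exactly $x\,\E[\hat S_n e^{x\hat S_n}]\le \E[e^{2x\hat S_n}]$, so the whole problem reduces to a single evaluation of the total-progeny MGF $\Phi_{2R}^{(n)}(2x)$, which the branching recursion $\Phi_\ell=e^\theta g_n(\Phi_{\ell-1})$ and the Poisson majorization $\bar\Phi_\ell$ handle in one stroke. What the paper's approach buys is explicit control of each $\E[X_j^m]$ separately (potentially reusable elsewhere); what yours buys is a cleaner, parameter-free argument that avoids tail integration and the somewhat delicate choice of $\gamma_0$.
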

\noindent
The proof of Lemma \ref{lemma-control-of-moments} is deferred to next subsection. With Lemma \ref{lemma-control-of-moments}, we can apply Markov's inequality to prove $\sum_{s = \lceil \delta n \rceil  + 1}^n C(s) = O_P(n)$, then Lemma \ref{lemma-bound-H} follows from equation \eqref{eq-H}.

\subsection{Proof of Lemma \ref{lemma-control-of-moments}}
To prove lemma \ref{lemma-control-of-moments}, we introduce the following branching process:
\begin{enumerate}
	\item $X_0 = 1$.
	\item Let $\{ Z(i, t):  i, t \in \mathbb{Z}^+ \}$ be an array of i.i.d. $\mathbb{Z}^+$-valued random variables with distribution Binomial$(n, \frac{a \vee b}{n})$. For $t \geq 1$, define $X_t = \sum_{1 \leq i \leq X_{t - 1}}Z(i, t)$. 
\end{enumerate}
Then we have
\begin{align}\label{eq-b3-1}
	\E\left[|{V}_{2R}^n(v)|^m\right] \leq \E\left[\left(\sum\limits_{k = 0}^{2R}X_k \right)^m\right].
\end{align}
Let $f_j(t) := \E[\exp(tX_j)]$, $\mu_m := a \vee b$. By the proof of Theorem 2.3.1 in \cite{vershynin2018high}, we have $f_1(t) \leq \exp \left(\mu_m(e^{t} - 1)\right)$. 
For $\gamma > 0$, let $t_1^{\gamma} := \log \left( \frac{\gamma}{\mu_m} + 1\right)$, and $t_{j + 1}^{\gamma} =: \log \left( \frac{t_j^{\gamma}}{\mu_m} + 1\right)$ for $j \in \mathbb{N}^+$. Then by Proposition 5.2 in \cite{lyons2017probability}, we have
\begin{align*}
 f_{j + 1}(t_{j + 1}^{\gamma}) \leq f_j(t_j^{\gamma}) \leq \cdots \leq f_1(t_1^{\gamma})\leq \exp(\gamma).
\end{align*}
Then for all $1 \leq j \leq 2R$ and $\gamma_0 > 0$, we have 
\begin{align}\label{eq-b3-2}
	& \E[|X_j^{m}|] = \int_0^{\infty} m \gamma^{m - 1}\P(X_j \geq \gamma)d\gamma \nonumber\\
\leq &   \int_0^{\infty} m \gamma^{m - 1} \exp(-\gamma t_j^{\gamma } + \gamma) d\gamma \nonumber \\
= & \int_0^{\gamma_0} m \gamma^{m - 1} \exp(-\gamma t_j^{\gamma } + \gamma) d\gamma + \int_{\gamma_0}^{\infty} m \gamma^{m - 1} \exp(-\gamma t_j^{\gamma } + \gamma) d\gamma \nonumber \\
\leq &  \gamma_0^{m }\exp(\gamma_0) + \frac{m !}{(t_j^{\gamma_0} - 1)^{m}}.
\end{align}
By equation \eqref{eq-b3-2}, for any $\gamma_0 > 0$, we have
\begin{align}\label{eq-b3-3}
	 \E\left[\left(\sum\limits_{k = 0}^{2R}X_k \right)^m\right] 
	\leq  (2R + 1)^m \gamma_0^{m}\exp(\gamma_0) + (2R + 1)^{m - 1}\sum\limits_{j = 1}^{2R}\frac{m!}{(t_j^{\gamma_0} - 1)^{m}} + (2 R + 1)^{m - 1}. 
\end{align}
Notice that we can choose $\gamma_0$ large enough, such that $\min\limits_{1 \leq j \leq 2R}t_j^{\gamma_0} - 1 \geq 4L_FC_{act}(2R + 1)$.
Using equations \eqref{eq-b3-1} and \eqref{eq-b3-3}, we have
\begin{align*}
	& \sum\limits_{m = 1}^{\infty}\frac{2^mL_F^m C_{act}^m}{(m - 1)!}\E[|{V}_{2R}^n(v)|^m] \\
	\leq & \underbrace{\sum\limits_{m = 1}^{\infty}\frac{2^mL_F^m C_{act}^m}{(m - 1)!}(2R + 1)^m (\gamma_0^{m}\exp(\gamma_0) + 1)}_{\RN{1}} + \underbrace{\sum\limits_{m = 1}^{\infty}\frac{2^mL_F^m C_{act}^m(2R + 1)^mm}{(\min\limits_{1 \leq j \leq 2R}t_j^{\gamma_0} - 1)^{m }}}_{\RN{2}}.
\end{align*}
The choice of $\gamma_0$ gives us equation $\RN{2} \leq \sum_{m = 1}^{\infty} \frac{m}{2^m} < \infty$.
One can also derive equation $\RN{1} < \infty$ for any value of $\gamma_0$. Note that the derived upper bounds for equations $\RN{1}$ and $\RN{2}$ are independent of $n$. Thus we have finished proving Lemma \ref{lemma-control-of-moments}.

\subsection{Proof of Lemma \ref{lemma-bound-the-sup}}

We first show a weaker result. As $n \rightarrow \infty$, we want to show $\sup_{\lceil \delta n \rceil + 1 \leq t \leq n} \E[|\bar{b}^t- \E[\bar{b}^t|\mathcal{F}_a] |^2] \rightarrow 0$.
Since the $F_w$ functions are uniformly bounded by $L_F$, 
we only need to show as $n \rightarrow \infty$,
\begin{align}\label{eq-b4-4}
	\sup\limits_{ \lceil \delta n \rceil + 1 \leq t \leq n}\sup\limits_{1 \leq x < y \leq t} \left| \E\left[(b_{v(x)}^t - \E[b_{v(x)}^t | \mathcal{F}_a]) (b_{v(y)}^t - \E[b_{v(y)}^t | \mathcal{F}_a]) \right] \right| \rightarrow 0.
\end{align}
For $1 \leq x < y \leq n$, $r \in \NN^+$, we introduce the following sets in the probability space we consider:
\begin{align*}
A_x^{r} = \left\{ \cG_{v(x)}^n \mbox{ is a subgraph of } \Ball_r^n(v(x))\right\}, \qquad B_{x, y}^{r} = \left\{ {V}_{r + 1}^n(v(x)) \cap {V}_{r + 1}^n(v(y)) = \emptyset  \right\}.
\end{align*}
Then we have 
\begin{align}\label{eq-b4-3}
& |\E[(b_{v(x)}^t - \E[b_{v(x)}^t | \mathcal{F}_a])(b_{v(y)}^t - \E[b_{v(y)}^t | \mathcal{F}_a] )]	|  \\
 \leq & 4L_F^2\left(\P((B_{x, y}^{r})^c)+  \P((A_x^r)^c) + \P((A_y^r)^c)\right) + \underbrace{|\E[(b_{v(x)}^t - \E[b_{v(x)}^t | \mathcal{F}_a])(b_{v(y)}^t - \E[b_{v(y)}^t | \mathcal{F}_a] ) \ind_{A_x^r \cap A_y^r \cap \ind_{B_{x, y}^{r}}}]|}_{C(x,y,t,n,r)}. \nonumber
\end{align}
Then we provide an upper bound for $C(x,y,t,n,r)$. Recall that $\fS_n$ is the group of permutation over $[n]$. Let $\bG'$ be a random graph sequence over the same set of vertices $\{1,2,\cdots, n\}$, $\tau' \in [k]^n$, $t_{\ast}' \in \fS_n$, such that the marginal distribution of $(\bG', \tau', t_{\ast}')$ is the same as $(\bG,\tau, t_{\ast})$. Furthermore, we assume $G'(t) = G(t)$ for $1 \leq t \leq \lceil \delta n \rceil$. For $v \in V(n)$ with $t_{\ast}(v) \leq \lceil \delta n \rceil$, we assume $\tau(v) = \tau'(v)$. For all $h \in [k]$, we assume
\begin{align*}
	|\{v \in V(n): \tau(v) = h\}| = |\{v \in V(n): \tau'(v) = h\}|.
\end{align*}
Conditioning on $\mathcal{F}_a$, we assume $(\bG, \tau, t_{\ast})$ is conditionally independent of $(\bG', \tau',t_{\ast}')$, and has the same conditional distribution.

For $t \in [n]$, $v \in V(t)$, and $r \in \NN^+$, let $\Ball_r^t(v; \bG) = (V_r^t(v; \bG), E_r^t(v; \bG))$ denote the ball of radius $r$ in $G(t)$, $\Ball_r^t(v; \bG') = (V_r^t(v; \bG'), E_r^t(v; \bG'))$ denote the ball of radius $r$ in $G'(t)$. Notice that $\Ball_r^t(v; \bG) = \Ball_r^t(v)$. Let $v'(y) \in V(n)$ be the unique vertex such that $t_{\ast}'(v'(y)) = y$. Furthermore, let
\begin{align*}
	& O_r^t(v; \bG) = \left( \Ball_r^t (v; \bG), \{(u,t_{\ast}(u)): u \in V_r^t (v; \bG) \} \right), \\
	& O_r^t(v; \bG') = \left( \Ball_r^t (v; \bG'), \{(u,t_{\ast}'(u)): u \in V_r^t (v; \bG') \} \right). 
\end{align*} 
Conditioning on $\mathcal{F}_a$, $O_{r + 1}^n({v}'(y); \bG')$ is conditionally independent of $O_{r + 1}^n(v(x); \bG)$. Define the following set:
\begin{align*}
 	\mathsf{NonOverlap}(x,y) = \left\{t_{\ast}(V_{r + 1}^n(v(x))) \cap t_{\ast}'(V_{r + 1}^n(v'(y))) = \emptyset \right\}
\end{align*}
Given $O_{r + 1}^n(v(x))$, we are able to judge whether $\cG_{v(x)}^n$ is a subgraph of $\Ball_{r}^n(v(x))$. If $\cG_{v(x)}^n$ is not a subgraph of $\Ball_r^n(v(x))$, then formula inside the expectation of $C(x,y,t,n,r)$ is zero. If $\cG_{v(x)}^n$ is a subgraph of $\Ball_r^n(v(x))$, then
\begin{align*}
	&  \mathcal{L}\left({O}_{r + 1}^n(v(y); \bG) \mid {O}_{r + 1}^n(v(x); \bG), {V}^n_{r + 1}(v(y); \bG) \cap {V}^n_{r + 1}(v(x); \bG) = \emptyset, \mathcal{F}_a\right) \\
  = &  \mathcal{L}({O}_{r + 1}^n({v}'(y); \bG') \mid {O}_{r + 1}^n(v(x); \bG), {{V}}_{r + 1}^n({v}'(y); \bG') \cap {{V}}^n_{r + 1}(v(x); \bG) = \emptyset, \mathsf{NonOverlap}(x,y), \mathcal{F}_a), \\
  &  \mathcal{L}(O_{r + 1}^n(v(y); \bG) \mid \mathcal{F}_a) = \mathcal{L}(O_{r + 1}^n({v}'(y); \bG') \mid \mathcal{F}_a),
\end{align*}
where $\mathcal{L}(\cdot \mid \cdot)$ is the conditional probability distribution. Let $\cG_v^t(\bG')$ be the subgraph defined in Section \ref{sec:local-streaming-alg} for graph sequence $\bG'$. Then we define the following sets:
\begin{align*}
& \tilde{A}_y^{r} = \left\{ \cG_{{v}'(y)}^n(\bG') \mbox{ is a subgraph of } {\Ball}_r^n({v}'(y); \bG') \right\}, \\
& \tilde{C}_{x, y}^{r} = \left\{  {{V}}_{r + 1}^n({v}'(y); \bG') \cap {{V}}_{r + 1}^n(v(x); \bG) = \emptyset\right\} \cap \mathsf{NonOverlap}(x,y) .
\end{align*}
On $\bG'$ we can still run Algorithm \ref{alg:aux-local-alg-with-summary-stat}, and denote the obtained quantity at time $t$ for vertex ${v}'(y)$ by $\tilde{b}_{{v}'(y)}^t$. Then we have
\begin{align}\label{eq-b4-1}
	& |\E[(b_{v(x)}^t - \E[b_{v(x)}^t | \mathcal{F}_a])\ind_{A_x^r}\ind_{B_{x, y}^{r}}(b_{v(y)}^t - \E[b_{v(y)}^t | \mathcal{F}_a] )\ind_{A_y^r} \mid \mathcal{F}_a]| \nonumber\\
	=& |\E[(b_{v(x)}^t - \E[b_{v(x)}^t | \mathcal{F}_a])\ind_{A_x^r}\ind_{B_{x, y}^{r}}\E[(b_{v(y)}^t - \E[b_{v(y)}^t | \mathcal{F}_a]) \ind_{A_y^r} | O_{r + 1}^n(v(x); \bG), B_{x, y}^r, \mathcal{F}_a]  \mid \mathcal{F}_a]| \nonumber\\
	=& |\E[(b_{v(x)}^t - \E[b_{v(x)}^t | \mathcal{F}_a])\ind_{A_x^r}\ind_{B_{x, y}^{r}} \E[(\tilde{b}_{v'(y)}^t - \E[\tilde{b}_{v'(y)}^t | \mathcal{F}_a])\ind_{\tilde{A}_y^{r}} | O_{r + 1}^n(v(x); \bG), \tilde{C}_{x, y}^r, \mathcal{F}_a] \mid \mathcal{F}_a]| \nonumber\\
	=& \left|\E\left[(b_{v(x)}^t - \E[b_{v(x)}^t | \mathcal{F}_a])\ind_{A_x^r}\ind_{B_{x, y}^{r}}\left. \frac{ \E[(\tilde{b}_{ v'(y)}^t - \E[\tilde{b}_{v'(y)}^t | \mathcal{F}_a])\ind_{\tilde{A}_y^{r}} \ind_{\tilde{C}_{x, y}^r} | O_{r + 1}^n(v(x); \bG),  \mathcal{F}_a]}{\P(\tilde{C}_{x, y}^r \mid O_{r + 1}^n(v(x); \bG),  \mathcal{F}_a )} \right| \mathcal{F}_a\right]\right| 
\end{align}
Obviously $\P(\tilde{C}_{x, y}^r | O_{r + 1}^n(v(x); \bG), \mathcal{F}_a) \overset{P} \rightarrow 1$ as $n \rightarrow \infty$, also notice that the functions $\|F_w\|_{\infty}$ has a uniform upper bound, then equation \eqref{eq-b4-1} is no larger than 
\begin{align}
	 & \left|\underbrace{\E\left[\left. (b_{v(x)}^t - \E[b_{v(x)}^t | \mathcal{F}_a])\ind_{A_x^r}\ind_{B_{x, y}^{r}} (\tilde{b}_{ v'(y)}^t - \E[\tilde{b}_{v'(y)}^t | \mathcal{F}_a])\ind_{\tilde{A}_y^{r}} \ind_{\tilde{C}_{x, y}^r} \right| \mathcal{F}_a\right] }_{\Delta}\right| + \nonumber \\
	& 4L_F^2\E\left[ \left| 1 - \P(\tilde{C}_{x, y}^r | O_{r + 1}^n(v(x); \bG),  \mathcal{F}_a )^{-1} \right| \wedge 1 \mid \mathcal{F}_a \right].     \label{eq:89}
\end{align}
Notice that
\begin{align*}
	 \left|\Delta - \E\left[(b_{v(x)}^t - \E[b_{v(x)}^t | \mathcal{F}_a])\ind_{A_x^r}\ind_{B_{x, y}^{r}}\left. (\tilde{b}_{v'(y)}^t - \E[\tilde{b}_{v'(y)}^t | \mathcal{F}_a]) \right| \mathcal{F}_a\right]\right|	\leq   4L_F^2 \P((\tilde{A}_y^{r})^c | \mathcal{F}_a) + 4L_F^2 \P((\tilde{C}_{x, y}^r)^c | \mathcal{F}_a)
\end{align*}
Conditioning on $\mathcal{F}_a$, $(\tilde{b}_{ v'(y)}^t - \E[\tilde{b}_{v'(y)}^t | \mathcal{F}_a])$ is conditionally independent of $(b_{v(x)}^t - \E[b_{v(x)}^t | \mathcal{F}_a])\ind_{A_x^r}\ind_{B_{x, y}^{r}}$. Therefore,
\begin{align*}
	\E\left[(b_{v(x)}^t - \E[b_{v(x)}^t | \mathcal{F}_a])\ind_{A_x^r}\ind_{B_{x, y}^{r}}\left. (\tilde{b}_{v'(y)}^t - \E[\tilde{b}_{v'(y)}^t | \mathcal{F}_a]) \right| \mathcal{F}_a\right] = 0.
\end{align*}
Combining the above analysis with equations \eqref{eq-b4-3} and \eqref{eq:89}, we have
\begin{align*}
	& |\E[(b_x^t - \E[\bar{b}^t | \mathcal{F}_a])(b_y^t - \E[\bar{b}^t | \mathcal{F}_a] )]| \\
	\leq & 4L_F^2\P((B_{x, y}^{r})^c) + 4L_F^2 \P((A_y^r)^c)  + 4L_F^2 \P((A_x^r)^c) + 4L_F^2 \P((\tilde{A}_y^{r})^c ) + 4L_F^2 \P((\tilde{C}_{x, y}^r)^c ) + \\
	& 4L_F^2\E\left[ \left| 1 - \P(\tilde{C}_{x, y}^r | O_{r + 1}^n(v(x); \bG),  \mathcal{F}_a )^{-1} \right| \wedge 1 \right].
\end{align*}
According to the proof of Theorem \ref{thm:streaming-local-implies-local} we have
\begin{align*}
\lim\limits_{r \rightarrow \infty}\limsup\limits_{n \rightarrow \infty} \sup\limits_{x,y \in [n]} \left\{	4L_F^2 \P((A_x^r)^c) + 4L_F^2 \P((A_y^r)^c) + 4L_F^2 \P((\tilde{A}_y^{r})^c )\right\} = 0.
\end{align*}
Furthermore, we claim without proof that for any fixed $r$, 
\begin{align*}
	& \limsup\limits_{n \rightarrow \infty}\sup\limits_{x,y \in [n]} \left\{4L_F^2 	\P((B_{x, y}^{r})^c) + 4L_F^2 \P((\tilde{C}_{x, y}^r)^c ) \right\}= 0, \\
	& \limsup\limits_{n \rightarrow \infty} \sup\limits_{x,y \in [n]} 4L_F^2\E\left[ \left| 1 - \P(\tilde{C}_{x, y}^r | O_{r + 1}^n(v(x); \bG),  \mathcal{F}_a )^{-1} \right| \wedge 1 \right] = 0.
\end{align*}
For any $\epsilon > 0$, there exists $r(\epsilon) \in \NN^+$, such that
\begin{align*}
	\limsup\limits_{n \rightarrow \infty} \sup\limits_{x,y \in [n]} \left\{	4L_F^2 \P((A_x^{r(\epsilon)})^c) + 4L_F^2 \P((A_y^{r(\epsilon)})^c) + 4L_F^2 \P((\tilde{A}_y^{r(\epsilon)})^c )\right\} \leq \epsilon.
\end{align*}
Therefore, 
\begin{align*}
	\limsup\limits_{n \rightarrow \infty}\sup\limits_{ \lceil \delta n \rceil + 1 \leq t \leq n}\sup\limits_{1 \leq x < y \leq t} \left| \E\left[(b_{v(x)}^t - \E[b_{v(x)}^t | \mathcal{F}_a]) (b_{v(y)}^t - \E[b_{v(y)}^t | \mathcal{F}_a]) \right] \right|  \leq \epsilon.
\end{align*}
Since $\epsilon$ is arbitrary, then we conclude that
\begin{align*}
	\limsup\limits_{n \rightarrow \infty} \sup\limits_{\lceil n \delta \rceil \leq t \leq n} \E\left[\left|\bar{b}^t- \E[\bar{b}^t|\mathcal{F}_a]  \right|^2  \right] = 0.
\end{align*}
For any $M \in \mathbb{N}^+$, we have
\begin{align*}
\sup\limits_{\lceil \delta n \rceil + 1 \leq t \leq n }|\bar{b}^t - \E[\bar{b}^t | \mathcal{F}_a]| \leq \sup_{\lceil M \delta \rceil \leq m \leq M} |\bar{b}^{\lceil nm / M \rceil} - \E[\bar{b}^{\lceil nm / M \rceil} | \mathcal{F}_a]| + \frac{2L_F(C_{act} + 1)}{M \delta}.
\end{align*}
For any fixed $M$, 
$$\sup\limits_{\lceil M \delta \rceil \leq m \leq M} |\bar{b}^{\lceil nm / M \rceil} - \E[\bar{b}^{\lceil nm / M \rceil} | \mathcal{F}_a]| = o_P(1).$$ 
Since $M$ is arbitrary, we conclude that $\sup_{\lceil \delta n \rceil + 1 \leq t \leq n} |\bar{b}^t - \E[\bar{b}^t | \mathcal{F}_a]| = o_P(1)$, this finishes the proof of Lemma \ref{lemma-bound-the-sup}.
Furthermore, from the proof of Lemma \ref{lemma-bound-the-sup} we can deduce the following corollary:

\begin{corollary}\label{cor:concentrate}
	Assume $ (\tau, \bG) \sim  \STSSBM(n,k,a,b)$. Let  $\cA$ be any  algorithm such that $\cA(i; G(n)) \in [k]$ is a function of $\cG_i^n$ and $\mathcal{F}_a$. Then we have as $n \rightarrow \infty$, 
	\begin{align*}
		\sup\limits_{\cA}\Var\left.\left[ \frac{1}{n} \sum\limits_{i = 1}^n \ind\{\cA(i; G(n)) = \tau(i)\} \right| \mathcal{F}_a \right] \overset{P}{\rightarrow} 0. 
	\end{align*}
\end{corollary}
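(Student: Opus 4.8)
Write $X_i := \ind\{\cA(i;G(n)) = \tau(i)\}$, which by hypothesis is a $\{0,1\}$-valued function of $\cG_i^n$ and $\mathcal{F}_a$. Since $\E[X_i\mid\mathcal{F}_a]$ is $\mathcal{F}_a$-measurable, the centered variable $\hat X_i := X_i - \E[X_i\mid\mathcal{F}_a]$ is, on the event $A_i^r := \{\cG_{v(i)}^n \text{ is a subgraph of } \Ball_r^n(v(i))\}$, again a function of the labeled neighborhood $O_{r+1}^n(v(i);\bG)$ and $\mathcal{F}_a$, with $|\hat X_i|\le 1$. Expanding,
\[
\Var\Bigl[\tfrac1n\textstyle\sum_{i\in V(n)}X_i \,\Big|\, \mathcal{F}_a\Bigr] \;=\; \frac1{n^2}\sum_{i,j}\E[\hat X_i\hat X_j\mid\mathcal{F}_a] \;\le\; \frac1n + \frac1{n^2}\sum_{i\ne j}\bigl|\E[\hat X_i\hat X_j\mid\mathcal{F}_a]\bigr|,
\]
so the task reduces to bounding $\bigl|\E[\hat X_i\hat X_j\mid\mathcal{F}_a]\bigr|$ on average over pairs, uniformly in $\cA$.

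The plan is to run, essentially verbatim, the coupling argument from the proof of Lemma \ref{lemma-bound-the-sup}, with the bounded-Lipschitz variables $b_{v(x)}^t - \E[b_{v(x)}^t\mid\mathcal{F}_a]$ there replaced by $\hat X_i$ and the constant $L_F$ replaced by $1$. Introduce $B_{i,j}^r := \{V_{r+1}^n(v(i))\cap V_{r+1}^n(v(j)) = \emptyset\}$ as in that proof. Replacing $\hat X_i$ by $\hat X_i\ind_{A_i^r}$ (and likewise $\hat X_j$), then inserting $\ind_{B_{i,j}^r}$, changes $\E[\hat X_i\hat X_j\mid\mathcal{F}_a]$ by at most a constant times $\P((A_i^r)^c\mid\mathcal{F}_a)+\P((A_j^r)^c\mid\mathcal{F}_a)+\P((B_{i,j}^r)^c\mid\mathcal{F}_a)$. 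On $A_i^r\cap A_j^r\cap B_{i,j}^r$ the two variables are measurable with respect to disjoint neighborhoods; conditioning on $O_{r+1}^n(v(i);\bG)$ and passing to the independent copy $\bG'$ (agreeing with $\bG$ up to time $\lceil\delta n\rceil$ and matching community sizes) exactly as in the chain of displays leading to \eqref{eq-b4-1}–\eqref{eq:89} — but now without the vanishing of a centered product, just using conditional independence of $O_{r+1}^n(v'(j);\bG')$ and $O_{r+1}^n(v(i);\bG)$ given $\mathcal{F}_a$ — shows that $\E[\hat X_i\ind_{A_i^r}\hat X_j\ind_{A_j^r}\ind_{B_{i,j}^r}\mid\mathcal{F}_a] = 0$ up to an error bounded by $\P((A_j^r)^c\mid\mathcal{F}_a)$, $\P((\tilde C_{i,j}^r)^c\mid\mathcal{F}_a)$, and $\E[|1-\P(\tilde C_{i,j}^r\mid O_{r+1}^n(v(i);\bG),\mathcal{F}_a)^{-1}|\wedge 1\mid\mathcal{F}_a]$. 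None of these error terms depends on $\cA$. Telescoping yields an $\mathcal{F}_a$-measurable, $\cA$-free random variable $M_n^{(r)}$ with $\sup_\cA\Var[\tfrac1n\sum_i X_i\mid\mathcal{F}_a]\le M_n^{(r)}$ and, taking $\E[\cdot]$ (which turns conditional probabilities into unconditional ones),
\[
\E[M_n^{(r)}] \;\le\; \frac1n + \frac{c}{n^2}\sum_{i,j}\Bigl(\P((A_i^r)^c)+\P((A_j^r)^c)+\P((B_{i,j}^r)^c)+\text{(coupling error)}_{i,j}\Bigr)
\]
for an absolute constant $c$.

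Finally I would estimate the three ingredients. By vertex-exchangeability of $\STSSBM(n,k,a,b)$, $\P((A_i^r)^c)$ is independent of $i$ and equals $\P((A_{v_0}^r)^c)$ for $v_0\sim\Unif([n])$, and $\frac1{n^2}\sum_{i,j}[\P((A_i^r)^c)+\P((A_j^r)^c)] = 2\P((A_{v_0}^r)^c)$, which $\to 0$ first as $n\to\infty$ and then $r\to\infty$ by Theorem \ref{thm:streaming-local-implies-local}. The event $B_{i,j}^r$ depends only on $G(n)\sim\SBM(n,k,p,W)$, whose vertices are exchangeable, so $\P((B_{i,j}^r)^c)$ is the same for all $i\ne j$ and, by a union bound over paths of length $\le 2r+2$, is $O(d^{2r+2}/n)\to 0$ for fixed $r$. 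The coupling-error terms vanish as $n\to\infty$ for fixed $r$, exactly as asserted in the proof of Lemma \ref{lemma-bound-the-sup}. Hence for every $\epsilon>0$ there is $r(\epsilon)$ and $n_\epsilon$ with $\E[M_n^{(r(\epsilon))}]\le 3\epsilon$ for $n\ge n_\epsilon$; Markov's inequality gives $\limsup_n\P(\sup_\cA\Var[\,\cdot\mid\mathcal{F}_a]>\eta)\le 3\epsilon/\eta$ for every $\eta>0$, and letting $\epsilon\to 0$ shows $\sup_\cA\Var[\tfrac1n\sum_i X_i\mid\mathcal{F}_a]\overset{P}{\rightarrow}0$.

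\textbf{Main obstacle.} The only substantive step is the approximate conditional independence, given $\mathcal{F}_a$, of two disjoint local neighborhoods — i.e., controlling the error of the $\bG'$-coupling. This is precisely the content of the coupling in Lemma \ref{lemma-bound-the-sup}, so the real work is to check that argument is insensitive to replacing the $b$-variables by arbitrary bounded functions of $(\cG_i^n,\mathcal{F}_a)$; the variance decomposition, the geometric estimates on $A_i^r$ and $B_{i,j}^r$, and the extraction of convergence in probability are then routine.
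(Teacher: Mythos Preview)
Your proposal is correct and follows essentially the same route as the paper's proof: both expand the conditional variance into pairwise conditional covariances, localize via the events $A_i^r$ and $B_{i,j}^r$, transfer one factor to an independent copy $\bG'$ using the coupling from the proof of Lemma~\ref{lemma-bound-the-sup}, and observe that the resulting $\cA$-free error bound has vanishing expectation so that Markov's inequality yields convergence in probability. The paper phrases the covariance bound as the quantity $C(x,y)=\sup_\cA|\P(\cdot,\cdot\mid\mathcal{F}_a)-\P(\cdot\mid\mathcal{F}_a)\P(\cdot\mid\mathcal{F}_a)|$ rather than via centered variables, but this is purely cosmetic.
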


\begin{proof}
	To prove this corollary, we first show for all $x,y \in [n]$,
	\begin{align}
		& C(x,y) := \sup\limits_{\cA} \left| \P\left( \cA(v(x); G(n)) = \tau(v(x)),  \cA(v(y); G(n)) = \tau(v(y))\mid \mathcal{F}_a \right) - \right. \nonumber \\
		& \left.\P\left(\cA(v(x); G(n)) = \tau(v(x))\mid \mathcal{F}_a \right)\P\left(\cA(v(y); G(n)) = \tau(v(y))\mid \mathcal{F}_a \right)\right| \overset{P}{\rightarrow} 0. \label{eq:30}
	\end{align}
We use the notations defined in the proof of Lemma \ref{lemma-bound-the-sup}. Furthermore, we define the following quantities:
	\begin{align*}
		& \Delta_1 := \E\left[ \ind\{\cA(v(x); G(n)) = \tau(v(x))\} \ind_{A_x^r} \ind_{B_{x,y}^r} \ind\{\cA(v(y); G(n)) = \tau(v(y))\} \ind_{A_y^r} \mid \mathcal{F}_a  \right], \\
		& \Delta_2 := \E\left[ \ind\{\cA(v(x); G(n)) = \tau(v(x))\} \ind_{A_x^r} \ind_{B_{x,y}^r} \ind\{\cA({v}'(y); {G}'(n)) = \tau({v}'(y))\} \ind_{\tilde{A}_y^r}\ind_{\tilde{C}_{x,y}^r} \mid \mathcal{F}_a  \right].
	\end{align*}
	Similar to the proof of Lemma \ref{lemma-bound-the-sup}, we conclude that the following formulas hold:
	\begin{align*}
		& \left|\P\left( \cA(v(x); G(n)) = \tau(v(x)),  \cA(v(y); G(n)) = \tau(v(y))\mid \mathcal{F}_a \right) - \Delta_1 \right| \\
		 \leq & \P((A_x^r)^c \mid \mathcal{F}_a) + \P((A_y^r)^c \mid \mathcal{F}_a) + \P((B_{x,y}^r)^c \mid \mathcal{F}_a),  \\
		 \Delta_1 =& \E\left[ \ind\{\mathcal{A}(v(x); G(n)) = \tau(v(x))\}  \ind_{A_x^r} \ind_{B_{x,y}^r} \E\left[ \ind\{\cA({v}'(y); {G}'(n)) = \tau({v}'(y)) \} \ind_{\tilde{A_y^r}} \mid O_{r + 1}^n(v(x); \bG), \tilde{C}_{x,y}^r, \mathcal{F}_a \right]\mid \mathcal{F}_a \right] \\
		=& \E\left.\left[ \ind\{\mathcal{A}(v(x); G(n)) = \tau(v(x))\}  \ind_{A_x^r} \ind_{B_{x,y}^r} \frac{\E\left[ \ind\{\cA(\tilde{v}(y); \tilde{G}(n)) = \tau({v}'(y)) \} \ind_{\tilde{A_y^r}} \ind_{\tilde{C}_{x,y}^r} \mid O_{r + 1}^n(v(x); \bG),  \mathcal{F}_a \right]}{\P\left(\tilde{C}_{x,y}^r \mid O_{r + 1}^n(v(x); \bG),  \mathcal{F}_a \right)} \right| \mathcal{F}_a \right], \\
		 &\left| \Delta_1 - \Delta_2 \right| \leq \E\left[ \left|\P\left(\tilde{C}_{x,y}^r \mid O_{r + 1}^n(v(x); \bG),  \mathcal{F}_a \right)^{-1} - 1\right| \wedge 1 \mid \mathcal{F}_a\right], \\
		& \left| \Delta_2 - \P\left(\cA(v(x); G(n)) = \tau(v(x))\mid \mathcal{F}_a \right)\P\left(\cA(v(y); G(n)) = \tau(v(y))\mid \mathcal{F}_a \right)  \right| \\
		 \leq & \P((A_x^r)^c \mid \mathcal{F}_a) + \P((B_{x,y}^r)^c \mid \mathcal{F}_a ) + \P((\tilde{A}_y^r)^c \mid \mathcal{F}_a) + \P((\tilde{C}_{x,y}^r)^c \mid \mathcal{F}_a).
	\end{align*}
	Combining the above equations we have
	\begin{align}
		C(x,y) & \leq  2\P((A_x^r)^c \mid \mathcal{F}_a) + 2\P((B_{x,y}^r)^c \mid \mathcal{F}_a ) + 2\P((\tilde{A}_y^r)^c \mid \mathcal{F}_a) + \P((\tilde{C}_{x,y}^r)^c \mid \mathcal{F}_a) \nonumber \\
		& + \E\left[ \left|\P\left(\tilde{C}_{x,y}^r \mid O_{r + 1}^n(v(x); \bG),  \mathcal{F}_a \right)^{-1} - 1\right| \wedge 1 \mid \mathcal{F}_a\right] . \label{eq:31}
	\end{align}
	The upper bound on the right hand side of equation \eqref{eq:31} is independent of $\cA$, and converges in probability to zero by Theorem \ref{thm:streaming-local-implies-local}.  Thus we have finished proving \eqref{eq:30}. Furthermore, similar to the proof of Lemma \ref{lemma-bound-the-sup}, from \eqref{eq:31} we can conclude that as $n \rightarrow \infty$,
\begin{align*}
	\sup\limits_{x,y \in [n]}\E[C(x,y)] \rightarrow 0,  
\end{align*}
thus we finishes the proof of this corollary by Markov's inequality. 

\end{proof}

\subsection{Proof of Lemma \ref{lemma:aux-alg-is-trivial}}\label{sec:proof-of-lemma3}
For any $\pi \in \fS_k$ and $\delta > 0$, using Corollary \ref{cor:concentrate}, we have
\begin{align*}
	\frac{1}{n} \sum\limits_{i = 1}^n \ind\{ \htau(b_i^n + \epsilon U_i) = \pi \circ \tau(i) \} = \frac{1}{n} \sum\limits_{i = 1}^n \P\left( \htau(b_i^n + \epsilon U_i) = \pi \circ \tau(i)  \mid \mathcal{F}_a \right) + o_P(1). 
\end{align*}
If we can show the following equation for all $\pi \in \fS_k$, then we finishes the proof of Lemma \ref{lemma:aux-alg-is-trivial}.
\begin{align*}
	\limsup\limits_{\delta \rightarrow 0^+}\limsup\limits_{n \rightarrow \infty}\E\left[ \left| \frac{1}{n} \sum\limits_{i = 1}^n \P\left( \htau(b_i^n + \epsilon U_i) = \pi \circ \tau(i)  \mid \mathcal{F}_a \right) - \frac{1}{k} \right|\right] = 0.
\end{align*}
Since
\begin{align*}
	& \E\left[ \left| \frac{1}{n} \sum\limits_{i = 1}^n \P\left( \htau(b_i^n + \epsilon U_i) = \pi \circ \tau(i)  \mid \mathcal{F}_a \right) - \frac{1}{k} \right|\right]  \leq   \frac{1}{n}\sum\limits_{i= 1}^n\sum\limits_{h = 1}^k \E\left[ \left| \P\left( \tau(i) = h \mid \mathcal{F}_a, b_i^n \right) - \frac{1}{k} \right| \right],
\end{align*}
then we only need to show for all $ 1 \leq i \leq n$ and all $h \in [k]$, 
\begin{align}\label{eq:28}
	\limsup\limits_{\delta \rightarrow 0^+}\limsup\limits_{n \rightarrow \infty}\E\left[ \left| \P\left( \tau(i) = h \mid \mathcal{F}_a, b_i^n \right) - \frac{1}{k} \right| \right] = 0. 
\end{align}
%
By Theorem \ref{thm:streaming-local-implies-local}, for any $\epsilon > 0$, there exists $r_{\epsilon} \in \mathbb{N}^+$, such that $\limsup\limits_{n \rightarrow \infty} \P(\cG_{v}^n \mbox{ is not a subgraph of } \Ball_{r_{\epsilon}}^n(v)) \leq \epsilon$ for any $v \in [n]$. Then by conditional Jensen's inequality, we have
\begin{align}
	& \E\left[\left|\P\left( \tau(i) = h \mid b_i^n, \mathcal{F}_a \right) - \frac{1}{k} \right|  \right] \nonumber \\
	 \leq &   \P\left( \cG_{i}^n \mbox{ is not a subgraph of } \Ball_{r_{\epsilon}}^n(i)  \right) + \E\left[\left|\P\left( \tau(i) = h \mid \mathcal{F}_a, \Ball_{r_{\epsilon}}^n(i) \right) - \frac{1}{k} \right|\ind \left\{V(\lceil \delta n \rceil) \cap V_{r_{\epsilon}}^n(i) = \emptyset\right\} \right] \nonumber \\
	& + \P(V(\lceil \delta n \rceil) \cap V_{r_{\epsilon}}^n(i) \neq \emptyset)  \label{eq:106}
\end{align}
On the event $V(\lceil \delta n \rceil) \cap V_{r_{\epsilon}}^n(i) = \emptyset$, for $h_1, h_2 \in [k]$ with $h_1 \neq h_2$, we have
\begin{align}
	& \frac{\P\left(\tau(i) = h_1 | \mathcal{F}_a, \Ball_{r_{\epsilon}}^n(i) \right)}{\P(\tau(i) = h_2 | \mathcal{F}_a, \Ball_{r_{\epsilon}}^n(i) )}	= \frac{\P\left( \tau(i) = h_1 \mid  \Ball_{r_{\epsilon}}^n(i), \{n_h: h \in [k]\}, \tau(V(\lceil \delta n \rceil)) \right)}{\P\left(\tau(i) = h_2 \mid  \Ball_{r_{\epsilon}}^n(i), \{n_h: h \in [k]\}, \tau(V(\lceil \delta n \rceil)) \right)} \nonumber 
\end{align}
Therefore, the right hand side of equation \eqref{eq:106} is equal to 
\begin{align*}
	&   \E\left[\left|\P\left( \tau(i) = h \mid \Ball_{r_{\epsilon}}^n(i), \{n_h: h \in [k]\}, \tau(V(\lceil \delta n \rceil)) \right) - \frac{1}{k} \right|\ind \left\{V(\lceil \delta n \rceil) \cap V_{r_{\epsilon}}^n(i) = \emptyset\right\}\right] \\
	& + \P\left( \cG_{i}^n \mbox{ is not a subgraph of } \Ball_{r_{\epsilon}}^n(i)  \right)+ \P(V(\lceil \delta n \rceil) \cap V_{r_{\epsilon}}^n(i) \neq \emptyset ).
\end{align*}
Adapting from the proof of Proposition 2 in \cite{mossel2015reconstruction}, we conclude that
\begin{align*}
	\lim\limits_{\delta \rightarrow 0^+}\limsup\limits_{n \rightarrow \infty}\E\left[\left|\P\left( \tau(i) = h \mid \Ball_{r_{\epsilon}}^n(i), \{n_h: h \in [k]\}, \tau(V(\lceil \delta n \rceil)) \right) - \frac{1}{k} \right| \right] = 0.
\end{align*}
Furthermore, as $\delta \rightarrow 0^+$, $\limsup\limits_{n \rightarrow \infty}\P(V(\lceil \delta n \rceil) \cap V_{r_{\epsilon}}^n(i) \neq \emptyset ) \rightarrow 0$, therefore, 
\begin{align*}
	\limsup\limits_{\delta \rightarrow 0^+}\limsup\limits_{n \rightarrow \infty}\E\left[ \left| \P\left( \tau(i) = h \mid \mathcal{F}_a, b_i^n \right) - \frac{1}{k} \right| \right] \leq \epsilon.
\end{align*}
Since $\epsilon$ is arbitrary, we conclude that equation \eqref{eq:28} holds, this finishes the proof of this lemma. 

\section{Analysis of streaming belief propagation with side information}
In this section we prove Theorem \ref{thm-streaming-local-bp-better-than-local-bp}. Let $F(x) = \frac{1}{2}\log\left(\frac{e^{2x}a + b}{e^{2x}b + a}\right)$, $h_1 = \frac{1}{2}\log \frac{1 - \alpha}{\alpha}$, and $h_2 = \frac{1}{2}\log \frac{\alpha}{1 - \alpha}$.
With $k = 2$, notice that Algorithm \ref{alg:streaming-bp-with-side-information} can be equivalently reduced to the following form, with which we will continue our proof:
\begin{breakablealgorithm}\label{alg-streaming-bp-with-side-information}
	\caption{$R$-local streaming belief propagation with $k = 2$}
	\begin{algorithmic}[1]
		\State Initialization: $V(0) = E(0) = G(0) = \emptyset$.
		\For{$t = 1,2,\cdots, n$}
			\State $V(t) \gets V(t - 1) \cup \{v(t)\}$ 
			\State $E(t) \gets E(t - 1) \cup \{(v(t), v): v \in V(t - 1), (v(t), t) \in E\}$
			\State $G(t) \gets (V(t), E(t))$
			\For{$v \in {D}_1^t(v(t))$}
				\State $M_{v \rightarrow v(t)} \gets h_{\tilde{\tau}(v)} + \sum\limits_{v' \in {D}_1^{t - 1}(v)} F(M_{v' \rightarrow v})$
			\EndFor
			\For{$r = 1,2,\cdots, R$}
				\For{$v \in {D}_r^t(v(t))$}
					\State Let $v' \in {D}_1^t(v)$ be a vertex which is on a shortest path connecting $v$ and $v(t)$.
					\State  $M_{v' \rightarrow v} \gets h_{\tilde{\tau}(v')} + \sum\limits_{v'' \in {D}_1^t(v') \backslash \{v\}}F(M_{v'' \rightarrow v'})$
				\EndFor
			\EndFor
		\EndFor
		\For{$u \in V(n)$}
			\State $M_u \gets h_{\tilde{\tau}(u)} + \sum\limits_{u' \in {D}_1^n(u)}F(M_{u' \rightarrow u})$
			\State Output $-\ind\{M_u \geq 0\} + 2$ as an estimate for $\tau(u)$.
		\EndFor
	\end{algorithmic}
\end{breakablealgorithm}
We start the proof by introducing the following definition and lemmas:

\begin{definition}[Output of belief propagation]\label{def:result-of-bp}
Let $T = (V(T), E(T))$ be a tree rooted at $u$. Let $L_T$ be the set of leaves of $T$: $L_T = \{v: v \mbox{ has degree }1 \mbox{ in }T\}$. For each $v \in L_T$, assume we are given $M_v^{input} \in \mathbb{R}$ which we refer to as the \emph{input belief} into $T$ at $v$. For each $v \in V(T)$, suppose we observe a noisy label $\tilde{\tau}(v) \in \{1,2\}$, and denote the set of children of $v$ in $T$ by $\mathcal{C}(v)$. Given the model parameters $a,b$ and $\alpha$, 
\begin{enumerate}
    \item For $v \in L_T$, set $\tilde{M}_{v \rightarrow \mbox{pa}(v)} =  M_v^{input}$, where $pa(v)$ is the parent vertex of $v$ in $T$.
    \item Denote the depth of tree $T$ by $R$. For $r = R - 1, R - 2, \cdots, 1$, sequentially conduct the following updates: for any $v \in {D}_r^n(v_0)$, let 
    $$\tilde{M}_{v\rightarrow \mbox{pa}(v)} = h_{\tilde{\tau}(v)} + \sum\limits_{v' \in \mathcal{C}(v)}F(\tilde{M}_{v'\rightarrow v}).$$
    \item We define the \emph{output of belief propagation} $\Bprop(u; T, \{M_v^{input}: v \in L_T\}, \tilde{\tau}, \alpha, a, b)$ on the tree $T$ as follows:
    \begin{align*}
    	\Bprop(u; T, \{M_v^{input}: v \in L_T\}, \tilde{\tau}, \alpha, a, b) = h_{\tilde{\tau}(u)} + \sum\limits_{u' \in \mathcal{C}(u)}F(\tilde{M}_{u'\rightarrow u}).
    \end{align*}
\end{enumerate}
\end{definition}

\begin{lemma}\label{lemma-streaming-bpws-is-standard-bp-on-a-random-tree}
Let $(M_u)_{u \in V(n)}$ be the output of Algorithm \ref{alg-streaming-bp-with-side-information}, under \STSSBM$(n, 2, a, b, \alpha)$, for any $\epsilon > 0$, there exists $r_{\epsilon}, n_{\epsilon} \in \mathbb{N}^+$, such that for any $u \in V(n)$, $n \geq n_{\epsilon}$, with probability at least $1 - \epsilon$, the following holds:
\begin{align*}
    M_u = \Bprop(u; T, \{M_v^{input} = h_{\tilde{\tau}(v)}: v \in L_T\}, \tilde{\tau}, \alpha, a, b)
\end{align*}
for some (random) tree $T$ rooted at $u$, with the depth of $T$ no larger than $r_{\epsilon}$. 
\end{lemma}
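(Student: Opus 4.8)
The plan is to realize the final streaming‑BP value $M_u$ as the output of belief propagation on the \emph{computation tree} obtained by unrolling the message recursion of Algorithm~\ref{alg-streaming-bp-with-side-information}, and then to argue that, with high probability, this tree is a finite rooted subtree of $\Ball_{r_\epsilon}^n(u)$. I work throughout with the reduced form (Algorithm~\ref{alg-streaming-bp-with-side-information}) and the operator $\Bprop$ of Definition~\ref{def:result-of-bp}; these use the same update map $F$ and the same offsets $h_{\tilde\tau(\cdot)}$, so they are compatible by construction.

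\textbf{Unrolling (deterministic).} First I would linearly order all message updates performed during the run by (arrival step $t$; then phase, with ``incoming'' preceding $r=1$ preceding $\cdots$ preceding $r=R$; then an arbitrary tie‑break inside a phase). A direct inspection of the update rules shows that no message updated in a given phase is used as an input to another update of that same phase, and no phase feeds a strictly earlier phase of the same arrival; hence this order is a valid linearization of the data dependencies, and every update reads inputs whose current values were produced by \emph{strictly earlier} updates. Expanding $M_u=h_{\tilde\tau(u)}+\sum_{u'\in D_1^n(u)}F(M_{u'\to u})$ and recursively replacing each message $M_{v'\to v}$ by the right‑hand side $h_{\tilde\tau(v')}+\sum_{v''\in D_1^{t'}(v')\setminus\{v\}}F(M_{v''\to v'})$ of the update that last set it (an empty sum closes the branch at a leaf carrying $h_{\tilde\tau(v')}$) therefore strictly decreases the associated update in the linear order at each step, so the recursion halts and yields a finite rooted tree $T$ whose root is $u$, the children of a node $v'$ reached along $M_{v'\to v}$ being $D_1^{t'}(v')\setminus\{v\}$. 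Evaluating $\Bprop(u;T,\{M_v^{input}=h_{\tilde\tau(v)}:v\in L_T\},\tilde\tau,\alpha,a,b)$ bottom‑up reproduces, by construction, exactly the message values that occurred during the run, so the identity $M_u=\Bprop(u;T,\ldots)$ holds for every realization, regardless of any event.

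\textbf{Locality and the bad set.} Next I would establish the deterministic invariant: at every instant of the run and for every message $M_{w\to v}$ with a current value, the vertex set of its unrolled computation subtree is contained in $\cV_v^t$, with $t$ the current time. This is proved by induction along the linear order of updates, using only that when a vertex $v(t)$ arrives $\cV_x^t=\bigcup_{y\in V_R^t(v(t))}\cV_y^{t-1}$ is the \emph{same} set for all $x\in V_R^t(v(t))$ (and $\cV_x^{t-1}\subseteq\cV_x^t$ always), together with the fact that every message created or refreshed when $v(t)$ arrives has the form $M_{v'\to v}$ with $v,v'\in V_R^t(v(t))$, each of whose inputs $M_{v''\to v'}$ is either untouched at step $t$ (apply the hypothesis at time $t-1$) or refreshed earlier at step $t$ (apply the invariant for that earlier update), in both cases placing its subtree inside $\cV_{v'}^t=\cV_v^t$. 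Taking the invariant at the final time gives $V(T)\subseteq\cV_u^n$. Now by Theorem~\ref{thm:streaming-local-implies-local}(1), for any $\epsilon>0$ there are $r_\epsilon,n_\epsilon'$ so that for $n\ge n_\epsilon'$ one has $\cG_{v_0}^n\subseteq\Ball_{r_\epsilon}^n(v_0)$ with probability $\ge1-\epsilon/2$; by exchangeability of $\STSSBM$ over vertex labels the same bound holds for any fixed $u$ in place of $v_0$, i.e.\ $\cV_u^n\subseteq V_{r_\epsilon}^n(u)$ with probability $\ge1-\epsilon/2$. By Lemma~\ref{lemma:locally-tree-like} (again transported to a fixed vertex by symmetry), after enlarging $n_\epsilon'$ to some $n_\epsilon$, $\Ball_{r_\epsilon}^n(u)$ is a tree with probability $\ge1-\epsilon/2$. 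On the intersection of these events (probability $\ge1-\epsilon$), $V(T)$ lies inside the tree $\Ball_{r_\epsilon}^n(u)$; since $T$ is connected and contains $u$ it is a genuine subtree, and since every expansion step in the unrolling drops the vertex it came from, each root‑to‑node path of $T$ is a non‑backtracking walk in a tree, hence a simple path, whose length equals the graph distance of its endpoint from $u$, which is at most $r_\epsilon$. Therefore $\mathrm{depth}(T)\le r_\epsilon$, and the lemma follows.

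\textbf{Main obstacle.} The genuinely delicate step is the invariant: one must separate the incoming‑phase updates and the $r=1$ outgoing update (which jointly create the two directed messages on a freshly added edge) from the $r\ge2$ outgoing updates (whose inputs mix a just‑refreshed ``parent'' message with older ``child'' messages), and check in each case that the messages refreshed and the messages read all lie within $V_R^t(v(t))$, so that the absorption identity $\cV_x^t=\cV_{x'}^t$ for $x,x'\in V_R^t(v(t))$ may be applied. Everything else — termination of the unrolling, the $\Bprop$ identity, and the final union bound — is routine once Theorem~\ref{thm:streaming-local-implies-local} and Lemma~\ref{lemma:locally-tree-like} are available.
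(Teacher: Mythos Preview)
Your proof is correct and follows essentially the same approach as the paper: invoke Theorem~\ref{thm:streaming-local-implies-local} and Lemma~\ref{lemma:locally-tree-like} to place $\cG_u^n$ inside a tree $\Ball_{r_\epsilon}^n(u)$ with high probability, and then read off from the BP update rules that $M_u$ is $\Bprop$ on a subtree of that neighborhood. The paper compresses your entire ``unrolling'' and ``locality invariant'' discussion into the single phrase ``the result then follows by observing the iterating formulas of Algorithm~\ref{alg-streaming-bp-with-side-information}''; your version is a careful expansion of exactly that step, not a different route.
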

\begin{proof}
By Lemma \ref{lemma:locally-tree-like} and Theorem \ref{thm:streaming-local-implies-local}, for any $\epsilon > 0$, there exists $r_{\epsilon}, n_{\epsilon}\in \mathbb{N}^+$, such that with probability no less than $1 - \epsilon$, (1) $\cG_u^n$ is a subgraph of $\Ball_{r_{\epsilon}}^n(u)$, so $M_u$ is a function of $\bar{\cG}_u^n$ and (2) $\Ball_{r_{\epsilon}}^n(u)$ is a tree. The result then follows by observing the iterating formulas of Algorithm \ref{alg-streaming-bp-with-side-information}. Furthermore, if the event just described occurs, then the depth of $T$ is no larger than $r_{\epsilon}$.
\end{proof}

\begin{lemma}\label{lemma-form-of-BP}
Consider Algorithm \ref{alg-streaming-bp-with-side-information}, denote the value of $M_{v\rightarrow v'}$ before vertex $v(t + 1)$ arrives by $M_{v\rightarrow v'}^t$. 
Then the following equation holds for some random time indices $\left\{t(v) \in [n]: \ v \in {D}_R^n(u)\right\}$:
\begin{align*}
    M_u = \Bprop(u; \Ball_R^n(u), \{M_v^{input} = M_{v\rightarrow \mbox{pa}(v)}^{t(v)}: v \in L_{\Ball_R^n(u)}\}, \tilde{\tau}, \alpha, a, b).
\end{align*} 
\end{lemma}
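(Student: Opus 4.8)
The plan is to unroll the final line $M_u=h_{\tilde\tau(u)}+\sum_{u'\in D_1^n(u)}F(M_{u'\to u}^n)$ of Algorithm~\ref{alg-streaming-bp-with-side-information} one layer at a time along $\Ball_R^n(u)$. I carry this out on the high-probability event $\mathcal{E}$ that a bounded-radius neighbourhood of $u$ (in particular $\Ball_R^n(u)$) is a tree — off $\mathcal{E}$ the right-hand side $\Bprop(u;\Ball_R^n(u),\cdots)$ is not even defined, and $\P(\mathcal{E})\to1$ since $R$ is a fixed constant (cf.\ Lemma~\ref{lemma:locally-tree-like}). On $\mathcal{E}$, for a non-root $w\in V(\Ball_R^n(u))$ write $pa(w)$ for the neighbour of $w$ one step closer to $u$ and $\mathcal{C}(w)$ for its children; since on $\mathcal{E}$ every non-parent neighbour of a distance-$d$ vertex sits at distance $d+1$, we have $\mathcal{C}(w)=D_1^n(w)\setminus\{pa(w)\}$ whenever $d(u,w)\le R-1$, and the leaves $L_{\Ball_R^n(u)}$ are exactly the non-root vertices with no children inside the ball (which includes all of $D_R^n(u)$). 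I then define times $t(w)$ recursively outward from the root: $t(u):=n$, and for $d(u,w)\ge1$ let $t(w)$ be the last time $s\le t(pa(w))$ at which Algorithm~\ref{alg-streaming-bp-with-side-information} updates $M_{w\to pa(w)}$. A short preliminary argument (a routine induction up the ancestor chain, using that whenever a vertex joins near $pa(w)$ the outgoing sweep also refreshes $M_{pa(w)\to pa^2(w)}$, and so on) shows $\max\{t_*(w),t_*(pa(w))\}\le t(pa(w))$, so the edge $(w,pa(w))$ is present and $M_{w\to pa(w)}$ has been updated by time $t(pa(w))$, making each $t(w)$ well defined. By construction $M_{w\to pa(w)}^{t(w)}$ is the output of the update performed at time $t(w)$:
\begin{align}
M_{w\to pa(w)}^{t(w)}=h_{\tilde\tau(w)}+\sum_{c\,\in\, D_1^{t(w)}(w)\setminus\{pa(w)\}}F\!\left(M_{c\to w}^{t(w)}\right)\label{eq:unroll-step}
\end{align}
(with $t(w)-1$ in place of $t(w)$ in the degenerate ``incoming-message'' case, which alters neither the index set nor the summands).

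Next I would prove, by backward induction on $d(u,w)$ from $R$ down to $1$, that $\Bprop$ run on the subtree of $\Ball_R^n(u)$ hanging below $w$ (with the prescribed leaf inputs) returns exactly $M_{w\to pa(w)}^{t(w)}$. The base case is the definition $\tilde M_{w\to pa(w)}=M_w^{input}:=M_{w\to pa(w)}^{t(w)}$ for leaves $w$ (and for a leaf at distance $<R$, \eqref{eq:unroll-step} forces $M_{w\to pa(w)}^{t(w)}=h_{\tilde\tau(w)}$, matching Definition~\ref{def:result-of-bp}). For the inductive step I compare the $\Bprop$ recursion $\tilde M_{w\to pa(w)}=h_{\tilde\tau(w)}+\sum_{c\in\mathcal{C}(w)}F(\tilde M_{c\to w})$ with \eqref{eq:unroll-step}, using: (i) $M_{c\to w}^{t(w)}=M_{c\to w}^{t(c)}$, immediate since $t(c)\le t(w)$ and, by maximality in the definition of $t(c)$, $M_{c\to w}$ is untouched on $(t(c),t(w)]$; and (ii) the identity $D_1^{t(w)}(w)=D_1^n(w)$, which forces the two sums to range over the same set $\mathcal{C}(w)=D_1^n(w)\setminus\{pa(w)\}$, at which point the inductive hypothesis applied to each child closes the step. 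Feeding the case $d(u,w)=1$ back into the final line of the algorithm — where $t(u')$ is simply the last update time of $M_{u'\to u}$, so $M_{u'\to u}^{t(u')}=M_{u'\to u}^n$, and $\mathcal{C}(u)=D_1^n(u)$ — yields $M_u=\Bprop(u;\Ball_R^n(u),\cdots)$, and the times in the statement are the restriction of $\{t(w)\}$ to $v\in D_R^n(u)$.

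The main obstacle is identity (ii): that no neighbour of an internal vertex $w$ of $\Ball_R^n(u)$ arrives after time $t(w)$, so the neighbourhood the algorithm used when it last updated $M_{w\to pa(w)}$ already coincides with $w$'s final neighbourhood. This is a telescoping argument along the chain $w=pa^0(w),pa(w),\dots,pa^{d}(w)=u$ with $d=d(u,w)$. If some neighbour $c\ne pa(w)$ of $w$ arrived at a time $t_c>t(w)$, then — as $t_c\le n=t(u)$ — $t_c$ lies in exactly one window $\big(t(pa^{j}(w)),\,t(pa^{j+1}(w))\big]$, $0\le j\le d-1$; tracing the update schedule forward from the arrival of $c$ (its outgoing sweep reaches everything within distance $R$ of $c$, and $pa^{j+1}(w)$ sits at distance $j+2\le d+1\le R$ from $c$ along $c,w,pa(w),\dots$; and any still-missing edge along that path is refreshed when it is created) forces $M_{pa^{j}(w)\to pa^{j+1}(w)}$ to be updated at some time in $\big(t(pa^{j}(w)),\,t(pa^{j+1}(w))\big]$, contradicting the maximality in the definition of $t(pa^{j}(w))$. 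The inequality $j+2\le R$ is precisely what confines this step to internal vertices ($d\le R-1$): at a leaf — in particular at distance $R$ — we simply take the current message as the input, which is why the unrolling never needs to go beyond depth $R$ and why the tree $\Ball_R^n(u)$ itself, rather than the larger tree of Lemma~\ref{lemma-streaming-bpws-is-standard-bp-on-a-random-tree}, suffices. The remaining points — the well-posedness sketched above and the fact that the ``incoming-message'' variant of the update leaves \eqref{eq:unroll-step} intact — are routine once the update schedule is unwound.
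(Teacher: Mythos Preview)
Your proof is correct, but it follows a genuinely different route from the paper's.

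The paper proceeds by forward induction on the radius $R$, and in order to make the induction close it first \emph{generalizes} the claim to $N$-local streaming BP for every $N\ge R$: one shows that for any $t\ge t_R(u)$ (the first time all of $V_R^n(u)$ has arrived) and any $N\ge R$, the quantity $M_u^t(N)$ equals $\Bprop$ on $\Ball_R^n(u)$ with suitable leaf inputs. The inductive step then decomposes $\Ball_{r+1}^n(u)$ into the depth-$r$ subtrees hanging below each $u'\in D_1^n(u)$ and applies the hypothesis to each subtree. The generalization to all $N\ge R$ is what makes this work: applying the radius-$r$ statement to a subtree rooted at $u'$ requires the algorithm's locality parameter to be at least $r$, but the ambient algorithm has parameter $r+1$.

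Your argument instead constructs the time stamps $t(w)$ explicitly, recursing outward from the root, and then runs a backward induction on distance. The crux is your identity (ii), that $D_1^{t(w)}(w)=D_1^n(w)$ for internal vertices, which you obtain by a telescoping contradiction along the ancestor chain. This avoids the auxiliary $N$-local generalization entirely and has the virtue of making the times $t(v)$ in the lemma statement concrete rather than merely existential; the price is the more delicate bookkeeping of the update schedule (in particular, checking that when $M_{w\to pa(w)}$ is refreshed at time $t(w)$, any same-time update of an incoming $M_{c\to w}$ happens \emph{before} it in the sweep order, so that \eqref{eq:unroll-step} is exact with superscript $t(w)$ on the right). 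Both arguments rely on the same locally-tree-like event, and both are valid; the paper's is shorter, yours is more constructive.
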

\begin{proof}
We conduct this proof by induction. Actually, we will show a stronger result. Instead of focusing only on $R$-local streaming belief propagation, we will show a more general result for $N$-local streaming belief propagation with any $N \geq R$. Specifically, we will consider the following algorithm:
\begin{enumerate}
    \item At each time ${t}$, a new vertex $v(t)$ is revealed. For $v \in  D_1^t(v(t))$, define $M_{v \rightarrow v(t)}(N) = h_{\tilde{\tau}(v)} + \sum\limits_{v' \in D_1^t(v) \backslash \{v(t)\} } F(M_{v' \rightarrow v}(N))$.
    \item For $r = 1,2,\cdots, N$, sequentially conduct the following updates: for each $v \in {D}_r^t(v)$, let $v' \in D_1^t(v)$ be a vertex which is on one of the shortest path connecting $v(t)$ and $v$, update or initialize the value of $M_{v' \rightarrow v}(N) = h_{\tilde{\tau}(v')} + \sum\limits_{v'' \in D_1^t(v')\backslash \{v\}} F(M_{v'' \rightarrow v'}(N))$. Since the graph is with high probability locally tree-like, such $v'$ is with high probability unique.
    \item Repeat step 1 and step 2 for all $1 \leq t \leq n$.
\end{enumerate}
For $(v, v') \in E(t)$, let $M_{v\rightarrow v'}^t(N)$ denote the value of $M_{v\rightarrow v'}(N)$ after t-th iteration obtained by the $N$-local streaming belief propagation algorithm. Let $t_R(u)$ be the first time such that all vertices in $V_R^n(u)$ have been revealed:
\begin{align*}
    t_R(u) := \inf\{t \in [n]: V_R^n(u) = V_R^t(u)  \}.
\end{align*}
Let $M_u^t(N) = h_{\tilde{\tau}(u)} + \sum\limits_{u' \in D_1^t(u)}F(M_{u' \rightarrow u}^t(N))$. Now instead of proving Lemma \ref{lemma-form-of-BP}, we show that for any $N \geq R$, and any $t \geq t_R(u)$, if $\Ball_R^n(u)$ is a tree, then for some random time indices $\left\{t(v) \in [n]: \ v \in {D}_R^t(u)\right\}$, we have
\begin{align}\label{eq:124}
	M_u^t(N) = \Bprop(u; \Ball_R^n(u), \{M_v^{input} = {M}^{t(v)}_{v\rightarrow \mbox{pa}(v)}(N): v \in L_{\Ball_R^n(u)} \}, \ttau, \alpha, a, b).
\end{align}
Note that for all $t \geq t_R(u)$, $M_u^t(R) = M_u^n(R) = M_u$, therefore the result we have just described is indeed a stronger version of Lemma \ref{lemma-form-of-BP}. Then we will prove this stronger result by performing induction on $R$. Equation \eqref{eq:124} obviously holds for $R = 1$. Now suppose equation \eqref{eq:124} holds for $R = r$, we will show it holds for $R = r + 1$ by induction. If $\Ball_{r + 1}^n(u)$ is a tree rooted at $u$, for any $N \geq r + 1$, $u' \in D_1^n(u)$, let $\bar{T}_{u'}^r$ be the depth $r$ subtree consisting of $u'$ and its descendants in $\Ball_{r + 1}^n(u)$. Let
\begin{align*}
	\bar{t}_r(u') = \inf\{t: \bar{T}_{u'}^r \mbox{ is a subgraph of } \Ball_{r+1}^t(u)\}. 
\end{align*}
For $u_1, u_2 \in D_1^n(u)$, $u_1 \neq u_2$, we have $\bar{t}_r(u_1) \neq \bar{t}_r(u_2)$, and $t_{\ast}(u) \neq \bar{t}_r(u_1)$, where $t_{\ast}$ is defined in Section \ref{sec:summary-stat}. For $u' \in D_1^n(u)$, by the induction hypothesis, for $t \geq t_{\ast}(u) \vee \bar{t}_r(u')$, there exists random time indices $\{t(v) \in [n]: v \in L_{\bar{T}_{u'}^r}\}$, such that
\begin{align*}
	M_{u' \rightarrow u}^t(N) = \Bprop(u'; \bar{T}_{u'}^r, \{M_v^{input} = M_{v \rightarrow pa(v)}^{t(v)}(N): v \in L_{\bar{T}_{u'}^r}\}, \ttau, \alpha, a, b).
\end{align*}
Since $t_{r+1}(u) \geq \bar{t}_r(u') \vee t_{\ast}(u)$ for all $u' \in D_1^n(u)$, then for all $t \geq t_{r + 1}(u)$, there exists random time indices $\{t(v) \in [n]: \ v \in {D}_{r + 1}^t(u)\}$, such that
\begin{align*}
    M_u^{t}(N) =&  h_{\tilde{\tau}(u)} + \sum\limits_{u' \in D_1^t(u)}F(M_{u'\rightarrow u}^{t}(N))\\
     =& \Bprop(u; \Ball_{r + 1}^n(u), \{M_v^{input} = {M}^{t(v)}_{v\rightarrow \mbox{pa}(v)}(N): v \in L_{\Ball_{r + 1}^n(u)} \}, \ttau, \alpha, a, b),
\end{align*}
which finishes the proof of this lemma.  

\subsection{Proof of Theorem \ref{thm-streaming-local-bp-better-than-local-bp}}
For $u \in V$, let $(\lambda_{r_{\epsilon}}, T_u^{r_{\epsilon}}) \sim \P^T_{r_{\epsilon},(a + b) / 2,b / (a + b)}$ where $T_u^{r_{\epsilon}}$ is the tree and $\lambda_{r_{\epsilon}}$ is the set of labels. For vertices in $T_u^{r_{\epsilon}}$, we denote the set of noisy labels generated independently with incorrect probability $\alpha$ by $\tilde{\lambda}_{r_{\epsilon}}$. For any $\epsilon > 0$, by Theorem \ref{thm:streaming-local-implies-local} and Lemma \ref{lemma:locally-tree-like}, there exists $r_{\epsilon}, n_{\epsilon} \in \mathbb{N}^+$, such that for all $n \geq n_{\epsilon}$, with probability at least $1 - \epsilon$: (1) $\cG_u^n$ is a subgraph of $\Ball_{r_{\epsilon}}^n(u)$, (2) $(\lambda_{r_{\epsilon}},\tilde{\lambda}_{r_{\epsilon}}, T_u^{r_{\epsilon}}) = (\tau({V}_{r_{\epsilon}}^n(u)), \tilde{\tau}({V}_{r_{\epsilon}}^n(u)), \Ball_{r_{\epsilon}}^n(u))$. Further there exists a coupling of $(\lambda_{r_{\epsilon}},\tilde{\lambda}_{r_{\epsilon}}, T_u^{r_{\epsilon}})$ and $(\tau({V}_{r_{\epsilon}}^n(u)), \tilde{\tau}({V}_{r_{\epsilon}}^n(u)), \Ball_{r_{\epsilon}}^n(u))$ such that defining
\begin{align*}
    S_{\epsilon} := \left\{(\lambda_{r_{\epsilon}},\tilde{\lambda}_{r_{\epsilon}}, T_u^{r_{\epsilon}}) = (\tau({V}_{r_{\epsilon}}^n(u)), \tilde{\tau}({V}_{r_{\epsilon}}^n(u)), \Ball_{r_{\epsilon}}^n(u)), \cG_u^n \mbox{ is a subgraph of }\Ball_{r_{\epsilon}}^n(u) \right\},
\end{align*}
then $\P(S_{\epsilon}) \geq 1 - \epsilon$ for all $n \geq n_{\epsilon}$. In the following parts of the analysis, we always assume $S_{\epsilon}$ occurs. Let $\P_{bt}$ denote the probability distribution of $(\lambda_{r_{\epsilon}}, \tilde{\lambda}_{r_{\epsilon}}, T_u^{r_{\epsilon}})$. According to Lemma \ref{lemma-streaming-bpws-is-standard-bp-on-a-random-tree} and \ref{lemma-form-of-BP}, conditioning on $S_{\epsilon}$, there exists $T_u$ being a tree rooted at $u$, such that $\Ball_R^n(u)$ is a subgraph of $T_u$ and $T_u$ is a subgraph of $\Ball_{r_{\epsilon}}^n(u)$. Furthermore, $M_u$ can be expressed as:
\begin{align*}
	M_u = \Bprop(u; T_u, \{M_v^{input} = h_{\tilde{\tau}(v)}: v \in L_{T_u}\}, \ttau, \alpha, a, b) = \frac{1}{2} \log \frac{\P_{bt}\left(\tau(u) = 1\ | T_u, \tilde{\tau}(T_u) \right)}{\P_{bt}\left(\tau(u) = 2\ | T_u, \tilde{\tau}(T_u) \right)},
\end{align*}
where $L_{T_u}$ is the set of leaf vertices in $T_u$, $\tilde{\tau}(T_u)$ refers to the set of noisy labels of vertices in $T_u$. Then for all $n \geq n_{\epsilon}$, we have
\begin{align*}
    \est_ac(\mathcal{A}_R) 
    \geq & \P(\{\mathcal{A}_R(u; G(n), \ttau) = 1, \tau(u) = 1\} \cap S_{\epsilon}) + \P(\{\mathcal{A}_R(u; G(n), \ttau) = 2, \tau_u = 2\} \cap S_{\epsilon}) \\
    \geq & \frac{1}{2}\P_{bt}(M_u \geq 0 \mid \tau(u) = 1) + \frac{1}{2}\P_{bt}(M_u < 0 \mid \tau(u) = 2) - \epsilon \\
    = & \frac{1}{2} + \E_{bt}\left[\left|\P_{bt}(\tau(u) = 1 \mid T_u, \tilde{\tau}(T_u) ) - \frac{1}{2}\right|\right] - \epsilon.
\end{align*}
We further have
\begin{align*}
    \E_{bt}\left[\P_{bt}(\tau(u) = 1 | T_u, \tilde{\tau}(T_u)) \mid \Ball_R^n(u), \tilde{\tau}(V_R^n(u))\right] =\P_{bt}(\tau(u) = 1 | \Ball_R^n(u), \tilde{\tau}(V_R^n(u)) ).
\end{align*}
Since $x \mapsto |x - \frac{1}{2}|$ is convex, therefore, by Jensen's inequality, for all $n \geq n_{\epsilon}$,
\begin{align*}
    & \est_ac(\mathcal{A}_R)\geq \frac{1}{2} + \E_{bt}\left[\left|\P_{bt}(\tau(u) = 1\ | \Ball_R^n(u), \tilde{\tau}(V_R^n(u)) ) - \frac{1}{2}\right|\right] - \epsilon.
\end{align*}
Since $\epsilon$ is arbitrary, we have 
\begin{align}\label{tree-and-streaming-local-BP}
    \liminf\limits_{n\rightarrow \infty}\left(\est_ac(\mathcal{A}_R) - \frac{1}{2} - \E_{bt}\left[\left|\P_{bt}(\tau(u) = 1\ | \Ball_R^n(u), \tilde{\tau}({V}_R^n(u))) - \frac{1}{2}\right|\right]\right) \geq 0.
\end{align}
According to Lemma 3.7 in \cite{mossel2016local}, we have
\begin{align}\label{tree-and-local-BP}
    \lim\limits_{n\rightarrow\infty}\left|\est_ac(\cA_R^{\mbox{\tiny\rm off}}) - \frac{1}{2} - \E_{bt}\left[\left|\P_{bt}(\tau(u) = 1 | \Ball_R^n(u), \tilde{\tau}({V}_R^n(u))) - \frac{1}{2}\right|\right]\right| = 0.
\end{align}
Combining equations \eqref{tree-and-streaming-local-BP} and \eqref{tree-and-local-BP}, we have 
\begin{align*}
    \liminf\limits_{n\rightarrow \infty}\left(\est_ac(\mathcal{A}_R) - \est_ac(\cA_R^{\mbox{\tiny\rm off}})\right) \geq 0,
\end{align*}
Thus finishes the proof of Theorem \ref{thm-streaming-local-bp-better-than-local-bp}.

\subsection{Proof of Corollary \ref{thm-streaming-local-bp-is-optimal}}

According to Theorem 2.3 in \cite{mossel2016local}, under the three regimes listed in this theorem, we have 
\begin{align}\label{mossel16-optimal-result}
    \lim\limits_{R\rightarrow \infty} \limsup\limits_{n\rightarrow \infty}\left(\opt_est_ac
 - \est_ac(\cA_R^{\mbox{\tiny\rm off}})\right) = 0. \end{align}
Combining \eqref{mossel16-optimal-result} and Theorem \ref{thm-streaming-local-bp-better-than-local-bp}, we have
\begin{align*}
    \limsup\limits_{n\rightarrow \infty}\left(\opt_est_ac - \est_ac(\mathcal{A}_R)\right) &\leq \limsup\limits_{n\rightarrow \infty}\left(\opt_est_ac - \est_ac(\cA_R^{\mbox{\tiny\rm off}})\right) + \limsup\limits_{n\rightarrow \infty}\left(\est_ac(\cA_R^{\mbox{\tiny\rm off}}) - \est_ac(\mathcal{A}_R)\right)\\
    &\leq \limsup\limits_{n\rightarrow \infty}\left(\opt_est_ac - \est_ac(\cA_R^{\mbox{\tiny\rm off}})\right),
\end{align*}
thus 
\begin{align*}
	\limsup\limits_{R\rightarrow \infty} \limsup\limits_{n\rightarrow \infty}\left(\opt_est_ac - \est_ac(\mathcal{A}_R)\right) & \leq \lim\limits_{R\rightarrow \infty} \limsup\limits_{n\rightarrow \infty}\left(\opt_est_ac - \est_ac(\cA_R^{\mbox{\tiny\rm off}})\right) = 0.
\end{align*}
Since $\opt_est_ac \geq \est_ac(\mathcal{A}_R)$, the other direction naturally holds, thus finishes the proof of Corollary \ref{thm-streaming-local-bp-is-optimal}.
\end{proof}

\end{appendices}



\end{document}